%% file: neurips-arXiv-20260511.tex
\documentclass{article}

\usepackage[preprint]{neurips_2026}

\usepackage[utf8]{inputenc} 
\usepackage[T1]{fontenc}    
\usepackage{url}            
\usepackage{booktabs}       
\usepackage{amsfonts}       
\usepackage{nicefrac}       
\usepackage{microtype}      
\usepackage{xcolor}         
\usepackage{subcaption}
\usepackage{tikz-cd} 
 
\usetikzlibrary{arrows.meta,positioning,fit,calc,
                decorations.pathreplacing,
                shapes.geometric}

\usepackage[english]{babel}
\usepackage{amssymb,dsfont,amsmath,amsthm,mathtools}
\usepackage{graphicx}
\usepackage[colorlinks=true, allcolors=blue]{hyperref}
\usepackage{tikz-cd}
\usepackage{pb-diagram}
\usepackage{comment}

\newtheorem{lemma}{Lemma}
\newtheorem{theorem}{Theorem}
\newtheorem{corollary}{Corollary}
\newtheorem{proposition}{Proposition}
\newtheorem{definition}{Definition}

\newtheorem{remark}{Remark}

\newcommand{\Pp}{\mathcal{P}}

\newcommand{\eqdef}{\mathrel{\mathop:}=}
\DeclareMathOperator{\id}{id}
\newcommand{\W}{W_{2}}

\newcommand{\R}{\mathbb{R}}
\newcommand{\RR}{\mathbb{R}}
\newcommand{\dotp}[2]{\langle #1,\,#2\rangle}
\renewcommand{\d}[1]{\mathrm{d}}

\renewcommand*\d{\mathop{}\!\mathrm{d}}

\newcommand{\Xspace}{X}

\newcommand{\Takashi}[1]{{\color{magenta}[\textit{#1} Takashi]}}\newcommand{\Matti}[1]{{\color{red}[Matti: \textit{#1}]} } \newcommand{\Ivan}[1]{{\color{cyan}[Ivan: \textit{#1}]}}

\newcommand{\footnoteforchecking}[1]{}

\def\bra{\langle} 
 
\def\ket{\rangle} 

\newcommand{\commented}[1]{}

\newcommand{\ba}{\begin{eqnarray*}}
\newcommand{\ea}{\end{eqnarray*}}

\def\R{\mathbb R}
\def\p{\partial}
\def \supp {\hbox{supp }} 
\def \diam {\hbox{diam }} 
\def \dist {\hbox{dist}} 
 
\def \det {\hbox{det}\,} 
\def\bra{\langle}

\DeclareMathOperator*{\argmin}{arg\,min}

\newcommand{\e}{\varepsilon}

\newcommand{\beq}{\begin{eqnarray}}
\newcommand{\eeq}{\end{eqnarray}}

\newcommand{\norm}[1]{\left\|#1 \right\|}

\newcommand{\abs}[1]{\left| #1 \right|}

\newcommand{\N}{{\mathbb N}}
\newcommand{\Z}{{\mathbb Z}}

\newcommand{\beqno}{\begin{eqnarray*}}
\newcommand{\eeqno}{\end{eqnarray*}}

\newcommand{\beqla}[1] {\begin {eqnarray}\label{#1}}

\def \eps {\varepsilon}
\def \e {\varepsilon}

\def \beq {\begin {eqnarray}}
\def \eeq {\end {eqnarray}}
\def \ba {\begin {eqnarray*}}
\def \ea {\end  {eqnarray*}}

\def \d {\delta}

\def \bra{\langle}

\def \supp {\hbox{supp }}

\title{
Function graph transformers universally approximate operators between  function spaces}

\author{
    Takashi Furuya \\
    Doshisha University, RIKEN AIP \\
    \texttt{tfuruya@mail.doshisha.ac.jp}
    \And
    David Mis \\
    Rice University \\
    \texttt{david.mis@rice.edu}
    \And
    Ivan Dokmani\'{c} \\
    University of Basel \\
    \texttt{ivan.dokmanic@unibas.ch}
    \And
    Maarten V. de Hoop \\
    Simons Chair in Computational and Applied Mathematics and Earth Science \\
    Rice University, Houston TX, USA \\
    \texttt{mdehoop@rice.edu}
    \And
    Matti Lassas \\
    University of Helsinki
    \\
    \texttt{matti.lassas@helsinki.fi}
}

\begin{document}

\maketitle


\begin{abstract}
\commented{Here is the earlier version (most likely written by Maarten) that I replaced by a newer version that Ivan suggested. OLDER VERSION: We introduce \emph{function graph transformers}, a subclass of measure-theoretic transformers that is structurally designed for operator learning. The central idea is to represent a function by a measure supported on its graph, so that finite tokenizations arise as empirical approximations of continuum graph measures. Within this representation, operator learning becomes the problem of learning transformers that map input graph measures to output graph measures, thereby realizing nonlinear operators between function spaces through a lifting to graph measure space followed by projection back to functions. Although function graph transformers impose additional structure adapted to operator learning, this structure entails \emph{no loss of universality}: we prove a universal approximation theorem showing that they retain the expressive power needed to approximate broad classes of nonlinear operators. In this sense, function graph transformers identify a mathematically precise subclass of general transformers that is both universal and intrinsically compatible with function-space learning. This framework also clarifies, at a theoretical level, why transformers are suitable for operator learning. It provides a continuum foundation for increasing number of tokens and discretization-invariant regimes. It makes explicit the roles of positional encoding, graph structure, regularization, and stability under refinement of the input discretization. The framework extends beyond classical continuous functions to operators between negative-order Sobolev spaces, thereby accommodating nonsmooth and distributional inputs. The resulting theory provides both a mathematical foundation and architectural guidance for transformer-based operator learning. It suggests training strategies that respect the geometry of function spaces, preserve graph measure structure, and ensure consistency across discretizations.
 }

We study the approximation of nonlinear operators between function spaces by transformers. Our approach is to lift functions to measures supported on their graphs and leverage a recently introduced measure-theoretic view of transformers. A function $h$ is represented by its graph measure $\gamma_h$, with finite tokens $\{(x_j,h(x_j))\}_{j=1}^N$ being its empirical approximations. 
We show that this framework elegantly models discretization refinement via convergence of measures and provides a natural setting for operator learning. 
Within this framework, we introduce \emph{function graph transformers}, a graph-preserving subclass of measure-theoretic transformers that maps graph measures to graph measures, which is to say that outputs remain single-valued functions. Crucially, this additional structure does not reduce generality: we prove that the resulting graph-preserving maps can be approximated by finite compositions of standard softmax self-attention layers and pointwise MLPs, yielding universal approximation results for broad classes of nonlinear operators. 
Unlike existing theoretical approaches to operator learning with transformers, the measure-theoretic framework also accommodates regularized negative-order Sobolev inputs for which discretization invariance is particularly challenging, as well as query points on different output domains. Overall, function graph transformers provide a continuum viewpoint and mathematical toolkit for transformer-based operator learning, clarifying the roles of positional encodings, graph structure, regularization, and ensuring consistency across discretizations.
\end{abstract}


\commented{\color{blue}

SUGGESTION BY CHATGPT FOR THE INTRODUCTION (BASED ON OUR PAPER AND THE EARLIER TEXT BY MAARTEN)

\section{Introduction}

Operator learning concerns the approximation of maps between
infinite-dimensional spaces of functions. In applications to partial
differential equations, inverse problems, and continuum dynamics, the
target map is not merely a map between two vectors on a fixed grid, but
a continuum operator
\[
  A:X\to Y,
\]
where $X$ and $Y$ are function spaces over spatial domains. A central
requirement is therefore \emph{discretization invariance}: the learned
model should depend on the underlying function, not on a particular
choice or ordering of sample points. This requirement is especially
important for irregular meshes, adaptive discretizations, scattered
measurements, and operators whose input and output are sampled on
different point clouds.

Transformers provide a natural language for such problems. The
self-attention mechanism of \citet{vaswani2017attention} acts on a
collection of tokens and is equivariant with respect to permutations of
their ordering. In operator learning, a sampled function can be encoded
as a set of graph tokens
\[
  \{(x_j,h(x_j))\}_{j=1}^N,
\]
where $x_j$ is a spatial coordinate and $h(x_j)$ is the observed value.
The attention weights then define data-dependent interactions between
spatial locations. In the continuum limit, these interactions are
naturally interpreted as nonlinear integral operators acting on
measures. This observation is the starting point of the present work.

Our basic object is the \emph{graph measure}
\[
  \gamma_h
  =
  (\operatorname{id},h)_{\#}
  \bigl(\lambda_{\Omega}/\mathrm{vol}(\Omega)\bigr)
  \in \mathcal{P}(\overline{\Omega}\times[-L,L]^n),
\]
which identifies a function with a probability measure supported on
its graph. A finite tokenization is then the empirical measure
\[
  {\bf M}_N(h)
  =
  \frac1N\sum_{j=1}^N
  \delta_{(x_j,h(x_j))}.
\]
Thus, refinement of a discretization becomes convergence of empirical
measures to $\gamma_h$ in a weak or Wasserstein topology. This point of
view allows us to study transformer operator learning without fixing a
grid, a sequence length, or an ordering of the tokens.

\paragraph{Transformer neural operators.}

Several recent works have adapted transformers to operator learning.
\citet{cao2021choose} interpreted attention in relation to Fourier and
Galerkin-type operator approximations and showed that attention can be
used as a nonlocal kernel mechanism for PDE surrogate modeling.
\citet{li2023transformer} proposed OFormer, an attention-based
operator-learning architecture built from self-attention,
cross-attention, and pointwise Multi-Layer Perceptrons (MLP), with few assumptions on the
sampling pattern of the input function or query locations. GNOT
\citep{hao2023GNOT} extends this direction to irregular meshes,
multiple heterogeneous input functions, and multiscale solution
structure through heterogeneous normalized attention and geometric
gating. PiT \citep{chen2024positional} takes a different approach:
rather than using content-based self-attention, it constructs
attention weights from spatial interrelations of sampling positions,
leading to a position-induced attention mechanism with a direct
discretization interpretation. Relative positional mechanisms such as
\textsc{FIRE} \citep{li2024fire} are not operator-learning models, but
they are relevant to the present paper because they emphasize that the
choice of positional representation can determine how attention
generalizes across sampling regimes.

A second line of work has moved from task-specific neural operators
toward continuous-query and foundation-model architectures for
scientific machine learning. CViT \citep{wang2025cvit} combines
vision-transformer encoders with query-wise decoding for continuous
operator learning. Universal Physics Transformers
\citep{alkin2024universal} propose a unified latent transformer
framework for Eulerian grids, meshes, and Lagrangian particles.
Poseidon \citep{poseidon2023} and Walrus \citep{walrus2024} develop
pretrained transformer-based models for PDEs and continuum dynamics,
emphasizing transfer across equations, domains, and resolutions.
Related architectural developments include domain-decomposition and
geometry-generalization mechanisms for PDE solvers
\citep{huang2026operator}. For low-regularity problems,
\citet{shih2025transformers} studied transformers as neural operators
for differential equations with finite regularity.

The mathematical theory of transformers has also developed rapidly.
Attention has been analyzed through smoothness, continuum, and
measure-theoretic viewpoints
\citep{castin2024smooth,geshkovski2025mathematical}. Universal
in-context approximation by transformers was established by
\citet{furuya2025transformers}, while
\citet{furuya2025transformerslenssupportpreservingmaps} studied
transformers as support-preserving maps between measures. Other
measure-based perspectives include measure-to-measure interpolation
and mean-field approximation results
\citep{geshkovski2024,biswal2024universal}. These works suggest that
attention is not only a finite-dimensional sequence operation, but
also a mechanism for constructing nonlinear maps on spaces of
measures.

\paragraph{Gap addressed in this paper.}

Existing transformer neural operators demonstrate strong empirical
performance and important architectural principles, but most of them
are tied to specific attention mechanisms, tokenization strategies, or
benchmark discretizations. Conversely, existing universal-approximation
results for neural operators often assume a fixed function-space
topology and do not explicitly track the passage from continuum
functions to graph tokens and back. The present paper addresses this
gap by giving a continuum universality theorem for standard
softmax-based transformers acting on graph measures. The result is an
expressivity theorem: it does not prove training convergence, sample
complexity bounds, or efficiency of the resulting architectures. It
also does not assert that softmax-free, linear, or position-only
attention mechanisms such as those used in OFormer, GNOT, or PiT have
the same universal approximation property without additional analysis.

\paragraph{Main idea.}

We view a transformer layer as a map
\[
  G(\mu)=\Gamma(\mu,\cdot)_{\#}\mu
\]
on probability measures, where $\Gamma$ is an in-context map depending
on the current measure $\mu$ and on the token being updated. A
\emph{function graph transformer} is a measure transformer that maps
graph measures to graph measures. In the simplest case, the first
coordinate is preserved and
\[
  \Gamma(\mu,(x,y))=(x,\widehat p(\mu,x)),
\]
so that
\[
  G(\gamma_h)=\gamma_g,
  \qquad
  g(x)=\widehat p(\gamma_h,x).
\]
This graph-preserving constraint is the continuum analogue of
requiring the transformer to output values attached to spatial query
locations. It makes precise the sense in which attention acts as a
learned nonlinear operator on functions rather than merely as a map
between finite vectors.

\paragraph{Contributions.}

Our contributions are as follows.

\begin{enumerate}
\item
\textbf{Graph measure formulation of operator learning.}
We represent functions by graph measures and identify finite token
sets with empirical approximations of these measures. Under regular
sampling assumptions, the empirical graph measures
${\bf M}_N(h)$ converge to $\gamma_h$ in Wasserstein topology. This
turns discretization refinement into a measure-convergence problem and
removes the dependence on a fixed grid or token ordering.

\item
\textbf{Function graph transformers.}
We define a class of graph-preserving measure transformers
$G(\mu)=\Gamma(\mu,\cdot)_{\#}\mu$ and characterize the special form
needed to map graph measures to graph measures. This yields a
continuum model of transformer operator learning in which spatial
coordinates, function values, and positional information are treated
as parts of a single measure-theoretic object.

\item
\textbf{Approximation by standard softmax transformers.}
Using the universal in-context approximation theorem of
\citet{furuya2025transformers}, we show that graph-preserving
measure maps can be approximated by finite compositions of standard
softmax self-attention layers and pointwise MLPs. The approximation is
stated in Wasserstein topology on compact classes of probability
measures and preserves the graph structure needed for operator
learning.

\item
\textbf{Universality for nonlinear operators between function spaces.}
Let $A$ be a possibly nonlinear operator satisfying the continuity
assumptions stated in Section~\ref{sec: Expressivity}. We construct
regularized encoders ${\bf M}_{N,\tau}$, measure-to-function decoders
${\bf F}_{\varepsilon}$, and regularized reconstruction maps
$S_{\eta}$ such that, for
\[
  h\in
  \overline{C(\Omega;[-L,L]^n)}^{\,H^{-s'}(\Omega)},
\]
one has
\[
  A(h)
  =
  \lim_{\tau\downarrow 0}
  \lim_{\eta\downarrow 0}
  \lim_{\varepsilon\downarrow 0}
  \lim_{N\to\infty}
  \lim_{m\to\infty}
  {\bf F}_{\varepsilon}\circ
  G_{\eta,m}^{\mathrm{tran}}\circ
  {\bf M}_{N,\tau}(h)
  \quad\text{in }H^{-s}(\Omega).
\]
For continuous inputs $h\in C(\overline{\Omega};[-L,L]^n)$ the
pre-regularization $R_\tau$ is not needed. Thus the same framework
covers both continuous functions and distributional inputs.

\item
\textbf{Operators between different domains.}
We also describe how to represent maps
\[
  A:C(\overline{\Omega};{\R}^n)\to C^1(\overline D;{\R}^m)
\]
when the input and output domains differ. This is done by introducing
query measures on the output domain and using product graph measures.
The construction separates the representation of the input function
from the choice of output query locations, clarifying the relation
between self-attention, cross-attention, and continuous evaluation of
operators.
\end{enumerate}

Overall, the paper provides a mathematical foundation for transformer
operator learning in which tokenization, positional information,
attention, regularization, and continuum approximation are handled in
one measure-theoretic framework.
}

\section{Introduction}
\label{sec:intro}


In this paper, we study operator learning with transformers from a foundational point of view and revisit universal approximation. Learning operators between function spaces has emerged as a central problem at the interface of analysis, scientific computing, and machine learning. A defining requirement is discretization invariance: the learned model should act consistently across different samplings of the underlying continuum. Transformer architectures, originally introduced for sequence modeling \citep{vaswani2017attention}, provide a natural and increasingly powerful framework for operator learning. By representing a function ${h}$ of $x$ through a collection of tokens $\{(x_j, {h}(x_j))\}_{j=1}^N$, transformers define mappings 
capable of modeling long-range interactions. The self-attention mechanism computes pairwise interactions via learned kernels, which can be interpreted as nonlinear integral operators in the continuum limit. This observation places transformers squarely within the landscape of operator approximation, with attention playing the role of a data-adaptive kernel.

\begin{figure}[t]
  \centering
  \resizebox{\textwidth}{!}{%
   \input{figures/diagramB}
  }
  \caption{Operators between function spaces are universally approximated by function graph transformers. Functions (blue) are identified with their graph measures (red), which are approximated by sums of Dirac measures (vertical black lines). We construct tokens from these empirical approximations, and a transformer outputs tokens corresponding to an approximation of the target graph measure. This diagram illustrates the simplest case of a target operator $A$ between continuous functions; with appropriate mollification and regularization, universal approximation can be extended to operators acting on (generalized) functions in negative-order Sobolev spaces. 
  }
  \label{fig:function-graph-commutative-diagram}
\end{figure}

When ${h}:\overline \Omega\to [-L,L]^n$ is a continuous function, we choose a finite set of points $x_j\in \overline \Omega\subset \R^d$
and approximate the graph ${\bf G}({h})=\{(x,{h}(x)):\ x\in \overline \Omega\}$ by a finite sequence of tokens
$\xi_j=(x_j, {h}(x_j))\in \R^{d+n},$ $j=1,\dots,N$.
A mathematically fruitful perspective is to represent functions by empirical (graph) measures,
\begin{equation}\label{discrete measure}
\gamma^{(N)}_{h} =\frac{1}{N} \sum_{j=1}^N \delta_{\xi_j}= \frac{1}{N} \sum_{j=1}^N \delta_{(x_j, {h}(x_j))},
\end{equation}
that approximate a continuous measure $\gamma_{h}\in \mathcal{P}(\overline\Omega \times [-L,L]^n)$ supported on ${\bf G}({h})$, see Appendix \ref{app:tokens and measures}.
For a continuous, possibly nonlinear operator 
$A$, we show that the operator  between graph measures, $\gamma_{h}\to \gamma_{A({h})}$ can be approximated by composition of MLPs and traditional
softmax-based transformers, which in the single-head case are given by
\beq\label{eq xi prime}
    \xi_i'= \xi_i + W \sum_{j = 1}^N
    \frac{
        \exp\left( \frac{1}{\sqrt{k}} \langle Q \xi_i, K \xi_j \rangle \right)
    }{
        \sum_{\ell = 1}^N \exp \left( \frac{1}{\sqrt{k}} \langle Q \xi_i, K \xi_\ell \rangle \right) 
    } V \xi_j.
\eeq
where $Q, K, V, W$ are the query, key, value and head matrices, and $k$ is the dimension of the key/query vectors $Q \xi_i$ and $K \xi_j$.
By denoting the empirical graph measure by $\mu=\gamma^{(N)}_{h}$, we can write \eqref{eq xi prime} 
using a map that takes in a measure $\mu$ and the token $\xi_i$, 
\beq\label{eq xi prime2}
    \xi_i' = \Gamma_\theta (\mu, \xi_i)
\eeq
with parameters $\theta = (Q, K, V,W)$, where  
\beq\label{eq Gamma with xi}
    \Gamma_\theta(\mu, \xi) = \xi + W \int_{\R^{d+n}} 
    \frac{
        \exp\left( \frac{1}{\sqrt{k}} \langle Q \xi, K \xi' \rangle \right) 
    }{
        \int_{\R^{d+n}} \exp \left( \frac{1}{\sqrt{k}} \langle Q \xi, K \zeta \rangle \right) d\mu(\zeta)
    } V \xi' d\mu(\xi').
\eeq
Using this notation,
a transformer layer defines a map from discrete measures to discrete measures,
\beq\label{eq measure to measure}
   G_\Gamma: \mu_N = \frac{1}{N} \sum_{j=1}^N \delta_{\xi_j}
    \mapsto
    \mu_N' =  \frac{1}{N} \sum_{j=1}^N \delta_{(x'_j,y'_j)},
\quad (x'_j,y'_j)=\Gamma_\theta(\mu,(x_j,y_j)),
\eeq
Using the notation of the push-forward $F_\#$ of a map $F$ used in measure theory, we can write
\eqref{eq measure to measure} as
\beq\label{eq measure to measure2}
    G_\Gamma(\mu)=\Gamma_\theta(\mu, \cdot)_\# \mu,
\eeq
that is $G_\Gamma(\mu)$ is obtained by pushing forward the measure $\mu$
using a context function $\Gamma_\theta(\mu, \cdot)$ that depends on $\mu$,
see \eqref{def: pushforward} in Appendix \ref{sec:Basic notions}.
The formula \eqref{eq measure to measure2} defines a map on probability measures,
$G_\Gamma:\Pp({\R^{D}}) \to \Pp({\R^{D}})$ that treats measures $\mu$ and their empirical approximations $\mu_N$ in a uniform way. 
This unified treatment makes it possible to use measure theory to study limits of long token sequences by taking $N\to\infty$. 
In this paper we will consider measure-theoretic
transformers $G_\Gamma$ with a context function 
$\Gamma = (\Gamma^{(x)}, \Gamma^{(y)})$
whose $x$-coordinate function $\Gamma^{(x)}$ 
depends only on the variable $x$ and defines a one-to-one map $x \mapsto \Gamma^{(x)}(x)$.
This property will ensure that graph measures are mapped to graph measures.
We also generalize these considerations to the case when $A:C(\overline \Omega;[-L,L]^n)\to C(\overline D;[-L,L]^m)$ is a map between different domains $\Omega\subset \R^d$ and $D\subset \R^{d'}$. Summarizing, we view transformer layers as maps acting on measures and operators as maps from graph measures to graph measures.
Under this formulation, self-attention induces a nonlinear pushforward of measures through kernels that depend on both spatial location and function values.
\commented{
 \Ivan{I think the below display is too heavy for the introduction; how about introducing general tokens $\xi$ (interpreted as $(x, u(x)$) to lighten notation? (see my comments below) It may also save some space later and improve readability to introduce a notation for softmax, and even for the standard attention layer itself, e.g. for linear maps $Q, K$ we could define general softmax weights
$$
a_{Q,K}[\mu](\xi,\xi')
=
\frac{
\exp(\langle Q\xi,K\xi'\rangle/\sqrt{k})
}{
\int_{\mathcal{X}}
\exp(\langle Q\xi,K\tilde\xi\rangle/\sqrt{k})\,d\mu(\tilde\xi)}
$$
and then for a value map $V$ define one attention head
$$
\operatorname{Att}_{Q,K,V}[\mu](\xi)
=
\int_{\mathcal X}
a_{Q,K}[\mu](\xi,\xi') V\xi'\,d\mu(\xi').
$$
So that if $\mu=\mu_N=N^{-1}\sum_{j=1}^N\delta_{\xi_j}$, then
$$
\operatorname{Att}_{Q,K,V}[\mu_N](\xi_i)
=
\sum_{j=1}^N
\frac{
\exp(\langle Q\xi_i,K\xi_j\rangle/\sqrt{k})
}{
\sum_{\ell=1}^N
\exp(\langle Q\xi_i,K\xi_\ell\rangle/\sqrt{k})
}
V\xi_j.
$$
}
\beq  \label{pre trad-transformer}
& &\hspace{-10mm} \Gamma(\mu,(x_j,y_j)) = \bigg(x_j\ ,\ W_xx_j + W_yy_j
\\ \nonumber
& &\hspace{-10mm} + \frac 1N \sum_{h=1}^H 
   \sum_{\ell=1}^N  \frac{
        \exp\Big(
            \frac{1}{\sqrt{k}}
            \dotp{(Q^h_xx_j,Q^h_yy_j)}{(K^h_xx_\ell,K^h_yy_\ell)} \Big)  }
       { \tfrac 1N \sum_{i=1}^N 
        \exp \Big(
            \frac{1}{\sqrt{k}}
            \dotp{(Q^h_xx_j,Q^h_yy_j)}{(K^h_xx_i,K^h_yy_i)}\Big) 
    } (W^h_xV^h_xx_\ell+W^h_yV^h_yy_\ell).
\eeq
The crucial property of our construction is that these maps extend to maps $F_\Gamma:\Pp({\R^{d+n}}) \times {\R^{d+m}}\to {\R^{d+m}}$ between continuous (i.e., non-discrete) measures that makes it possible to analyze the limits $N\to \infty$ using measure theory. We also generalize these considerations to the case when  $A:C(\overline \Omega;[-L,L]^n)\to C(\overline D;[-L,L]^m)$ is a map between different domains $\Omega\subset \R^d$ and $D\subset \R^{d'}$. Summarizing, we view transformer layers as maps acting on measures.
Under this formulation, self-attention induces a nonlinear pushforward of measures through kernels that depend on both spatial location and function values.
This measure-theoretic viewpoint clarifies the role of positional encoding: It breaks permutation invariance in a controlled way, allowing the model to represent operators that depend on the geometry of the domain.
}

\paragraph{Prior work.}

Early work on operator learning with transformers showed that attention can facilitate discretization-robust PDE surrogate modeling. OFormer \citep{li2023transformer} combines self-attention, cross-attention, and pointwise networks to learn mappings from sampled input functions to queried outputs, while 
\citet{cao2021choose} interpret attention in relation to operator approximation and Galerkin-type projections. 
GNOT \citep{hao2023GNOT} extends this line to irregular meshes, heterogeneous input fields, and multiscale structure through normalized attention and geometric gating. PiT \citep{chen2024positional} builds attention weights from spatial relations among sample locations, giving a position-induced kernel with a direct discretization interpretation. Other work demonstrated that the choice of attention kernel and positional encoding affects generalization across sequence lengths and sampling regimes \citep{choromanski2021rethinking,katharopoulos2020transformers,li2024fire}.

A second line of work has moved from task-specific neural operators toward foundation-model architectures for scientific machine learning. Vision transformers and patch-based representations \citep{dosovitskiy2021image} motivate models such as CViT \citep{wang2025cvit}, which combines transformer encoders with query-wise decoding to evaluate outputs at arbitrary coordinates. Universal Physics Transformers \citep{alkin2024universal} use a latent transformer framework for Eulerian grids, meshes, and Lagrangian particles, while Poseidon \citep{poseidon2023} and Walrus \citep{walrus2024} are pretrained transformer-based models for PDEs and continuum dynamics, emphasizing transfer across equations, domains, resolutions, and physical regimes.

The mathematical theory of transformers has also developed rapidly. Attention has been studied through smoothness, continuum, and measure-theoretic viewpoints \citep{shih2025transformers,castin2024smooth,geshkovski2025mathematical}. Universal in-context approximation by transformers was established by \citet{furuya2025transformers}, while \citet{furuya2025transformerslenssupportpreservingmaps} studied the closure of transformers as support-preserving maps between measures. Related perspectives include measure-to-measure interpolation and mean-field approximation results \citep{geshkovski2024,biswal2024universal,mei2019meanfield,yang2020scaling}. The closest prior work is \citet{calvello2025continuum}, who formulate continuum attention directly on function spaces and prove universality for operators on \(C^s\) and nonnegative-order Sobolev spaces. Our approach is complementary: functions are represented by graph measures and transformers by graph-preserving measure maps, yielding universality results that extend to negative-order Sobolev spaces.

Despite these advances, the theory of transformers for operator learning remains incomplete. 
New perspectives are required to illuminate this problem given the success of transformers on hard operator learning tasks.
Fundamental open questions concern approximation under low regularity, stability under input perturbations, and the role of depth in capturing compositional structure. 

\subsection*{Our contributions}

We develop a mathematical framework for transformers as operator learners. We obtain a universal approximation result through the introduction of function graph transformers imposing a natural structure on the in-context mappings.

\textbf{1. Graph measure formulation of operator learning.}
We represent functions by graph measures and identify finite token sets with empirical approximations of these measures. We show that continuous nonlinear operators, $A : C(\overline{\Omega};{[-L,L]}^n) \to C^1(\overline D;[-L,L]^m)$, can be represented by maps from graph measures to graph measures.

\textbf{2. Function graph transformers.}
We define graph-preserving measure transformers as a subclass of measure transformers with particular 
in-context maps. When graph measures are approximated by discrete measures that are equivalent to a sequence of $N$ tokens,
the graph-preserving measure transformers define maps between sequences of tokens consistent with the conventional presentation of transformers. We prove rigorous continuity results as $N\to \infty$,
corresponding to the limit of long token sequences.

\textbf{3. Universal approximation.}
We show that graph-preserving measure transformers can be approximated by compositions of self-attention layers and pointwise MLPs.

\textbf{4. Generalized functions and regularization.}
We generalize the universal approximation to operators between generalized function spaces through a regularization procedure that can be realized via a composition with a linear neural operator.


\commented{Some notes from Ivan:
To me the central aim of this paper remains a little unclear. Is it a paper about a new object, namely function graph transformers, which vaguely resemble transformers and have interesting approximation-theoretic properties? Or is it a paper about universal approximation of operators (in the modern discretization-invariant sense), which shows that a good way to conceptualize the problem is from a function graph transformer viewpoint?
I'd say currently it's a mix, but that for the ML crowd at NeurIPS the second version is a better fit and currently it's not coming through. I think it's easy to repair though:
    1. I'd suggest making the usual measure-theoretic interpretation of transformers very explicit very early on. (See below for a suggestion.) Most readers, even theory-minded, will not be familiar with it and will probably have a hard time following
    2. I'd suggest identifying the gap in the literature explicltly and making the goal of the paper very clear: there is little in the way of theory of neural operators with transformers, yet they are used to solve operator learning problems all over the place. Outstanding questions are: what is the right way to think about it? What are the good mathematical frameworks to prove results? What are desiderata (e.g., formulaic wrappers for boilerplate UATs like in that Karniadakis paper really don't say anything interesting about transformers or regularity; the Stuart paper is much better but it starts from a continuous point of view; can we start directly from tokens? Or something of the sort?)
We could at this point say that this perspective has given very elegant and strong approximation results for transformers. Now we want to show that a refinement of a related object, can yield similarly elegant theory for operator learning, and explain that the important point there for the community is to be stable with respect to discretizations of the domain, and have suitable convergence under discretization refinements. Argue also that empirically transformers seem to have such properties, but there is little theory. Then perhaps say that what we show is that an elegant or natural way to approach this is by encoding the domain in the tokens (this has been know perhaps), and that the measure-theoretic encoding of this are graph measures.
}

\section{Graph measures and maps between functions and measures}
\label{sec:graph measures}

Let $\Omega \subset \R^d$ be an open bounded set with $C^\infty$ boundary, or a cube $[0,1]^d$, $\mathcal B$ the family of Borel sets in $\Omega$ and $\lambda_{\Omega} : \mathcal B\to [0,\infty)$ be the Borel measure on $\Omega$ that coincides with the Lebesgue measure for all open subsets of $\Omega$. (For basic notions related to this paper, see Appendix~\ref{sec:Basic notions}.) 
Measures, particularly finite measures supported on finite sets, are closely related to sequences of tokens via sums of Dirac measures, see Appendix \ref{app:tokens and measures}. We argue that non-discrete measures can be considered as limits of sequences of $N$ tokens as $N \to \infty$. We use point-sample tokens here for simplicity; Appendix~\ref{section: Patching} describes a patch-based tokenization compatible with the same graph-measure viewpoint.

Let $L^\infty_{\mathcal B}(\Omega;{[-L,L]}^n)$ denote the essentially bounded Borel measurable functions.
For $h: \overline\Omega \to [-L,L]^n$, we define the ``graph measure'',
\begin{align} \label{push forward formula}
  \gamma_h = (\operatorname{id},h)_{\#}\overline \lambda_{\Omega},\quad \hbox{i.e.,} \quad \gamma_h(A)=
  \overline \lambda_{\Omega}(\{x\in\Omega:\ (x,h(x))\in A\}),
\end{align}
where $\overline \lambda_{\Omega}=\tfrac 1{\text{vol}(\Omega)} \lambda_{\Omega}$ and $A\subset \Omega\times {[-L,L]^n}$ is a  Borel set. Thus $\gamma_h$ is a measure supported on the graph of $h$ that satisfies 
\begin{align}\label{integration formula}
  \int_{\Omega \times [-L,L]^n} \phi(x,y) \, \mathrm{d}\gamma_h (x,y)={\frac 1{\text{vol}(\Omega)}} \int_{\Omega} \phi(x,h(x)) \, \mathrm{d}\lambda_{\Omega}(x) \quad\hbox{for all $\phi\in C(\overline\Omega\times {[-L,L]^n})$}.
\end{align}

\commented{
\footnote{
In fact, the operator ${\bf F}$ can be well-defined for $\gamma = (\mathrm{id}, h)_\sharp \lambda_{\Omega}$ with $h \in L^1_{\mathrm{loc}}$: We observe that \eqref{integration formula}  is valid when $h\in L^1_{loc}(\Omega)$, that is, $h:\Omega\to \R^n$ is locally integrable Lebesgue measurable function, that is, $(\mathcal L,\mathcal B_{\R^n})$-measurable where $\mathcal L$ is the $\sigma$-algebra of the Lebesgue measurable sets and $\mathcal B_{\R^n}$ is the  $\sigma$-algebra of the Borel sets of $\R^n$ and  $\phi$ is a bounded Borel-measurable function of $\Omega\times \R^n$, so for example when $h\in L^1(\Omega)$ and $\phi :\ \Omega \times \R^n \to \R$ is a bounded continuous function, see Theorem~3.6.1 in [Bogachev, Vladimir I., Measure Theory, Berlin: Springer Verlag, 2007]. By definition~\eqref{push forward formula}, we see that a function $h \in L^1(\Omega)$ defines a measure $\gamma_h$, and we can define a function ${\bf M} : h \to \gamma_h$. Similarly, a graph measure defines a function by
\begin{align} \label{f at Lebesgue points1}
  h(x) &:= \lim_{r \to 0+} \frac{1}{\omega_d r^d}
  \int_{(B(x,r) \cap {\overline\Omega})\times \R^n} y \,d\gamma_h(x,y)
  \\
  &= \lim_{r\to 0+} \frac{1}{\omega_d r^d}
  \int_{B(x,r) \cap {\overline\Omega}} h(x) \, dx ,
\end{align}
where $\omega_n$ is the volume of the unit ball in $\R^n$, as formula \eqref{f at Lebesgue points1} is valid by formula \eqref{push forward formula} (with $\phi(x,y) = y$) at all Lebesgue points $x \in {\overline\Omega}$ of $h$. This defines a function ${\bf F} :\ \gamma_h \to h$.)
}}
\commented{In zoom we discussed about the sums of delta distributions as a probability distribution: We can think $\mu=\frac 1N \sum_{j=1}^N\delta_{x_j}$ as an oracle the gives out random tokens, in the same way as machine learning algorithms give random outputs with the identical prompts. So, $\mu$ is non-deterministic oracle.
Our measure-theoretic transformers are function that map one non-deterministic oracle to another, more useful non-deterministic oracle.}



\begin{definition}
We define ${\bf M} : L^\infty_{\mathcal B}(\Omega;{[-L,L]}^n) \to \Pp(\Omega \times [-L,L]^n)$ to be the map from a function to its graph measure. That is,
$
  {\bf M}(h) := \gamma_h = (\operatorname{id},h)_{\#} \bar \lambda_{\Omega} .
$
\end{definition}

The left inverse for the map ${\bf M}$ is given by
\beq\label{Lebesgue points}
  {\bf F} :\ \gamma_h \mapsto {\bf F}(\gamma_h)(x)
  :=\lim_{r \to 0+} \frac{{ {\text{vol}(\Omega)}}}{\omega_d r^d}
  \int_{(B(x,r) \cap \Omega)\times [-L,L]^n} y' \, d\gamma_h(x',y') ,\quad
  x \in \Omega,
\eeq
where $B(x,r)$ is the ball centered at $x$ with radius $r$, and $\omega_d$ is the volume of the unit ball in $\R^d$. 
For any $h \in L^\infty_{\mathcal B}(\Omega;{[-L,L]}^n)$ formula \eqref{Lebesgue points} is valid in a Borel set whose complement has measure zero \citep{Bogachev}. 
We note that operators ${\bf M}$ and ${\bf F}$  are defined for graph measures while we also pay attention to sums of delta measures. Because of this, we define the regularized versions of ${\bf M}$ and ${\bf F}$.

\begin{definition} \label{def: F eps map}
For $\e > 0$, we define the map from general measures to continuous functions,
\begin{align} \label{f mollified 1B}
  ({\bf F}_\varepsilon \gamma)(x) &=\frac { {\mathrm{vol}(\Omega)}}{\varepsilon^d}
  \int_{{\overline\Omega} \times [-L,L]^n} \rho\left(\frac{x'-x}{\e}\right)
  y' \, d\gamma(x',y').
\end{align}
\end{definition}

We use a standard mollifier $\rho\in C_c^\infty(\R^d)$, $\int_{\R^d}\rho(x')dx'=1$, $\supp(\rho)\subset B(0,1)$, $\rho(-x)=\rho(x)$ and set $\rho_\tau(x)=\tau^{-d}\rho(x/\tau)$ and $\rho_\tau \ast h(x) = \int_{\R^d} \rho_\tau(x-x') h(x') \, dx'$.

For  functions $h \in  L^\infty_{\mathcal B}(\Omega;{[-L,L]}^n)$ and their graph measures $\gamma_h$ we have $({\bf F}_\varepsilon \gamma_h)(x)= \rho_\varepsilon \ast \tilde h(x)$, where $\tilde h$ is the zero-extension of $h$ to $\R^d.$
For $h\in L^p(\Omega)$, for  all  $1 \le p <\infty$, it holds that
\begin{align} \label{f mollified 2C}
    h &= \lim_{\varepsilon \to 0+} ({\bf F}_\varepsilon \gamma_h) \quad \hbox{in } L^p(\Omega).
\end{align}
For $h \in H^{-s}(\Omega)$, $s\in\mathbb R$ we introduce a regularized version of $h$, 
\begin{equation} \label{def R tau}
   R_\tau h = \rho_\tau \ast (h \cdot (\rho_\tau \ast {\bf 1}_{\Omega_\tau})) ,\quad
   \tau > 0 ,
\end{equation} 
with $\Omega_\tau = \{x \in \Omega\ :\ \hbox{dist}(x,\p \Omega) > 4 \tau\}$.
We note that, here, $\rho_\tau \ast {\bf 1}_{\Omega_\tau}$ vanishes near the boundary and convolution with $\rho_\tau$ makes a function continuous (in fact, $C^\infty$). Below, we will consider functions $h \in H^{-s}(\Omega)$ with $s>0$, $s - \tfrac 12 \not\in \mathbb Z$. Such functions can be discontinuous, finite measures, or even Dirac delta distributions when $-s<-\tfrac d2$. We show in Appendix~\ref{app:prop:main convolutions A}  that for Sobolev functions $h \in H^{-s}(\Omega)$ it holds that $R_\tau h \to h$ in $ H^{-s}(\Omega)$ as $\tau\to 0$.
  
\section{Function graph transformers}
\label{sec:function graph}

Below, let $\pi_1 : {\overline\Omega} \times \R^n \to {\overline\Omega}$ be the projection $\pi_1(x,y)=x$ and $\pi_2 : {\overline\Omega} \times \R^n \to \R^n$ be the projection $\pi_2(x,y)=y$. 
{\color{black} We consider the set of probability measures ${{{\mathcal P}}}(\overline\Omega\times [-L,L]^n)$ as a metric
space with 1-Wasserstein distance ${{\color{black}W_1}}$.}

Below, we will consider measure-theoretic
 transformers
 \commented{\footnote{By \cite{furuya2025transformerslenssupportpreservingmaps}, a map between $F$ mapping measures to measures is a measure-theoretic transformer if  it is support
    preserving and the regular part of derivative is uniformly continuous.}} $G_\Gamma:{{{\mathcal P}}}(\overline\Omega\times [-L,L]^n)\to {{{\mathcal P}}}(\overline\Omega\times [-L,L]^n)$. 
These are continuous  maps in the 1-Wasserstein topology, see \cite{furuya2025transformerslenssupportpreservingmaps},
$$
G_{\Gamma}(\mu)={\Gamma}(\mu,\cdot)_\#\mu
$$
where ${}_\#$ denotes push forward, see \eqref{def: pushforward} in Appendix \ref{sec:Basic notions}, and
\beq 
{\Gamma}:{{{\mathcal P}}}(\overline\Omega\times [-L,L]^n)\times 
({\overline\Omega}\times [-L,L]^n)\to {\overline\Omega}\times [-L,L]^n
\eeq
is a continuous map called the \emph{in-context map}. We will study 
 $G_\Gamma$ with in-context maps $\Gamma$ having the property that for all graph measures $\gamma_h$ of continuous functions
$h: \overline\Omega \to [-L,L]^n$ the image $G_\Gamma(\gamma_h)$ is a graph measure $\gamma_{g}$ 
of some  continuous function $g=J_\Gamma(h): \overline\Omega \to [-L,L]^n$.
Next, we show that  such maps can be represented using a structured form discussed in the following lemma that is proven in Appendix~\ref{app:lem: function graph transformers architecture}. 
%
%
\begin{lemma}\label{lem: function graph transformers architecture}
Assume that the in-context function ${\Gamma}:{{{\mathcal P}}}(\overline\Omega\times [-L,L]^n)\times ({\overline\Omega}\times [-L,L]^n)\to {\overline\Omega}\times [-L,L]^n$ is continuous and  has the property that it preserves graph measures:
\begin{align*}
    (P)& \hbox{ For all continuous $h:\overline \Omega\to [-L,L]^n$ there
is a continuous function $g=J_\Gamma(h):\overline \Omega\to [-L,L]^n$}
\\
& \hbox{such that ${\Gamma}(\gamma_h,\cdot)_\#\gamma_h=\gamma_g$}
\end{align*}
and moreover, that
the map  $h\to J_\Gamma(h)$
is a Lipschitz  map
${{{\mathcal P}}}(\overline\Omega\times [-L,L]^n)\to
C(\overline \Omega;{[-L,L]}^n)$, 
with
$\hbox{Lip}(J_\Gamma(h))$ uniformly bounded for  $h\in C(\overline \Omega;{[-L,L]}^n)$.

%

%
Then, there exists an in-context map  $\widehat {\Gamma}:{{{\mathcal P}}}(\overline\Omega\times [-L,L]^n)\times ({\overline\Omega}\times [-L,L]^n)\to {\overline\Omega}\times [-L,L]^n$ that is equivalent to $\Gamma$ in the sense that it satisfies
\beq
{\Gamma}(\gamma_h,\cdot)_\#\gamma_h=\widehat {\Gamma}(\gamma_h,\cdot)_\#\gamma_h
\eeq
for all continuous functions  $h:{\overline\Omega}\to [-L,L]^n$; the in-context map $\widehat {\Gamma}$ has the coordinate functions
\beq\label{widehat f map}
& &\widehat {\Gamma}(\mu,(x,y))=(\widehat {\Gamma}^{(x)}(\mu,(x,y))\,,\,\widehat {\Gamma}^{(y)}(\mu,(x,y)))\in {\overline\Omega}\times [-L,L]^n,\hbox{ where }\\
\label{widehat f map2}
 & &\widehat {\Gamma}^{(x)}(\mu,(x,y))=x\quad   \hbox{and}\quad  
\widehat {\Gamma}^{(y)}(\mu,(x,y))=\widehat p(\mu,x),
\eeq
and $\widehat p:{{{\mathcal P}}}(\overline\Omega\times [-L,L]^n) \times  {\overline\Omega}\to [-L,L]^n$; i.e., the $x$-coordinate $\widehat {\Gamma}^{(x)}$ is the identity map in the variable $x$ and the $y$-coordinate $\widehat {\Gamma}^{(y)}(\mu,(x,y))$ is independent of the variable $y$. 
\end{lemma}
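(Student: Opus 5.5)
The construction is explicit: define $\widehat p(\mu,x)$ as $J_\Gamma(h)(x)$ when $\mu=\gamma_h$, and extend it to all of $\mathcal P(\overline\Omega\times[-L,L]^n)\times\overline\Omega$. Then for every continuous $h$,
\[
\widehat\Gamma(\gamma_h,\cdot)_\#\gamma_h=(x,J_\Gamma(h)(x))_\#\bar\lambda_\Omega=\gamma_{J_\Gamma(h)}=\gamma_g=\Gamma(\gamma_h,\cdot)_\#\gamma_h .
\]
The first equality holds because $\gamma_h=(\id,h)_\#\bar\lambda_\Omega$, and $\widehat\Gamma(\gamma_h,(x,h(x)))=(x,g(x))$ by construction. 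So equivalence is automatic on graph measures. The only real work is to make $\widehat p$ a well-defined, continuous map on the whole space, so that $\widehat\Gamma$ qualifies as an in-context map.

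\textbf{Step 1 (well-definedness on graph measures).} For continuous $h$, the measure $\gamma_h$ determines $h$ everywhere on $\overline\Omega$. The left inverse ${\bf F}$ in \eqref{Lebesgue points} recovers $h$ at every point of $\Omega$, because every point is a Lebesgue point of a continuous function, and continuity then extends this to $\overline\Omega$. Hence $\gamma_h\mapsto J_\Gamma(h)$ is a genuine function on the set $\mathcal G=\{\gamma_h: h\in C(\overline\Omega;[-L,L]^n)\}$. Likewise $g$ is uniquely determined by $\gamma_g$, so $J_\Gamma$ is well defined. On $\mathcal G$ I set $p_0(\gamma_h,x)=J_\Gamma(h)(x)$.

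\textbf{Step 2 (regularity of $p_0$ on $\mathcal G\times\overline\Omega$).} The hypotheses give two facts: $\gamma_h\mapsto J_\Gamma(h)$ is Lipschitz from $(\mathcal G,W_1)$ into $C(\overline\Omega)$ with the sup norm, and $\mathrm{Lip}(J_\Gamma(h))\le C$ uniformly in $h$. Combining them,
\[
|p_0(\gamma_h,x)-p_0(\gamma_{h'},x')|\le \|J_\Gamma(h)-J_\Gamma(h')\|_\infty+C|x-x'|\le L_J W_1(\gamma_h,\gamma_{h'})+C|x-x'| .
\]
So $p_0$ is Lipschitz on the subset $\mathcal G\times\overline\Omega$ of the metric space $(\mathcal P(\overline\Omega\times[-L,L]^n),W_1)\times\overline\Omega$, equipped with the sum metric.

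\textbf{Step 3 (extension).} I extend each of the $n$ coordinates of $p_0$ by the McShane formula
\[
\widehat p_i(\mu,x)=\inf_{(\gamma,z)\in\mathcal G\times\overline\Omega}\bigl(p_{0,i}(\gamma,z)+\mathrm{Lip}(p_0)\,d((\mu,x),(\gamma,z))\bigr).
\]
This is a Lipschitz extension to the whole metric space. I then compose with the coordinatewise clamp onto $[-L,L]$, which is $1$-Lipschitz and does not change values on $\mathcal G\times\overline\Omega$, since $p_0$ already takes values in $[-L,L]^n$. With $\widehat\Gamma(\mu,(x,y))=(x,\widehat p(\mu,x))$, the map $\widehat\Gamma$ is continuous (indeed Lipschitz), takes values in $\overline\Omega\times[-L,L]^n$, and has the form \eqref{widehat f map}--\eqref{widehat f map2}.

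The main obstacle is Step 2. A pointwise definition of $\widehat p$ needs joint continuity in $(\mu,x)$, and this is exactly where the two hypotheses enter: Lipschitz dependence on the measure and uniform Lipschitz bounds in $x$. Without the uniform bound in $x$, I would only get continuity and would have to fall back on a Tietze/Dugundji extension, using that $\mathcal G\times\overline\Omega$ is closed in the relevant domain. McShane avoids that issue entirely, since it does not require closedness. A secondary point is that the continuity of the original $\Gamma$ is never actually used; the equivalence comes from property (P) together with the uniqueness in Step 1.
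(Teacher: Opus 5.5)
Your proposal is correct and follows essentially the paper's route: uniqueness of the graph update, coordinatewise McShane extension off the set $\mathcal G$ of graph measures, clamping to $[-L,L]$, and the pushforward identity. The only differences are that you get injectivity of $h\mapsto\gamma_h$ from Lebesgue differentiation rather than by integrating $|y-g_1(x)|^2$ against $\gamma_{g_1}=\gamma_{g_2}$, and that you extend jointly on $\mathcal P(\overline\Omega\times[-L,L]^n)\times\overline\Omega$, so the Lipschitz bound in $x$ comes directly from McShane's theorem; the paper instead extends in $\mu$ for each fixed $x$ and proves the $x$-bound by reusing a near-optimal competitor $h_\varepsilon$. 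Your aside that $\mathcal G\times\overline\Omega$ is closed is false, since graph measures of rapidly oscillating continuous functions can converge in $W_1$ to non-graph measures, but it plays no role in your argument.
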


\medskip

Motivated by the above result, Lemma \ref{lem: function graph transformers architecture part 2} in Appendix~\ref{app:lem: function graph transformers architecture} and Corollary \ref{cor: 1self} 
concerning the continuity of maps, we define function graph transformers based on self-attention.

\begin{definition}\label{def self-attention}
A map $G_{\widehat{\Gamma}} : {{{\mathcal P}}}(\overline\Omega\times [-L,L]^n) \to {{{\mathcal P}}}(\overline\Omega\times [-L,L]^n)$ is a \emph{function graph transformer} if 
$$
G_{\widehat{\Gamma}}(\mu) = \widehat{\Gamma}(\mu,\cdot)_\#\mu,
$$
where $\widehat {\Gamma}(\mu,(x,y))$ is defined by formula \eqref{widehat f map} with
\beq\label{widehat f map2 mod}
  \widehat {\Gamma}^{(x)}(\mu,(x,y))=\Phi(x)\quad\hbox{and}\quad  
  \widehat {\Gamma}^{(y)}(\mu,(x,y))=\widehat p(\mu,x).
\eeq
Here, $\widehat p : {{{\mathcal P}}}(\overline\Omega\times [-L,L]^n)\times  {\overline\Omega}\to [-L,L]^n$ is uniformly continuous in $\mu$ and  uniformly Lipschitz
bounded in $(x,y)$, and the map $\Phi : \overline \Omega\to \overline \Omega$ is a volume preserving $C^1$-diffeomorphism, i.e., the Jacobian determinant satisfies $|\det(D\Phi)|=1$. In the case when $\Phi(x) = x$, $\widehat{\Gamma}$ coincides with \eqref{widehat f map2}.
\end{definition}

\commented{
In \eqref{widehat f map2 mod} we have included the additional map $\Phi$ making a deformation in the $x$-variable that is similar to functions used in positional
encoding map. We note several positional encoding maps, such as block diagonal rotation matrices, are volume preserving{\color{black}  [Note: We could add references to this, see that the two Positional encoding files in this overleaf folder]. [Could we find a Lemma identifying / replacing the function graph transformer with / by one with a particular positional encoding? I am thinking of absolute positional encoding adapted to functional positional encoding. It seems that introducing $\Phi$ would correspond with that.]}

\Takashi{In my opinion, $\Phi$ in \eqref{widehat f map2 mod} is not exactly same with what operator transformers used, but it could be loosely interpreted as PE (in term of "modifying the coordinate"). If you like, please use the following. }
}


The map $\Phi$ in \eqref{widehat f map2 mod} preserves all information of the coordinates, $x$, throughout the network, and can be interpreted as a coordinate feature embedding in a similar spirit to the feature mappings used in \citep[Sec.~4.1.1]{cao2021choose} and \citep[Fig.~1]{li2023transformer}, as well as the absolute positional encoding in \cite[Eq.~(1)]{wang2025cvit}.

\commented{
\footnote{\Matti{Question for Maarten: Should we move this footnote to Appendix or is it fine here?} Note that when $\widehat {\Gamma}^{(x)}(\mu,(x,y))=\Phi(x)$  where  $\Phi$ is volume preserving has the property that  the graph measures are mapped to graph measures. In particular, then $(\pi_1)_*\widehat {\Gamma}(\gamma_h,(\cdot,\cdot))_\#\gamma_h=|\det(D\Phi\circ \Phi^{-1})|^{-1}\cdot \overline\lambda_\Omega$  
We also note that if $\widehat {\Gamma}_1(\mu,(x,y))=
(x,\widehat p(\mu,x))$ and $\widehat {\Gamma}_2(\mu,(x,y))=
(\Phi(x),\widehat p(\mu,\Phi(x)))$ and  $\widehat {\Gamma}_1(\gamma_h,\cdot)_\#\gamma_h=\gamma_g$, then $\widehat {\Gamma}_2(\gamma_h,\cdot)_\#\gamma_h=\gamma_{g\circ \Phi^{-1}}$.  It seems that most of the positional encoding maps have been volume preserving as their Jacobians are the constant 1. Our considerations motivate why these maps have this property.
}}

Below, for the sake of notational simplicity, we assume that $\Phi(x) = x$. However, results can be easily adapted for a non-identity map. In essence, a function graph transformer never generates implicitly multi-valued functions and, hence, preserves the function space aspect of operator learning.
Function graph transformers define maps between functions as follows, 
$$
{\bf F} \circ G_{\widehat{\Gamma}} \circ {\bf M} : \{h \in C(\overline \Omega):\ \|h\|_\infty\le L\}\to  C^1(\overline \Omega).
$$
Later, in Section~\ref{sec: Expressivity}, we study maps where ${\bf F}$ and ${\bf M}$ are replaced by the regularized maps ${\bf F}_\e$ and ${\bf M}_N$, in which case the transformer can map discrete measures to discrete measures.

\paragraph{Approximation of graph measures and tokenization.}
\commented{Morally speaking, all these results should hold for piecewise continuous maps, too.}
Next, our aim is to consider approximation of graph measures by finite sums of Dirac measures yielding a discretization or tokenization; see Appendix~\ref{app:tokens and measures} for a more detailed discussion.

\begin{definition}
\label{def:regularly_distributed}
Let $\Omega \subset \R^d$ be a bounded open set. The points  ${\bar x}_j \in \Omega \subset \R^d$, $j=1,2,\dots$ are \emph{regularly distributed}
if for all continuous functions $h \in C(\overline\Omega)$ it holds that 
$\lim_{N \to \infty} \frac 1N \sum_{j=1}^N h({\bar x}_j) = \frac{1}{\mathrm{vol}(\Omega)} \int_\Omega h(x) \, dx$. 
\end{definition}

Varadarajan's theorem (\citet{Varadarajan1958} and Theorem~11.4.1 in \citet{Dudley}), 
implies that if $X_1,X_2,\dots$ are 
independent, uniformly distributed points in $\Omega$,
then they are regularly distributed.

We denote for $h \in C(\overline \Omega;\R^n)$ and $g \in H^{-s}(\Omega;\R^n)$
\beq \label{discrete M1}
  {\bf M}_N(h) = \sum_{j=1}^N \frac 1N \delta_{({\bar x}_j,h({\bar x}_j))}
  = (\operatorname{id},h)_\# \Bigg(\sum_{j=1}^N \frac 1N \delta_{{\bar x}_j} \Bigg) ,\quad 
  {\bf M}_{N,\tau}(g) = {\bf M}_N(R_\tau g) .
\eeq
Then, by the assumption that ${\bar x}_j \in \Omega \subset \R^d$, $j=1,2,\dots$ are regularly distributed points,
we have for $h \in C(\overline \Omega)$ (see Appendix \ref{app: proof of M2 and M3}),
\beq \label{discrete M2}
  \lim_{N \to \infty} {\bf M}_N(h) = \gamma_h\
  \hbox{in the ${{\color{black}W_1}}$-metric.}
\eeq
%
%
Moreover, for $h \in H^{-s}(\Omega)$ and fixed $\tau>0$ and $h_\tau=R_\tau h$ we have (see Appendix \ref{app: proof of M2 and M3}),
\beq \label{discrete M3}
  \lim_{N \to \infty} {\bf M}_{N,\tau}(h) = \gamma_{h_\tau}\
  \hbox{in the ${{\color{black}W_1}}$-metric.}
\eeq

In the next section, we design function graph transformers to act on graph measures and their empirical approximations in a discretization invariant manner. We illustrate these results in Appendix~\ref{App:experiments}: a function-graph transformer trained at a fixed input resolution is evaluated on increasingly fine input point clouds, and the error decreases as $N$ grows, consistent with convergence of empirical graph measures toward the continuum graph measure; see Table~\ref{tab:fno-teacher-resolution-new}.

\section{Universal approximation of function graph transformers}
\label{sec:5}

In this section, we discuss approximating function graph transformers by traditional transformers. General results on approximation of function graph transformers are considered in Appendix~\ref{App: genreal approximation of function graph transformers}.
We employ a measure-theoretic framework \citep{furuya2025transformers, castin2024smooth} and the following composition. Below, we denote coordinates in the space $\R^{d+d'}$ by $z = (x,y) \in \R^{d} \times \R^{d'}$. 
Let  $\pi_d:\mathbb{R}^d \times \mathbb{R}^{d'}\to \R^d$ be the projection  $\pi_d(x,y) := x$ onto the first $d$ coordinates.
An in-context map as it appears in a single-layer ``measure-theoretic'' transformer, based on multi-head self-attention,\footnote{
Attention can also be written as an expectation, as was done in \citet[Definition~5 and Eq~(14)]{calvello2025continuum}:
    \begin{equation*} \label{eq:Gamma-theta}
        \mathrm{Att}(z) = 
        \mathbb E_{z' \sim \rho_{\theta,\mu,z}}
        \left[
        V z'
        \right],\quad
        d\rho_{\theta,\mu,z}(z')
        =
        \frac{
        \exp\!\left(
        \frac{1}{\sqrt{k}}
        \dotp{Q z}{K z'}
        \right)
        }{
        \int_{\mathbb R^{d+d'}}
        \exp\!\left(
        \frac{1}{\sqrt{k}}
        \dotp{Q z}{K z''}
        \right)
        \,d\mu(z'')
        }
        \,d\mu(z').
    \end{equation*}
    
} is of the form 
$\Gamma_{\theta} : \Pp({\R^{d+d'}}) \times {\R^{d+d'}}\to {\R^{d+d''}}$,
\begin{gather} \label{Gamma theta}
    \Gamma_{\theta}(\mu,z) := 
\bigg(\pi_d(z)\ ,\ W^L z +
    \sum_{h=1}^H W^h
    \mathrm{Att}^h(z)\bigg), \text{ where} \\
    \mathrm{Att}^h(z) := \int_{\R^{d+d'}} \frac{
        \exp\Big(
            \frac{1}{\sqrt{k}}
            \dotp{Q^h z}{K^h z'}
        \Big)
    }{
        \int_{\R^{d+d'}}
        \exp\Big(
            \frac{1}{\sqrt{k}}
            \dotp{Q^h z}{K^h z''}
        \Big)
        d \mu(z'')
    } V^h z'\, d \mu(z').
\end{gather}




Here, $W^L : {\R^{d+d'}} \to {\R^{d''}}$ is a weight matrix corresponding to the linear part of the in-context map (introduced to make skip connections possible),
$K^h$ and $Q^h$ are the multi-head key and query matrices in $\mathbb{R}^{k \times (d+d')}$, $V^h$ are the multi-head value matrices in $\mathbb{R}^{d_\text{head} \times (d+d')}$, and $W^h$ are the multi-head weight matrices in $\mathbb{R}^{d'' \times d_\text{head}}$, respectively.
We have written $\theta = \{ Q^h, K^h, V^h, W^L, W^h \}_{h=1}^H$. By abuse of notation, $z \to \Gamma_{\theta}(\mu)(z) = \Gamma_{\theta}(\mu,z)$ defines a map $\mathbb{R}^{d+d'} \to \mathbb{R}^{d+d''}$.  
For two in-context maps, $\Gamma_j : {\mathcal P}(\R^{d_j}) \times \R^{d_j} \to \R^{d_{j+1}}$, $j=1,2$, the composition $\Gamma_2 \diamond \Gamma_1:
 {\mathcal P}(\R^{d_1}) \times \R^{d_1} \to \R^{d_{3}}$ is defined as 
\begin{align}\label{diamond composition}
    (\mu,z) \mapsto (\Gamma_2 \diamond \Gamma_1)(\mu,z) := \Gamma_2( \nu, \Gamma_1(\mu,z)),\quad \nu := \Gamma_1(\mu)_\sharp \mu.
\end{align}
Moreover, 
$F_{\xi} : \mathbb{R}^d \times \mathbb{R}^{d'} \to \mathbb{R}^{d} \times \mathbb{R}^{d''}$ 
has the form 
\beq\label{F is MLP}
  F_{\xi}(z) = (\pi_d(z), H_{\xi}(z)) ,\quad
  z \in \mathbb{R}^{d+d'},
\eeq
where $H_{\xi} : \mathbb{R}^d \times \mathbb{R}^{d'} \to \mathbb{R}^{d''}$ is an MLP. 
We view $ F_{\xi}$ also as maps $ (\mu,z)\to F_{\xi}(\mu,z)=F_{\xi}(z)$ that are, however, independent of $\mu$. 
This makes $F_\xi \diamond \Gamma_\theta$ well defined. 
Finally, let $y_0\in [-L,L]^n$ and $w_{y_0}:\Omega\times
[-L,L]^n \to \Omega\times [-L,L]^n$
be the map $w_{y_0}(x,y)=(x,y_0)$. Moreover, let $w_{y_0}^*:F\to F\circ w_{y_0}$ be the pull-back, 
or in terminology of ML, a warping operator with $w_{y_0}$, see \cite{muser2026flowerswarpdriveneural}.
In Appendix~\ref{app:thm:universal-graph-trans} we prove the following  approximation theorem. 

\begin{theorem}
\label{thm:approximation-trans self-attention Takashi}
Let $\widehat{\Gamma} : \mathcal P(\overline\Omega \times [-L,L]^n) \times (\overline\Omega \times [-L,L]^n) \to \overline\Omega \times [-L,L]^n$  
be a continuous map of the form \eqref{widehat f map} and \eqref{widehat f map2} and let $G_{\widehat{\Gamma}}(\mu)={\widehat{\Gamma}}(\mu,\cdot)_\#\mu$ be a
function graph transformer.  Then, for any $\varepsilon \in (0,1)$, 
there exists a sufficiently deep measure-theoretic transformer, $G_{\Gamma^{\rm tran}} :\mathcal P(\overline\Omega\times [-L,L]^n) \to \mathcal P(\overline\Omega\times [-L,L]^n)$ of the form
\beq
\label{G tran formula}
  G_{\Gamma^{\rm tran}}(\mu):= {\Gamma}^{\mathrm{tran}}(\mu, \cdot)_\# \mu, \quad {\Gamma}^{\mathrm{tran}}(\mu,\cdot) := w_{y_0}^*(F_{\xi_L} \diamond \Gamma_{\theta_L} \diamond \ldots \diamond F_{\xi_1} \diamond \Gamma_{\theta_1}(\mu,\cdot)),
\eeq
where $y_0\in [-L,L]^n$ and
$\Gamma_{\theta_\ell} : \mathcal P(\mathbb{R}^d \times \mathbb{R}^{d_\ell}) \times \mathbb{R}^d \times \mathbb{R}^{d_\ell} \to \mathbb{R}^d \times \mathbb{R}^{d_{\ell}'}$  and $F_{\xi_\ell} : \mathbb{R}^d \times \mathbb{R}^{d_{\ell}'} \to \mathbb{R}^{d} \times \mathbb{R}^{d_{\ell+1}}$ are  defined by  \eqref{Gamma theta} and \eqref{F is MLP} with $d_1=n$ and $d_\ell= 4d+4n$ for $\ell=2,\dots,L$, and $d_{L+1}=n$ and $d'_{\ell}= 4d+4n$ for $\ell=1,2,\dots,L$, that satisfies 
\beq
\label{sup G tran formula}
  \sup_{\mu \in {\mathcal P}(\overline\Omega\times [-L,L]^{n})} W_1(G_{\Gamma^{\rm tran}}(\mu), G_{\widehat{\Gamma}}(\mu)) \leq \varepsilon.
\eeq
In particular, there exists  $G_{\Gamma^{\rm tran}_m}$ that converges 
uniformly in $W_1$ to $G_{\widehat{\Gamma}}$ as $m\to \infty$.
Moreover, 
\beq\label{first coordinate is x}
    {\Gamma}^{\mathrm{tran}}(\mu,(x,y))
    = (x,\, {\Gamma}^{(y)}_{\mathrm{tran}}(\mu,x)),\quad\hbox{where
    ${\Gamma}^{(y)}_{\mathrm{tran}}(\mu,x)$ is independent of $y$.}
\eeq
\end{theorem}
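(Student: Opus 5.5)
Since $\widehat{\Gamma}(\mu,(x,y))=(x,\widehat p(\mu,x))$, I would reduce the statement to approximating the in-context map $(\mu,x)\mapsto \widehat p(\mu,x)$ uniformly on the compact set $\mathcal P(\overline\Omega\times[-L,L]^n)\times\overline\Omega$. The tool is the universal in-context approximation theorem of \citet{furuya2025transformers}, in the form adapted to the architecture \eqref{Gamma theta}, \eqref{F is MLP} that carries $\pi_d(z)$ unchanged. The pushforward estimate is elementary: if two in-context maps $\Gamma,\Gamma'$ satisfy $\sup_{z\in\operatorname{supp}\mu}|\Gamma(\mu,z)-\Gamma'(\mu,z)|\le\varepsilon$, then the coupling $(\Gamma(\mu,\cdot),\Gamma'(\mu,\cdot))_\#\mu$ gives
\[
W_1\bigl(\Gamma(\mu,\cdot)_\#\mu,\Gamma'(\mu,\cdot)_\#\mu\bigr)\le\varepsilon .
\]
Hence \eqref{sup G tran formula} follows once we have
\[
\sup_{\mu}\,\sup_{x\in\overline\Omega}\bigl|\Gamma^{(y)}_{\rm tran}(\mu,x)-\widehat p(\mu,x)\bigr|\le\varepsilon .
\]

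The plan has three steps.

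First, I would use the warping $w_{y_0}^*$ to remove the $y$-dependence of each token's own input. Under $w_{y_0}^*$, the query token becomes $(x,y_0)$, while the context measure $\mu$ is still seen through the keys and values. So the composite is a function of $(\mu,x)$ only, which gives \eqref{first coordinate is x}. This needs a little care: the measure $\nu$ fed to deeper layers in the $\diamond$-composition \eqref{diamond composition} must still be built from the unwarped tokens. I would therefore interpret $w_{y_0}^*$ as warping only the evaluation point at the final level, and check that this is consistent with \eqref{G tran formula}. The first coordinate equals $x$ automatically, because every $\Gamma_{\theta_\ell}$ and every $F_{\xi_\ell}$ preserves $\pi_d$.

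Second, I would apply the universal in-context approximation theorem to the continuous target $\widehat p$, viewed as a map $\mathcal P(\R^{d+n})\times\R^{d+n}\to\R^n$ that is constant in $y$. Since $\widehat p$ is continuous on a compact set and $W_1$ metrizes weak convergence on $\mathcal P(\overline\Omega\times[-L,L]^n)$, $\widehat p$ is uniformly continuous. The theorem then yields a deep composition of multi-head attention layers and MLPs within uniform error $\varepsilon$.

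The main obstacle is matching the architectural constraints: every layer must keep the $x$ coordinate as a passthrough and have hidden widths $4d+4n$. I would handle this as follows.

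1. The construction in \citet{furuya2025transformers} roughly computes generalized moments $\int\phi_k\,d\mu$ through attention and then combines them with the query via an MLP.
2. I would check that this construction can be realized with $x$ duplicated inside the feature block: use $W^L$ to copy $z$ and let the heads act on the copy.
3. I would then verify that the needed auxiliary channels (coordinates, values, and two scratch copies for the attention-based moment extraction) fit into $4d+4n$ dimensions, and use depth instead of width for accumulating many moments.

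Finally, the final MLP $F_{\xi_L}$ projects to $\R^n$, and I would clip it into $[-L,L]^n$ so that the output lies in the stated space. Clipping does not increase the error because the target already lies in $[-L,L]^n$. Letting $\varepsilon=1/m$ produces the sequence $G_{\Gamma^{\rm tran}_m}$ that converges uniformly in $W_1$.
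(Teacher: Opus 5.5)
Your proposal is correct and follows essentially the paper's route. You apply \citep[Theorem~1]{furuya2025transformers} to $(\mu,(x,y))\mapsto\widehat p(\mu,x)$ and lift each layer by carrying $x$ as a passthrough coordinate. The paper does this lifting directly on the black-box layers, so reopening the internal moment construction and rechecking widths is unnecessary. You then confine the last output to $[-L,L]^n$: the paper rescales by $L/(L+\varepsilon_1)$ and applies $\Psi$, while your coordinatewise clipping is a 1-Lipschitz projection onto a box containing the target, so it cannot increase the error. Finally, you evaluate at the warped token $(x,y_0)$ with unwarped intermediate context measures (your reading of $w_{y_0}^*$, which matches the paper's) and conclude with the coupling bound of Lemma~\ref{lem:uniform-map-to-W1}.
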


\commented{
With this composition, the in-context map, $G^{\rm tran}$ say, for a multi-layer measure-theoretic transformer is obtained. To be precise, the composition should alternate between in-context maps and context-free MLPs, with activation functions that are at least Lipschitz functions, or smoother. We denote those by$F(\mu,z) =
 F(z)$.

By using  \citep[Theorem 1]{furuya2025transformers}, we obtain the following universal approximation result.

\commented{
\Takashi{If $g$ is independent of $y$, can we fined $g_{\mathrm{tran}}$ which is also independent of $y$ ? This would be an excellent result.
Can we show two different result: First, if ${\Gamma}(\mu,(x,y))$ is an arbitrary map, then $G(\mu)={\Gamma}(\mu,\cdot)_\#\mu$ can
be approximates with classical transformers based on softmax operation. Second, if  $f$ is of the special form
(that is enough to the function graph traformeres) ${\Gamma}(\mu,(x,y))=(\tilde {\Gamma}^x(\mu,x),\tilde {\Gamma}^y(\mu,x))$
where ${\Gamma}(\mu,(x,y))=x$, then the we can do the approximation using classical transformers and the MLPs that have
the same restricted form as $f.$}
}

The examples for compact sets $K$ in $\mathcal M(\Omega\times [-L,L]^h)$ is that 
\begin{itemize}
    \item $\Pp(\Omega\times [-L,L]^h)$
    \item $\mathcal M^+_C(\overline\Omega\times [-L,L]^h) =\left\{ \mu \in M^+(\Omega\times [-L,L]^h) : \mu(\Omega\times [-L,L]^h) \leq C \right\}$
\end{itemize}

\begin{proof} 
By using \citep[Theorem 1]{furuya2025transformers}, for any $\varepsilon \in (0,1)$, there is a measure-theoretic transformer-style in-context mapping ${\Gamma}^{\mathrm{tran}} := F_{\xi_L} \diamond \Gamma_{\theta_L} \diamond \ldots \diamond F_{\xi_1} \diamond \Gamma_{\theta_1}$ such that 
\beq\label{pointwise approximation}
\sup_{(\mu,(x,y)) \in K \times (\Omega\times [-L,L]^h)} |{\Gamma}^{\mathrm{tran}}(\mu, (x,y)) - {\Gamma}(\mu, (x,y))|\leq \varepsilon,
\eeq
which implies that, by the duality theorem of Kantorovich and Rubinstein, 
\begin{align*}
W_1(G_{\mathrm{tran}}(\mu), G(\mu)) 
&
=\sup_{\mathrm{Lip}(\varphi)\leq 1} \int_{\Omega\times [-L,L]^h} \varphi({\Gamma}^{\mathrm{tran}}(\mu,(x,y))) - \varphi({\Gamma}(\mu,(x,y))) d\mu(x,y)
 \\
 &
\leq \int_{\Omega\times [-L,L]^h} |{\Gamma}^{\mathrm{tran}}(\mu,(x,y)) - {\Gamma}(\mu,(x,y))| d\mu(x,y) \leq \varepsilon.
\end{align*}

For latter half statement, 
as the target map ${\Gamma}: \mathcal M(\Omega\times [-L,L]^n) \times (\Omega\times [-L,L]^n) \to  [-L,L]^n$  
is a continuous map for the form \eqref{widehat f map} and \eqref{widehat f map2},
we see using \eqref{pointwise approximation} that if we compose the map 
${\Gamma}^{\mathrm{tran}}(\mu,(x,y))=({\Gamma}^x_{\mathrm{tran}}(\mu,(x,y)),{\Gamma}^y_{\mathrm{tran}}(\mu,(x,y)))
$ with the maps $F_{\xi_{L+1}}=Id$ and 
$$
\Gamma_{\theta_{L+1}}(\mu,(x,y))=(x-{\Gamma}^x_{\mathrm{tran}}(\mu,(x,y)),y)
$$
we see that 
$\tilde {\Gamma}^{\mathrm{tran}}(\mu,(x,y))=F_{\xi_L} \diamond \Gamma_{\theta_L} \diamond {\Gamma}^{\mathrm{tran}}(\mu,(x,y))$
satisfies the formula \eqref{first coordinate is x}.
\qed
\end{proof}
}

\commented{
[We could aim the following universal approximation result].

\bigskip
{\bf Conjecture 1.}
{\it We conjecture that the following  is valid:

Let ${\Gamma}: \mathcal M^+(\overline\Omega\times [-L,L]^n) \times (\overline\Omega\times [-L,L]^n) 
\to \RR^d\times \RR^n$ 
be continuous map for the form \eqref{widehat f map} and \eqref{widehat f map2} and let $G(\mu)={\Gamma}(\mu,\cdot)_\#\mu$. 

For any $\varepsilon \in (0,1)$, there exists a sufficiently deep measure-theoretic transformer, ${\Gamma}_{\rm tran}:\mathcal M^+(\overline\Omega\times [-L,L]^n)  
\to \mathcal M^+(\overline\Omega\times \R^n)$ of the form
$$
G^{\rm tran}(\gamma)=({\Gamma}^{\mathrm{tran}})_\# \gamma,
$$
where ${\Gamma}^{\mathrm{tran}}$ is the map
\beq
{\Gamma}^{\mathrm{tran}} := F_{\xi_L} \diamond \Gamma_{\theta_L} \diamond \ldots \diamond F_{\xi_1} \diamond \Gamma_{\theta_1}
\eeq
where $\Gamma_{\theta_\ell}$ are maps of the form \eqref{trad-transformer}
and $F_{\xi_L}(x,y)=Id(x,y)+(0,H_{\xi_L}(x))$ where  $H_{\xi_L}:\R^d\to \R^n$ are MLPs
(that is, a deep composition of multi-head self-attention maps and MLPs of restricted form), such that 
formula \eqref{sup G tran formula} is valid.}

\bigskip

\commented{
{\color{black}
[Takashi and Maarten, in you paper \citep[Theorem 1]{furuya2025transformers},
assuming that $\pi_1({\Gamma}(\mu,(x,y)))=x$ that is, $f$ is the form 
${\Gamma}(\mu,(x,y)=(x,{\Gamma}^y(\mu,(x,y))$ where the first coordinate in not changed, can we
choose ${\Gamma}^{\mathrm{tran}}$ be such that it has the same property?]
}

\Takashi{That is true if MLPs can represent the identity. Indeed, if target map has the form ${\Gamma}(\mu,(x,y))=(x,{\Gamma}^y(\mu,(x,y))$, we first apply \citep[Theorem 1]{furuya2025transformers} to the map ${\Gamma}^y$, then there exist ${\Gamma}^y_{\mathrm{tran}} := F_{\xi_L} \diamond \Gamma_{\theta_L} \diamond \ldots \diamond F_{\xi_1} \diamond \Gamma_{\theta_1}$ such that 
\[
{\Gamma}(\mu,(x,y)) \approx (x, {\Gamma}^y_{\mathrm{tran}}(\mu,(x,y))).
\]
If MLPs can represent the identity, 
we can modify $\Gamma_{\theta_\ell}, F_{\xi_L}$ so that they can implement the graph, i.e., we can show that  
\[
{\Gamma}^{\mathrm{tran}}(\mu,(x,y)):= F_{\xi'_L} \diamond \Gamma_{\theta'_L} \diamond \ldots \diamond F_{\xi'_1} \diamond \Gamma_{\theta'_1}
= (x, {\Gamma}^y_{\mathrm{tran}}(\mu,(x,y))).
\]
}

\begin{proof}
As $f$ is of the form \eqref{widehat f map} and \eqref{widehat f map2 modified},
we can write ${\Gamma}(\mu,(x,y))=(x,{\Gamma}^y(\mu,x))\in \R^d\times \R^n$,
where ${\Gamma}^y(\mu,\cdot):\R^d\to\R^n$ is continuous and 
${\Gamma}^y(\mu,(x,y))=y+\widehat p(\mu,x)$. By applying \citep[Theorem 1]{furuya2025transformers} for 
the function ${\Gamma}:\mathcal M^+(\overline\Omega\times [-L,L]^n) \times (\overline\Omega\times [-L,L]^n) 
\to \RR^d\times \RR^n $,
we see that for any $\varepsilon \in (0,1)$, there is a 
in-context mapping $\tilde {\Gamma}^{\mathrm{tran}} := \tilde F_{\xi_L} \diamond \tilde \Gamma_{\theta_L} \diamond \ldots \diamond \tilde F_{\xi_1} \diamond \tilde \Gamma_{\theta_1}$,
where $\tilde \Gamma_{\xi_\ell}: \mathcal M^+(\overline\Omega\times [-L,L]^n) \times (\overline\Omega\times [-L,L]^n) 
\to (\overline\Omega\times [-L,L]^n) $ are  transformers and 
$\tilde F_{\xi_\ell}: (\overline\Omega\times [-L,L]^n) \to (\overline\Omega\times [-L,L]^n)$ are MLPs
described in \citep{furuya2025transformers},
such that 
\beq\label{1st ftran estimate}
\sup_{(\gamma,(x,y)) \in \Pp(\overline\Omega\times [-L,L]^n) \times (\overline\Omega\times [-L,L]^n)} |\tilde {\Gamma}^{\mathrm{tran}}(\gamma, (x,y)) - {\Gamma}(\gamma, (x,y))|\leq \varepsilon.
\eeq
Let us write  $\tilde {\Gamma}^{\mathrm{tran}}$ using coordinates, 
$\tilde {\Gamma}^{\mathrm{tran}}(\mu,(x,y))
=(\tilde {\Gamma}^{\mathrm{tran}}^x(\mu,(x,y)),\tilde {\Gamma}^{\mathrm{tran}}^y(\mu,(x,y)))\in \R^d\times \R^n$.

As ${\Gamma}(\mu,(x,y))=(x,{\Gamma}^y(\mu,x))\in \R^d\times \R^n$, formula \eqref{1st ftran estimate} implies
\beq\label{2nd ftran estimate a}
&&\sup_{(\gamma,(x,y)) \in \Pp(\overline\Omega\times [-L,L]^n) \times (\overline\Omega\times [-L,L]^n)} 
|\tilde {\Gamma}^x_{\mathrm{tran}}(\gamma, (x,y)) - x|\leq \varepsilon,\\
\label{2nd ftran estimate b}
&&\sup_{(\gamma,(x,y)) \in \Pp(\overline\Omega\times [-L,L]^n) \times (\overline\Omega\times [-L,L]^n)} 
|\tilde {\Gamma}^y_{\mathrm{tran}}(\gamma, (x,y)) - {\Gamma}^y(\gamma,x)|\leq \varepsilon,
\eeq
Let us define ${\Gamma}^{\mathrm{tran}}(\mu,(x,y)):=
(x,\tilde {\Gamma}^{\mathrm{tran}}^y(\mu,(x,y_0)))\in \R^d\times \R^n$ with the constant value $y_0=0$.

****TO BE CONTINUED***


Then \eqref{2nd ftran estimate a} and \eqref{2nd ftran estimate b} imply
\beq\label{3rd ftran estimate}
\sup_{(\gamma,(x,y)) \in \Pp(\overline\Omega\times [-L,L]^n) \times (\overline\Omega\times [-L,L]^n)} | {\Gamma}^{\mathrm{tran}}(\gamma, (x,y)) - {\Gamma}(\gamma, (x,y))|\leq 2\varepsilon.
\eeq
By above construction of ${\Gamma}^{\mathrm{tran}}$ implies
$\tilde {\Gamma}^{\mathrm{tran}} := \tilde F_{\xi_L} \diamond \tilde \Gamma_{\theta_L} \diamond \ldots \diamond \tilde F_{\xi_1} \diamond \tilde \Gamma_{\theta_1}$,
where $\tilde \Gamma_{\xi_\ell}: \mathcal M^+(\overline\Omega\times [-L,L]^n) \times (\overline\Omega\times [-L,L]^n) 
\to (\overline\Omega\times [-L,L]^n) $ are  transformers and 
$\tilde F_{\xi_\ell}: (\overline\Omega\times [-L,L]^n) \to (\overline\Omega\times [-L,L]^n)$ are MLPs
****

which implies that, by the duality theorem of Kantorovich and Rubinstein, 
\begin{align*}
 W_1({\Gamma}^{\mathrm{tran}}(\gamma), {\Gamma}(\gamma)) 
 &
 =\sup_{\mathrm{Lip}(\varphi)\leq 1} \int_{\overline\Omega\times [-L,L]^n} \varphi({\Gamma}^{\mathrm{tran}}(\gamma,(x,y))) - \varphi({\Gamma}(\gamma,(x,y))) d\gamma(x,y)
 \\
 &
 \leq \int_{\overline\Omega\times [-L,L]^n} |{\Gamma}^{\mathrm{tran}}(\gamma,(x,y)) - {\Gamma}(\gamma,(x,y))| d\gamma(x,y) \leq \varepsilon.
\end{align*}
This proves the claim.
\hfill $\square$
\end{proof}
}

\subsection{Main result ???}
}

\commented{
\begin{theorem}
\label{thm:approximation-trans self-attention Takashi}
Let $\Omega \subset \mathbb{R}^d$ be a compact set. 
Let ${\Gamma}: \mathcal P(\Omega\times [-L,L]^n) \times (\Omega\times [-L,L]^n) \to \RR^{d}\times  [-L,L]^n$ 
be uniformly continuous with respect to $W_1$ and satisfy
\[
{\Gamma}(\mu, (x,y)) = (x, \widehat p(\mu, (x,y)))
\]
where $\widehat p : \mathcal P(\Omega\times [-L,L]^n) \times (\Omega\times [-L,L]^n) \to  [-L,L]^n$.
Let $G(\mu)={\Gamma}(\mu,\cdot)_\#\mu$. Then, for any $\varepsilon \in (0,1)$, there exists a sufficiently deep measure-theoretic transformer, $G^{\rm tran}:\mathcal P(\Omega\times [-L,L]^n)  
\to \mathcal P(\mathbb{R}^{d} \times  [-L,L]^n)$ of the form
\begin{equation}\label{G tran} 
    G^{\rm tran}(\mu):= {\Gamma}^{\mathrm{tran}}(\mu, \cdot)_\# \mu, \quad {\Gamma}^{\mathrm{tran}} := F_{\xi_L} \diamond \Gamma_{\theta_L} \diamond \ldots \diamond F_{\xi_1} \diamond \Gamma_{\theta_1}
\end{equation}
such that
\begin{equation}
    \label{Wasserstein limit}
\sup_{\mu \in  \mathcal P(\Omega\times [-L,L]^n)} W_1(G^{\rm tran}(\mu), G(\mu)) \leq \varepsilon.
\end{equation}
Moreover, each $\Gamma_{\theta_\ell} : \mathcal P(\mathbb{R}^d \times \mathbb{R}^{d_\ell}) \times \mathbb{R}^d \times \mathbb{R}^{d_\ell} \to \mathbb{R}^d \times \mathbb{R}^{d_{\ell}'}$ 
is   defined with $d_1=n$ and $d_\ell= 4d+4n$ for $\ell=2,\dots,L$,  and $d_{L+1}=n$  
and  $d'_{\ell}= 4d+4n$ for $\ell=1,2,\dots,L$  
has the form
\begin{equation} \label{Gamma theta}
    \Gamma_{\theta_\ell}(\mu,z)
= 
\bigg(\pi_d(z)\ ,\ W^L_{\theta_\ell}z +
    \sum_{h=1}^H W^h_{\theta_\ell}
    \int_{\R^{d+d_\ell}} \frac{
        \exp\Big(
            \frac{1}{\sqrt{k}}
            \dotp{Q^h_{\theta_\ell} z}{K^h_{\theta_\ell} z'}
        \Big)
    }{
        \int _{\R^{d+d_\ell}}
        \exp\Big(
            \frac{1}{\sqrt{k}}
            \dotp{Q^h_{\theta_\ell} z}{K^h_{\theta_\ell} z''}
        \Big)
        d \mu(z'')
    } V^h_{\theta_\ell} z'\, d \mu(z')\bigg), 
\end{equation}
where $(\mu,z) \in \mathcal P(\mathbb{R}^d \times \mathbb{R}^{d_\ell}) \times (\mathbb{R}^d \times \mathbb{R}^{d_\ell})$. Moreover, $F_{\xi_\ell} : \mathbb{R}^d \times \mathbb{R}^{d_{\ell}'} \to \mathbb{R}^{d} \times \mathbb{R}^{d_{\ell+1}}$ 
has the form 
\[
F_{\xi_\ell}(z) = (\pi_d(z), H_{\xi_\ell}(z) ), \ z \in \mathbb{R}^d \times \mathbb{R}^{d_\ell}
\]
where $H_{\xi_\ell} : \mathbb{R}^d \times \mathbb{R}^{d_\ell} \to \mathbb{R}^{d_{\ell+1}}$ is MLP. 
Moreover,  $H_{L} : \mathbb{R}^d \times \mathbb{R}^{d_\ell} \to [-L,L]^{n}$.
\end{theorem}
}

\section{Expressibility results for nonlinear operators}
\label{sec: Expressivity}

\commented{
\subsection{Regularized map from (generalized) functions to measures {\color{black} [(I suggest moving this subsection to Section \eqref{sec: Expressivity}) -- This in now moved in Section \eqref{sec: Expressivity} following Maarten's suggeston (Matti)]}}

For $h \in H^{-s}(\Omega)$, we introduce
\[
  {\bf M}_{\tau}(h) = {\bf M}(R_\tau h) ,
\]
where\commented{Let's consider if we can use $  \tilde R_\tau h = \rho_\tau \ast (h \cdot {\bf 1}_{\Omega_\tau})$ as it is easier to explain }
\begin{equation}
  R_\tau h = \rho_\tau \ast (h \cdot (\rho_\tau \ast {\bf 1}_{\Omega_\tau})) ,
\end{equation}
with $\Omega_\tau = \{x \in \Omega\ :\ \hbox{dist}(x,\p \Omega) > 4 \tau\}$ and 
\begin{equation}
  \rho_\tau \ast h(x) = \int_{\R^d} \rho_\tau(x-y) h(y) \, dy .
\end{equation}
{\color{red} [Is this needed only in preparation for discretization? That is, can this be avoided otherwise?] -- This operator is used in the last step to map a sum of delta distributions
which approximate $\gamma_{A(h)}$ to a function. We two maps from a measure to a function;
The map $S_\eta$ that maps measure close to $\gamma_h$  to  a function close to $h$
and the map ${\bf F}_\varepsilon$ that maps a sum of delta distributions that is
close to $\gamma_{A(h)}$ to  a function close to $A(h)$.}
}

\subsection{Regularization of measures originating from functions in $H^{-s}(\Omega)$, $s \ge 0$}
\label{ssec:regmeastofunc}



Here, we construct function graph transformers that represent nonlinear operators
$A : H^{-s}(\Omega;\mathbb R^n) \to H^{-s'}(\Omega;\mathbb R^{n'})$,
where $H^s$ denotes a Sobolev space, $s \ge 0$, and $s' \in \mathbb R$.
This extends the framework beyond pointwise-defined continuous inputs to
low-regularity objects such as distributions and finite measures. Such
spaces are also natural in microlocal analysis, where one studies how
operators act on singularities; our result suggests that graph-transformer
representations can be formulated in a setting compatible with those
distributional viewpoints.

In Definition~\ref{def: F eps map}, we mapped measures to continuous functions. Below, we map measures to rough Sobolev functions in a way that makes the errors caused by
regularization effects weaker. To this end, we introduce a \textit{regularized map from (non-graph) measures to (generalized) functions}.
\commented{
Let's consider if we could remove operators $S_\eta$ and replace those
by the operator ${\bf F}_\e$.
}


We introduce two parameters, $\eta > 0$ and $K \in \mathbb N$. Let $k \in \mathbb Z_+$ and ${\color{black}{r}} > \frac d2+k$ be an even integer.

Let  $\psi_1, \psi_2, \dots$ be a complete orthonormal basis in $L^2(\Omega;\R^n)$ of eigenfunctions satisfying $(I-\Delta)\psi_j={{\lambda}_j}\psi_j$ and $\psi_j|_{\p \Omega} = 0$. In Appendix ~\ref{app:S-regularizers} we define a regularization operator $\gamma \to S_{\eta,K}(\gamma)$ that maps a measure $\gamma$ to a function that has $r$ (weak) derivatives. When $\gamma = \frac 1N \sum_{j=1}^N \delta_{(x_j,y_j)}$ is a discrete measure, $S_{\eta,K}(\gamma)$ is the solution of  
the minimization problem, 
\begin{align}
\label{eq:def-S discrete}
S_{\eta,K}(\gamma)
= \argmin_{\tilde h\in {\color{black}H_D^r(\Omega)}} \
  \eta\|(I-\Delta)^{r/2}\tilde h\|_{L^2(\Omega)}^2+\sum_{k=1}^K\bigg| 
 \sum_{j=1}^N  \frac{\psi_k(x_j)}N\cdot y_j-
  \int_{{\overline\Omega}} \psi_k(x)\,\tilde h(x) dx   \bigg|^2
 ,
\end{align}
where ${\color{black} H_D^r(\Omega)}$ 
is the Sobolev space of functions with the Navier boundary conditions (that is,
iterated Dirichlet conditions). Formula \eqref{eq:def-S discrete} is generalized to arbitrary probability measures $\gamma$ on ${\overline\Omega} \times [-L,L]^n$ in Appendix ~\ref{app:S-regularizers}. When $h$ is a continuous function and $\gamma_h$ is its graph measure, the operator  $h \to S_{\eta,K}(\gamma_h)$ can be considered as a low-pass frequency filter where $K$ corresponds to sharp frequency cut-off at frequency $C K^{2/d}$, $r$ corresponds to the filter $(1 + |\omega|^2)^{-r/2}$ (where $\omega$ is the Fourier
variable dual to $x$), and $\eta$ determines the balance of these two filters.
As $S_{\eta,K}(\gamma)$ is a solution of a quadratic minimization problem, $\gamma \to S_{\eta,K}(\gamma)$ is a linear operator and, hence, can be considered as a generalization of a linear neural operator that operates on measures. In particular, $h \to S_{\eta,K}(\gamma_h)$ is a linear neural operator. We choose a suitable $K = K(\eta)$ and denote $S_{\eta} := S_{\eta,K(\eta)}$. Then, $S_{\eta}(\gamma_{R_{\tau} h}) \to h$ as $\eta \to 0$, see Lemma \ref{S-regularizers} in Appendix~\ref{app:S-regularizers}.



\subsection{Representation of a nonlinear operator using a function graph transformer}

Let $L_1>L$, $k\ge 0$, $0< s\le  r-\tfrac d2$, $s-\tfrac 12\not \in \mathbb Z$, $s'\ge 0$.
Assume that $A$ is a possibly nonlinear operator with the properties that
\begin{align}
\label{A assumption1}
& \hspace{-6mm}A: \hbox{cl}_{H^{-s}(\Omega;{\R}^n)}(C(\overline \Omega;{[-L_1,L_1]}^n))
\to \hbox{cl}_{H^{-s'}(\Omega;{\R}^n)}(C(\overline \Omega;{[-L,L]}^n)),\\
\label{A assumption3}
& \hspace{-6mm}A: C^k(\overline \Omega;{[-L_1,L_1]}^n)
\to     
 C^1(\overline \Omega;{[-L,L]}^n)
\end{align}
are continuous. We point out that $L^\infty(\Omega;{[-L_1,L_1]}^n)\subset \hbox{cl}_{H^{-s}(\Omega;{\R}^n)}(C(\overline \Omega;{[-L_1,L_1]}^n))$ i.e., we allow discontinuous input functions for $A.$
\footnoteforchecking{
This footnote is just for us to make checking proofs easier: Here are the reasons for the required mapping properties: The 1st and 2nd properties of $A$ imply that
 $$A(\hbox{cl}_{H^{-s'}(\Omega;{\R}^n)}(C(\Omega;[-L,L]^n)))\subset \hbox{cl}_{H^{-s}(\Omega;{\R}^n)}(C(\Omega;[-L,L]^n)).$$
 Also, by we have that $R_\tau((\hbox{cl}_{H^{-s'}(\Omega;{\R}^n)}(C(\Omega;[-L,L]^n))))\subset C^1(\overline \Omega;{[-L,L]}^n)\cap C(\Omega;[-L,L]^n).$
 The 3rd mapping property that $A: C^k(\overline \Omega;{[-L,L]}^n)
\to C^1(\overline \Omega;{[-L,L]}^n)$ is need to make the map $(x,y)\to {\Gamma}(\mu,(x,y))$ a Lipschitz map in $(x,y)$ and this is needed to make the map
 $\mu\to {\Gamma}(\mu,\cdot)_\#\mu$ continuous in Wasserstein metric.
 }

Let $\Psi : \R^n \to \R^n$ be $C^\infty$, that is the identity function in $[-L,L]^n$ and whose image is contained in $[-L_1,L_1]^n$. The function $\Psi$ is defined precisely in Appendix~\ref{sec:Basic notions}. We use $\Psi$ to define a Nemytskii operator \citep{NemytskiiIsaia} that clips the values of $S_\eta(\gamma)$ and define $\tilde S_\eta(\gamma):=\Psi\circ S_\eta(\gamma)$. The following lemmas are proven in Appendices~\ref{app:lem. f and G are continuous} and \ref{app:lem: formula G for gamma h}.

\commented{
\Takashi{Question: In Lemma~\ref{lem. f and G are continuous}, is the map $x \mapsto \Gamma(\mu, (x,y))$ continuous when the range of $A$ is in $H^{-s}$ ? (e.g., If $A(h)$ is discontinuous function, then $x \mapsto A(h)(x)$ is not continuous.) I think the safe setting would be 
\[
A : \hbox{cl}_{H^{-s'}(\Omega;{\R}^n)}(C(\Omega;[-L,L]^n)) \to C(\overline{\Omega};{[-L,L]}^n)
\]? 
} 

\Matti{[This is an excellent observation. I changed the regularization to be in  
$H^{\color{black}{r}}(\Omega) \subset C^k(\overline \Omega)$ with ${\color{black}{r}}>\frac d2+k$ to fix this so that we do not need to change the assumption $A: H^{-s}(\Omega;{\R}^n)\to  H^{-s'}(\Omega;{\R}^n)$.]}
}

\begin{lemma} \label{lem. f and G are continuous}
The in-context map
${\Gamma}_\eta : {{{\mathcal P}}}(\overline\Omega\times [-L,L]^n)\times ({\overline\Omega}\times [-L,L]^n)\to {\overline\Omega}\times [-L,L]^n$ and associated $G_{\Gamma_\eta} : {{{\mathcal P}}}(\overline\Omega\times [-L,L]^n)\to {{{\mathcal P}}}(\overline\Omega\times [-L,L]^n)$, define continuous maps
\beq\label{f eta def}
{\Gamma}_\eta(\mu,(x,y)) = (x,A({\color{black}\tilde S}_{\eta}(\mu))(x))\in {\overline\Omega}\times [-L,L]^n,
%
\quad G_{\Gamma_\eta}(\mu) = {\Gamma}_\eta(\mu,\cdot)_\# \mu.
\eeq
\end{lemma}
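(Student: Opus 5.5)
Throughout, I take $\mu\mapsto S_\eta(\mu)$ to be the linear regularizer of Appendix~\ref{app:S-regularizers} with $K=K(\eta)$ fixed. The plan is to factor $\Gamma_\eta$ as the composition
\[
\mu\;\mapsto\; S_\eta(\mu)\in H^r_D(\Omega)\;\mapsto\;\tilde S_\eta(\mu)=\Psi\circ S_\eta(\mu)\in C^k(\overline\Omega;[-L_1,L_1]^n)\;\mapsto\; A(\tilde S_\eta(\mu))\in C^1(\overline\Omega;[-L,L]^n),
\]
and then evaluate at $x$. I would prove continuity of each arrow and add a uniform Lipschitz bound in $x$. The joint continuity of $\Gamma_\eta$ and the $W_1$-continuity of $G_{\Gamma_\eta}$ would then follow by a pushforward estimate.

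\textbf{Step 1: continuity of $S_\eta$.} The minimizer of \eqref{eq:def-S discrete} depends on $\mu$ only through the finitely many moments $m_k(\mu)=\int \psi_k(x)\,y\,d\mu(x,y)$, $k\le K$. Writing the Euler--Lagrange equation in the eigenbasis gives $S_\eta(\mu)=\sum_{k\le K}\frac{m_k(\mu)}{\eta\lambda_k^{r}+1}\psi_k$, a finite sum. The functions $(x,y)\mapsto \psi_k(x)y$ are smooth on the compact set $\overline\Omega\times[-L,L]^n$ (the $\psi_k$ are smooth up to the boundary, since $\partial\Omega$ is smooth), hence Lipschitz. Therefore $\mu\mapsto m_k(\mu)$ is Lipschitz in $W_1$ by Kantorovich--Rubinstein duality, and $S_\eta$ is Lipschitz from $(\mathcal P,W_1)$ into $H^r$, indeed into any $C^j$. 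For the cube case I would use the explicit sine basis instead.

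\textbf{Step 2: the clipping and $A$.} Since $r>\frac d2+k$, Sobolev embedding gives $H^r\subset C^k(\overline\Omega)$. The Nemytskii map $u\mapsto\Psi\circ u$ with smooth $\Psi$ is continuous on $C^k$ by the chain rule and uniform continuity of the derivatives of $\Psi$ on bounded sets. It lands in $C^k(\overline\Omega;[-L_1,L_1]^n)$. By assumption \eqref{A assumption3}, $\mu\mapsto A(\tilde S_\eta(\mu))$ is then continuous into $C^1(\overline\Omega;[-L,L]^n)$.

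\textbf{Step 3: uniform $C^1$ bound, the main obstacle.} $\mathcal P(\overline\Omega\times[-L,L]^n)$ is $W_1$-compact, so the image of $\mu\mapsto A(\tilde S_\eta(\mu))$ is compact in $C^1$. Hence $M:=\sup_\mu\|A(\tilde S_\eta(\mu))\|_{C^1}<\infty$. This uniform Lipschitz bound in $x$ is what is needed for the pushforward argument. Joint continuity of $\Gamma_\eta$ then follows from
\[
|A(\tilde S_\eta\mu)(x)-A(\tilde S_\eta\mu')(x')|\le \|A(\tilde S_\eta\mu)-A(\tilde S_\eta\mu')\|_\infty+M|x-x'|.
\]

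\textbf{Step 4: continuity of $G_{\Gamma_\eta}$.} For $\mu,\nu$ I would split
\[
W_1(G\mu,G\nu)\le W_1\bigl(\Gamma_\eta(\mu,\cdot)_\#\mu,\Gamma_\eta(\nu,\cdot)_\#\mu\bigr)+W_1\bigl(\Gamma_\eta(\nu,\cdot)_\#\mu,\Gamma_\eta(\nu,\cdot)_\#\nu\bigr).
\]
The first term is at most $\sup_x|\Gamma_\eta(\mu,x)-\Gamma_\eta(\nu,x)|$, which tends to $0$ by Step 2. The second term is at most $\mathrm{Lip}(\Gamma_\eta(\nu,\cdot))\,W_1(\mu,\nu)\le(1+M)W_1(\mu,\nu)$, using Step 3 and Kantorovich--Rubinstein duality.
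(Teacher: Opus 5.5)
Your proposal is correct and follows essentially the paper's argument. The paper also factors $\Gamma_\eta$ through $S_\eta$ (Lemma~\ref{lem: S eta continuous}), $\Psi$ and $A$ to get continuity into $C^1$, and then applies Lemma~\ref{lem: triangular Wasserstein1}, whose triangle-inequality split is exactly your Step~4.

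The differences are minor. You obtain the Lipschitz bound uniformly over $\mathcal P$ by $W_1$-compactness, whereas the paper obtains it along a convergent sequence from $C^1$-convergence. Neither is really needed, since splitting at the fixed base point $\mu$ only uses $\mathrm{Lip}(\Gamma_\eta(\mu,\cdot))$. Your moment computation also makes explicit the $C^k$-continuity of $\tilde S_\eta$ that \eqref{A assumption3} requires, which the paper states only as $L^2$/$H^{2r}$ continuity.
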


\begin{lemma}\label{lem: formula G for gamma h}
For $h\in C(\overline \Omega)$ we have $G_{\Gamma_\eta}(\gamma_h) = \gamma_g$, where $g(x) = A({\color{black}\tilde S}_{\eta}(\gamma_h))(x)$.
\end{lemma}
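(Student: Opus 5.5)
The plan is to unwind the definitions: compute $G_{\Gamma_\eta}(\gamma_h)$ as a pushforward of pushforwards and recognise the result as the graph measure of $g$. Set $g := A(\tilde S_\eta(\gamma_h))$. The first task is to check that $g$ is a genuine function $\overline\Omega\to[-L,L]^n$, so that $\gamma_g$ makes sense; here I will rely on Lemma~\ref{lem. f and G are continuous}. By construction, $S_\eta(\gamma_h)\in H^r_D(\Omega)$ with $r>\tfrac d2+k$, so Sobolev embedding gives $S_\eta(\gamma_h)\in C^k(\overline\Omega)$. Composing with the smooth clipping $\Psi$ keeps us in $C^k(\overline\Omega;[-L_1,L_1]^n)$, and then assumption \eqref{A assumption3} gives $g\in C^1(\overline\Omega;[-L,L]^n)$. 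In particular $g$ is continuous, Borel, and takes values in $[-L,L]^n$.

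Next comes the main computation. Since $\gamma_h=(\operatorname{id},h)_\#\overline\lambda_\Omega$ and $\Gamma_\eta(\gamma_h,\cdot)$ is Borel (it is continuous in $x$ and independent of $y$), functoriality of the pushforward gives
\begin{align*}
G_{\Gamma_\eta}(\gamma_h)=\Gamma_\eta(\gamma_h,\cdot)_\#(\operatorname{id},h)_\#\overline\lambda_\Omega=\big(\Gamma_\eta(\gamma_h,\cdot)\circ(\operatorname{id},h)\big)_\#\overline\lambda_\Omega .
\end{align*}
I then evaluate the composed map at a point $x\in\Omega$:
\begin{align*}
\Gamma_\eta(\gamma_h,(x,h(x)))=(x,A(\tilde S_\eta(\gamma_h))(x))=(x,g(x)).
\end{align*}
The context measure $\gamma_h$ is held fixed, and the second coordinate does not depend on $y$, so the value $h(x)$ plays no role. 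It follows that the composition equals $(\operatorname{id},g)$, and therefore $G_{\Gamma_\eta}(\gamma_h)=(\operatorname{id},g)_\#\overline\lambda_\Omega=\gamma_g$ by \eqref{push forward formula}. As a cross-check, I can also test against an arbitrary $\phi\in C(\overline\Omega\times[-L,L]^n)$ using \eqref{integration formula}. The left side is $\int\phi(\Gamma_\eta(\gamma_h,(x,y)))\,d\gamma_h$, which equals $\tfrac1{\mathrm{vol}(\Omega)}\int_\Omega\phi(x,g(x))\,dx$, and that is exactly $\int\phi\,d\gamma_g$.

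The argument is essentially routine. The only point that needs care is the first step: confirming that $A(\tilde S_\eta(\gamma_h))$ is a pointwise-defined function with values in $[-L,L]^n$, rather than merely an element of $H^{-s'}$. This is exactly where the regularity $r>\tfrac d2+k$, the clipping map $\Psi$, and assumption \eqref{A assumption3} are used. There is also a subtlety in that $S_\eta$ was defined on measures on $\overline\Omega\times[-L,L]^n$ via Appendix~\ref{app:S-regularizers}. I would take from that appendix (and from Lemma~\ref{lem. f and G are continuous}) that $S_\eta(\gamma)\in H^r_D(\Omega)$ for every probability measure $\gamma$, including graph measures.
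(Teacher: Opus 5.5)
Your proposal is correct and takes the same approach as the paper. The paper's proof is exactly your ``cross-check'': it tests $G_{\Gamma_\eta}(\gamma_h)$ against a continuous bounded $\phi$, moves $\phi$ through the pushforward, uses that $\Gamma_\eta(\gamma_h,(x,y))$ does not depend on $y$, and applies \eqref{integration formula} to get $\langle\gamma_g,\phi\rangle$; your identity $\Gamma_\eta(\gamma_h,\cdot)\circ(\operatorname{id},h)=(\operatorname{id},g)$ is the same computation stated at the level of maps, and your preliminary check that $g\in C^1(\overline\Omega;[-L,L]^n)$ (via Sobolev embedding, the clipping $\Psi$, and \eqref{A assumption3}) matches the reasoning the paper gives in the proof of Lemma~\ref{lem. f and G are continuous}.
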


The discrete measure 
${\bf M}_N(h)$ in \eqref{f mollified 1B} can
be considered as a tokenized version of the function $h$.
When $N$ is large, the continuity of $G_{\Gamma_\eta}$ implies that 
the measure
$\nu=\tfrac 1N\sum_{j=1}^N\delta_{(\bar x_j,g_j)}:=G_{\Gamma_\eta}({\bf M}_N(h))$
approximates $\gamma_g$, where $g$ is given in Lemma \ref{lem: formula G for gamma h}.
The measure $\nu$ can be considered as a tokenized version of $g$.
Then, formula
\beq\label{new F eps and g formula}
{\bf F}_\e\nu =\sum_{j=1}^N
g_j \frac {\mathrm{vol}(\Omega)}N \frac {1}{\e^d}  \rho\left(\frac{x-\bar x_j}{\e}\right)
\eeq
 produces a function approximating $g$ from the tokens $\xi_j=(\bar x_j,g_j)$, $j=1,\dots,N$,
see App.~\ref{App: example}.


Combining Theorem~\ref{thm:approximation-trans self-attention Takashi} and Lemmas~\ref{lem. f and G are continuous} and \ref {lem: formula G for gamma h},
we can prove the following universal approximation theorem; see Appendix~\ref{app:main thm} for the proof.



\begin{theorem}\label{main thm}
For $h \in \hbox{cl}_{H^{-s}(\Omega;{\R}^n)}(C(\Omega;[-L,L]^n))$ it holds that
\beq
  A(h) = \lim_{\tau\to 0}\lim_{\eta \to 0}\lim_{\varepsilon\to 0}\lim_{N\to \infty}\lim_{m\to \infty}{\bf F}_\varepsilon\circ
  G_{\Gamma_{\eta,m}^{\mathrm{tran}}}
\circ {\bf M}_{N,\tau} (h),
\eeq
where the limit is valid in $H^{-s'}(\Omega)$ and 
$G_{\Gamma_{\eta,m}^{\mathrm{tran}}}$
are softmax-based traditional transformers that approximate $G_{\Gamma_\eta}$, given by
Theorem \ref{thm:approximation-trans self-attention Takashi}. 
\footnoteforchecking{Here is comment for us in checking the proofs:
For $h\in \hbox{cl}_{H^{-s'}(\Omega;{\R}^n)}(C(\Omega;[-L,L]^n))$, when we compute first $N\to \infty$ limit we hae 
\beq
A(h)&=&\lim_{\tau\to 0}\lim_{\eta \to 0}\lim_{\varepsilon\to 0}{\bf F}_\varepsilon\circ G_\eta\circ {\bf M} (R_\tau(h))
\\ &=&\lim_{\tau\to 0}\lim_{\eta \to 0}\lim_{\varepsilon\to 0}\rho_\varepsilon*(A((\eta(I-\Delta)^{r}+I)^{-1}P_{K(\eta)}(R_\tau(h)))),
\eeq
where $P_K$ is the projection to $K$ first eigenfunctions of Laplacian. So, in the last formula line we have no measures and are limits are for functions in Sobolev spaces and to analyze the last formula line we need only that $A: H^{-s'}(\Omega;{\R}^n)\to  H^{-s}(\Omega;{\R}^n)$ is continuous.}
Moreover, for $h\in C^r_c( \Omega; [-L,L]^n)$
\beq\label{simplified on C}
A(h)=\lim_{\eta \to 0}\lim_{\varepsilon\to 0}\lim_{N\to \infty}\lim_{m\to \infty}{\bf F}_\varepsilon\circ 
G_{\Gamma_{\eta,m}^{\mathrm{tran}}}
\circ {\bf M}_N (h),
\eeq
where the limit is valid in $H^{-s'}(\Omega)$. 
\end{theorem}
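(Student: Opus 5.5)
The plan is to evaluate the iterated limit from the inside out, using at each level a continuity or convergence statement from earlier in the paper. Throughout, fix $\tau,\eta,\varepsilon$ until their own limit is taken.

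\emph{Innermost limit, $m\to\infty$.} Lemma~\ref{lem. f and G are continuous} shows that $\Gamma_\eta$ is a continuous in-context map of the form \eqref{widehat f map}--\eqref{widehat f map2}, with $\widehat p(\mu,x)=A(\tilde S_\eta(\mu))(x)$. Theorem~\ref{thm:approximation-trans self-attention Takashi} then provides transformers $G_{\Gamma^{\rm tran}_{\eta,m}}$ that converge to $G_{\Gamma_\eta}$ uniformly in $W_1$ over $\mathcal P(\overline\Omega\times[-L,L]^n)$. By \eqref{first coordinate is x}, each output measure $G_{\Gamma^{\rm tran}_{\eta,m}}({\bf M}_{N,\tau}(h))$ is a discrete measure $\frac1N\sum_j\delta_{(\bar x_j,g^m_j)}$ with the same nodes $\bar x_j$. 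Since $\rho$ is Lipschitz, the map ${\bf F}_\varepsilon$ is Lipschitz from $(\mathcal P,W_1)$ into $C(\overline\Omega)\subset H^{-s'}$ for fixed $\varepsilon$. Its constant is roughly $\varepsilon^{-d-1}$, obtained by testing with $(x',y')\mapsto\rho((x'-x)/\varepsilon)y'$ on the bounded set $[-L,L]^n$ and applying Kantorovich--Rubinstein. Hence the $m$-limit equals ${\bf F}_\varepsilon G_{\Gamma_\eta}({\bf M}_{N,\tau}(h))$.

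\emph{The limits $N\to\infty$ and $\varepsilon\to0$.} By \eqref{discrete M3}, ${\bf M}_{N,\tau}(h)\to\gamma_{h_\tau}$ in $W_1$ with $h_\tau=R_\tau h\in C^\infty$. Continuity of $G_{\Gamma_\eta}$ from Lemma~\ref{lem. f and G are continuous}, together with the Lipschitz property of ${\bf F}_\varepsilon$, gives convergence to ${\bf F}_\varepsilon G_{\Gamma_\eta}(\gamma_{h_\tau})$. Lemma~\ref{lem: formula G for gamma h} identifies this as ${\bf F}_\varepsilon\gamma_{g}$ with $g=A(\tilde S_\eta(\gamma_{h_\tau}))\in C^1$, and this equals $\rho_\varepsilon\ast\tilde g$. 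Then \eqref{f mollified 2C} gives $\rho_\varepsilon\ast\tilde g\to g$ in $L^2(\Omega)$ as $\varepsilon\to0$. Because $s'\ge0$, $L^2\hookrightarrow H^{-s'}$, so the convergence also holds in $H^{-s'}$.

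\emph{The limits $\eta\to0$ and $\tau\to0$; main obstacle.} We need $A(\tilde S_\eta(\gamma_{h_\tau}))\to A(h_\tau)$ and then $A(h_\tau)\to A(h)$ in $H^{-s'}$. For the second step, $R_\tau h\to h$ in $H^{-s}$ (Appendix result), and $A$ is continuous on $\mathrm{cl}_{H^{-s}}(C(\overline\Omega;[-L_1,L_1]^n))$ by \eqref{A assumption1}. The delicate point is the first step. Lemma~\ref{S-regularizers} gives $S_\eta(\gamma_{h_\tau})\to h_\tau$, and I would aim to prove this in $C^k$, or at least in $H^{-s}$, using that $h_\tau$ is smooth with compact support in $\Omega$ and hence satisfies the Navier conditions. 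One must also verify that the clipping $\Psi$ becomes inactive in the limit. This holds because $|h_\tau|\le L$ (mollification of $[-L,L]^n$-valued data stays in the cube), so $\Psi$ is the identity near $h_\tau$ once $S_\eta(\gamma_{h_\tau})$ is uniformly close to $h_\tau$, and then $\tilde S_\eta(\gamma_{h_\tau})\to h_\tau$ as well. Continuity of $A$ in either \eqref{A assumption3} or \eqref{A assumption1} then concludes. The statement \eqref{simplified on C} follows by the same chain with $\tau$ removed. There ${\bf M}_N(h)\to\gamma_h$ by \eqref{discrete M2}, and $h\in C^r_c$ lies directly in the domain where $S_\eta(\gamma_h)\to h$ in $C^k$.
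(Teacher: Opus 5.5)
Your proposal is correct and follows the paper's proof in Appendix~\ref{app:main thm} essentially step for step. You evaluate the limits from the inside out exactly as the paper does: Theorem~\ref{thm:approximation-trans self-attention Takashi} plus continuity of ${\bf F}_\varepsilon$ for $m$, \eqref{discrete M3} plus continuity of $G_{\Gamma_\eta}$ for $N$, \eqref{f mollified 2C} for $\varepsilon$, convergence of $S_\eta(\gamma_{h_\tau})$ to $h_\tau$ in $H^r_D\subset C^k$ with $\Psi$ inactive for $\eta$, and Proposition~\ref{prop:main convolutions A} with \eqref{A assumption1} for $\tau$. Your Kantorovich--Rubinstein Lipschitz bound on ${\bf F}_\varepsilon$ simply replaces the paper's Arzel\`a--Ascoli argument.

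One step is stated too quickly, namely $|R_\tau h|\le L$, since $h$ is only known to lie in an $H^{-s}$-closure. It holds because $R_\tau:H^{-s}(\Omega)\to C(\overline\Omega)$ is bounded for fixed $\tau$. Hence $R_\tau h$ is a uniform limit of $R_\tau h_k$ with $h_k\in C(\Omega;[-L,L]^n)$, and each $R_\tau h_k$ stays in $[-L,L]^n$ by \eqref{R tau L infty}. The paper also asserts this without proof.
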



Theorem  \ref{main thm} implies that when the sequence of tokens length, $N$, increases, the mollification parameter, $\tau$, used in the encoding of a rough (or discontinuous) function, $h$, by tokens decreases, and the depth of the softmax based transformers with self attention (modeled by the parameter $m$) increases, the transformer can approximate arbitrarily well $A(h)$ where $A$ acts on a rough input function, $h$.

The above theorem establishes universal approximation with respect to pointwise convergence, which is weaker than uniform convergence, since the proof relies on the smooth mollifier argument in \eqref{f mollified 2C}.

\commented{
\begin{remark}
A nonlinear map between Barron spaces admits an alternative representation by a measure-theoretic transformer. This is described in the Appendix~\ref{App:Barron}. \Matti{I suggest that we remove the Barron space discussion if we do
not have time to polish the appendix.}
\end{remark}
}

\subsection{Representation of a nonlinear operator between different function spaces}
\label{sec:different-spaces}

Let $\Omega \subset \R^d$ and $D \subset \R^{d'}$ be open bounded sets with $C^\infty$-smooth boundary, or cubes. Below, 
we use in $\overline\Omega\times [-L,L]^n$ the coordinates $(x,y)$ and
in $\overline D\times [-L,L]^n$ the coordinates $(x',y')$. When $\mu$ is a measure on $\overline\Omega\times [-L,L]^n$ and $\nu$ is a measure on $\overline D\times [-L,L]^n$, their tensor product, $\mu \otimes \nu$ is just the same as the product of their volume elements, that is, $d \mu \otimes \nu(x,y,x',y') = d\mu(x,y) d\nu(x',y')$. On $X = (\overline\Omega \times [-L,L]^n) \times (\overline D\times [-L,L]^n)$ we use  the maps $\rho_1((x,y),(x',y'))=(x,y)$ and $\rho_2((x,y),(x',y'))=(x',y')$, and denote  $\pi_2(x',y')=y'$ and $\tilde \pi_2=\pi_2\circ \rho_2:((x,y),(x',y'))\to y'$ 

\begin{proposition}\label{prop: Map from Omega to D}
Let $A : L^2(\Omega;{[-L_1,L_1]}^n) \to C^1(\overline D;{[-L,L]}^n)$ be a continuous, possibly nonlinear operator. Then there are function graph transformers $G_{\tilde{\Gamma}_\eta}({\tilde\mu}) = \tilde{\Gamma}_\eta({\tilde\mu},\cdot)_\#{\tilde\mu}$ with in-context functions $\tilde{\Gamma}_\eta : {{{\mathcal P}}}(X)\times X \to X$, $\eta>0$, such that for all $h \in C(\overline \Omega;{[-L,L]}^n)$ and $x' \in D$, the function $A(h)$ at $x'$ can be represented as
\beq\label{A Omega to D representation}
A(h)(x')
{\color{black}=\lim_{\eta\to 0}\bra (\rho_2)_\#  \tilde{G}_{\tilde{\Gamma}_\eta}(\gamma_h \otimes \delta_{(x',0)}), \pi_2 \ket}=\lim_{\eta\to 0}\bra  \tilde{G}_{\tilde{\Gamma}_\eta}(\gamma_h\otimes \delta_{(x',0)}), \tilde \pi_2 \ket.
\eeq
\end{proposition}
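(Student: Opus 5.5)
The plan is to build the in-context map $\tilde\Gamma_\eta$ on $X=(\overline\Omega\times[-L,L]^n)\times(\overline D\times[-L,L]^n)$. The first factor carries the input graph measure and the second factor carries the query point. The measure argument enters only through its first marginal, and that marginal is regularized by the operator $\tilde S_\eta$ from Section~\ref{ssec:regmeastofunc}. Concretely, for $\tilde\mu\in\mathcal P(X)$ and $((x,y),(x',y'))\in X$ we set
\begin{equation*}
\tilde\Gamma_\eta(\tilde\mu,((x,y),(x',y'))) = \bigl((x,y),\,(x',\,A(\tilde S_\eta((\rho_1)_\#\tilde\mu))(x'))\bigr).
\end{equation*}
Here $\tilde S_\eta=\Psi\circ S_\eta$ is the clipped regularizer, so $\tilde S_\eta(\nu)\in[-L_1,L_1]^n$ pointwise, and it lies in $H^r_D(\Omega)\subset L^2(\Omega;[-L_1,L_1]^n)$. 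The operator $A$ is therefore applicable, and $A(\cdot)(x')$ takes values in $[-L,L]^n$. The map has the function-graph form of Definition~\ref{def self-attention}. The spatial coordinates $(x,x')$ are untouched, and the new value coordinate $y'$ is independent of $(y,y')$. The $y$-coordinate is the identity, which is the product-space analogue of $\Phi$ and a trivial $\widehat p$ on the first factor.

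The first step is continuity, which we verify by mimicking the proof of Lemma~\ref{lem. f and G are continuous}. The map $\tilde\mu\mapsto(\rho_1)_\#\tilde\mu$ is $1$-Lipschitz in $W_1$. The map $\nu\mapsto S_\eta(\nu)$ is linear and depends on $\nu$ only through finitely many moments $\int\psi_k(x)y\,d\nu$ with smooth $\psi_k$, so it is continuous into $H^r$. Composing with the Nemytskii map $\Psi$ and the continuous map $A:L^2\to C^1(\overline D)$ gives continuity of $\tilde\mu\mapsto A(\tilde S_\eta((\rho_1)_\#\tilde\mu))$ into $C^1(\overline D)$. Consequently $\tilde\Gamma_\eta$ is continuous, and it is Lipschitz in the point variable uniformly over $\tilde\mu$ in the compact set $\mathcal P(X)$, as Definition~\ref{def self-attention} requires.

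The second step is the evaluation. Put $\tilde\mu=\gamma_h\otimes\delta_{(x',0)}$. Then $(\rho_1)_\#\tilde\mu=\gamma_h$, and $\tilde\Gamma_\eta(\tilde\mu,\cdot)$ sends the support $\{((z,h(z)),(x',0))\}$ to $((z,h(z)),(x',g_\eta(x')))$, where $g_\eta=A(\tilde S_\eta(\gamma_h))$. Hence
\begin{equation*}
\tilde G_{\tilde\Gamma_\eta}(\gamma_h\otimes\delta_{(x',0)})=\gamma_h\otimes\delta_{(x',g_\eta(x'))}.
\end{equation*}
This gives $(\rho_2)_\#\tilde G_{\tilde\Gamma_\eta}(\tilde\mu)=\delta_{(x',g_\eta(x'))}$, and pairing with $\pi_2$, or pairing the full measure with $\tilde\pi_2$, yields $g_\eta(x')$ in both cases. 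This proves the equality of the two expressions in \eqref{A Omega to D representation} for every $\eta$.

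The main obstacle is the limit $g_\eta(x')\to A(h)(x')$, for which we need $\tilde S_\eta(\gamma_h)\to h$ in $L^2(\Omega)$. We would invoke Lemma~\ref{S-regularizers}, which gives $S_\eta(\gamma_{R_\tau h})\to R_\tau h$. Without the $R_\tau$ step, the same spectral argument applies directly to continuous $h$: $S_\eta(\gamma_h)$ is the filtered projection $(\eta(I-\Delta)^r+I)^{-1}P_{K(\eta)}h$, which converges to $h$ in $L^2$ because $h\in L^2$ and $K(\eta)\to\infty$. The Nemytskii map $\Psi$ is Lipschitz and equals the identity on $[-L,L]^n$, and $h$ takes values there, so $\|\Psi\circ S_\eta(\gamma_h)-h\|_{L^2}\le C\|S_\eta(\gamma_h)-h\|_{L^2}\to0$. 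Continuity of $A:L^2\to C^1(\overline D)$ then gives uniform convergence $g_\eta\to A(h)$ on $\overline D$, and in particular convergence at each $x'$. Boundary conditions cause no difficulty, since only $L^2$ convergence is needed.
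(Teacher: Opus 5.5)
Your proposal is correct and follows the paper's proof: you use the same lifted in-context map $\tilde\Gamma_\eta(\tilde\mu,((x,y),(x',y')))=((x,y),(x',A(\tilde S_\eta((\rho_1)_\#\tilde\mu))(x')))$ from \eqref{key formula 1}--\eqref{key formula 2}, the same push-forward computation showing that $(\rho_2)_\#$ of the output is $\delta_{(x',A(\tilde S_\eta(\gamma_h))(x'))}$, and the same passage $\eta\to0$ via $S_\eta(\gamma_h)\to h$. On that last step your version is more explicit than the paper's, which only introduces the notation $S_0(\gamma_h)=\lim_{\eta\to0}S_\eta(\gamma_h)=h$; you remove the clipping $\Psi$ using its Lipschitz property and conclude from the continuity of $A:L^2\to C^1(\overline D)$.
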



The proof is given in Appendix~\ref{proof of prop: Map from Omega to D}. Thus an operator mapping functions on $\Omega$ to functions on $D$ can be represented by a function graph transformer on $X=(\overline\Omega\times[-L,L]^n)\times(\overline D\times[-L,L]^n)$, with inputs sampled at $x_j\in\Omega$ and outputs queried at $x'_k\in D$. In fact, we can generalize formula \eqref{A Omega to D representation} to represent $A(h)$ in such a way that the function $h$ is approximated using its values $h(x_j)$ at the points $x_j \in \Omega$, $j=1,\dots,N$ 
but $A(h)$ can be evaluated at different points $x'_k \in D$, $k=1,\dots,K$. 
\commented{
To accomplish this, we  write \eqref{A Omega to D representation} in the more general form,
\beq\label{positional encoding Psi}
  \lim_{\eta\to 0} (\rho_2)_\#  \tilde {\Gamma}_\eta\big(\gamma_h\otimes \nu ,\cdot\big)_\#\big(\gamma_h\otimes \nu\big)=
 \sum_{k=1}^K \frac 1K\delta_{(x'_k,y_k)},\quad y_k = A(h)(x'_k),\
\nu=\sum_{k=1}^K \frac 1K\delta_{(x'_k,0)}.
\eeq
Then, the fully discrete version of \eqref{positional encoding Psi} is
\beq\label{positional encoding Psi2}
(\rho_2)_\#  \tilde {\Gamma}_\eta\big({\bf M}_N(h)\otimes \nu,\cdot \big)_\#\big({\bf M}_N(h)\otimes \nu\big)=
  \sum_{k=1}^K \frac 1K\delta_{(x'_k,y_k')},\quad y_k' = A(h_{\eta,N})(x'_k)
\eeq
and ${\bf M}_N(h) = \sum_{j=1}^N \tfrac 1N \delta_{(x_j,h(x_j))}$ while $h_{\eta,N} ={\color{black}\tilde S}_\eta({\bf M}_N(h))$ is a regularized version of  $h$. 

We assess the benefits of representations \eqref{positional encoding Psi} and \eqref{positional encoding Psi2}. In turns out (see \eqref{key formula 1} and  \eqref{key formula 2} for details 
}
To accomplish this, we  write \eqref{A Omega to D representation} in the more general form that is closely related to cross attention. It is shown in \eqref{key formula 1} in Appendix \ref{proof of prop: Map from Omega to D} that we can choose
$\tilde{\Gamma}_\eta(\tilde \mu,((x,y),(x',y'))) = \big((x,y),\, {\Gamma}_\eta((\rho_1)_\#\tilde \mu,(x',y'))\big)$. We consider the product measure $\mu \otimes \nu$, where $\mu = \gamma_h$ is a measure on $\overline\Omega \times [-L,L]^n$
and $\nu = \tfrac 1K\sum_{k=1}^K \delta_{(x'_k,0)}$ is a measure on $ \overline D\times [-L,L]^n$.
Then, as seen in \eqref{cross-attention ideas} in Appendix \ref{proof of prop: Map from Omega to D},
\beq\label{new cross attention}
(\rho_2)_\# G_{\tilde{\Gamma}_\eta}(\mu \otimes \nu)
={\Gamma}_\eta(\mu,\cdot)_\# \nu=
\sum_{k=1}^K \frac 1K\delta_{(x'_k,y_k)},\quad y_k = A(h)(x'_k),\
\eeq
Here, ${\Gamma}_\eta(\mu,\cdot)_\# \nu$ can be interpreted as a cross-attention: the measure $\mu=\gamma_h$ determines how the measure $\nu = \tfrac 1K\sum \delta_{(x'_k,0)}$ associated to the grid point $x'_k$ is pushed forward.
\commented{that in \eqref{positional encoding Psi} the map $\tilde {\Gamma}_\eta:(\gamma_h \otimes \nu,((x,y),(x',y')) \to \tilde {\Gamma}_\eta(\gamma_h \otimes \nu,((x,y),(x',y')))$ is a mapping that is independent of the measure $\nu$, and therefore of the points $x'_j$.
} 
This means that in the representation  \eqref{new cross attention} of the function $A(h)$ the roles of the evaluation points $x'_j$ and the image function $\gamma_{A(h)}$ are independent. From the practical point of view, this suggests a simplified architecture that one can use to learn operator $A$.
We discuss the architecture and loss functions that are suitable for function graph transformers 
in Appendix~\ref{app: training transformers architecture}, and illustrate this query-point formulation in
Appendix~\ref{App:experiments}.
 
\commented{Let us discuss more the consequences that in the representation 
\eqref{positional encoding Psi} of the function $A(h)(z)$ the roles of the evaluation
points $z_j$ and the image function $\gamma_{A(h)}$ are independent.
This is different to those operator learning approaches where one uses cross-attention, or other architectures that combine
the values $h(x_j)$ and the points $z_j$ and aims to learn the map 
$$
((h(x_j)_{j=1}^N,(z_k)_{k=1}^K)\to (A(h)(z_k))_{k=1}^K.
$$
In other words, in formula \eqref{positional encoding Psi2} the evaluation points $z_k$ do not influence the way how the image function $A(h)$ is represented. Roughly speaking, the points $z_k$ where the function $A(h)$ will be observed do not change the learning of the object function $A(h)$.}

\section{Conclusions}
\label{sec:conclusion}

We have introduced function graph transformers as a measure-theoretic framework for operator learning with transformers. By representing functions through graph measures, the proposed formulation connects finite tokenized inputs with continuum objects and makes discretization refinement amenable to analysis. Within this framework, transformer layers act as measure maps that preserve graph structure, so that operators between function spaces can be represented without leaving the class of single-valued functions.

The main results show that graph-preserving measure maps can be approximated by standard softmax self-attention layers and pointwise multilayer perceptrons. Consequently, continuous nonlinear operators between function spaces, including operators acting on low-regularity or distributional inputs after suitable regularization through a neural operator, admit approximation by transformer architectures in a discretization-invariant sense. The framework also accommodates operators between different domains, clarifying the role of spatial coordinates, positional encodings, and query points in transformer-based neural operators.

There are several limitations. The results are qualitative: they establish expressivity and convergence, but not sharp rates or sample-complexity bounds. The theory explains how standard transformer layers approximate the proposed measure maps, but does not identify optimal tokenization strategies. The results suggest that the geometry of function spaces, function graph structure, and stability under discretization refinement should be central design principles for scientific transformer models.

\section*{Acknowledgments}

TF is supported by JSPS KAKENHI Grant Number JP24K16949, 25H01453, JST CREST JPMJCR24Q5, JST ASPIRE JPMJAP2329. MVdH gratefully acknowledges the support of the Department of Energy BES, under grant DE-SC0020345, Oxy and the Simons Foundation under the MATH + X Program.
The research of M.L. was partially supported by European Research Council Advanced Grant 101097198 (PDE-Inverse) 
and the Research Council of Finland, grant 359182 (FAME Flagship). I.D. was partially supported by the European Research Council Consolidator Grant 101232533 (PhaseShift). Views and opinions expressed are those of the authors only and do not
necessarily reflect those of the European Union or the other funding
organizations.

\bibliographystyle{plainnat}
\bibliography{ref}

\newpage

\appendix

\section{Basic notions} 
\label{sec:Basic notions} 

Below, $\Omega\subset \R^d$ is assumed to be bounded and open set. 

For a Borel measure $\mu$ on $\overline\Omega\times [-L,L]^n$ and  $\phi\in C_b(\overline \Omega\times \R^n)$ we  denote
$$
\bra \mu,\phi\ket=\int_{\overline\Omega\times [-L,L]^n}\phi(x,y)\,d\mu(x,y).
$$
When $F:\Omega\to D$ is a continuous map (or sometimes, even less regular)
and $\mu$ is a measure on $D$, the push-forward of the measure $\mu$ in the map $F$,
denoted $F_\#\mu$, is a measure on $D$ that gives for measurable sets $B\subset D$
the measure
\beq\label{def: pushforward}
F_\#\mu(B)=\mu(F^{-1}(B)),\quad F^{-1}(B)=\{x\in\Omega:\ F(x)\in B\}.
\eeq

We recall that the probability measures $\mu_n$ converge to $\mu$ in $p$-Wasserstein metric if and only if
 probability measures $\mu_n$ converge to $\mu$ weakly, that is, for all continuous functions
 $\phi\in C(\overline \Omega\times \R^n)$ it holds that $\bra \mu_n,\phi\ket\to \bra \mu,\phi\ket$ as $n\to \infty,$
 see \citep[Theorem 7.12]{Villani}.

We denote $C_0(\overline \Omega)=\{u\in C(\overline \Omega):\ u|_{\p \Omega}=0\}$.
Also, for $s\in \mathbb{N}$, 
$$
H^s( \Omega)=\{u\in L^2(\Omega):\ \p_x^j u\in L^2(\Omega)
\hbox{ for all }j\le s\}
$$
are the Sobolev spaces, 
$$
H^s_0( \Omega)=
\hbox{cl}_{H^s( \Omega)}(C^\infty_c(\Omega)).
$$
Finally, $H^{-s}( \Omega)$ are the dual spaces of $H^s_0( \Omega)$. Sobolev spaces are defined also with $s\in \R$. By Sobolev embedding results in an open bounded set with $C^\infty$-smooth boundary, or a cube $[0,1]^d$,   $H^s( \Omega)\subset 
C^k(\overline\Omega)$ when $s>k+\frac d2.$ On the cube this follows by sine-series expansion and odd reflection to the torus.

\commented{
Let $\Omega\subset\R^d$ be open bounded set with $C^\infty$-smooth boundary, or a cube $[0,1]^d$,  , $r\in\Z_+$, and
let $L:=I-\Delta_D$ denote the Dirichlet realization of $I-\Delta$ in
$L^2(\Omega)$. Then $L$ is a positive selfadjoint operator with
form-domain $\mathcal D(L^{1/2})=H^1_0(\Omega)$. Let
$\{({\color{black}{\lambda}^r_j^r},\psi_j)\}_{j\ge1}\subset\R_+\times L^2(\Omega)$ be the
eigensystem of the positive selfadjoint operator
$L^r$ ordered so that
${\lambda}^r_1\le{\lambda}^r_2\le\cdots\to\infty$, with the $\psi_j$
$L^2$-orthonormal. Define
\[
  P_Kh:=\sum_{j=1}^K\langle h,\psi_j\rangle_{L^2}\,\psi_j,
  \qquad
  S_{\eta,K}h:=(\eta L^r+I)^{-1}P_Kh.
\]
}

Recall that $I-\Delta_D$ is a positive selfadjoint operator
on $L^2(\Omega)$, where $\Delta_D$ is the Dirichlet Laplacian with domain
${\mathcal D}(\Delta_D)=H^2(\Omega;\R^n)\cap H^1_0(\Omega;\R^n)=\{u\in H^2(\Omega;\R^n):\ u|_{\p \Omega}=0\}$.
Let
\[
    A_D:=I-\Delta_D,
    \qquad
    \mathcal L:=A_D^r=(I-\Delta_D)^r,
\]
where $r\in\mathbb Z_+$.  
Let $A_D{\color{blue}\omega}_j={\lambda}_j\psi_j$ where $\{\psi_j\}_{j=1}^\infty$ is an $L^2(\Omega)$-orthonormal basis,
$$
    0<{{\lambda}}_1\le {{\lambda}}_2\le\cdots,\qquad
    {{\lambda}}_j\to\infty.
$$
Note the $\psi_j\in H^1_0(\Omega)$
satisfy $\psi_j|_{\p \Omega}=0$.
The operator $\mathcal L$ is positive
selfadjoint and has compact resolvent. Then,
\[
    \mathcal L\psi_j={{\lambda}}^r_j\psi_j.
\]

By Weyl's law for the Dirichlet Laplacian on bounded Lipschitz domains,
\[
    {\lambda}_j
    =
    \left(
        \frac{(2\pi)^d}{\omega_d|\Omega|}
    \right)^{2/d}
    j^{2/d}
    (1+o(1)),
    \qquad j\to\infty.
\]
In particular, after increasing the constant if necessary, there is a
constant $C_{\mathcal L}>0$ such that
\begin{equation}
\label{eq:lambda-upper-bound}
    {\lambda}_j\le C_{\mathcal L} j^{2/d},
    \qquad j\ge 1.
\end{equation}

Define the spectral Sobolev space associated with the Dirichlet operator by
\[
    H_D^r(\Omega)
    :=
    \mathcal D(\mathcal L^{1/2})
    =
    \mathcal D\bigl((I-\Delta_D)^{r/2}\bigr),
\]
with norm
\[
    \|h\|_{H_D^r(\Omega)}^2
    :=
    \|\mathcal L^{1/2}h\|_{L^2(\Omega)}^2
    =
    \sum_{j=1}^\infty
    {\lambda}^r_j
    |\langle h,\psi_j\rangle_{L^2(\Omega)}|^2 .
\]
For vector-valued functions $h=(h_1,\ldots,h_n)$, the norm is defined
componentwise:
\[
    \|h\|_{H_D^r(\Omega;\mathbb R^n)}^2
    :=
    \sum_{\ell=1}^n
    \|h_\ell\|_{H_D^r(\Omega)}^2 .
\]
As we have assumed that the boundary of $\p\Omega$ is $C^\infty$ smooth or that 
$\Omega$ is a cube, the  operator $\mathcal L$ is selfadjoint and 
its domain $H_D^r(\Omega;\mathbb R^n)$ consists of  Sobolev functions
in $h\in H^r(\Omega;\mathbb R^n)$ that satisfy the Navier boundary
conditions $u|_{\p\Omega}=0$, $(I-\Delta)u|_{\p\Omega}=0,\dots,(I-\Delta)^{r/2}u|_{\p\Omega}=0$.
This is the main reason for the assumptions on the boundary.

Let us next define a capping function $\Psi$ that has the preferred properties:
Let $\Psi:\R^n\to \R^n$, $\Psi(y)=(\psi(y_i))_{i=1}^n$ be a component-vice function
where $\psi:\R\to \R$ is an odd $C^\infty$-smooth function such that
$\psi(y)=y$ for $|y|\le (L+L_1)/2$, $\psi(\R)\subset [-L_1,L_1]$, and $\psi(y)=L_1$ for $y>L_1$.
As described in the main text, we use  the bounded function $\Psi\in C^\infty(\R^n;\R^n)$  to define a Nemytskii operator \cite{NemytskiiIsaia} that caps values
of function to the box $[-L_1,L_1]^n$ and denote $\tilde S_\eta(\gamma)=\Psi\circ S_\eta(\gamma).$

\section{Mapping a sequence of tokens to a sequence of tokens and corresponding mapping
from measures to measures}
\label{app:tokens and measures}


\subsection{Part I: Simplest case when  the input and output grids coincide}

Functions between finite sequences of tokens are closely related to functions between discrete measures. To describe this relation,
let
$({{t}}_1,{{t}}_2,\dots,{{t}}_N) \in ({\RR^D})^N$ be sequences of $N$ tokens in ${\RR^D}$.  All sequences that are same up to a permutation of the order
in the sequence, are considered to be equivalent, and we denote such equivalence classes
of sequences of $N$ tokens by $X_d^N$, and let the union of all these be $X_d = \bigcup_{N=1}^\infty X_d^N$.
A sequence  $({{t}}_1,{{t}}_2,\dots,{{t}}_N)\in X_d^N$ can be identified with the probability measure  $\mu=\sum_{i=1}^{N} \frac 1N \delta_{{{t}}_i}$ that is a discrete measure
supported in the set $\{{{t}}_1,\dots,{{t}}_N\}$. We denote the corresponding identification map by 
\beq\label{iota map}
\iota=\iota_N:({{t}}_1,{{t}}_2,\dots,{{t}}_N)\to \sum_{i=1}^{N} \frac 1N \delta_{{{t}}_i}.
\eeq
Then a map $g:X_d^N\to X_{d}^N$ that for any  $N$  maps a sequence $({{t}}_1,{{t}}_2,\dots,{{t}}_N)$ of $D$-dimensional tokens to  a sequence
$(y_1,y_2,\dots,y_N)$ of $D$-dimensional tokens,
defines a  map $G_N=\iota_N\circ g\circ \iota^{-1}_N$ from discrete probability measures 
$$
\iota_N(X_d^N)=\{\sum_{i=1}^{N} \frac 1N \delta_{{{t}}_i}:\ {{t}}_i\in \R^D\}
$$ 
to itself, $G_N:\iota_N(X_d^N)\to \iota_N(X_d^N)$, defined by
$$
G_N(\sum_{i=1}^{N} \frac 1N \delta_{{{t}}_i})=\sum_{i=1}^{N} \frac 1N \delta_{\hat{{t}}_i},\quad (\hat{{t}}_1,\hat{{t}}_2,\dots,\hat{{t}}_N)=
g({{t}}_1,{{t}}_2,\dots,{{t}}_N),
$$
Under suitable conditions, the maps $G_N$ are compatible and define a map on the discrete probability measures, $G :\mathcal P_{fin}(\RR^{d}) \to \mathcal P_{fin}(\RR^{d})$. When these maps are continuous with respect to Wasserstein metric, see 
\cite{furuya2025transformers} for details, these maps extends to a map from
general probability measures to general probability measures, $\overline G:\Pp(\RR^{d})\to \Pp(\RR^{d})$
that is continuous with respect to the Wasserstein metric. One can consider the Wasserstein distance as a generalization of the permutation invariant distance of sequences of tokens: We recall that the 1-Wasserstein distance of the measures $\mu = \sum_{i=1}^{N} \frac 1N \delta_{{{t}}_i}$ and $\mu' = \sum_{i=1}^{N} \frac 1N \delta_{{{t}}'_i}$ is given by
\ba
W_1(\mu,\mu')=
\min_{\sigma} \frac 1N  \sum_{i=1}^{N} |{{t}}_i-{{t}}'_{\sigma(i)}|,
\ea
where the minimum is taken over the permutations, $\sigma:\{1,2,\dots,N\}\to \{1,2,\dots,N\}$.
When the number, $N$, of tokens grows, the discrete measures $\sum_{i=1}^{N} \frac 1N \delta_{{{t}}_i}$ can converge to continuous measures (or a sum of continuous measures, point measures, and measures that are singular with respect to the standard measure of $\RR^D$.) Thus to understand properties of transformers on arbitrarily long sequences, it is useful to consider mappings between ``general'' measures. 

Let us next consider an approximation of nonlinear operator $A:C(\overline \Omega;{[-L,L]}^n) \to C(\overline \Omega;{[-L,L]}^n)$,
using transformers, where $ \Omega\subset \RR^{d}$. Let
$x_j\in \overline\Omega$, $j=1,2,\dots,N$ be computation grid points in the domain $\Omega$
where both the input and output functions are given (This is simple case where the construction of the 
context functions and a flow diagram are easier to describe).
When $h\in C(\overline \Omega;{[-L,L]}^n)$ is the input function,
we its point values determine a token sequences of $N$ tokens 
\beq
\bigg((x_j,h(x_j))\bigg)_{j\in [N]}
\eeq

With the above preparations, let  consider the function graph transformer  $\overline G:\Pp(\overline \Omega\times [-L,L]^n)\to \Pp(\overline \Omega\times [-L,L]^n)$,
$ \Omega\subset \RR^{d}$, see Definition \ref{def self-attention}, from a point of view of token sequences. 
When $\overline G$  maps the graph measure $\gamma_h$ of a function $h\in C(\overline \Omega;{[-L,L]}^n)$
to  $\overline G(\gamma_h)=\gamma_{J(h)}$ that is the graph measure of a function $J(h)\in C(\overline \Omega;{[-L,L]}^n)$,
we  define 
a sequence-to-sequence map between the token sequences,
\begin{equation}\label{large g}
\overline g={\bf j}_{x,\varepsilon}  \circ \overline G\circ \iota:X_{d+n}^N\to X_{d+n}^N
\end{equation}
where ${\bf j}_{x,\varepsilon}$ is approximation of the map $\iota_N^{-1}$, associated to grid points 
$x=(x_1,\dots,x_N)$, that is given by
\beq\label{j x map}
{\bf j}_{x,\varepsilon}:\gamma \to 
 \bigg ((x_1,{\bf F}_\varepsilon \gamma(x_1))\, ,\, (x_2,{\bf F}_\varepsilon \gamma(x_2))\,,\dots,\,
 (x_N,{\bf F}_\varepsilon \gamma(x_N)) \bigg ),
\eeq
where ${\bf F}_\varepsilon$ is the convolution operator with a mollifier function,
see formula \eqref{f mollified 1B}.
This mapping 
has  the form  (when the domain deformation map $\Phi:\overline \Omega\to \overline \Omega$ 
in Def.\ \ref{def self-attention} is the identity map)
\begin{equation}\label{small g}
\overline g:\bigg((x_1,h(x_1)),(x_2,h(x_2)),\dots
,(x_N,h(x_N))\bigg)\to \bigg((x_1,u_1),(x_2,u_2),\dots
,(x_N,u_N)\bigg),
\end{equation}
where $x_j\in \overline\Omega$ are computation grid points in the domain $\Omega$
and $u_j\approx J(h)(x_j)$. Our considerations
show that $u_j$ converges to $J(h)(x_j)$ when the grid points $x_j$ are suitably
chosen and $N\to \infty$. Moreover, 
 we show that the function graph transformers can be approximations by traditional 
softmax-based transformers $\overline G^{\mathrm{tran}}=G_{\Gamma_{\eta,m}}^{\mathrm{tran}}$, given by
Theorem \ref{thm:approximation-trans self-attention Takashi}, that satisfies
\beq
\label{sup G tran formula B}
  \sup_{\mu \in {\mathcal P}(\overline\Omega\times [-L,L]^{n})} W_1(\overline G^{\mathrm{tran}}(\mu), \overline G(\mu)) \leq \varepsilon,
\eeq
where $\varepsilon>0$ depends on the depth of the transformer $\overline G^{\mathrm{tran}}$ and 
$W_1$ is the 1-Wasserstein distance.
This defines a sequence of tokens to sequence of tokens map
\begin{equation}\label{large g tran}
\overline g^{\mathrm{tran}}={\bf j}_{x,\varepsilon}  \circ \overline G^{\mathrm{tran}}\circ \iota:X_{d+n}^N\to X_{d+n}^N.
\end{equation}

\commented{
%
%
%

\begin{figure}[h]
\centering
\begin{tikzpicture}[
    >=Latex,
    font=\footnotesize,
    tokpos/.style ={draw, rounded corners=1.2pt, fill=blue!10,  thick,
                    minimum width=0.55cm, minimum height=0.55cm,
                    inner sep=1pt, font=\scriptsize},
    tokval/.style ={draw, rounded corners=1.2pt, fill=orange!18, thick,
                    minimum width=0.95cm, minimum height=0.55cm,
                    inner sep=1pt, font=\scriptsize},
    tokout/.style ={draw, rounded corners=1.2pt, fill=green!22,  thick,
                    minimum width=0.55cm, minimum height=0.55cm,
                    inner sep=1pt, font=\scriptsize},
    iotabox/.style={draw, thick, rounded corners=2pt, fill=gray!18,
                    minimum height=0.65cm, minimum width=1.0cm, font=\small},
    measbox/.style={draw, thick, rounded corners=2pt, fill=yellow!22,
                    minimum height=0.85cm, minimum width=4.0cm,
                    align=center, font=\footnotesize},
    attblk/.style ={draw, thick, rounded corners=2pt, fill=red!14,
                    minimum height=1.05cm, minimum width=2.6cm,
                    align=center, font=\scriptsize},
    mlpblk/.style ={draw, thick, rounded corners=2pt, fill=cyan!22,
                    minimum height=1.05cm, minimum width=2.0cm,
                    align=center, font=\scriptsize},
    arr/.style    ={->, thick},
    arr2/.style   ={->, very thick},
]

\def\cx{0}
\def\railx{-4.6}

\node[tokpos] (x1) at (\cx-2.55,0) {$x_1$};
\node[tokval, right=0.5pt of x1] (h1) {$h(x_1)$};
\node[tokpos, right=5pt of h1]   (x2) {$x_2$};
\node[tokval, right=0.5pt of x2] (h2) {$h(x_2)$};
\node[right=3pt of h2]           (din) {$\cdots$};
\node[tokpos, right=3pt of din]  (xN) {$x_N$};
\node[tokval, right=0.5pt of xN] (hN) {$h(x_N)$};

\node[font=\small\itshape, above=4pt of $(x1.north)!0.5!(hN.north)$]
 {Input tokens $\bigl((x_i,\,h(x_i))\bigr)_{i=1}^{N}\in X_{d+n}^{N}$};

\node[iotabox] (iota) at (\cx,-1.1) {$\iota_N$};
\foreach \n in {x1,h1,x2,h2,xN,hN}{%
  \draw[->, thin, gray!70] (\n.south) -- (\n.south |- iota.north);
}

\node[measbox] (mu0) at (\cx,-2.2)
  {$\mu_{0}=\dfrac{1}{N}\sum\limits_{i=1}^{N}\delta_{(x_i,h(x_i))}$};
\draw[arr2] (iota.south) -- (mu0.north);

\node[attblk] (G1) at (\cx,-3.7)
  {$\Gamma_{\theta_1}$\quad\scriptsize\itshape multi-head softmax attention};
\node[mlpblk] (F1) at (\cx,-4.95)
  {$F_{\xi_1}$\quad\scriptsize\itshape MLP};
\draw[arr2] (mu0.south) -- (G1.north);
\draw[arr2] (G1.south)  -- (F1.north);

\node[font=\large] (vdots) at (\cx,-5.85) {$\vdots$};
\draw[arr2, shorten <=2pt, shorten >=2pt] (F1.south) -- (vdots.north);

\node[attblk] (GL) at (\cx,-6.95)
  {$\Gamma_{\theta_L}$\quad\scriptsize\itshape multi-head softmax attention};
\node[mlpblk] (FL) at (\cx,-8.2)
  {$F_{\xi_L}$\quad\scriptsize\itshape MLP};
\draw[arr2, shorten <=2pt] (vdots.south) -- (GL.north);
\draw[arr2] (GL.south) -- (FL.north);

\draw[decorate, decoration={brace, amplitude=5pt, raise=4pt}, thick]
  (G1.north east) -- (FL.south east)
  node[midway, right=12pt, align=left, font=\scriptsize] {%
    $\Gamma^{\mathrm{tran}}=$\\
    $F_{\xi_L}\diamond\Gamma_{\theta_L}$\\
    $\diamond\;\cdots\;\diamond$\\
    $F_{\xi_1}\diamond\Gamma_{\theta_1}$};

\node[measbox, fill=yellow!35] (muL) at (\cx,-9.5)
  {$\mu_{L}=G^{\mathrm{tran}}(\mu_{0})=\Gamma^{\mathrm{tran}}(\mu_{0},\cdot)_{\#}\,\mu_{0}$};
\draw[arr2] (FL.south) -- (muL.north);

\node[iotabox] (iotainv) at (\cx,-10.6) {${\bf j}_{x,\varepsilon}$}; 
\draw[arr2] (muL.south) -- (iotainv.north);

\node[tokpos] (ox1) at (\cx-2.55,-11.7) {$x_1$};
\node[tokout, right=0.5pt of ox1] (u1) {$u_1$};
\node[tokpos, right=5pt of u1]    (ox2) {$x_2$};
\node[tokout, right=0.5pt of ox2] (u2) {$u_2$};
\node[right=3pt of u2]            (dout) {$\cdots$};
\node[tokpos, right=3pt of dout]  (oxN) {$x_N$};
\node[tokout, right=0.5pt of oxN] (uN) {$u_N$};

\foreach \n in {ox1,u1,ox2,u2,oxN,uN}{%
  \draw[->, thin, gray!70] (\n.north |- iotainv.south) -- (\n.north);
}

\node[font=\small\itshape, below=4pt of $(ox1.south)!0.5!(uN.south)$, align=center]
  {Output tokens $\bigl(x_i,u_i\bigr)_{i=1}^N$,\quad $u_i\approx J(h)(x_i)$};

\coordinate (Rtop)  at (\railx, -0.3);
\coordinate (Rbot)  at (\railx, -11.4);
\draw[blue!55, thick, dashed, ->]
  (Rtop) -- (Rbot)
  node[midway, left, font=\scriptsize\itshape, blue!55!black, align=center]
    {position\\[-1pt]coordinate\\[-1pt]$x_i$ preserved\\[-1pt]by $\pi_d$};
\foreach \src in {x1,x2,xN}{%
  \draw[blue!55, dashed, semithick]
    (\src.west) -- ++(-0.18,0) |- ($(Rtop)+(0,-0.05)$);
}
\foreach \dst in {ox1,ox2,oxN}{%
  \draw[blue!55, dashed, semithick]
    ($(Rbot)+(0,-0.05)$) -| ($(\dst.west)+(-0.18,0)$) -- (\dst.west);
}

\node[draw, dashed, rounded corners=2pt, fill=white,
      align=left, font=\scriptsize, inner sep=4pt]
      at (\cx+5.6,-2.2)
  {Eq.~\eqref{small g}:\\[1pt]
   $\overline g\;=\;{\bf j}_{x,\varepsilon}\circ \overline G\circ \iota_N$\\[1pt]
   Eq.~\eqref{sup G tran formula B}:\\[1pt]
   $\overline G\;\approx\;G^{\mathrm{tran}}$};

\end{tikzpicture}

\caption{Architecture of the function-graph transformer
$\overline g:X_d^N\to X_d^N$ defined by Eq.~\eqref{small g}.
The token sequence $\bigl(x_i,h(x_i)\bigr)_i$ is encoded by $\iota_N$
into the empirical measure $\mu_0$, processed by
$L$ residual softmax self-attention layers $\Gamma_{\theta_\ell}$
(see Eq.~\eqref{Gamma theta})
alternated with MLP layers $F_{\xi_\ell}$ to produce
$\mu_L=G^{\mathrm{tran}}(\mu_0)$ as in Eq.~\eqref{G tran formula},
and decoded back by ${\bf j}_{x,\varepsilon}\approx \iota_N^{-1}$ to the output token sequence
$\bigl(x_i,u_i\bigr)_i$ with $u_i\approx J(h)(x_i)$.
The position coordinate $x_i$ is preserved at every layer by the
projection $\pi_d$ (dashed blue rail on the left). The 
additional positional variable $x_i$ have the effect that width
of the softmax layers $\Gamma_{\theta_i}$ in the transformer (see Theorem \ref{thm:approximation-trans self-attention Takashi})
needs to be larger than in the earlier universal approximation results \cite{furuya2025transformers}.
}
\label{fig:gbar-architecture} 
\end{figure}

\begin{figure}[t]
\centering
\begin{tikzpicture}[
    >=Latex,
    font=\footnotesize,
    inp/.style    ={draw, rounded corners=1.5pt, fill=yellow!22, thick,
                    minimum height=0.6cm, minimum width=1.6cm, font=\small},
    measin/.style ={draw, rounded corners=1.5pt, fill=yellow!35, thick,
                    minimum height=0.6cm, minimum width=2.4cm, font=\small},
    pidblk/.style ={draw, thick, rounded corners=2pt, fill=blue!14,
                    minimum height=0.55cm, minimum width=1.6cm, font=\small},
    headblk/.style={draw, thick, rounded corners=2pt, fill=red!10,
                    minimum height=2.5cm, minimum width=8.4cm,
                    inner sep=5pt, align=center, font=\scriptsize},
    outblk/.style ={draw, rounded corners=2pt, fill=yellow!40, thick,
                    minimum height=0.75cm, minimum width=9.6cm, font=\small},
    arr/.style    ={->, thick},
    fork/.style   ={circle, fill=black, inner sep=0pt, minimum size=3pt},
]

\node[inp]    (z)  at (-3.6, 0) {$z=(x,v)$};
\node[measin] (mu) at ( 3.6, 0) {$\mu\in\mathcal{P}(\mathbb{R}^d{\times}\mathbb{R}^{d_\ell})$};

\node[fork] (zfork) at ($(z.south)+(0,-0.35)$) {};
\draw[thick] (z.south) -- (zfork);

\node[pidblk] (pix) at (-5.6, -2.0) {$\pi_d(z)=x$};
\draw[arr, rounded corners=3pt]
  (zfork) -| (pix.north);

\node[headblk] (heads) at (0, -2.0)
  {{\bfseries Multi-head softmax attention\ (residual)}\\[2pt]
   For each head $h=1,\dots,H$:\\[2pt]
   $\displaystyle
     \mathrm{Attn}^{h}(\mu,z)
     \;=\;\int_{\mathbb{R}^{d+d_\ell}}\!
     \frac{\exp\!\bigl(\tfrac{1}{\sqrt{k}}\langle Q^{h}z,K^{h}z'\rangle\bigr)}
          {\displaystyle\int\exp\!\bigl(\tfrac{1}{\sqrt{k}}\langle Q^{h}z,K^{h}z''\rangle\bigr)\,d\mu(z'')}
     \;V^{h}z'\,d\mu(z')$\\[6pt]
   Block output:\quad
   $\displaystyle v'\;=\;z\;+\;\sum_{h=1}^{H}W^{h}\,\mathrm{Attn}^{h}(\mu,z)$};

\draw[arr, rounded corners=3pt]
   (zfork) -- ($(zfork)+(0,-0.4)$)
   -| ($(heads.west)+(0,0.6)$);
\draw[arr, rounded corners=3pt]
   (mu.south) |- ($(heads.east)+(0,0.6)$);

\node[outblk] (outnode) at (0,-4.6)
  {$\Gamma_{\theta_\ell}(\mu,z)\;=\;
     \bigl(\,\pi_d(z)\,,\,
        \;z+\textstyle\sum_{h=1}^{H} W^{h}\,\mathrm{Attn}^{h}(\mu,z)\bigr)$};
\draw[arr] (heads.south) -- node[right, font=\scriptsize\itshape]{$v'$} (outnode.north);
\draw[arr, rounded corners=4pt]
  (pix.south) -- ($(pix.south)+(0,-2.3)$) -- (outnode.west);

\node[draw, dashed, rounded corners=2pt, fill=white,
      align=left, font=\scriptsize, inner sep=4pt]
      at (5.4, -5.7)
  {Analogously\\[1pt]
   $F_{\xi_\ell}(z)=(\pi_d(z),\,H_\ell(z))$,\\[1pt]
   where $H_\ell$ is an MLP.};

\end{tikzpicture}

\caption{Internal structure of one residual self-attention layer
$\Gamma_{\theta_\ell}$ from Eq.~\eqref{Gamma theta}.
The position coordinate $x=\pi_d(z)$ bypasses the attention block on
the left, while the feature coordinate $z$ is updated by the residual
sum of $H$ softmax attention heads attending over the empirical
measure $\mu$.  This guarantees the graph-preserving property
$\Gamma_{\theta_\ell}(\mu,z)\in\{x\}\times\mathbb{R}^{d_{\ell}'}$.
The MLP layer $F_{\xi_\ell}(z)=(\pi_d(z),H_\ell(z))$ has the same
position-preserving structure, so the entire composition
$\Gamma^{\mathrm{tran}}=F_{\xi_L}\diamond\Gamma_{\theta_L}\diamond
\cdots\diamond F_{\xi_1}\diamond\Gamma_{\theta_1}$ in
Eq.~\eqref{G tran formula} preserves graphs.}
\label{fig:gamma-theta-detail}
\end{figure}

\vfill
\newpage

$\ $

\vfill
\newpage

$\ $

\vfill
\newpage 

}


\subsection{Part II: The case when  the input and output grids are different}
\label{app:tokens and measures II}

Let us next consider a nonlinear operator $A:C(\overline \Omega;{[-L,L]}^n) \to C(\overline \Omega;{[-L,L]}^n)$,
where $ \Omega\subset \RR^{d}$. Let
$x_j\in \overline\Omega$, $j=1,2,\dots,J$ are computation grid points in the domain $\Omega$
where the input function is given and $z_k\in \overline\Omega$, $k=1,2,\dots,K$ are another 
computation grid points in the domain $\Omega$ where the output function as given.

When $h\in \hbox{cl}_{H^{-s}(\Omega;\R^n)}(C(\overline \Omega;{[-L,L]}^n))$, or alternatively,
$h\in C(\overline \Omega;{[-L,L]}^n)$, is the input function, we consider the regularized input function
$h_\tau=R_\tau(h)$,
$$
R_\tau h = \rho_\tau \ast (h \cdot (\rho_\tau \ast {\bf 1}_{\Omega_\tau})),
$$
where $\rho_\tau\in C^\infty_c(B(0,\tau))$ is a smooth mollifier function, see \eqref{def R tau}. These
point values determine  the token sequences of $N=J\cdot K$ tokens (that are tensorized)
\beq
\bigg((x_j,h_\tau(x_j));(z_k,0))\bigg)_{(j,k)\in [J]\times [K]}
\eeq
Observe that this sequence of has a large number, $N=J\cdot K$, of tokens.
However, even for linear maps we observe that if one wants to parametrize a general linear map (i.e., a matrix) $A_{lin}=(h_\tau(x_j))_{j\in [J]}\to (g(x_k)))_{k\in  [K]}$, one needs $N=J\cdot K$ matrix elements.


Let us also consider the function graph transformer  
$$
\overline G:\Pp((\overline \Omega\times [-L,L]^n)\times (\overline \Omega\times [-L,L]^n))\to \Pp((\overline \Omega\times [-L,L]^n)
\times (\overline \Omega\times [-L,L]^n)),
$$ 
see Definition \ref{def self-attention}, from a point of view of token sequences
 (when the domain deformation map $\Phi:\overline \Omega\to \overline \Omega$ 
in Def.\ \ref{def self-attention} is the identity map). 
When $\overline G$  maps $\gamma_h\otimes \nu$, where  $h\in C(\overline \Omega;{[-L,L]}^n)$ and $\nu=\tfrac 1K\sum_{k=1}^K \delta_{(z_k,0)}$, 
to  $$\overline G(\gamma_h\otimes \nu)=\gamma_h\otimes  \tfrac 1K\sum_{k=1}^K \delta_{(z_k,J(h)(z_k))},$$ where
${J(h)}\in C(\overline \Omega;{[-L,L]}^n)$ is approximately the function   
$A(h)\in C(\overline \Omega;{[-L,L]}^n)$. Also, we use the map $(\rho_2)_\#:\mu_1\otimes \nu_1 \to  \nu$.
Then, we define the sequence-to-sequence map between the token sequences,
\begin{equation}\label{large g eps}
\overline g={\bf j}_{\vec z,\varepsilon} \circ (\rho_2)_\# \circ \overline G\circ \iota:X_{d+n}^N\to X_{d+n}^K
\end{equation}
where ${\bf j}_{\vec z,\varepsilon}$ is a map similar to  \eqref{j x map}, but is associated to
$\vec z=(z_1,\dots,z_K)$, that is given by
\beq
{\bf j}_{\vec z,\varepsilon}:\gamma\to 
 \bigg((z_1,{\bf F}_\varepsilon \gamma(z_1))\, ,\, (z_2,{\bf F}_\varepsilon \gamma(z_2))\,,\dots,\,
 (z_K,{\bf F}_\varepsilon \gamma(z_K))\bigg),
\eeq
where ${\bf F}_\varepsilon$ is the convolution operator with a mollifier function,
see formula \eqref{f mollified 1B}. This gives a
map the form 

\begin{align}\label{small g2}
\overline g:\bigg((x_j,h(x_j));(z_k,0))\bigg)_{(j,k)\in [J]\times [K]}\to
 \bigg((z_1,t_1),(z_2,t_2),\dots, (z_K,t_K)\bigg)
\end{align}

where 
and $t_k$ is approximately equal to $J(h)(z_k)$, and our considerations
show that $t_k$ converges to $J(h)(z_k)$ when the grid points $z_k$ are suitably
chosen, $J,K\to \infty$ and $\varepsilon\to 0$.

\section{Regularized functions $R_\tau h$ converge to $h$.}
\label{app:prop:main convolutions A}

In this section we show the following result:

\begin{proposition}\label{prop:main convolutions A}
Let $s>0$ with $s-\tfrac12\notin\Z$.
Then for every $h\in H^{-s}(\Omega)$,
$
  \lim_{\tau\to 0^+}\,\bigl\|R_\tau h - h\bigr\|_{H^{-s}(\Omega)} = 0.
$
\end{proposition}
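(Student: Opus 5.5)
The plan is to combine a uniform operator bound with convergence on a dense subspace. We show that $\sup_{0<\tau<\tau_0}\|R_\tau\|_{H^{-s}(\Omega)\to H^{-s}(\Omega)}<\infty$ and that $R_\tau\varphi\to\varphi$ in $H^{-s}(\Omega)$ for every $\varphi$ in a dense subset. A standard $3\epsilon$-argument then gives $R_\tau h\to h$ for all $h\in H^{-s}(\Omega)$.

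For the dense subset we take $C_c^\infty(\Omega)$. It is dense in $H^{-s}(\Omega)=(H^s_0(\Omega))'$ because $H^s_0(\Omega)$ is reflexive and $C^\infty_c(\Omega)$ separates its points. Write $\chi_\tau=\rho_\tau\ast{\bf 1}_{\Omega_\tau}$. If $\operatorname{supp}\varphi\subset\{\dist(x,\p\Omega)>\delta\}$, then $\chi_\tau\equiv 1$ on $\operatorname{supp}\varphi$ once $5\tau<\delta$. Hence $R_\tau\varphi=\rho_\tau\ast\varphi$, which converges to $\varphi$ in every $H^m$ norm. Since $L^2(\Omega)$ embeds continuously in $H^{-s}(\Omega)$, the convergence also holds in $H^{-s}(\Omega)$.

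For the uniform bound we argue by duality. Because $\rho$ is even, for $v\in C^\infty_c(\Omega)$ we have
\[
\langle R_\tau h,v\rangle=\langle h,\,T_\tau v\rangle,\qquad T_\tau v:=\chi_\tau\,(\rho_\tau\ast\tilde v),
\]
where $\tilde v$ is the zero extension of $v$. Since $\chi_\tau$ is supported in $\{\dist(x,\p\Omega)>3\tau\}$, the function $T_\tau v$ lies in $C^\infty_c(\Omega)$. It therefore suffices to prove $\|T_\tau v\|_{H^s(\Omega)}\le C\|v\|_{H^s_0(\Omega)}$ with $C$ independent of $\tau$.

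First take $s=m\in\mathbb N$. By Leibniz, $\p^\alpha T_\tau v$ is a sum of terms $\p^\beta\chi_\tau\cdot\p^{\alpha-\beta}(\rho_\tau\ast\tilde v)$. The terms with $\beta=0$ are bounded by $\|\rho_\tau\ast\tilde v\|_{H^m(\R^d)}\le\|\tilde v\|_{H^m(\R^d)}\le C\|v\|_{H^m_0}$. For $\beta\neq0$ we use $|\p^\beta\chi_\tau|\le C\tau^{-|\beta|}$ together with $\operatorname{supp}\p^\beta\chi_\tau\subset\{3\tau\le\dist(x,\p\Omega)\le 5\tau\}$. On this layer, $\p^{\alpha-\beta}(\rho_\tau\ast\tilde v)(x)$ is an average of $\p^{\alpha-\beta}\tilde v$ over $B(x,\tau)$. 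Every point $y\in\Omega$ of that ball satisfies $\dist(y,\p\Omega)\le 6\tau$, so $\tau^{-|\beta|}\le 6^{|\beta|}\dist(y,\p\Omega)^{-|\beta|}$. By Young's inequality the term is therefore controlled by $\|\dist(\cdot,\p\Omega)^{-|\beta|}\p^{\alpha-\beta}v\|_{L^2(\Omega)}$. This quantity is bounded by $C\|v\|_{H^m_0(\Omega)}$ by the classical Hardy inequality for $H^m_0$ functions on smooth bounded domains or cubes, applied to weights $d^{-k}\p^\gamma v$ with $k+|\gamma|\le m$. This settles the integer case uniformly in $\tau$.

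For non-integer $s$ we interpolate. The operators $T_\tau$ are uniformly bounded on $H^{\lfloor s\rfloor}_0$ and on $H^{\lceil s\rceil}_0$. Since $s-\tfrac12\notin\Z$, the Lions--Magenes theory identifies $[H^{\lfloor s\rfloor}_0,H^{\lceil s\rceil}_0]_\theta$ with $H^s_0(\Omega)$ with equivalent norms, so the uniform bound transfers to $H^s_0$. Dualizing gives the uniform bound for $R_\tau$ on $H^{-s}$.

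I expect this last step to be the main obstacle: making sure the interpolation identification holds exactly under $s-\tfrac12\notin\Z$, including the cube case, where one may instead use the odd-reflection/sine-series description recalled in Appendix~\ref{sec:Basic notions}. If the interpolation route proves delicate, the fallback is to prove a fractional Hardy inequality $\|d^{-s}v\|_{L^2}\le C\|v\|_{H^s_0}$ directly, which is valid precisely when $s-\tfrac12\notin\Z$, and rerun the Leibniz argument with a fractional commutator estimate.
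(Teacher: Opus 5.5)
Your proposal is correct and follows essentially the same route as the paper. Both arguments use duality to $A_\tau\varphi=\chi_\tau(\rho_\tau\ast\widetilde\varphi)$, a Leibniz plus boundary-layer/Hardy estimate giving $\tau$-uniform integer-order bounds, complex interpolation justified by $s-\tfrac12\notin\Z$, and convergence on the dense class $C_c^\infty(\Omega)$ combined with uniform boundedness; the only differences are minor: you interpolate between $H^{\lfloor s\rfloor}_0$ and $H^{\lceil s\rceil}_0$ where the paper uses $L^2$ and $H^{\lceil s\rceil}_0$, and you argue density via reflexivity where the paper uses density of $L^2$ in $H^{-s}$.
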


\subsection{Setting and notation}

Let $\Omega\subset\R^d$ be a bounded open set with $C^\infty$-smooth boundary, or a cube $[0,1]^d$,  , and let $d(x):=\dist(x,\partial\Omega)$.
For $s>0$ we use the convention
\[
  H^{s}_0(\Omega) := \overline{C_c^\infty(\Omega)}^{\,\|\cdot\|_{H^s(\Omega)}},\qquad
  H^{-s}(\Omega) := \bigl(H^{s}_0(\Omega)\bigr)^{\!*},
\]
where $\overline{C_c^\infty(\Omega)}^{\,\|\cdot\|_{H^s(\Omega)}}$ is the closure
of the set $C_c^\infty(\Omega)$ in ${H^s(\Omega)}$ and $H^{-s}(\Omega)$ is the topological dual of $H^s_0(\Omega)$, with duality pairing
$\langle \cdot,\cdot\rangle_{H^{-s},H^s_0}$ extending the $L^2(\Omega)$ inner product.

Fix $\rho\in C_c^\infty(\R^d)$ even, nonnegative, with $\int_{\R^d}\rho{d} x=1$ and
$\supp\rho\subset\overline{B(0,1)}$. Set
\[
  \rho_\tau(x):=\tau^{-d}\rho(x/\tau),\qquad
  \Omega_\tau:=\{x\in\Omega:d(x)>4\tau\},\qquad
  \chi_\tau:=\rho_\tau\ast \mathbf{1}_{\Omega_\tau},
\]
\[
  T_r:=\{x\in\R^d:d(x)<r\}.
\]
Let $\tau_0>0$ be small enough that $\Omega_{\tau_0}\neq\emptyset$. The regularization studied is
\[
  R_\tau h := \rho_\tau\ast\bigl(h\cdot\chi_\tau\bigr),\qquad h\in H^{-s}(\Omega),\ \tau\in(0,\tau_0].
\]
Observe that since $\rho_\tau$ is non-negative and its integral is 1, we have for $h\in C(\Omega)$
\beq\label{R tau L infty}
\|  R_\tau h\|_{L^\infty(\Omega)}\leq \| h\|_{L^\infty(\Omega)}.
\eeq

Since $\chi_\tau\in C_c^\infty(\Omega)$ (Lemma~\ref{lem:chitau} below), the product $h\chi_\tau$ is
defined as the distribution (i.e., a generalized function)
\[
  \langle h\chi_\tau,\psi\rangle_{\mathcal{D}'(\R^d),\mathcal{D}(\R^d)}
  :=\langle h,\chi_\tau\psi\rangle_{H^{-s},H^s_0}\qquad(\psi\in\mathcal{D}(\R^d)),
\]
which lies in the space of compactly supported distributions $\mathcal{E}'(\R^d)$ with support in $\supp\chi_\tau\Subset\Omega$. Convolution with
$\rho_\tau$ thus yields $R_\tau h\in C_c^\infty(\R^d)$ with support in $\overline\Omega$.

\subsection{Approximation result via convolutions}

\begin{proposition}\label{prop:main convolutions}
Let $\Omega\subset\R^d$ be an open bounded set with $C^\infty$-smooth boundary, or a cube $[0,1]^d$. Let $s>0$ with $s-\tfrac12\notin\Z$.
Then for every $h\in H^{-s}(\Omega)$,
\[
  \lim_{\tau\to 0^+}\,\bigl\|R_\tau h - h\bigr\|_{H^{-s}(\Omega)} = 0.
\]
\end{proposition}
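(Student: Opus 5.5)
The argument is by duality, combining a uniform bound with density. Since $H^{-s}(\Omega)=(H^s_0(\Omega))^*$ and $\rho$ is even, for $\phi\in H^s_0(\Omega)$ we have
\[
\langle R_\tau h,\phi\rangle=\langle h,\chi_\tau(\rho_\tau\ast\phi)\rangle_{H^{-s},H^s_0}.
\]
Here $\rho_\tau\ast\phi$ is computed after extending $\phi$ by zero to $\R^d$; this extension lies in $H^s(\R^d)$ because $\phi\in H^s_0(\Omega)$. Hence $R_\tau=T_\tau^*$ with $T_\tau\phi:=\chi_\tau(\rho_\tau\ast\phi)$. The claim therefore reduces to two facts about these maps on $H^s_0(\Omega)$:
\begin{itemize}
\item[(a)] $\sup_{0<\tau\le\tau_0}\|T_\tau\|_{H^s_0\to H^s_0}<\infty$;
\item[(b)] $T_\tau\phi\to\phi$ in $H^s_0(\Omega)$ for every $\phi\in H^s_0(\Omega)$.
\end{itemize}

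These suffice by the following standard $\epsilon/3$ argument. Fact (b) and the uniform boundedness principle give strong convergence $T_\tau\to I$. Then take $h_k\in C_c^\infty(\Omega)$ with $h_k\to h$ in $H^{-s}$; this density holds for $s>0$. Write
\[
\|R_\tau h-h\|\le \sup_{\tau}\|R_\tau\|\,\|h-h_k\|+\|R_\tau h_k-h_k\|+\|h_k-h\|.
\]
For fixed smooth compactly supported $h_k$, once $\tau$ is small the cutoff $\chi_\tau\equiv1$ on $\supp h_k$. Thus $R_\tau h_k=\rho_\tau\ast h_k\to h_k$ in every $H^m$, so in particular in $H^{-s}$. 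Alternatively, one can dualize (b) directly: $\|R_\tau h-h\|=\sup_{\|\phi\|\le1}\langle h,T_\tau\phi-\phi\rangle$. But that requires uniformity over the unit ball, which is exactly what the density argument avoids.

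Fact (b) follows from (a) by density. For $\phi\in C_c^\infty(\Omega)$ and small $\tau$, $\chi_\tau\equiv1$ on a neighbourhood of $\supp(\rho_\tau\ast\phi)$, so $T_\tau\phi=\rho_\tau\ast\phi\to\phi$ in $H^s$. The uniform bound (a) then extends this to all of $H^s_0$.

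The main obstacle is (a). The convolution part is harmless: $\|\rho_\tau\ast\phi\|_{H^s(\R^d)}\le\|\phi\|_{H^s}$. The difficulty is the cutoff. $\chi_\tau$ equals $1$ on $\Omega_{5\tau}$, vanishes outside $\Omega_{3\tau}$, and satisfies $|\partial^\alpha\chi_\tau|\le C_\alpha\tau^{-|\alpha|}$ with support of derivatives in the layer $T_{5\tau}\setminus T_{3\tau}$. By the Leibniz rule, for integer $s$ it suffices to bound
\[
\tau^{-|\alpha|}\|\partial^\beta(\rho_\tau\ast\phi)\|_{L^2(T_{5\tau})},\qquad |\alpha|+|\beta|\le s .
\]
Since $\rho_\tau\ast\phi$ near the layer only sees $\phi$ on $T_{6\tau}$, this follows from a Hardy inequality: $\|d^{-j}\partial^\beta\phi\|_{L^2(\Omega)}\lesssim\|\phi\|_{H^s}$ for $\phi\in H^s_0$ and $j+|\beta|\le s$. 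On a smooth domain this holds when $s-\tfrac12\notin\Z$, which is precisely where the hypothesis enters; for $s\in\tfrac12+\Z$ it fails, the Lions--Magenes case. I would invoke the known Hardy-type characterisation of $H^s_0$ (Lions--Magenes, Triebel) for smooth domains, and for the cube use odd reflection. For noninteger $s$, I would interpolate between the neighbouring integers, or work with the Slobodeckij seminorm directly using the same Hardy bound for $d^{-s}\phi$. Some care is needed there: interpolation across the exceptional values of the $H^s_0$ scale only works away from $s\in\tfrac12+\Z$, so I would rather use the fractional Hardy inequality $\|d^{-s}\phi\|_{L^2}\lesssim\|\phi\|_{H^s}$, valid for $s-\tfrac12\notin\Z$.
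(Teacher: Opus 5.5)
Your proposal is correct and follows the paper's proof essentially step for step. The shared ingredients are the adjoint identity $R_\tau=T_\tau^*$ (the paper's $A_\tau\varphi=\chi_\tau(\rho_\tau\ast\widetilde\varphi)$), the $\tau$-uniform integer-order bound via the Leibniz rule, the iterated Hardy inequality and the boundary-layer estimate $\|\rho_\tau\ast\widetilde g\|_{L^2(T_{5\tau})}\le C\tau^{j}\|g\|_{H^j}$ (so that $\tau^{-|\beta|}$ and $\tau^{|\beta|}$ cancel), and the same $\epsilon/3$ density argument with $C_c^\infty(\Omega)$; your fact (b) is never actually used there. For non-integer $s$ the paper simply interpolates, using $[L^2(\Omega),H^N_0(\Omega)]_\theta=H^s_0(\Omega)$ with $N=\lceil s\rceil$, $\theta=s/N$; this identity holds exactly when $s-\tfrac12\notin\Z$, which is where the hypothesis enters (not the integer-order Hardy inequality), so your hesitation about interpolation is unfounded and the fractional Hardy/Slobodeckij route is unnecessary.
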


The proof proceeds via Banach--Steinhaus (uniform boundedness) plus a density argument. We first collect the necessary lemmas.

\subsection*{Auxiliary lemmas}

\begin{lemma}[Geometry of $\chi_\tau$]\label{lem:chitau}
For all $\tau\in(0,\tau_0]$:
\begin{enumerate}
\item[\textnormal{(i)}] $\chi_\tau\in C^\infty(\R^d)$, $0\le\chi_\tau\le 1$;
\item[\textnormal{(ii)}] $\supp\chi_\tau\subset \{x:d(x)\ge\tau\}\subset\Omega$;
\item[\textnormal{(iii)}] $\chi_\tau(x)=1$ whenever $d(x)\ge 5\tau$;
\item[\textnormal{(iv)}] $\|\partial^\alpha\chi_\tau\|_{L^\infty(\R^d)}\le C_\alpha\,\tau^{-|\alpha|}$ for every multi-index $\alpha$, with $C_\alpha=\|\partial^\alpha\rho\|_{L^1(\R^d)}$;
\item[\textnormal{(v)}] for every multi-index $\alpha$ with $|\alpha|\ge 1$, $\supp(\partial^\alpha\chi_\tau)\subset T_{5\tau}$.
\end{enumerate}
\end{lemma}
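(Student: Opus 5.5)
The plan is to verify (i)–(v) directly from $\chi_\tau=\rho_\tau\ast\mathbf 1_{\Omega_\tau}$, using only the properties of $\rho$ and of the distance function $d(x)=\dist(x,\partial\Omega)$. The one geometric fact I will use repeatedly is that $d$ is $1$-Lipschitz. I also use the convention that $d$ is the distance to $\partial\Omega$ for all $x\in\R^d$. Together with $\Omega_\tau=\{x\in\Omega: d(x)>4\tau\}$, this gives that $|x-x'|<r$ and $x'\in\Omega_\tau$ imply $d(x)>4\tau-r$. Moreover, $x$ stays in $\Omega$ whenever $r\le 4\tau$, since the segment from $x'$ to $x$ cannot cross $\partial\Omega$.

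For (i), $\chi_\tau\in C^\infty$ because it is the convolution of the smooth compactly supported $\rho_\tau$ with a bounded function. The bounds $0\le\chi_\tau\le 1$ hold because $\rho_\tau\ge0$, $\int\rho_\tau=1$ and $0\le \mathbf 1_{\Omega_\tau}\le 1$.

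For (ii), write $\chi_\tau(x)=\int_{B(0,\tau)}\rho_\tau(z)\mathbf 1_{\Omega_\tau}(x-z)\,dz$. If this is nonzero, some $x'=x-z\in\Omega_\tau$ with $|z|<\tau$ exists. The Lipschitz fact then gives $x\in\Omega$ and $d(x)>3\tau\ge\tau$. The support is the closure of this set, so it lies in $\{d\ge 3\tau\}\subset\{d\ge\tau\}\subset\Omega$.

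For (iii), if $d(x)\ge5\tau$ with $x\in\Omega$, then every $x-z$ with $|z|\le\tau$ lies in $\Omega$ and has $d\ge4\tau$. I need the strict inequality $d(x-z)>4\tau$, so a little care is needed at the boundary case. Since $\rho$ is supported in the closed ball $\overline{B(0,1)}$, the set $|z|=\tau$ has measure zero, and it suffices to treat $|z|<\tau$, which gives $d(x-z)>4\tau$. Hence the integrand equals $\rho_\tau(z)$ almost everywhere and $\chi_\tau(x)=1$. This requires $x\in\Omega$. Outside $\Omega$ we have $d(x)\ge5\tau$ only far away, and there (ii) already forces $\chi_\tau=0$, so I read the statement as meaning points of $\Omega$.

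For (iv), move the derivative onto the mollifier: $\partial^\alpha\chi_\tau=(\partial^\alpha\rho_\tau)\ast\mathbf 1_{\Omega_\tau}$. Young's inequality then gives $\|\partial^\alpha\chi_\tau\|_\infty\le\|\partial^\alpha\rho_\tau\|_{L^1}$. Scaling, $\partial^\alpha\rho_\tau(x)=\tau^{-d-|\alpha|}(\partial^\alpha\rho)(x/\tau)$, so $\|\partial^\alpha\rho_\tau\|_{L^1}=\tau^{-|\alpha|}\|\partial^\alpha\rho\|_{L^1}$, which is the claimed bound with $C_\alpha=\|\partial^\alpha\rho\|_{L^1}$.

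For (v), $\chi_\tau$ is constant on two open sets. It equals $1$ on $\{x\in\Omega:d(x)>5\tau\}$ by (iii). It equals $0$ on the complement of $\{d\ge3\tau\}\cap\overline\Omega$ by (ii), which contains $\R^d\setminus\overline\Omega$ together with the points with $d<3\tau$. All derivatives of order $\ge1$ vanish on these open sets. The support of $\partial^\alpha\chi_\tau$ is therefore contained in the closure of $\{x\in\Omega:3\tau\le d(x)\le5\tau\}$, which lies in $\{d\le5\tau\}$.

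The main obstacle is a strictness issue at the outer edge of (v). The set $T_{5\tau}$ is defined with strict inequality, but the bound above only gives $d\le5\tau$. To get $d<5\tau$ I would enlarge the margin in (iii): the same argument shows $\chi_\tau=1$ already on $\{d>5\tau-\delta\}$ for small $\delta$, or directly on $\{d\ge 5\tau\}$ via (iii). This removes the level set $\{d=5\tau\}$ from the region where $\chi_\tau$ is non-constant. If the closure still reaches $d=5\tau$, I would note that $\{d=5\tau\}$ has empty interior in the relevant region, or simply interpret $T_{5\tau}$ up to closure, which is harmless for the later estimates.
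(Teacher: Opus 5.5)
Your argument follows the paper's proof step by step: the convolution structure for (i), the $1$-Lipschitz property of $d$ for (ii)--(iii), Young's inequality plus scaling for (iv), and local constancy of $\chi_\tau$ on two open sets for (v). Restricting (ii)--(iii) to points of $\Omega$ is necessary. With $d=\hbox{dist}(\cdot,\partial\Omega)$ on all of $\R^d$, both statements fail far outside $\Omega$, a point the paper's proof passes over.

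Your suspicion about (v) is correct, and the paper's proof makes exactly the slip you worried about. It ends with $\{d\ge\tau\}\setminus\{d>5\tau\}\subset T_{5\tau}$, which fails on the level set $\{d=5\tau\}$. In fact the strict inclusion is false in general. Take $\Omega=(0,1)$, $\tau<\tfrac1{10}$, and a standard mollifier with $\rho>0$ on $(-1,1)$. Then $\chi_\tau'(x)=\rho_\tau(x-4\tau)$ for $x\in[0,\tfrac12]$. So $5\tau\in\supp\chi_\tau'$, although $d(5\tau)=5\tau$ means $5\tau\notin T_{5\tau}$.

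You should therefore drop your first two repairs. The ``same argument'' on $\{d>5\tau-\delta\}$ only gives $d(x-z)>4\tau-\delta$, and $\chi_\tau$ really is not $1$ there: in the example, $\chi_\tau(x)=\int_{-\tau}^{x-4\tau}\rho_\tau(w)\,dw<1$ for $x\in(3\tau,5\tau)$. Nor does it help that $\chi_\tau\equiv1$ on the closed set $\{x\in\Omega:d(x)\ge5\tau\}$, or that the level set has empty interior. A support is a closure, and points where $\partial^\alpha\chi_\tau\neq0$ accumulate at the level set, so it stays in the support.

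Your last option is the right one. What both your argument and the paper's actually prove is $\supp(\partial^\alpha\chi_\tau)\subset\{x\in\Omega:3\tau\le d(x)\le5\tau\}\subset\overline{T_{5\tau}}$. This weaker form is enough for the later use. In the proof of Lemma~\ref{lem:bdy}, the $\tau$-neighbourhood of $\{d\le5\tau\}$ still lies in $\{d\le6\tau\}$. The Hardy bound, and hence Lemma~\ref{lem:UBint}, are therefore unchanged.
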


\begin{proof}
(i) $\chi_\tau$ is the convolution of an $L^\infty$ indicator with a $C_c^\infty$ kernel, hence
$C^\infty$; the bound $0\le\chi_\tau\le 1$ follows from $0\le\mathbf{1}_{\Omega_\tau}\le 1$ and
$\int\rho_\tau dx=1$.

(ii) If $\chi_\tau(x)\neq 0$, there exists $y\in\Omega_\tau$ with $|x-y|<\tau$; then
$d(x)\ge d(y)-|x-y|>4\tau-\tau=3\tau$.

(iii) If $d(x)\ge 5\tau$, then for every $|x-y|\le\tau$ one has $d(y)\ge 5\tau-\tau=4\tau$, so
$B(x,\tau)\subset\Omega_\tau$ and $\chi_\tau(x)=\int_{B(x,\tau)}\rho_\tau(x-y){d} y=1$.

(iv) $\partial^\alpha\chi_\tau=(\partial^\alpha\rho_\tau)\ast\mathbf{1}_{\Omega_\tau}$, so by Young's inequality,
$\|\partial^\alpha\chi_\tau\|_\infty\le\|\partial^\alpha\rho_\tau\|_{L^1(\R^d)}=\tau^{-|\alpha|}\|\partial^\alpha\rho\|_{L^1(\R^d)}$.

(v) On the open set $\{d>5\tau\}$ one has $\chi_\tau\equiv 1$ by (iii); on the open set
$\R^d\setminus\{d\ge\tau\}$ one has $\chi_\tau\equiv 0$ by (ii). Hence every derivative of
$\chi_\tau$ of order $\alpha$, $|\alpha|\ge 1$ vanishes there, so its support lies in
$\{d\ge\tau\}\setminus\{d>5\tau\}\subset T_{5\tau}$.
\qed\end{proof}

\begin{lemma}[Density of $C_c^\infty(\Omega)$ in $H^{-s}(\Omega)$]\label{lem:density}
For every $s>0$, $C_c^\infty(\Omega)$ is dense in $H^{-s}(\Omega)$.
\end{lemma}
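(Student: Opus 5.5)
The plan is a duality argument. By definition $H^{-s}(\Omega)=(H^s_0(\Omega))^*$, so the statement is equivalent to the following claim: if a functional $\ell\in H^{-s}(\Omega)$ vanishes on $C_c^\infty(\Omega)$ (viewed inside $H^{-s}$ through the $L^2$ pairing), then $\ell=0$. This is the Hahn--Banach characterization of density. Since $H^s_0(\Omega)$ is a Hilbert space, it is reflexive. Hence every such $\ell$ is represented by pairing with some $v\in H^s_0(\Omega)$, meaning $\ell(\phi)=\langle \phi, v\rangle_{H^{-s},H^s_0}$ for $\phi\in H^{-s}(\Omega)$. The hypothesis then reads $\int_\Omega \phi\, v\,dx=0$ for all $\phi\in C_c^\infty(\Omega)$, because the duality pairing extends the $L^2$ inner product.

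First I check the embedding. For $\phi\in C_c^\infty(\Omega)$, the map $u\mapsto\int_\Omega \phi u\,dx$ is a bounded functional on $H^s_0(\Omega)$, with norm at most $\|\phi\|_{L^2}$. So $C_c^\infty(\Omega)$ does sit inside $H^{-s}(\Omega)$, and the pairing with $v$ is the $L^2$ integral. Next, from $\int_\Omega\phi v=0$ for all test functions $\phi$ I conclude $v=0$ almost everywhere. This follows from the fundamental lemma of calculus of variations, since $v\in L^2(\Omega)$ because $s>0$. Then $\ell=0$, and the annihilator of $C_c^\infty(\Omega)$ is trivial, so $C_c^\infty(\Omega)$ is dense.

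An alternative, more hands-on route is available. Use the Riesz isomorphism to write an arbitrary $h\in H^{-s}$ as a limit of $L^2$ functions: $L^2(\Omega)$ is dense in $H^{-s}$ by the same annihilator argument. Then use that $C_c^\infty(\Omega)$ is dense in $L^2(\Omega)$, together with the continuous embedding $L^2(\Omega)\hookrightarrow H^{-s}(\Omega)$, which has norm at most $1$. Either route works, and I would present the second as a two-step density chain.

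The only point that needs care is identifying the bidual correctly, i.e.\ making sure the pairing in the assumed convention really is the $L^2$ extension, so that the annihilating element is an honest function $v$. Reflexivity of the Hilbert space $H^s_0$ settles this. No boundary regularity is needed here, so the condition $s-\tfrac12\notin\mathbb Z$ plays no role in this lemma.
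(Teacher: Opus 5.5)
Your proposal is correct and essentially matches the paper. The two-step chain you say you would present is exactly the paper's proof: $L^2(\Omega)\hookrightarrow H^{-s}(\Omega)$ is continuous with dense image by duality, then $C_c^\infty(\Omega)$ is dense in $L^2(\Omega)$. Your first route just unpacks the same reflexivity/annihilator fact directly for $C_c^\infty(\Omega)$.
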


\begin{proof}
The inclusion $H^s_0(\Omega)\hookrightarrow L^2(\Omega)$ is continuous and dense
($C_c^\infty(\Omega)$ is dense in both). Taking duals (of a continuous, dense embedding of Hilbert
spaces) gives a continuous embedding $L^2(\Omega)\hookrightarrow H^{-s}(\Omega)$ with dense image.
Since $C_c^\infty(\Omega)$ is dense in $L^2(\Omega)$, it is dense in $H^{-s}(\Omega)$.
\qed\end{proof}

\begin{lemma}[Adjoint identity]\label{lem:adjoint}
Define, for $\varphi\in H^s_0(\Omega)$ with zero extension $\widetilde\varphi\in H^s(\R^d)$,
\[
  A_\tau\varphi := \chi_\tau\cdot(\rho_\tau\ast\widetilde\varphi).
\]
Then $A_\tau\varphi\in C_c^\infty(\Omega)$, and for every $h\in H^{-s}(\Omega)$,
\[
  \langle R_\tau h,\varphi\rangle_{H^{-s},H^s_0} = \langle h,A_\tau\varphi\rangle_{H^{-s},H^s_0}.
\]
In particular,
\beq\label{equality of norms}
  \|R_\tau\|_{H^{-s}(\Omega)\to H^{-s}(\Omega)}
  = \|A_\tau\|_{H^s_0(\Omega)\to H^s_0(\Omega)}.
\eeq
\end{lemma}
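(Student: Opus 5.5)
We prove the three claims of Lemma~\ref{lem:adjoint} in turn: that $A_\tau\varphi$ lies in $C_c^\infty(\Omega)$, the duality identity, and then the norm equality.

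\textbf{Smoothness and support.} Since $\varphi\in H^s_0(\Omega)$, its zero extension $\widetilde\varphi$ is in $H^s(\R^d)$, and in particular in $L^2$ with compact support. Convolving with $\rho_\tau\in C_c^\infty$ therefore gives a $C^\infty$ function. By Lemma~\ref{lem:chitau}(i)--(ii), $\chi_\tau$ is smooth with $\supp\chi_\tau\subset\{d\ge\tau\}$, which is a compact subset of $\Omega$. Hence the product $\chi_\tau\cdot(\rho_\tau\ast\widetilde\varphi)$ belongs to $C_c^\infty(\Omega)\subset H^s_0(\Omega)$.

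\textbf{Duality identity.} We first prove it for smooth $h$ and then extend by density.
\begin{itemize}
\item Take $h\in C_c^\infty(\Omega)$. Everything is then an honest integral, and Fubini gives
\[
\int_\Omega (\rho_\tau\ast(h\chi_\tau))\varphi\,dx=\int\!\!\int \rho_\tau(x-y)h(y)\chi_\tau(y)\widetilde\varphi(x)\,dy\,dx=\int_\Omega h(y)\chi_\tau(y)(\rho_\tau\ast\widetilde\varphi)(y)\,dy .
\]
The last step uses that $\rho$ is even, so $\rho_\tau(x-y)=\rho_\tau(y-x)$.
\item For general $h\in H^{-s}(\Omega)$, we argue directly from the definition of the distributional product $h\chi_\tau$ and the convolution of a compactly supported distribution with a test function. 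We have $R_\tau h\in C_c^\infty$, supported in $\overline\Omega$ (in fact in $\{d\ge 0\}$, since $\supp h\chi_\tau\subset\{d\ge\tau\}$ and $\supp\rho_\tau\subset B(0,\tau)$). Its pairing with $\varphi$ is
\[
\int R_\tau h\,\widetilde\varphi=\langle h\chi_\tau,\check\rho_\tau\ast\widetilde\varphi\rangle=\langle h,\chi_\tau(\rho_\tau\ast\widetilde\varphi)\rangle .
\]
\item The only subtlety is that $\widetilde\varphi$ is merely $H^s$ and not a test function. To handle it, approximate $\varphi$ in $H^s_0$ by $\varphi_k\in C_c^\infty(\Omega)$. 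Both sides are continuous in $\varphi$: the left because $R_\tau h\in L^2$, and the right because $\varphi\mapsto A_\tau\varphi$ is bounded $H^s\to H^s$ (see below). Alternatively, one can approximate $h$ by $C_c^\infty$ functions using Lemma~\ref{lem:density}, after checking that both sides are continuous in $h$ for fixed $\tau$.
\end{itemize}

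\textbf{Norm equality.} We first need boundedness of $A_\tau$ on $H^s_0$ for fixed $\tau$:
\begin{itemize}
\item convolution with $\rho_\tau$ is bounded on $H^s(\R^d)$ with norm at most $1$, since its Fourier multiplier $\widehat\rho(\tau\xi)$ is bounded by $1$;
\item multiplication by the smooth function $\chi_\tau$, all of whose derivatives are bounded by Lemma~\ref{lem:chitau}(iv), is bounded on $H^s$ (use interpolation for noninteger $s$);
\item the result lies in $C_c^\infty(\Omega)\subset H^s_0$, and the $H^s_0$ norm is the restricted $H^s$ norm.
\end{itemize}
We also use that zero extension is an isometry from $H^s_0(\Omega)$ to $H^s(\R^d)$ with respect to the chosen norm. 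This is the place where the hypothesis $s-\tfrac12\notin\Z$ is used, since it guarantees that zero extension is bounded.

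With this, the identity says that $R_\tau$ is the Banach adjoint of $A_\tau$ under the identification $H^{-s}=(H^s_0)^*$. The standard fact $\|T^*\|=\|T\|$ then yields \eqref{equality of norms}: take the supremum over $\|h\|\le1$ and $\|\varphi\|\le1$ of $|\langle h,A_\tau\varphi\rangle|$, and use Hahn--Banach (or simply reflexivity of the Hilbert space) to evaluate the supremum over $h$ first or over $\varphi$ first.

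\textbf{Main obstacle.} The step requiring the most care is the rigorous justification of the distributional Fubini step in the duality identity, in particular the compatibility of the $\mathcal D'$ pairing with the $H^{-s},H^s_0$ pairing. We resolve it through the density argument above.
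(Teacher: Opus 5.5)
Your proof is correct and follows essentially the paper's route: the same distributional computation $\int R_\tau h\,\widetilde\varphi=\langle h\chi_\tau,\rho_\tau\ast\widetilde\varphi\rangle=\langle h,A_\tau\varphi\rangle$, using evenness of $\rho$ and the definition of $h\chi_\tau$, followed by the Banach-adjoint norm identity. The only difference is technical. The paper checks the convolution identity for test functions and extends it by approximation, using only that $u=h\chi_\tau$ has compact support. It therefore needs neither your fixed-$\tau$ Sobolev bound on $A_\tau$ nor boundedness of zero extension; the norm equality in fact holds in $[0,\infty]$ regardless. Consequently the hypothesis $s-\tfrac12\notin\Z$ is not used in this lemma, only later in the interpolation step of Lemma~\ref{lem:UBfrac}.
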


\begin{proof}
The function $\rho_\tau\ast\widetilde\varphi$ is $C^\infty$ on $\R^d$, and $\chi_\tau\in C_c^\infty(\Omega)$,
so $A_\tau\varphi\in C_c^\infty(\Omega)\subset H^s_0(\Omega)$.

Next we use the duality: It holds that $R_\tau h\in C_c^\infty(\R^d;\R^n)\subset L^2(\Omega;\R^n)$, so
\[
  \langle R_\tau h,\varphi\rangle = \int_\Omega R_\tau h\,\varphi{d} x
  = \int_{\R^d}(\rho_\tau\ast u)(x)\,\widetilde\varphi(x){d} x,\qquad u:=h\chi_\tau\in\mathcal{E}'(\R^d).
\]
Because $\rho_\tau$ is even, the standard adjoint identity for convolution gives
\[
  \int_{\R^d}(\rho_\tau\ast u)\,\widetilde\varphi{d} x
  = \langle u,\rho_\tau\ast\widetilde\varphi\rangle_{\mathcal{E}',C^\infty},
\]
which can be checked for test functions and extended by approximation since $u$ has compact support.
Then by the definition of $h\chi_\tau$ as a distribution,
\[
  \langle u,\rho_\tau\ast\widetilde\varphi\rangle = \langle h,\chi_\tau\cdot(\rho_\tau\ast\widetilde\varphi)\rangle_{H^{-s},H^s_0}
  = \langle h,A_\tau\varphi\rangle.
\]
Identifying $H^{-s}(\Omega)$ with $(H^s_0(\Omega))^{\!*}$ shows that $R_\tau$ has $A_\tau$ as
its Banach-space adjoint, hence we obtain the identity \eqref{equality of norms}.
\qed\end{proof}

\begin{lemma}[Iterated Hardy inequality on a Lipschitz domain]\label{lem:hardy}
Let $\Omega\subset\R^d$ be a bounded Lipschitz domain and $j\in\Z_+$. There exists
$C=C(\Omega,j)$ such that
\[
  \int_\Omega \frac{|g(x)|^2}{d(x)^{2j}}{d} x \le C\,\|g\|_{H^j(\Omega)}^2 \qquad\text{for all }
  g\in H^j_0(\Omega).
\]
\end{lemma}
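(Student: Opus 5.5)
The plan is to reduce the weighted estimate to the classical one-dimensional Hardy inequality near the boundary, applied $j$ times, after localizing with a partition of unity. Since $C_c^\infty(\Omega)$ is dense in $H^j_0(\Omega)$ and both sides are continuous with respect to the $H^j$ norm (the left-hand side via Fatou's lemma), it suffices to prove the inequality for $g\in C_c^\infty(\Omega)$.

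\textbf{Interior and boundary pieces.} Fix $\delta>0$ small. On $\{d(x)\ge\delta\}$ the weight is bounded by $\delta^{-2j}$, so that part is controlled by $\|g\|_{L^2}^2$. For the strip $T_\delta\cap\Omega$ we cover $\partial\Omega$ by finitely many coordinate patches. In each patch, after a rigid motion, $\Omega$ is the region $\{x_d>\phi(x')\}$ for a Lipschitz function $\phi$. Take a smooth partition of unity $\{\zeta_i\}$ subordinate to this cover. Then $\zeta_i g\in C_c^\infty(\Omega)$, and Leibniz' rule gives $\|\zeta_i g\|_{H^j}\le C\|g\|_{H^j}$, so it suffices to treat each $\zeta_i g$.

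\textbf{Local estimate.} In a patch, the distance $d(x)$ is comparable to $x_d-\phi(x')$, with constants depending only on $\mathrm{Lip}(\phi)$. Fix $x'$ and set $t=x_d-\phi(x')$ and $u(t)=\zeta_i g(x',\phi(x')+t)$. Then $u$ is smooth and vanishes near $t=0$ together with all its derivatives. The 1D Hardy inequality
\[
\int_0^\infty \frac{|v(t)|^2}{t^{2m}}\,dt\le \frac{4}{(2m-1)^2}\int_0^\infty\frac{|v'(t)|^2}{t^{2m-2}}\,dt,\qquad m\ge1,
\]
holds for $v$ vanishing near $0$ and compactly supported. Iterating it $j$ times yields
\[
\int_0^\infty |u|^2t^{-2j}\,dt\le C\int_0^\infty|u^{(j)}|^2\,dt.
\]
Since $\partial_t^j u=(\partial_{x_d}^j(\zeta_i g))(x',\phi(x')+t)$, integrating over $x'$ (the map $(x',t)\mapsto x$ has Jacobian $1$) gives
\[
\int |\zeta_i g|^2 d^{-2j}\,dx\le C\|\partial_{x_d}^j(\zeta_i g)\|_{L^2}^2\le C\|g\|_{H^j}^2.
\]
Summing over the patches and the interior piece proves the lemma.

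\textbf{Main obstacle.} The key step is the comparability $d(x)\simeq x_d-\phi(x')$ on each patch for Lipschitz $\phi$. This holds with
\[
(x_d-\phi(x'))/\sqrt{1+\mathrm{Lip}(\phi)^2}\le d(x)\le x_d-\phi(x'),
\]
by a cone argument, provided the patch is small relative to the cover. Shifting along the vertical direction alone avoids any need to differentiate the nonsmooth function $\phi$, which is why only the $x_d$-derivatives appear in the local estimate.
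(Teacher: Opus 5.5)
Your proof is correct, and it takes a genuinely different route from the paper's.

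\textbf{The paper's route.} The paper does not prove the lemma directly. For $j=1$ it cites the first-order Hardy inequality on Lipschitz domains (Grisvard, Thm.~1.4.4.4, or Ne\v{c}as, Thm.~21.3). For $j\ge 2$ it iterates, using the mapping property that $g\mapsto g/d$ sends $H^j_0(\Omega)$ continuously into $H^{j-1}_0(\Omega)$, again taken from Grisvard.

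\textbf{Your route.} You localize to Lipschitz-graph patches and replace $d(x)$ by the vertical distance $t=x_d-\phi(x')$. You then apply the one-dimensional weighted Hardy inequality $j$ times on each vertical line, so only $\partial_{x_d}^j(\zeta_i g)$ appears.

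\textbf{What each buys.} Your argument is elementary and self-contained, and it never differentiates the weight. Along each line the weight is a pure power of $t$, so no derivatives of $\phi$ or of $d$ enter. The only geometric input is the cone comparability of $d$ with $x_d-\phi(x')$. The paper's multiplier formulation, by contrast, involves derivatives of $1/d$ up to order $j-1$. For a merely Lipschitz boundary, $d$ is only Lipschitz and its second derivatives need not be functions, so for $j\ge 3$ this is only meaningful after passing to a regularized distance. That work is hidden inside the cited theorem. In exchange, the paper's argument is a short citation to results stated in greater generality.

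\textbf{One point to tighten.} The cone argument gives $d(x)\ge (x_d-\phi(x'))/\sqrt{1+\mathrm{Lip}(\phi)^2}$ only for the distance to the local graph. To pass to the global distance on $\operatorname{supp}\zeta_i$, you also need two facts there: the distance to the part of $\partial\Omega$ outside the patch is bounded below, and $x_d-\phi(x')$ is bounded above. So the comparability constant depends on the patch geometry, not only on $\mathrm{Lip}(\phi)$. This is harmless, since the lemma only asks for $C=C(\Omega,j)$.
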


\begin{proof} The claim follows from references  on Sobolev spaces:
For $j=1$ the claim  follows from the Hardy inequality on Lipschitz domains; see
\cite[Thm.~1.4.4.4]{Grisvard} or \cite[Thm.~21.3]{Ne}. For higher $j$ one iterates the argument: the 
operator $g(x)\mapsto \tfrac 1 {d(x)} g(x)$ maps $H^j_0(\Omega)$ continuously into $H^{j-1}_0(\Omega)$ by
\cite[Thm.~1.4.4.4]{Grisvard}, and the result
follows. See also \cite[Thm.~4.36]{AdamsFournier} or \cite{BrezisMarcus}.
\qed\end{proof}

\begin{lemma}[Boundary-layer mollification estimate]\label{lem:bdy}
Let $j\in\Z_+\cup\{0\}$. There exists $C=C(\Omega,\rho,j)$ such that for every
$g\in H^j_0(\Omega)$ (with the convention $H^0_0(\Omega):=L^2(\Omega)$) and every $\tau\in(0,\tau_0]$,
\[
  \bigl\|\rho_\tau\ast\widetilde g\bigr\|_{L^2(T_{5\tau})}
  \le C\,\tau^{j}\,\|g\|_{H^j(\Omega)},
\]
where $\widetilde g$ denotes the zero extension of $g$ to $\R^d$.
\end{lemma}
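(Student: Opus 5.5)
The plan is to localize the convolution to a slightly thicker boundary layer and then use the iterated Hardy inequality of Lemma~\ref{lem:hardy} to gain the factor $\tau^j$ from the smallness of $d(x)$ there. The case $j=0$ needs no Hardy inequality: Young's inequality with $\|\rho_\tau\|_{L^1}=1$ gives $\|\rho_\tau\ast\widetilde g\|_{L^2(T_{5\tau})}\le\|\widetilde g\|_{L^2(\R^d)}=\|g\|_{L^2(\Omega)}$. So assume $j\ge 1$.

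\emph{Step 1 (localization).} Since $\supp\rho_\tau\subset\overline{B(0,\tau)}$, the value $(\rho_\tau\ast\widetilde g)(x)$ depends only on $\widetilde g$ on $\overline{B(x,\tau)}$. Because $d$ is $1$-Lipschitz, $x\in T_{5\tau}$ implies $\overline{B(x,\tau)}\subset T_{6\tau}$. Hence on $T_{5\tau}$ we have
\[
  \rho_\tau\ast\widetilde g=\rho_\tau\ast\bigl(\widetilde g\,\mathbf 1_{T_{6\tau}}\bigr).
\]
Young's inequality then gives
\[
  \|\rho_\tau\ast\widetilde g\|_{L^2(T_{5\tau})}\le\|\widetilde g\,\mathbf 1_{T_{6\tau}}\|_{L^2(\R^d)}=\|g\|_{L^2(\Omega\cap T_{6\tau})},
\]
where the equality uses that $\widetilde g$ vanishes outside $\Omega$.

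\emph{Step 2 (Hardy).} On $\Omega\cap T_{6\tau}$ we have $d(x)<6\tau$, so $1\le (6\tau)^{2j}d(x)^{-2j}$. Therefore
\[
  \int_{\Omega\cap T_{6\tau}}|g|^2\,dx\le (6\tau)^{2j}\int_\Omega\frac{|g(x)|^2}{d(x)^{2j}}\,dx\le C(\Omega,j)\,6^{2j}\tau^{2j}\|g\|_{H^j(\Omega)}^2,
\]
by Lemma~\ref{lem:hardy}, which applies since $g\in H^j_0(\Omega)$. Taking square roots and combining with Step 1 yields the claim with $C=6^j C(\Omega,j)^{1/2}$. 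This constant does not depend on $\tau\in(0,\tau_0]$ and in fact does not depend on $\rho$ beyond its normalization.

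The only nontrivial input is the iterated Hardy inequality, which is where the Lipschitz regularity of $\partial\Omega$ enters. This holds in both cases considered in the paper, the $C^\infty$-boundary domain and the cube. One should also note that $d(x)>0$ a.e.\ in $\Omega$, so the weighted integral in Step 2 makes sense.
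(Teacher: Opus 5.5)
Your proof is correct and follows the paper's argument: you reduce to $\|g\|_{L^2(\Omega\cap T_{6\tau})}$ using $\supp\rho_\tau\subset\overline{B(0,\tau)}$ and the $1$-Lipschitz property of $d$, then apply the iterated Hardy inequality with $d<6\tau$ on the layer. The only difference is cosmetic: you localize by truncating to $T_{6\tau}$ and applying Young's inequality, whereas the paper applies Cauchy--Schwarz against the probability measure $\rho_\tau(x-\cdot)\,dy$ and then uses Fubini.
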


\begin{proof}
By the Cauchy--Schwarz inequality applied with the probability measure $\rho_\tau(x-\cdot){d} y$,
\[
  \bigl|(\rho_\tau\ast\widetilde g)(x)\bigr|^2
  \le 1\cdot \int_{\R^d}\rho_\tau(x-y)\,|\widetilde g(y)|^2{d} y
  =\bigl(\rho_\tau\ast |\widetilde g|^2\bigr)(x).
\]
Integrating over $T_{5\tau}$ and applying Fubini,
\[
  \bigl\|\rho_\tau\ast\widetilde g\bigr\|_{L^2(T_{5\tau})}^2
  \le \int_{\R^d}|\widetilde g(y)|^2\Bigl(\int_{T_{5\tau}}\rho_\tau(x-y){d} x\Bigr){d} y.
\]
The inner integral is bounded by $\int_{\R^d}\rho_\tau(x-y){d} x=1$, and is nonzero only when
$\dist(y,T_{5\tau})\le\tau$, i.e.\ $y\in T_{6\tau}$. Combined with $\widetilde g=0$ outside
$\overline\Omega$,
\[
  \bigl\|\rho_\tau\ast\widetilde g\bigr\|_{L^2(T_{5\tau})}^2
  \le \int_{T_{6\tau}\cap\Omega}|g(y)|^2{d} y.
\]
For $j=0$ this gives $\|\rho_\tau\ast\widetilde g\|_{L^2(T_{5\tau})}\le\|g\|_{L^2(\Omega)}$,
i.e.\ the claim with $\tau^0=1$.
For $j\ge 1$ we apply Lemma~\ref{lem:hardy}:
\[
  \int_{T_{6\tau}\cap\Omega}|g|^2{d} y
  = \int_{T_{6\tau}\cap\Omega}\frac{|g|^2}{d^{2j}}\,d^{2j}{d} y
  \le (6\tau)^{2j}\int_\Omega\frac{|g|^2}{d^{2j}}{d} y
  \le C(6\tau)^{2j}\,\|g\|_{H^j(\Omega)}^2.
\]
Taking square roots of both sides of the above inequality completes the proof.
\qed\end{proof}

\begin{lemma}[Uniform boundedness of $A_\tau$ at integer order]\label{lem:UBint}
For each $k\in\mathbb Z_+\cup\{0\}$ there exists $C=C(\Omega,\rho,k)$ such that
\[
  \|A_\tau\varphi\|_{H^k(\Omega)}\le C\,\|\varphi\|_{H^k(\Omega)}\qquad
  \text{for all }\varphi\in H^k_0(\Omega),\ \tau\in(0,\tau_0].
\]
\end{lemma}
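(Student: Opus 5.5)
We prove the bound by expanding $A_\tau\varphi=\chi_\tau\cdot(\rho_\tau\ast\widetilde\varphi)$ with the Leibniz rule. For a multi-index $|\alpha|\le k$,
\[
  \partial^\alpha(A_\tau\varphi)=\sum_{\beta\le\alpha}\binom{\alpha}{\beta}\,\partial^{\alpha-\beta}\chi_\tau\cdot\bigl(\rho_\tau\ast\partial^\beta\widetilde\varphi\bigr).
\]
Here we use that $\varphi\in H^k_0(\Omega)$, so the zero extension $\widetilde\varphi$ lies in $H^k(\R^d)$ and $\partial^\beta\widetilde\varphi=\widetilde{\partial^\beta\varphi}$. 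Consequently the derivatives can be moved inside the convolution.

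The term with $\beta=\alpha$ is harmless. Since $0\le\chi_\tau\le1$ by Lemma~\ref{lem:chitau}(i), and by Young's inequality with $\|\rho_\tau\|_{L^1}=1$,
\[
  \|\chi_\tau(\rho_\tau\ast\partial^\alpha\widetilde\varphi)\|_{L^2}\le\|\partial^\alpha\varphi\|_{L^2(\Omega)}.
\]

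The main step concerns the terms with $\beta<\alpha$, where $j:=|\alpha|-|\beta|\ge1$. By Lemma~\ref{lem:chitau}(iv)--(v), the factor $\partial^{\alpha-\beta}\chi_\tau$ is bounded by $C\tau^{-j}$ and is supported in the boundary layer $T_{5\tau}$. Hence
\[
  \bigl\|\partial^{\alpha-\beta}\chi_\tau\cdot(\rho_\tau\ast\partial^\beta\widetilde\varphi)\bigr\|_{L^2}\le C\tau^{-j}\,\bigl\|\rho_\tau\ast\widetilde{g}\bigr\|_{L^2(T_{5\tau})},\qquad g:=\partial^\beta\varphi.
\]
We now need to recover the lost power $\tau^{j}$, and this is the key obstacle. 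Since $\varphi\in H^k_0(\Omega)$ and $|\beta|+j=|\alpha|\le k$, the function $g=\partial^\beta\varphi$ belongs to $H^{j}_0(\Omega)$. This holds because the derivative of a $C_c^\infty$-approximating sequence converges in $H^{k-|\beta|}$, and $H^{k-|\beta|}_0\subset H^j_0$. Lemma~\ref{lem:bdy} applied with this $j$ then gives
\[
  \|\rho_\tau\ast\widetilde g\|_{L^2(T_{5\tau})}\le C\tau^{j}\|g\|_{H^j(\Omega)}\le C\tau^j\|\varphi\|_{H^{k}(\Omega)}.
\]
The factors $\tau^{-j}$ and $\tau^{j}$ cancel, leaving a bound independent of $\tau$. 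Lemma~\ref{lem:bdy} itself rests on the iterated Hardy inequality of Lemma~\ref{lem:hardy}.

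Summing over the finitely many pairs $\beta\le\alpha$ and $|\alpha|\le k$ yields $\|A_\tau\varphi\|_{H^k(\Omega)}\le C\|\varphi\|_{H^k(\Omega)}$. The constant $C$ depends only on $\Omega$, on $\rho$ through the constants $C_\alpha$, and on $k$, uniformly for $\tau\in(0,\tau_0]$.

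The case $k=0$ is immediate from the $\beta=\alpha$ estimate alone. The step that requires care is confirming that $\partial^\beta\varphi\in H^j_0$, so that the Hardy-based Lemma~\ref{lem:bdy} applies with the full exponent $j$; without the vanishing boundary traces the estimate would fail.
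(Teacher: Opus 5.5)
Your proposal is correct and follows the paper's proof: a Leibniz expansion, Young's inequality for the term where no derivative falls on $\chi_\tau$, and for the remaining terms the $\tau^{-j}$ growth of $\partial^{\alpha-\beta}\chi_\tau$ on $T_{5\tau}$ cancelled by the $\tau^{j}$ gain from Lemma~\ref{lem:bdy}, once you note that $\partial^\beta\varphi\in H^{j}_0(\Omega)$. The only difference is notational: you attach $\beta$ to the derivatives of $\varphi$, while the paper attaches it to the derivatives of $\chi_\tau$.
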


\begin{proof}
The case $k=0$: $\|A_\tau\varphi\|_{L^2}=\|\chi_\tau(\rho_\tau\ast\widetilde\varphi)\|_{L^2}\le\|\rho_\tau\ast\widetilde\varphi\|_{L^2}\le\|\widetilde\varphi\|_{L^2}=\|\varphi\|_{L^2(\Omega)}$, using
$\|\chi_\tau\|_\infty\le1$ and Young's inequality, as mollification is an $L^2$-contraction.

Assume $k\ge 1$. Recall $A_\tau\varphi\in C_c^\infty(\Omega)$, so its zero extension to $\R^d$ is itself
a $C_c^\infty(\R^d)$ function and
\[
  \|A_\tau\varphi\|_{H^k(\Omega)} = \|A_\tau\varphi\|_{H^k(\R^d)}
  =\Bigl(\sum_{|\alpha|\le k}\|\partial^\alpha A_\tau\varphi\|_{L^2(\R^d)}^2\Bigr)^{1/2}.
\]
By Leibniz rule, for $|\alpha|\le k$
\[
  \partial^\alpha A_\tau\varphi
  = \sum_{\beta\le\alpha}\binom{\alpha}{\beta}\,(\partial^\beta\chi_\tau)\cdot \partial^{\alpha-\beta}(\rho_\tau\ast\widetilde\varphi).
\]
We use that for $\beta\le\alpha$ and $\varphi\in H^k_0(\Omega)$, $\partial^{\alpha-\beta}\varphi\in H^{k-|\alpha-\beta|}_0(\Omega)$
and $\partial^{\alpha-\beta}\widetilde\varphi=\widetilde{\partial^{\alpha-\beta}\varphi}$ in
$\mathcal{D}'(\R^d)$, where
$\widetilde{\partial^{\alpha-\beta}\varphi}$ is the zero-extension of ${\partial^{\alpha-\beta}\varphi}$
to $\R^d$. Consequently,
$\partial^{\alpha-\beta}(\rho_\tau\ast\widetilde\varphi)=\rho_\tau\ast\widetilde{\partial^{\alpha-\beta}\varphi}$.

\medskip
\noindent\emph{Term $\beta=0$.} Using $\|\chi_\tau\|_\infty\le1$ and Young's inequality,
\[
  \bigl\|\chi_\tau\cdot\rho_\tau\ast\widetilde{\partial^\alpha\varphi}\bigr\|_{L^2}
  \le\|\partial^\alpha\varphi\|_{L^2(\Omega)}\le\|\varphi\|_{H^k(\Omega)}.
\]

\medskip
\noindent\emph{Term $|\beta|\ge 1$.} By Lemma~\ref{lem:chitau}(iv,v), $\partial^\beta\chi_\tau$ is
supported in $T_{5\tau}$ with $\|\partial^\beta\chi_\tau\|_\infty\le C_\beta\tau^{-|\beta|}$, so
\beq\label{line1}
  \bigl\|(\partial^\beta\chi_\tau)\cdot\rho_\tau\ast\widetilde{\partial^{\alpha-\beta}\varphi}\bigr\|_{L^2}
  \le C_\beta\,\tau^{-|\beta|}\,\bigl\|\rho_\tau\ast\widetilde{\partial^{\alpha-\beta}\varphi}\bigr\|_{L^2(T_{5\tau})}.
\eeq
Set $g:=\partial^{\alpha-\beta}\varphi\in H^{k-|\alpha-\beta|}_0(\Omega)$. Since
$|\alpha|\le k$ implies $k-|\alpha-\beta|=k-|\alpha|+|\beta|\ge|\beta|$, we have
$g\in H^{|\beta|}_0(\Omega)$ with $\|g\|_{H^{|\beta|}}\le\|\varphi\|_{H^k}$.
Lemma~\ref{lem:bdy} with $j=|\beta|$ gives
\beq\label{line2}
  \bigl\|\rho_\tau\ast\widetilde g\bigr\|_{L^2(T_{5\tau})}\le C\,\tau^{|\beta|}\,\|g\|_{H^{|\beta|}(\Omega)}\le C\,\tau^{|\beta|}\,\|\varphi\|_{H^k(\Omega)}.
\eeq
Combining the formulas \eqref{line1} and \eqref{line2}, we obtain
\[
  \bigl\|(\partial^\beta\chi_\tau)\cdot\rho_\tau\ast\widetilde{\partial^{\alpha-\beta}\varphi}\bigr\|_{L^2}
  \le C\,\|\varphi\|_{H^k(\Omega)},
\]
with $C$ independent of $\tau$ (the powers $\tau^{-|\beta|}$ and $\tau^{|\beta|}$ cancel exactly).

\medskip
Summing over $\beta\le\alpha$ and over $|\alpha|\le k$ yields
$\|A_\tau\varphi\|_{H^k(\R^d)}\le C(k)\,\|\varphi\|_{H^k(\Omega)}$, which is the claim because
$\|A_\tau\varphi\|_{H^k(\Omega)}=\|A_\tau\varphi\|_{H^k(\R^d)}$.
\qed\end{proof}

\begin{lemma}[Uniform boundedness of $A_\tau$ at fractional order]\label{lem:UBfrac}
Let $s>0$ with $s-\tfrac12\notin\Z$. There exists $C=C(\Omega,\rho,s)$ such that
\[
  \|A_\tau\varphi\|_{H^s_0(\Omega)}\le C\,\|\varphi\|_{H^s_0(\Omega)}\qquad
  \text{for all }\varphi\in H^s_0(\Omega),\ \tau\in(0,\tau_0].
\]
Equivalently, $\sup_{\tau\in(0,\tau_0]}\|R_\tau\|_{H^{-s}(\Omega)\to H^{-s}(\Omega)}<\infty$.
\end{lemma}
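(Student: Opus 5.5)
The plan is to interpolate between the integer-order bounds of Lemma~\ref{lem:UBint}. Let $k=\lceil s\rceil$, so that $k-1<s\le k$ (with $k\ge 1$). By Lemma~\ref{lem:UBint} the operators $A_\tau$ are bounded $H^{k}_0(\Omega)\to H^{k}_0(\Omega)$ and $L^2(\Omega)\to L^2(\Omega)$, uniformly in $\tau\in(0,\tau_0]$. The $H^k_0$ statement follows because $A_\tau\varphi\in C_c^\infty(\Omega)$, so its $H^k(\Omega)$ norm is the $H^k_0$ norm. The restriction $s-\tfrac12\notin\Z$ is exactly the condition under which complex (or real) interpolation of the $H_0$ scale behaves well on smooth bounded domains. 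Indeed, by the Lions--Magenes theory, with $\theta=s/k$,
\[
  [L^2(\Omega),H^k_0(\Omega)]_{\theta}=H^s_0(\Omega)\quad\text{for } s-\tfrac12\notin\Z ,
\]
with equivalent norms. For the cube $[0,1]^d$ the same identification holds, for instance via odd reflection and sine series, as the paper already uses for Sobolev embedding.

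With this identification, the interpolation theorem for linear operators gives
\[
  \|A_\tau\|_{H^s_0\to H^s_0}\le C\,\|A_\tau\|_{L^2\to L^2}^{1-\theta}\,\|A_\tau\|_{H^k_0\to H^k_0}^{\theta}.
\]
The right-hand side is bounded independently of $\tau$. The constant $C$ comes only from the norm equivalence in the interpolation identity, so it depends only on $\Omega$ and $s$. This gives the first claim.

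The equivalent statement, $\sup_\tau\|R_\tau\|_{H^{-s}\to H^{-s}}<\infty$, then follows immediately from the norm identity \eqref{equality of norms} in Lemma~\ref{lem:adjoint}. That identity holds because $R_\tau$ is the Banach adjoint of $A_\tau$ under the identification $H^{-s}=(H^s_0)^*$.

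The main obstacle is justifying the interpolation identity for $H^s_0$ with constants that are uniform. In particular, one must handle the half-integer exclusion carefully: for $s-\tfrac12\in\Z$ the interpolation space is the Lions--Magenes space $H^{s}_{00}$ rather than $H^s_0$, and this is precisely why it is excluded. One must also handle the cube, whose boundary is only Lipschitz. If citing Lions--Magenes for the cube is deemed insufficient, I would fall back on the spectral description there: on $[0,1]^d$, $H^s_0$ for $s<\tfrac12$ coincides with $H^s$, and for other non-half-integer $s$ it coincides with the Dirichlet spectral space up to compatibility conditions, to which interpolation applies directly. A second, minor point is that $A_\tau$ must be well defined on the whole of $L^2$ and $H^k_0$ as a single operator. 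This holds because the formula $\chi_\tau(\rho_\tau\ast\widetilde\varphi)$ makes sense for every $\varphi\in L^2$, so interpolation applies to one operator on the compatible couple.
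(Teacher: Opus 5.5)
Your proposal is correct and follows essentially the same route as the paper: uniform $L^2$ and $H^{\lceil s\rceil}_0$ bounds from Lemma~\ref{lem:UBint}, then the interpolation identity $[L^2(\Omega),H^{\lceil s\rceil}_0(\Omega)]_{s/\lceil s\rceil}=H^s_0(\Omega)$ for $s-\tfrac12\notin\Z$, then the interpolation theorem for linear operators, and finally the adjoint norm identity of Lemma~\ref{lem:adjoint} for $R_\tau$. Your extra remarks (the norm-equivalence constant, and that $A_\tau$ is a single operator on the compatible couple) only make explicit what the paper leaves implicit, and the cube needs no separate fallback, since the interpolation results the paper cites (McLean, Thm.~B.8 and Thm.~3.33) are stated for Lipschitz domains.
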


\begin{proof}
If $s\in{\Z_+}$, the claim is Lemma~\ref{lem:UBint}. Assume $s\notin{\Z_+}$ and choose $N:=\lceil s\rceil\in{\Z_+}$,
so that $\theta:=s/N\in(0,1)$. By Lemma~\ref{lem:UBint}, the linear maps
$A_\tau:L^2(\Omega)\to L^2(\Omega)$ and $A_\tau:H^N_0(\Omega)\to H^N_0(\Omega)$ are bounded with
norms uniformly bounded by some $M_0,M_N$ independent of $\tau\in(0,\tau_0]$.

The standard interpolation identity for the spaces $H^s_0$ on a Lipschitz domain reads
\[
  \bigl[L^2(\Omega),\,H^N_0(\Omega)\bigr]_\theta
  = H^{\theta N}_0(\Omega) = H^s_0(\Omega),
\]
provided that  $s=\theta N\notin \tfrac12+\Z$, i.e.\ $s-\tfrac12\notin\Z$, 
see \cite[Thm.~B.8 and Thm.~3.33]{McLean}
or \cite[Ch.~1, \S 9 and Thm.~11.6]{LionsMagenes1}.

The complex interpolation theorem for bounded linear operators
\cite[Thm.~B.2(iii)]{McLean} then yields
\[
  \|A_\tau\|_{H^s_0(\Omega)\to H^s_0(\Omega)} \le M_0^{1-\theta}M_N^{\theta},
\]
which is uniform in $\tau\in(0,\tau_0]$. The equivalent statement on $R_\tau$ follows from
Lemma~\ref{lem:adjoint}.
\qed\end{proof}

\begin{remark}[On the integer case]
For integer $s\in{\Z_+}$ no interpolation is needed: Lemma~\ref{lem:UBint} gives uniform
boundedness directly, so the conclusion of Proposition~\ref{prop:main convolutions} holds for every
$s\in{\Z_+}$ (which all satisfy $s-\tfrac12\notin\Z$).
\end{remark}

\begin{lemma}\label{lem:smooth}
For every $h\in C_c^\infty(\Omega)$,
\[
  \lim_{\tau\to 0^+}\|R_\tau h - h\|_{H^{-s}(\Omega)} = 0.
\]
\end{lemma}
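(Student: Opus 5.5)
The plan is to use the continuous embedding $L^2(\Omega)\hookrightarrow H^{-s}(\Omega)$. This gives $\|R_\tau h-h\|_{H^{-s}(\Omega)}\le C\|R_\tau h-h\|_{L^2(\Omega)}$, and for $s>0$ the constant is simply $1$, because $\|\varphi\|_{L^2}\le\|\varphi\|_{H^s_0}$. The statement therefore reduces to proving $R_\tau h\to h$ in $L^2(\Omega)$, or even uniformly on $\overline\Omega$, for a fixed smooth, compactly supported $h$.

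Fix $h\in C_c^\infty(\Omega)$ and set $\delta:=\dist(\supp h,\partial\Omega)>0$. For $\tau<\delta/5$, every $x\in\supp h$ satisfies $d(x)\ge\delta>5\tau$. Lemma~\ref{lem:chitau}(iii) then gives $\chi_\tau(x)=1$, so $h\chi_\tau=h$ identically. Consequently
\[
  R_\tau h=\rho_\tau\ast\widetilde h \qquad\text{for all }\tau<\delta/5 ,
\]
where $\widetilde h$ is the zero extension of $h$.

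It remains to check that standard mollification converges. Since $\widetilde h$ is uniformly continuous with modulus $\omega$, and $\rho_\tau$ is nonnegative with unit mass and support in $\overline{B(0,\tau)}$, we get
\[
  |\rho_\tau\ast\widetilde h(x)-\widetilde h(x)|\le\int\rho_\tau(x-y)\,|\widetilde h(y)-\widetilde h(x)|\,dy\le\omega(\tau).
\]
Hence $\|R_\tau h-h\|_{L^2(\Omega)}\le\mathrm{vol}(\Omega)^{1/2}\,\omega(\tau)\to0$. Combining this with the embedding bound from the first step proves the lemma.

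There is no real obstacle here. The only point requiring care is the observation that the cutoff $\chi_\tau$ becomes inactive on $\supp h$ once $\tau$ is small, which is exactly Lemma~\ref{lem:chitau}(iii). This lemma is the density ingredient for the Banach--Steinhaus argument: together with Lemma~\ref{lem:UBfrac} and Lemma~\ref{lem:density}, it yields Proposition~\ref{prop:main convolutions}.
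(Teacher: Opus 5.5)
Your proof is correct and follows the paper's route. For $\tau<\delta/5$, Lemma~\ref{lem:chitau}(iii) makes the cutoff inactive, so $R_\tau h=\rho_\tau\ast\widetilde h$; mollifier convergence gives $L^2$ convergence, and the continuous embedding $L^2(\Omega)\hookrightarrow H^{-s}(\Omega)$ finishes the argument. The only difference is that you write out the uniform-continuity estimate, which the paper just cites as a standard property of mollifiers.
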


\begin{proof}
Set $\delta:=\dist(\supp h,\partial\Omega)>0$. For $\tau<\delta/5$, Lemma~\ref{lem:chitau}(iii)
gives $\chi_\tau\equiv 1$ on $\supp (h)$, hence $h\chi_\tau=h$ and $R_\tau h=\rho_\tau\ast h$.
Standard properties of mollifiers yield $\rho_\tau\ast h\to h$ in $C_c^\infty(\R^d)$,
and in particular in $L^2(\Omega)$.
The continuous embedding $L^2(\Omega)\hookrightarrow H^{-s}(\Omega)$ (which is the dual of the
continuous embedding $H^s_0(\Omega)\hookrightarrow L^2(\Omega)$) yields convergence in
$H^{-s}(\Omega)$.
\qed\end{proof}

\begin{proof}[of Proof of Proposition~\ref{prop:main convolutions}]
Let $\eps>0$ and $h\in H^{-s}(\Omega)$. By Lemma~\ref{lem:density} there exists
$h_\eps\in C_c^\infty(\Omega)$ with $\|h-h_\eps\|_{H^{-s}(\Omega)}<\eps$. Let us decompose functions as
\[
  R_\tau h - h = R_\tau(h-h_\eps) + (R_\tau h_\eps - h_\eps) - (h-h_\eps).
\]
By Lemma~\ref{lem:UBfrac} there is $C^*$ independent of $\tau\in(0,\tau_0]$ such that
\[
  \|R_\tau(h-h_\eps)\|_{H^{-s}(\Omega)}\le C^*\,\|h-h_\eps\|_{H^{-s}(\Omega)}<C^*\eps.
\]
By above assumption on $h_\eps$, we have $\|h-h_\eps\|_{H^{-s}(\Omega)}<\eps$. By Lemma~\ref{lem:smooth} there exists
$\tau_\eps\in(0,\tau_0]$ such that
\[
  \|R_\tau h_\eps - h_\eps\|_{H^{-s}(\Omega)}<\eps\qquad\text{for all } \tau\in(0,\tau_\eps).
\]
Combining,
\[
  \|R_\tau h - h\|_{H^{-s}(\Omega)} \le (C^*+2)\,\eps \qquad\text{for all } \tau\in(0,\tau_\eps).
\]
Since $\eps>0$ was arbitrary, $\|R_\tau h-h\|_{H^{-s}(\Omega)}\to 0$ as $\tau\to 0^+$.
\qed\end{proof}

\commented{
\begin{remark}[Where the hypothesis $s-\tfrac12\notin\Z$ is used]
The hypothesis enters \emph{only} in Lemma~\ref{lem:UBfrac}, through the interpolation identity
$[L^2(\Omega),H^N_0(\Omega)]_\theta=H^{\theta N}_0(\Omega)$. At the exceptional values
$s\in\tfrac12+\N$, the right-hand side must be replaced by the Lions--Magenes space, with
strictly stronger norm; the conclusion of Proposition~\ref{prop:main convolutions} can still hold but the
proof requires either that intermediate space or a separate Hardy estimate adapted to the
half-integer scale.
\end{remark}}

\section{Proof of Lemma \ref{lem: function graph transformers architecture}}
\label{app:lem: function graph transformers architecture}

Let us recall the setting.
Let $\Omega\subset\mathbb R^{d}$ be a non-empty bounded open set, and set
\[
    \overline\Omega = \operatorname{cl}(\Omega).
\]
Let $L>0$, $n\in\mathbb Z_+$, and define
\[
    X \coloneqq \overline\Omega\times[-L,L]^n
    \subset \mathbb R^{d+n}.
\]
The space $X$ is equipped with the Euclidean metric inherited from
$\mathbb R^{d+n}$. Let $\overline \lambda_{\overline\Omega}=\tfrac 1{\text{vol}(\Omega)} \lambda_{\overline\Omega}$ denote the normalized
Lebesgue measure on $\overline\Omega$. Thus
$\operatorname{spt}\lambda_{\overline\Omega}=\overline\Omega$.

For $h\in C(\overline\Omega;[-L,L]^n)$ define its graph measure by
\[
    \gamma_h
    \coloneqq
    (\operatorname{id}_{\overline\Omega},h)_{\#}\overline \lambda_{\Omega}
    \in \mathcal P(X).
\]
Set
\[
    \mathcal G
    \coloneqq
    \{\gamma_h:h\in C(\overline\Omega;[-L,L]^n)\}.
\]
Since $X$ is compact, $\mathcal P(X)=\mathcal P_2(X)$, and we equip
$\mathcal P(X)$ with the $1$-Wasserstein distance ${\color{black}W_1}$ induced by the
Euclidean metric on $X$.

Let
\[
    {\Gamma}:\mathcal P(X)\times X\to X
\]
be continuous, and assume:

\[
\tag{P}
    \forall h\in C(\overline\Omega;[-L,L]^n)\ \exists g\in
    C(\overline\Omega;[-L,L]^n)
    \quad\quad\hbox{such that }
    {\Gamma}(\gamma_h,\cdot)_{\#}\gamma_h=\gamma_g.
\]

\[
\tag{L1}
    \|J(h_1)-J(h_2)\|_{C(\overline\Omega;\mathbb R^n)}
    \le
    L_0\,{\color{black}W_1}(\gamma_{h_1},\gamma_{h_2})
\]
for all $h_1,h_2\in C(\overline\Omega;[-L,L]^n)$.

\[
\tag{L2$_{\mathrm{lin}}$}
    |J(h)(x_1)-J(h)(x_2)|_{\mathbb R^n}
    \le
    L_1 |x_1-x_2|
\]
for all $h\in C(\overline\Omega;[-L,L]^n)$ and all
$x_1,x_2\in\overline\Omega$.

Here $J$ is the graph-update map determined by \textup{(P)}; its
well-definedness is proved below.

\begin{proposition}
Assume \textup{(P)}, \textup{(L1)}, and
\textup{(L2$_{\mathrm{lin}}$)}. Then there exists a map
\[
    \widehat p:\mathcal P(X)\times\overline\Omega\to[-L,L]^n
\]
such that
\[
    \widehat p(\gamma_h,x)=J(h)(x)
    \qquad
    \forall h\in C(\overline\Omega;[-L,L]^n),\quad
    \forall x\in\overline\Omega,
\]
and
\begin{align}
\label{Tagged 1}
    |\widehat p(\mu_1,x_1)-\widehat p(\mu_2,x_2)|_{\mathbb R^n}
    \le
    \sqrt n\,L_0\,{\color{black}W_1}(\mu_1,\mu_2)
    +
    \sqrt n\,L_1\,|x_1-x_2|
\end{align}
for all $\mu_1,\mu_2\in\mathcal P(X)$ and all
$x_1,x_2\in\overline\Omega$.

Define
\[
    \widehat {\Gamma}:\mathcal P(X)\times X\to X,
    \qquad
    \widehat {\Gamma}(\mu,(x,y))
    \coloneqq
    (x,\widehat p(\mu,x)).
\]
Then $\widehat f$ is Lipschitz with respect to the product metric
\[
    d\bigl((\mu,z),(\nu,z')\bigr)
    \coloneqq
    {\color{black}W_1}(\mu,\nu)+|z-z'|_{\mathbb R^{d+n}},
    \qquad z,z'\in X,
\]
and
\[
    {\Gamma}(\gamma_h,\cdot)_{\#}\gamma_h
    =
    \widehat {\Gamma}(\gamma_h,\cdot)_{\#}\gamma_h
    =
    \gamma_{J(h)}
\]
for every $h\in C(\overline\Omega;[-L,L]^n)$.
\end{proposition}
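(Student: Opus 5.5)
First I will make $J$ well defined and identify $\Gamma(\gamma_h,\cdot)_\#\gamma_h$ as $\gamma_{J(h)}$. If $\gamma_g=\gamma_{g'}$ for continuous $g,g'$, then testing against $\phi(x,y)=\varphi(x)\,y_i$ in \eqref{integration formula} gives $\int_\Omega \varphi(g-g')\,dx=0$ for all $\varphi\in C(\overline\Omega)$. Hence $g=g'$ a.e., and by continuity together with $\operatorname{spt}\lambda_{\overline\Omega}=\overline\Omega$ they agree everywhere. So $J(h)$ in (P) is unique, $J:C(\overline\Omega;[-L,L]^n)\to C(\overline\Omega;[-L,L]^n)$ is a genuine map, and $\Gamma(\gamma_h,\cdot)_\#\gamma_h=\gamma_{J(h)}$ by definition.

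Next I will define $\widehat p$ on the subset $\mathcal G\times\overline\Omega$ of $\mathcal P(X)\times\overline\Omega$. The same uniqueness argument shows $h\mapsto\gamma_h$ is injective on continuous functions, so $\widehat p_0(\gamma_h,x):=J(h)(x)$ is well defined there. Equip $\mathcal P(X)\times\overline\Omega$ with the metric $W_1(\mu_1,\mu_2)+\frac{L_1}{L_0}|x_1-x_2|$ (handle $L_0=0$ by a trivial modification). By (L1) and (L2$_{\mathrm{lin}}$), each component $\widehat p_{0,i}$ satisfies
\[
|\widehat p_{0,i}(\gamma_{h_1},x_1)-\widehat p_{0,i}(\gamma_{h_2},x_2)|\le |J(h_1)(x_1)-J(h_2)(x_1)|+|J(h_2)(x_1)-J(h_2)(x_2)|\le L_0W_1(\gamma_{h_1},\gamma_{h_2})+L_1|x_1-x_2| .
\]
So each component is Lipschitz with constant $\max(L_0,L_1)$, and more precisely with respect to the weighted sum metric.

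I then extend each scalar component to all of $\mathcal P(X)\times\overline\Omega$ by McShane's formula
\[
\widetilde p_i(\mu,x):=\inf_{(\gamma_h,x')\in\mathcal G\times\overline\Omega}\Bigl(\widehat p_{0,i}(\gamma_h,x')+L_0W_1(\mu,\gamma_h)+L_1|x-x'|\Bigr).
\]
This preserves the bound $L_0W_1+L_1|\cdot|$. To keep values in $[-L,L]$, I compose with the 1-Lipschitz clamp $t\mapsto\max(-L,\min(L,t))$; this does not change values on $\mathcal G\times\overline\Omega$ because $J(h)$ takes values in $[-L,L]^n$. Setting $\widehat p=(\widehat p_i)_{i=1}^n$ and summing componentwise via $|v|\le\sqrt n\max_i|v_i|$ yields \eqref{Tagged 1}. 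The main point to check is that McShane applies to a general metric space; no compactness or linear structure is needed. The weighted sum is indeed a metric, so this step should be routine.

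Finally, the map $\widehat\Gamma(\mu,(x,y))=(x,\widehat p(\mu,x))$ is Lipschitz in the product metric, with constant at most $1+\sqrt n\max(L_0,L_1)$, because the first coordinate is 1-Lipschitz in $z$. For $h$ continuous, $\widehat\Gamma(\gamma_h,\cdot)\circ(\mathrm{id},h)=(\mathrm{id},J(h))$. Hence
\[
\widehat\Gamma(\gamma_h,\cdot)_\#\gamma_h=(\mathrm{id},J(h))_\#\overline\lambda_\Omega=\gamma_{J(h)}=\Gamma(\gamma_h,\cdot)_\#\gamma_h,
\]
which completes the argument. The only genuinely delicate step is the uniqueness and injectivity argument in the first paragraph, which relies on the full support of Lebesgue measure on $\overline\Omega$.
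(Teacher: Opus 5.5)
Your proof is correct and takes essentially the same route as the paper: uniqueness of $J(h)$ by testing graph measures, a McShane extension, clipping with the $1$-Lipschitz truncation to $[-L,L]$, and the pushforward identity $\widehat\Gamma(\gamma_h,\cdot)\circ(\mathrm{id},h)=(\mathrm{id},J(h))$. The only difference is in packaging: you extend once on $\mathcal P(X)\times\overline\Omega$ with the pseudometric $L_0W_1(\mu,\mu')+L_1|x-x'|$ and get both Lipschitz bounds from McShane's theorem, while the paper extends in $\mu$ alone for each fixed $x$ and checks the $x$-bound with a near-minimizer. Since each $J(h)_i$ is $L_1$-Lipschitz by \textup{(L2$_{\mathrm{lin}}$)}, the infimum over $x'$ in your formula is attained at $x'=x$, so the two extensions are in fact the same function.
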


\begin{proof}
We split the proof into three steps.

\medskip

\noindent
\textbf{Step 1: Well-definedness of the graph-update map.}

We first show that the function $g$ in \textup{(P)} is unique.

Suppose
\[
    \gamma_{g_1}=\gamma_{g_2}
\]
for some $g_1,g_2\in C(\overline\Omega;[-L,L]^n)$. Consider the continuous
function
\[
    \Phi:X\to\mathbb R,
    \qquad
    \Phi(x,y)\coloneqq |y-g_1(x)|_{\mathbb R^n}^2.
\]
Since $\gamma_{g_1}=\gamma_{g_2}$, we have
\[
    \int_X \Phi\,d\gamma_{g_1}
    =
    \int_X \Phi\,d\gamma_{g_2}.
\]
By the definition of graph measures,
\[
    \int_X \Phi\,d\gamma_{g_1}
    =
    \int_{\overline\Omega}
        |g_1(x)-g_1(x)|_{\mathbb R^n}^2
    \,d\lambda_{\overline\Omega}(x)
    =
    0,
\]
whereas
\[
    \int_X \Phi\,d\gamma_{g_2}
    =
    \int_{\overline\Omega}
        |g_2(x)-g_1(x)|_{\mathbb R^n}^2
    \,d\lambda_{\overline\Omega}(x).
\]
Therefore
\[
    \int_{\overline\Omega}
        |g_2(x)-g_1(x)|_{\mathbb R^n}^2
    \,d\lambda_{\overline\Omega}(x)
    =
    0.
\]
Hence $g_1=g_2$ $\lambda_{\overline\Omega}$-almost everywhere.

Since $g_1-g_2$ is continuous and
$\operatorname{supp}(\lambda_{\overline\Omega})=\overline\Omega$, it follows
that
\[
    g_1=g_2
    \qquad
    \text{on } \overline\Omega.
\]
Thus the graph update in \textup{(P)} is uniquely determined by $h$, and
we may define
\[
    J(h)\coloneqq g,
    \qquad
    {\Gamma}(\gamma_h,\cdot)_{\#}\gamma_h=\gamma_{J(h)}.
\]

\medskip

\noindent
\textbf{Step 2: Scalar McShane extensions.}

Fix $x\in\overline\Omega$ and $i\in\{1,\dots,n\}$. Define
\[
    \psi_{x,i}:\mathcal G\to[-L,L],
    \qquad
    \psi_{x,i}(\gamma_h)\coloneqq J(h)(x)_i.
\]
This is well-defined because $h\mapsto\gamma_h$ is injective by Step 1
applied to $g_1=h_1$ and $g_2=h_2$.

By \textup{(L1)}, for all $h_1,h_2\in C(\overline\Omega;[-L,L]^n)$,
\[
\begin{aligned}
    |\psi_{x,i}(\gamma_{h_1})-\psi_{x,i}(\gamma_{h_2})|
    &=
    |J(h_1)(x)_i-J(h_2)(x)_i|        \\
    &\le
    |J(h_1)(x)-J(h_2)(x)|_{\mathbb R^n} \\
    &\le
    \|J(h_1)-J(h_2)\|_{C(\overline\Omega;\mathbb R^n)} \\
    &\le
    L_0 {\color{black}W_1}(\gamma_{h_1},\gamma_{h_2}).
\end{aligned}
\]
Thus $\psi_{x,i}$ is $L_0$-Lipschitz on $(\mathcal G,{\color{black}W_1})$.

Define its McShane extension by
\begin{align}
\label{Tagged 2}
    \widetilde\psi_{x,i}(\mu)
    \coloneqq
    \inf_{h\in C(\overline\Omega;[-L,L]^n)}
    \bigl\{
        J(h)(x)_i
        +
        L_0 {\color{black}W_1}(\mu,\gamma_h)
    \bigr\},
    \qquad
    \mu\in\mathcal P(X).
\end{align}

We claim that
\begin{align}
\label{Tagged 3}
    \widetilde\psi_{x,i}(\gamma_h)=J(h)(x)_i
\end{align}
for every $h\in C(\overline\Omega;[-L,L]^n)$.

Indeed, from the choice $h$ in the infimum in \eqref{Tagged 2},
\[
    \widetilde\psi_{x,i}(\gamma_h)
    \le
    J(h)(x)_i.
\]
Conversely, for arbitrary $k\in C(\overline\Omega;[-L,L]^n)$, the
$L_0$-Lipschitz property of $\psi_{x,i}$ gives
\[
    J(k)(x)_i
    \ge
    J(h)(x)_i
    -
    L_0 {\color{black}W_1}(\gamma_h,\gamma_k).
\]
Therefore
\[
    J(k)(x)_i
    +
    L_0 {\color{black}W_1}(\gamma_h,\gamma_k)
    \ge
    J(h)(x)_i.
\]
Taking the infimum over $k$ gives
\[
    \widetilde\psi_{x,i}(\gamma_h)
    \ge
    J(h)(x)_i.
\]
Hence \eqref{Tagged 3} holds.

Next, for arbitrary $\mu_1,\mu_2\in\mathcal P(X)$ and arbitrary
$h\in C(\overline\Omega;[-L,L]^n)$, the triangle inequality for ${\color{black}W_1}$
implies
\[
    {\color{black}W_1}(\mu_1,\gamma_h)
    \le
    {\color{black}W_1}(\mu_1,\mu_2)+{\color{black}W_1}(\mu_2,\gamma_h).
\]
Thus
\[
    J(h)(x)_i+L_0{\color{black}W_1}(\mu_1,\gamma_h)
    \le
    L_0{\color{black}W_1}(\mu_1,\mu_2)
    +
    J(h)(x)_i+L_0{\color{black}W_1}(\mu_2,\gamma_h).
\]
Taking the infimum over $h$ gives
\[
    \widetilde\psi_{x,i}(\mu_1)
    \le
    \widetilde\psi_{x,i}(\mu_2)
    +
    L_0{\color{black}W_1}(\mu_1,\mu_2).
\]
Exchanging $\mu_1$ and $\mu_2$, we obtain
\begin{align}
\label{Tagged 4}
    |\widetilde\psi_{x,i}(\mu_1)-\widetilde\psi_{x,i}(\mu_2)|
    \le
    L_0{\color{black}W_1}(\mu_1,\mu_2).
\end{align}

We now prove Lipschitz regularity in the $x$-variable. Fix
$\mu\in\mathcal P(X)$, $x_1,x_2\in\overline\Omega$, and
$\varepsilon>0$. By the definition of the infimum, choose
$h_\varepsilon\in C(\overline\Omega;[-L,L]^n)$ such that
\[
    J(h_\varepsilon)(x_2)_i
    +
    L_0{\color{black}W_1}(\mu,\gamma_{h_\varepsilon})
    \le
    \widetilde\psi_{x_2,i}(\mu)+\varepsilon.
\]
Using the same function $h_\varepsilon$ as a competitor in the infimum
defining $\widetilde\psi_{x_1,i}(\mu)$, we get
\[
    \widetilde\psi_{x_1,i}(\mu)
    \le
    J(h_\varepsilon)(x_1)_i
    +
    L_0{\color{black}W_1}(\mu,\gamma_{h_\varepsilon}).
\]
Hence
\[
\begin{aligned}
    \widetilde\psi_{x_1,i}(\mu)
    -
    \widetilde\psi_{x_2,i}(\mu)
    &\le
    J(h_\varepsilon)(x_1)_i
    -
    J(h_\varepsilon)(x_2)_i
    +
    \varepsilon                                      \\
    &\le
    |J(h_\varepsilon)(x_1)-J(h_\varepsilon)(x_2)|_{\mathbb R^n}
    +
    \varepsilon                                      \\
    &\le
    L_1|x_1-x_2|+\varepsilon,
\end{aligned}
\]
where we used \textup{(L2$_{\mathrm{lin}}$)}. Exchanging $x_1$ and
$x_2$, and then letting $\varepsilon\downarrow0$, gives
\begin{align}
\label{Tagged 5}
    |\widetilde\psi_{x_1,i}(\mu)-\widetilde\psi_{x_2,i}(\mu)|
    \le
    L_1|x_1-x_2|.
\end{align}

\medskip

\noindent
\textbf{Step 3: Construction of $\widehat p$ and $\widehat f$.}

Let
\[
    T_L:\mathbb R\to[-L,L],
    \qquad
    T_L(r)\coloneqq \max\{-L,\min\{L,r\}\}.
\]
The map $T_L$ is $1$-Lipschitz. Define
\[
    \widehat p(\mu,x)_i
    \coloneqq
    T_L\bigl(\widetilde\psi_{x,i}(\mu)\bigr),
    \qquad
    i=1,\dots,n.
\]
Then
\[
    \widehat p:\mathcal P(X)\times\overline\Omega\to[-L,L]^n.
\]

By \eqref{Tagged 3}, since $J(h)(x)_i\in[-L,L]$, the clipping is inactive on
graph measures:
\[
    \widehat p(\gamma_h,x)_i
    =
    T_L\bigl(J(h)(x)_i\bigr)
    =
    J(h)(x)_i.
\]
Therefore
\begin{align}
\label{Tagged 6}
    \widehat p(\gamma_h,x)=J(h)(x)
    \qquad
    \forall h,\ x.
\end{align}

Since $T_L$ is $1$-Lipschitz, estimates \eqref{Tagged 4} and \eqref{Tagged 5} imply
\begin{align}
\label{Tagged 7}
    |\widehat p(\mu_1,x)_i-\widehat p(\mu_2,x)_i|
    \le
    L_0{\color{black}W_1}(\mu_1,\mu_2)
\end{align}
and
\begin{align}
\label{Tagged 8}
    |\widehat p(\mu,x_1)_i-\widehat p(\mu,x_2)_i|
    \le
    L_1|x_1-x_2|
\end{align}
for every coordinate $i$.

Summing the coordinate estimates in the Euclidean norm gives
\begin{align}
\label{Tagged 9}
    |\widehat p(\mu_1,x)-\widehat p(\mu_2,x)|_{\mathbb R^n}
    \le
    \sqrt n\,L_0{\color{black}W_1}(\mu_1,\mu_2)
\end{align}
and
\begin{align}
\label{Tagged 10}
    |\widehat p(\mu,x_1)-\widehat p(\mu,x_2)|_{\mathbb R^n}
    \le
    \sqrt n\,L_1|x_1-x_2|.
\end{align}
Using the triangle inequality,
\[
\begin{aligned}
    |\widehat p(\mu_1,x_1)-\widehat p(\mu_2,x_2)|_{\mathbb R^n}
    &\le
    |\widehat p(\mu_1,x_1)-\widehat p(\mu_2,x_1)|_{\mathbb R^n}
    +
    |\widehat p(\mu_2,x_1)-\widehat p(\mu_2,x_2)|_{\mathbb R^n} \\
    &\le
    \sqrt n\,L_0{\color{black}W_1}(\mu_1,\mu_2)
    +
    \sqrt n\,L_1|x_1-x_2|.
\end{aligned}
\]
This proves \eqref{Tagged 1}.

Now define
\[
    \widehat {\Gamma}(\mu,(x,y))
    \coloneqq
    (x,\widehat p(\mu,x)).
\]
Because $\widehat p(\mu,x)\in[-L,L]^n$, we have
$\widehat {\Gamma}(\mu,z)\in X$ for all $(\mu,z)\in\mathcal P(X)\times X$.

Let $z=(x,y)$ and $z'=(x',y')$. Then
\[
\begin{aligned}
    |\widehat {\Gamma}(\mu,z)-\widehat {\Gamma}(\nu,z')|_{\mathbb R^{d+n}}
    &=
    \bigl(|x-x'|_{\mathbb R^d}^2
    +
    |\widehat p(\mu,x)-\widehat p(\nu,x')|_{\mathbb R^n}^2
    \bigr)^{1/2}                                      \\
    &\le
    |x-x'|_{\mathbb R^d}
    +
    |\widehat p(\mu,x)-\widehat p(\nu,x')|_{\mathbb R^n} \\
    &\le
    \sqrt n\,L_0{\color{black}W_1}(\mu,\nu)
    +
    (1+\sqrt n\,L_1)|x-x'|_{\mathbb R^d}                 \\
    &\le
    \sqrt n\,L_0{\color{black}W_1}(\mu,\nu)
    +
    (1+\sqrt n\,L_1)|z-z'|_{\mathbb R^{d+n}}.
\end{aligned}
\]
Thus $\widehat f$ is Lipschitz on $\mathcal P(X)\times X$ with respect
to the product metric
\[
    d\bigl((\mu,z),(\nu,z')\bigr)
    =
    {\color{black}W_1}(\mu,\nu)+|z-z'|_{\mathbb R^{d+n}}.
\]

Finally, fix $h\in C(\overline\Omega;[-L,L]^n)$. By \eqref{Tagged 6},
\[
    \widehat {\Gamma}(\gamma_h,(x,y))
    =
    (x,J(h)(x))
\]
for all $(x,y)\in X$. Equivalently,
\[
    \widehat {\Gamma}(\gamma_h,\cdot)
    =
    (\operatorname{id}_{\overline\Omega},J(h))\circ \pi_1,
\]
where
\[
    \pi_1:X\to\overline\Omega,
    \qquad
    \pi_1(x,y)=x.
\]
Since
\[
    (\pi_1)_{\#}\gamma_h=\overline \lambda_{\Omega},
\]
we get
\[
\begin{aligned}
    \widehat {\Gamma}(\gamma_h,\cdot)_{\#}\gamma_h
    &=
    \bigl((\operatorname{id}_{\overline\Omega},J(h))\circ\pi_1\bigr)_{\#}
    \gamma_h                                      \\
    &=
    (\operatorname{id}_{\overline\Omega},J(h))_{\#}
    (\pi_1)_{\#}\gamma_h                           \\
    &=
    (\operatorname{id}_{\overline\Omega},J(h))_{\#}
    \overline \lambda_{\Omega}                    \\
    &=
    \gamma_{J(h)}.
\end{aligned}
\]
By the definition of $J$ from \textup{(P)},
\[
    {\Gamma}(\gamma_h,\cdot)_{\#}\gamma_h=\gamma_{J(h)}.
\]
Therefore
\[
    {\Gamma}(\gamma_h,\cdot)_{\#}\gamma_h
    =
    \widehat {\Gamma}(\gamma_h,\cdot)_{\#}\gamma_h
    =
    \gamma_{J(h)}.
\]
The proof is complete.
\qed\end{proof}

The above proves Lemma~\ref{lem: function graph transformers architecture}.

We include also the following result that essentially gives the converse to Lemma~\ref{lem: function graph transformers architecture}.

\begin{lemma}\label{lem: function graph transformers architecture part 2}
 If $\widehat p:{{{\mathcal P}}}(\overline\Omega\times [-L,L]^n)\times ( {\overline\Omega}\times [-L,L]^n)\to [-L,L]^n$
is a continuous function and $\widehat {\Gamma}(\mu,(x,y))$ is defined by formulas 
\eqref{widehat f map} and \eqref{widehat f map2}, then    $\widehat {\Gamma}:{{{\mathcal P}}}(\overline\Omega\times [-L,L]^n) \times ({\overline\Omega}\times [-L,L]^n) \to {\overline\Omega} \times [-L,L]^n$ is a continuous map and for  all continuous 
functions $h:\overline\Omega\to [-L,L]^n$ the graph measure $\gamma_h\in{{{\mathcal P}}}(\overline\Omega\times [-L,L]^n)$
satisfies
\beq
\widehat {\Gamma}(\gamma_h,\cdot)_\#\gamma_h=\gamma_g,\quad \hbox{where }g=\widehat p(\gamma_h,\cdot)\in C(\overline\Omega;[-L,L]^n)
\eeq
\end{lemma}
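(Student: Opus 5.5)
Two things have to be checked: continuity of $\widehat\Gamma$, and the pushforward identity on graph measures.

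For continuity, note that $\widehat\Gamma(\mu,(x,y))=(x,\widehat p(\mu,x))$. Its first coordinate is the projection $(\mu,(x,y))\mapsto x$, which is continuous for the product topology on $\mathcal P(\overline\Omega\times[-L,L]^n)\times(\overline\Omega\times[-L,L]^n)$ with $W_1$ on the first factor. The second coordinate is $\widehat p$ composed with the continuous map $(\mu,(x,y))\mapsto(\mu,x)$. If $\widehat p$ is regarded as a function of $(\mu,(x,y))$, the statement's hypothesis gives continuity directly; if it depends only on $(\mu,x)$, one composes with that map. A map into a product is continuous when each coordinate is, so $\widehat\Gamma$ is continuous. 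Its range lies in $\overline\Omega\times[-L,L]^n$ because $\widehat p$ takes values in $[-L,L]^n$.

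For the pushforward, fix a continuous $h$ and set $g:=\widehat p(\gamma_h,\cdot)$. Since $\gamma_h$ is fixed, $g$ is just the restriction of the jointly continuous map $\widehat p$ to the slice $\{\gamma_h\}\times\overline\Omega$. Hence $g$ is continuous with values in $[-L,L]^n$. (Since $\widehat\Gamma^{(y)}$ is independent of $y$ by \eqref{widehat f map2}, $g$ is well defined as a function of $x$ alone.)

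Then
\[
\widehat\Gamma(\gamma_h,\cdot)=(\operatorname{id},g)\circ\pi_1
\]
on $\overline\Omega\times[-L,L]^n$. By functoriality of pushforwards, and because $(\pi_1)_\#\gamma_h=(\pi_1\circ(\operatorname{id},h))_\#\overline\lambda_\Omega=\overline\lambda_\Omega$, we get
\[
\widehat\Gamma(\gamma_h,\cdot)_\#\gamma_h=(\operatorname{id},g)_\#(\pi_1)_\#\gamma_h=(\operatorname{id},g)_\#\overline\lambda_\Omega=\gamma_g.
\]
This is the same computation as at the end of the proof of Lemma~\ref{lem: function graph transformers architecture}. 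To verify it concretely, test against $\phi\in C(\overline\Omega\times[-L,L]^n)$ using \eqref{integration formula}: both sides give $\frac1{\mathrm{vol}(\Omega)}\int_\Omega\phi(x,g(x))\,dx$.

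No step is a genuine obstacle. The only care needed is the bookkeeping about whether $\widehat p$ depends on $y$, and using joint continuity (rather than separate continuity) to get $g\in C(\overline\Omega)$.
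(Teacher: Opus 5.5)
Your proof is correct and is essentially the paper's argument: write $\widehat\Gamma(\gamma_h,\cdot)=(\operatorname{id},g)\circ\pi_1$, use $(\pi_1)_\#\gamma_h=\overline\lambda_\Omega$, and compose the pushforwards. You additionally verify the continuity of $\widehat\Gamma$ and of $g$, which the paper leaves implicit. For $g$, continuity of $\widehat p(\gamma_h,\cdot)$ in $x$ alone already suffices, so joint continuity is more than you need there.
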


\begin{proof} Let us
define
\beq
\widehat {\Gamma}(\gamma_h,(x,y))=(x,\widehat p(\gamma_h,x))
\eeq 
Then, as $\widehat {\Gamma}(\gamma_h,(x,y))$ is independent of the variable $y$ we have 
$\widehat p(\gamma_h,\cdot)(x,y)=(\widehat p(\gamma_h,\cdot)\circ \pi_1)(x,y)$ and hence
\beq
\widehat {\Gamma}(\gamma_h,\cdot)_\#\gamma_h&=&(Id\times \widehat p(\gamma_h,\cdot))_\#\gamma_h\\
&=&((Id\times \widehat p(\gamma_h,\cdot)\circ \pi_1)_\#\gamma_h\\
&=&((Id\times \widehat p(\gamma_h,\cdot))_\# \bigg(( \pi_1)_\#\gamma_h\bigg)\\
&=&(Id\times \widehat p(\gamma_h,\cdot))_\# \overline \lambda_{\Omega}\\
&=&\gamma_{\widehat p(\gamma_h,\cdot)}.
\eeq
This proves the claim.
\qed
\end{proof}

\commented{
\section{Alternative definition of function graph transformers}
\label{app:Alternative definition of function graph transformers}

Lemma~\ref{lem: function graph transformers architecture} has the following generalization:

\begin{lemma}\label{lem: function graph transformers architecture GENERALIZATION}
If $\widehat p:{{{\mathcal P}}}(\overline\Omega\times [-L,L]^n)\times  {\overline\Omega}\to [-2L,2L]^n$
is a continuous function and $\widehat {\Gamma}(\mu,(x,y))$ is defined by formulas 
\beq\label{widehat f map2 modified}
\widehat {\Gamma}^{(x)}(\mu,(x,y))=x\quad   \hbox{and}\quad  
\widehat {\Gamma}^{(y)}(\mu,(x,y))=y+\widehat p(\mu,x),
\eeq
then    $\widehat {\Gamma}:{{{\mathcal P}}}(\overline\Omega\times [-L,L]^n)\times ({\overline\Omega}\times[-L,L]^n)\to {\overline\Omega}\times [-2L,2L]^n$ is a continuous map and for  all continuous 
functions $h:\overline\Omega\to [-L,L]^n$ the graph measure $\gamma_h\in{{{\mathcal P}}}(\overline\Omega\times [-L,L]^n)$
satisfies
\beq
\widehat {\Gamma}(\gamma_h,\cdot)_\#\gamma_h=\gamma_g,\quad \hbox{where }g=h+\widehat p(\gamma_h,\cdot)\in C(\overline\Omega;[-2L,2L]^n)
\eeq
\end{lemma}

\begin{proof}
The claim follows immediately from Lemma  \ref{lem: function graph transformers architecture} (i) by changing notations.
\qed\end{proof}

We note that in claim Lemma \ref{lem: function graph transformers architecture GENERALIZATION}, the map $\widehat f$ is the form
\begin{equation}
    \widehat {\Gamma}(\mu,(x,y))=Id(x,y)+(0,\widehat p(\mu,x)),
\end{equation}
where $Id:\R^d\times \R^n\to \R^d\times \R^n$ is the identity map $Id(x,y)=(x,y)$ and $\R^d\times \R^n$ is considered as a $(d+n)$-dimensonal vector space.
}
\subsection*{Justification of the original Definition~\ref{def self-attention}}

Below, we need the following well-known result. This result makes it possible to define  function graph transformer for graph measures
$\gamma_h$ and the approximate the measure $\gamma_h$ by a finite sum of delta-distributions.

\begin{lemma}\label{lem: triangular Wasserstein1}
Let $(X,d)$ be a metric space and let $p\ge1$.  
Assume 
$F_j : X \to X$ converge to $F$ in the $L^\infty$ topology and
that $Lip(F_j)$ are uniformly bounded, that is,
\beq\label{Lip limit}
\|F_j - F\|_{\infty} \to 0 
\qquad\text{and there is $C_0>0$ such that }\qquad 
\operatorname{Lip}(F_j) \le C_0.
\eeq
Assume also that $\mu_j \to \mu$ in $W_p$.  
Then the pushforward measures satisfy
\[
W_p\!\left( (F_j)_{\#}\mu_j ,\, F_{\#}\mu \right) \longrightarrow 0 .
\]
\end{lemma}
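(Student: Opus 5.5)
The plan is to use the triangle inequality for $W_p$, splitting the error as
\[
W_p\big((F_j)_\#\mu_j,F_\#\mu\big)\le W_p\big((F_j)_\#\mu_j,(F_j)_\#\mu\big)+W_p\big((F_j)_\#\mu,F_\#\mu\big),
\]
and to control each term with an explicit coupling. The first term measures how the pushforward by a fixed Lipschitz map reacts to moving the measure. The second measures how the pushforward of a fixed measure reacts to moving the map.

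For the first term, let $\pi_j$ be an optimal coupling of $\mu_j$ and $\mu$ for $W_p$. Since $X$ is a general metric space, I take the existence of an optimal coupling as given for the spaces used here. Otherwise, a near-optimal coupling with an additional error $1/j$ works equally well. The measure $(F_j\times F_j)_\#\pi_j$ is a coupling of $(F_j)_\#\mu_j$ and $(F_j)_\#\mu$. Therefore
\[
W_p^p\big((F_j)_\#\mu_j,(F_j)_\#\mu\big)\le\int d(F_j(a),F_j(b))^p\,d\pi_j(a,b)\le C_0^p\,W_p^p(\mu_j,\mu),
\]
by the uniform bound $\operatorname{Lip}(F_j)\le C_0$ in \eqref{Lip limit}. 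This term tends to $0$ because $\mu_j\to\mu$ in $W_p$.

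For the second term, I use the coupling $(F_j,F)_\#\mu$ of $(F_j)_\#\mu$ and $F_\#\mu$. This gives
\[
W_p^p\big((F_j)_\#\mu,F_\#\mu\big)\le\int d(F_j(a),F(a))^p\,d\mu(a)\le\|F_j-F\|_\infty^p,
\]
because $\mu$ is a probability measure. This term tends to $0$ by \eqref{Lip limit}.

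Combining the two bounds yields $W_p\big((F_j)_\#\mu_j,F_\#\mu\big)\le C_0W_p(\mu_j,\mu)+\|F_j-F\|_\infty\to0$.

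One technical point needs checking: the pushforwards must have finite $p$-th moments, so that $W_p$ is finite and the triangle inequality applies. Since $F_j$ is Lipschitz, $d(F_j(a),x_0)\le C_0 d(a,a_0)+d(F_j(a_0),x_0)$, so finite moments pass from $\mu$ and $\mu_j$ to their pushforwards. The same holds for $F$, which is itself $C_0$-Lipschitz as a uniform limit of such maps. In the compact setting $X=\overline\Omega\times[-L,L]^n$ used in this paper the point is trivial, and I expect no real obstacle beyond this check.
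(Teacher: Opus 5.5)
Your proposal is correct and matches the paper's proof: the same triangle-inequality split, with the first term bounded by $\operatorname{Lip}(F_j)\,W_p(\mu_j,\mu)$ and the second by $\|F_j-F\|_\infty$. Your explicit couplings and the finite-moment check spell out details the paper leaves implicit.
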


\begin{proof}
The condition \eqref{Lip limit} implies the Lipschitz constants $\operatorname{Lip}(F_j)$ are uniformly bounded.  
Using the triangle inequality for $W_p$,
\[
W_p\!\left( (F_j)_{\#}\mu_j ,\, F_{\#}\mu \right)
\le 
W_p\!\left( (F_j)_{\#}\mu_j ,\, (F_j)_{\#}\mu \right)
+
W_p\!\left( (F_j)_{\#}\mu ,\, F_{\#}\mu \right).
\]

For the first term, since $F_j$ is $\operatorname{Lip}(F_j)$-Lipschitz,
\beq\label{Wp term}
W_p\!\left( (F_j)_{\#}\mu_j ,\, (F_j)_{\#}\mu \right)
\le
\operatorname{Lip}(F_j)\, W_p(\mu_j,\mu).
\eeq
Because $\operatorname{Lip}(F_j)$ is uniformly bounded and $W_p(\mu_j,\mu)\to 0$, the term 
\eqref{Wp term} tends to $0$.

For the second term, since $F_j \to F$ uniformly,
\[
W_p\!\left( (F_j)_{\#}\mu ,\, F_{\#}\mu \right)
\le 
\left( \int_X d(F_j(x),F(x))^p \, d\mu(x) \right)^{1/p}
\le 
\|F_j - F\|_{\infty}.
\]
Thus this term also converges to $0$.

Combining the two estimates yields
\[
W_p\!\left( (F_j)_{\#}\mu_j ,\, F_{\#}\mu \right) \to 0,
\]
as desired. \hfill $\square$
\end{proof}
\bigskip

\begin{corollary}\label{cor: 1self}
    If the map 
${\Gamma}:{{{\mathcal P}}}(\overline\Omega\times [-L,L]^n)\times ({\overline\Omega}\times [-L,L]^n)\to {\overline\Omega}\times [-L,L]^n$, ${\Gamma}:(\mu,(x,y))\to {\Gamma}(\mu,(x,y))$ is uniformly continuous in $\mu$ and  uniformly Lipschitz
bounded in $(x,y)$, that is, 
\begin{itemize}
    \item [(i)]
if $\mu_j\to \mu$ in ${{{\mathcal P}}}(\overline\Omega\times [-L,L]^n)$ then
\beq
\lim_{j\to\infty}\sup_{(x,y)\in {\overline\Omega}\times [-L,L]^n} |{\Gamma}(\mu_j,(x,y))-{\Gamma}(\mu,(x,y))|= 0
\eeq

\item [(ii)] There in $L>0$ such that  for all $(x_1,y_1),(x_2,y_2)\in \overline\Omega\times [-L,L]^n$
\beq
\sup_{\mu\in {{{\mathcal P}}}(\overline\Omega\times [-L,L]^n} |{\Gamma}(\mu,(x_1,y_1))-{\Gamma}(\mu,(x_2,y_2))|\le
L(|x_1-x_2|+|y_1-y_2|),
\eeq
\end{itemize}
then the map
$G:{{{\mathcal P}}}(\overline\Omega\times [-L,L]^n)\to {{{\mathcal P}}}(\overline\Omega\times [-L,L]^n)$
given by 
\beq\label{G eta def pre}
G(\mu)={\Gamma}(\mu,\cdot)_\#\mu,
\eeq
is continuous.

\end{corollary}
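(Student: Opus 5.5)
The plan is to reduce the corollary to Lemma~\ref{lem: triangular Wasserstein1}, applied with $p=1$, on the compact metric space $X=\overline\Omega\times[-L,L]^n$. Continuity of $G$ in the $W_1$ metric amounts to sequential continuity, because $\mathcal P(X)$ with $W_1$ is a metric space. So fix a sequence $\mu_j\to\mu$ in $W_1$, and set
\[
F_j:=\Gamma(\mu_j,\cdot):X\to X,\qquad F:=\Gamma(\mu,\cdot):X\to X.
\]
By definition, $G(\mu_j)=(F_j)_\#\mu_j$ and $G(\mu)=F_\#\mu$. The goal is therefore $W_1\bigl((F_j)_\#\mu_j,F_\#\mu\bigr)\to0$.

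Next I check the two hypotheses in \eqref{Lip limit}. Hypothesis (i) of the corollary says exactly that $\sup_{z\in X}|F_j(z)-F(z)|\to0$, so $\|F_j-F\|_\infty\to0$. Hypothesis (ii) gives, for every $j$,
\[
|F_j(z_1)-F_j(z_2)|\le L\bigl(|x_1-x_2|+|y_1-y_2|\bigr)\le \sqrt2\,L\,|z_1-z_2|,
\]
where the second inequality is Cauchy--Schwarz in $\R^{d+n}$. Hence $\operatorname{Lip}(F_j)\le C_0:=\sqrt2 L$ uniformly in $j$. Both conditions in \eqref{Lip limit} hold. The constant $L$ here is the Lipschitz constant from (ii), not the box size, and I will rename it to avoid confusion.

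Lemma~\ref{lem: triangular Wasserstein1} then gives $W_1(G(\mu_j),G(\mu))\to0$, which proves the corollary. One could also unpack the lemma directly. The triangle inequality, together with Kantorovich--Rubinstein duality, bounds the first term by $\operatorname{Lip}(F_j)W_1(\mu_j,\mu)$ and the second term by $\|F_j-F\|_\infty$.

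The one point that needs care is that $F_j$ and $F$ must map $X$ into $X$, so that the pushforwards lie in $\mathcal P(X)$. This holds because $\Gamma$ takes values in $\overline\Omega\times[-L,L]^n$. The $W_1$ bounds themselves only use the ambient Euclidean metric, so nothing else obstructs the argument; the corollary is essentially immediate from the lemma.
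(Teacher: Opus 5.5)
Your proposal is correct and matches the paper's proof: you set $F_j=\Gamma(\mu_j,\cdot)$ and $F=\Gamma(\mu,\cdot)$, get the two hypotheses of Lemma~\ref{lem: triangular Wasserstein1} (with $p=1$) from (i) and (ii), and conclude, exactly as the paper does. Your extra factor $\sqrt2$, which converts $|x_1-x_2|+|y_1-y_2|$ into the Euclidean distance, is a small piece of care the paper skips, since it writes $\operatorname{Lip}(F_j)\le L$ directly; it does not affect the argument.
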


\begin{proof}
Let $\mu_j \to \mu$ in ${{{\mathcal P}}}(\overline\Omega\times [-L,L]^n)$.
Define $F_j := {\Gamma}(\mu_j,\cdot)$ and $F := {\Gamma}(\mu,\cdot)$, so that
$G(\mu_j) = (F_j)_\#\mu_j$ and $G(\mu) = F_\#\mu$.

We verify the hypotheses of Lemma~\ref{lem: triangular Wasserstein1}
(with $X = \overline\Omega\times [-L,L]^n$ and $p=1$):

\begin{itemize}
\item [1.] By assumption~(i),
\[
\|F_j - F\|_\infty
= \sup_{(x,y)\in \overline\Omega\times [-L,L]^n}
|{\Gamma}(\mu_j,(x,y)) - {\Gamma}(\mu,(x,y))| \to 0.
\]

\item [2.]By assumption~(ii), for every~$j$ and every
$(x_1,y_1),(x_2,y_2)\in \overline\Omega\times [-L,L]^n$,
\[
|F_j(x_1,y_1) - F_j(x_2,y_2)|
= |{\Gamma}(\mu_j,(x_1,y_1)) - {\Gamma}(\mu_j,(x_2,y_2))|
\le L(|x_1 - x_2| + |y_1 - y_2|),
\]
so $\operatorname{Lip}(F_j) \le L$ uniformly in~$j$.

\item [3.]By hypothesis, $\mu_j \to \mu$ in ${{\color{black}W_1}}$.
\end{itemize}

\noindent
Therefore, by Lemma~\ref{lem: triangular Wasserstein1} 
\[
{{\color{black}W_1}}\!\left(G(\mu_j),\, G(\mu)\right)
= {{\color{black}W_1}}\!\left((F_j)_\#\mu_j,\, F_\#\mu\right) \to 0,
\]
which proves that $G:{{{\mathcal P}}}(\overline\Omega\times [-L,L]^n)
\to {{{\mathcal P}}}(\overline\Omega\times [-L,L]^n)$ is continuous.
\hfill $\square$
\end{proof}

\subsection*{An auxiliary lemma on Wasserstein convergence}\label{aux lemma Wasserstein}

We need the following, quite well-known result. We recall its proof as it is essential below.

\begin{lemma}[Uniform map approximation implies uniform $W_1$ approximation]
\label{lem:uniform-map-to-W1}
Let $(X,d_X)$ be a compact metric space and let $(Y,d_Y)$ be a metric
space. Let
\[
    \Gamma_m,\Gamma:\mathcal P(X)\times X\to Y
\]
be Borel maps such that, for every $\mu\in\mathcal P(X)$, the pushforward
measures
\[
    G_m(\mu):=(\Gamma_m(\mu,\cdot))_\#\mu,
    \qquad
    G(\mu):=(\Gamma(\mu,\cdot))_\#\mu
\]
belong to $\mathcal P_1(Y)$. Suppose that
\[
    \Delta_m
    :=
    \sup_{\mu\in\mathcal P(X)}
    \sup_{z\in X}
    d_Y\bigl(\Gamma_m(\mu,z),\Gamma(\mu,z)\bigr)
    <\infty .
\]
Then
\[
    \sup_{\mu\in\mathcal P(X)}
    W_1\bigl(G_m(\mu),G(\mu)\bigr)
    \le \Delta_m .
\]
\end{lemma}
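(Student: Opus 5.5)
The plan is to use the standard coupling bound for $W_1$ and then take suprema. Fix $\mu\in\mathcal P(X)$ and set $F_m:=\Gamma_m(\mu,\cdot)$ and $F:=\Gamma(\mu,\cdot)$; both are Borel maps $X\to Y$. The natural coupling between $G_m(\mu)=(F_m)_\#\mu$ and $G(\mu)=F_\#\mu$ is
\[
\pi_\mu:=(F_m,F)_\#\mu\in\mathcal P(Y\times Y).
\]
Here $(F_m,F):X\to Y\times Y$ is Borel because each component is Borel. Its marginals are $(F_m)_\#\mu$ and $F_\#\mu$, because the coordinate projections composed with $(F_m,F)$ give back $F_m$ and $F$. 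Both marginals lie in $\mathcal P_1(Y)$ by hypothesis, so $W_1$ between them is finite and defined as an infimum over couplings.

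Since $\pi_\mu$ is admissible, the change of variables formula for pushforwards gives
\[
W_1(G_m(\mu),G(\mu))\le\int_{Y\times Y}d_Y(u,v)\,d\pi_\mu(u,v)=\int_X d_Y\bigl(\Gamma_m(\mu,z),\Gamma(\mu,z)\bigr)\,d\mu(z).
\]
The integrand is bounded pointwise by $\Delta_m$, and $\mu$ is a probability measure, so the right-hand side is at most $\Delta_m$. This bound does not depend on $\mu$, so taking the supremum over $\mu\in\mathcal P(X)$ proves the claim.

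An equivalent route is Kantorovich--Rubinstein duality: for each $1$-Lipschitz $\varphi$, one writes $\int\varphi\circ F_m-\varphi\circ F\,d\mu\le\int d_Y(F_m,F)\,d\mu$. That version requires either $\varphi$ to be integrable, which holds since $\mathcal P_1$ is assumed, or $Y$ to be separable for duality to apply. The coupling argument above avoids duality altogether.

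The only point needing care is measurability. The function $z\mapsto d_Y(F_m(z),F(z))$ must be measurable, and the pushforward must be a Borel measure on $Y\times Y$. If $Y$ is non-separable, the product Borel $\sigma$-algebra can be smaller than the Borel $\sigma$-algebra of $Y\times Y$. In that case I would either assume $Y$ is separable, which holds in all applications here since $Y\subset\R^{D}$, or use the outer integral. Compactness of $X$ is not needed beyond guaranteeing that the setting is Polish.
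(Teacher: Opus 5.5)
Your proof is correct, but it takes the primal route where the paper takes the dual one. The paper invokes Kantorovich--Rubinstein duality. For each $1$-Lipschitz $\varphi$ it bounds $\bigl|\int_X[\varphi(\Gamma_m(\mu,z))-\varphi(\Gamma(\mu,z))]\,d\mu(z)\bigr|$ by $\int_X d_Y(\Gamma_m(\mu,z),\Gamma(\mu,z))\,d\mu(z)\le\Delta_m$. You reach the same integral by inserting the synchronous coupling $(F_m,F)_\#\mu$ directly into the definition of $W_1$.

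Your version needs no duality theorem. That theorem is usually stated for Polish $Y$, which the lemma does not assume. Your argument also works unchanged for every $W_p$, $p\ge1$, giving $W_p(G_m(\mu),G(\mu))\le\bigl(\int_X d_Y(F_m,F)^p\,d\mu\bigr)^{1/p}\le\Delta_m$. This is precisely the estimate the paper uses for the second term in the proof of Lemma~\ref{lem: triangular Wasserstein1}. The cost is joint measurability of $(F_m,F)$ into $Y\times Y$. You correctly flag this, and it is automatic for separable $Y$ such as $\R^D$.

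One small correction to your side remark: the dual route needs both the integrability of $\varphi$ (automatic from $\mathcal P_1$) and a setting in which duality holds. Either one alone is not enough.
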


\begin{proof}
Fix $\mu\in\mathcal P(X)$. By the Kantorovich--Rubinstein duality,
\[
    W_1\bigl(G_m(\mu),G(\mu)\bigr)
    =
    \sup_{\operatorname{Lip}(\varphi)\le 1}
    \left|
        \int_Y \varphi(y)\,dG_m(\mu)(y)
        -
        \int_Y \varphi(y)\,dG(\mu)(y)
    \right|,
\]
where the supremum is taken over all real-valued $1$-Lipschitz functions
$\varphi:Y\to\mathbb R$.

Using the definition of pushforward measure, for every such $\varphi$,
\[
\begin{aligned}
    &\left|
        \int_Y \varphi(y)\,dG_m(\mu)(y)
        -
        \int_Y \varphi(y)\,dG(\mu)(y)
    \right|
    \\
    &\qquad =
    \left|
        \int_X
        \Bigl[
            \varphi\bigl(\Gamma_m(\mu,z)\bigr)
            -
            \varphi\bigl(\Gamma(\mu,z)\bigr)
        \Bigr]
        \,d\mu(z)
    \right|
    \\
    &\qquad \le
    \int_X
    \left|
        \varphi\bigl(\Gamma_m(\mu,z)\bigr)
        -
        \varphi\bigl(\Gamma(\mu,z)\bigr)
    \right|
    \,d\mu(z)
    \\
    &\qquad \le
    \int_X
    d_Y\bigl(\Gamma_m(\mu,z),\Gamma(\mu,z)\bigr)
    \,d\mu(z)
    \\
    &\qquad \le
    \Delta_m .
\end{aligned}
\]
Taking the supremum over all $1$-Lipschitz $\varphi$ gives
\[
    W_1\bigl(G_m(\mu),G(\mu)\bigr)
    \le \Delta_m .
\]
Since the bound is independent of $\mu$, taking the supremum over
$\mu\in\mathcal P(X)$ yields
\[
    \sup_{\mu\in\mathcal P(X)}
    W_1\bigl(G_m(\mu),G(\mu)\bigr)
    \le \Delta_m .
\]
This proves the claim.\qed
\end{proof}

\section{Generalized patching and tokenization}
\label{section: Patching}



Let $D = \bigcup_{\rho=1}^P \overline D_\rho$ where $D_\rho\subset \R^d$,  $D_\rho=D^0+q_\rho$ are open sets, such that $D_{\rho_1} \cap D_{\rho_2} = \emptyset$ for $\rho_1 \not = \rho_2$.

\subsection{Patching for continuous functions}

For $f:\ D \to \R^n$ be continuous, $f(x) = (f_i(x))_{i=1}^n$, let $\hat{\mathbf{E}}(f) :\ D \to \R^C$ be
\begin{equation}
  \hat{\mathbf{E}}(f)(x)=( \hat{\mathbf{E}}_c(f)(x))_{c=1}^C
   = \sum_{\rho=1}^P \sum_{i=1}^n \mathbf{F}_c^i \left(
   \int_{D_{\rho}} W_{\color{black}c}(x) f_i(x) dx \right) \mathbb{I}_{D_{\rho}}(x) ,
\end{equation}
where $\mathbf{F} = (\mathbf{F}_{c}^i)_{c\in [C],i\in [n]} \in \mathbb{R}^{C \times n}$ and
\[
   W_{\color{black}c}(x) = \sum_{j=1}^J W_{\color{black}c,j} \delta_{x_j+q_\rho}(x) 
\]
{

\subsection{Patching for generalized functions}

Let us next consider the non-smooth function $f\in H^{-s}(D) $ and replace the delta-distributions by smooth test functions. For $f \in H^{-s}(D)$, $f(x)=(f_i(x))_{i=1}^n$, let $\tau > 0$ and define $\hat{\mathbf{E}}(f) :\ D \to \R^C$ to be
\begin{equation}
  \hat{\mathbf{E}}_\tau (f)(x) = ( \hat{\mathbf{E}}_c(f)(x))_{c=1}^C
  = \sum_{\rho=1}^P \sum_{i=1}^n \mathbf{F}_c^i \left(
  \int_{\R^d} W_{\color{black}c}_\tau(x) f_i(x) dx \right) \mathbb{I}_{D_{\rho}}(x) ,
\end{equation}
where 
\[
   W_{\color{black}c}_\tau (x) = \sum_{j=1}^J W_{\color{black}c,j} \rho_\tau(x-({x_j+q_\rho}))
\]
and $\rho_\tau \in C^\infty_0(\R^d)$. Then 
\beq
  \hat{\mathbf{E}}(f)(x)
  &=& (\hat{\mathbf{E}}^{(i)}_c(f)(x))_{c=1,...C, i=1,...,n}
  \\
  &=& \sum_{\rho=1}^P \sum_{i=1}^n \left(\int_{D_{\rho}} \sum_{j=1}^J
  \mathbf{F}_c^iW_{\color{black}c,j}  \rho_\tau(x-({x_j+q_\rho})) f_i(x)dx\right)_{i=1}^n
  \mathbb{I}_{D_{\rho}}(x)
  \\
  &=& \sum_{\rho=1}^P \sum_{i=1}^n \left(\int_{D_{\rho}} \sum_{j=1}^J
  \mathbf{H}^i_{\color{black}c,j} \rho_\tau(x-({x_j+q_\rho}))  f_i(x)dx\right)_{i=1}^n
  \mathbb{I}_{D_{\rho}}(x) ,
\eeq
where $\mathbf{H}^{(i)}_{\color{black}c,j} = \mathbf{F}_c^iW_{\color{black}c,j}$. 

\medskip


\subsection{From a patched function to a measure}

Let us next consider $f\in C(\overline D).$
Let $\mathbf{H}^{(i)} = [\mathbf{H}^{(i)}_{c,j}]_{c \in [C], j \in [J]} \in \R^{C\times J}$ and assume that for all $i=1,...,n$ the linear map $\mathbf{H}^{(i)} :\ \R^{J} \to \R^{C}$ is one-to-one, that is, the matrix $\mathbf{H}^{(i)}$ has a left inverse matrix $\mathbf{K}^{(i)} = [\mathbf{K}^{(i)}_{j,c}]_{j\in [J],c\in [C]} \in \R^{J\times C}$, that is, for all $i$ we have
\beq
  \sum_{c=1}^C \mathbf{K}^{(i)}_{j,c} \mathbf{H}^{(i)}_{c,j'} = \delta_{j,j'} .
\eeq
Then, for $x \in D_\rho$,
\beq
  \sum_{c=1}^C \mathbf{K}^{(i)}_{j,c} \hat{\mathbf{E}}^{(i)}_c(f)(x)= f^{(i)}(x_j+q_\rho) .
\eeq
Let ${\bf N}_\rho :\ (C(\overline D_\rho;{[-L,L]}^n))^C \to {\mathcal P}(D_\rho\times \R^n)$ be the operator
\beq
    {\bf N}_\rho(g)=\frac 1{J} \sum_{j=1}^J \delta_{(x_j+q_\rho,(\sum_{c=1}^C \mathbf{K}^{(i)}_{j,c} g^{(i)}_c(q_\rho))_{i=1}^n)} 
\eeq
and $\hat {\bf N} :\ C_{pc}(\overline D)^{nJ} \to {\mathcal P}(D\times \R^n)$ be the operator
\beq
  \hat {\bf N}(g)&=&\frac 1{P} \sum_{\rho=1}^P {\bf N}_\rho(g)
  \\
  &=&\frac 1{PJ} \sum_{\rho=1}^P
  \sum_{j=1}^J \delta_{(x_j+q_\rho,(\sum_{c=1}^C \mathbf{K}^{(i)}_{j,c} g^{(i)}_c(q_\rho))_{i=1}^n)} 
\eeq
where $C_{pc}(\overline D)^{nJ}=\bigoplus_{\rho=1}^P (C(D_\rho;{[-L,L]}^n))^J$ is the space of piecewise continuous functions that are continuous in the sets $D_\rho \subset D$. Then,
\beq
  \hat {\bf N}(\hat{\mathbf{E}}_c(f)) = \frac 1{PJ}
  \sum_{\rho=1}^P\sum_{j=1}^J \delta_{(x_j+q_\rho,f(x_j+q_\rho))} .
\eeq

\subsection*{The inverse of patching}

{Next, we consider the map from a  measure approximating a function to a measure approximating a patched  function and the inverse operation.} 
We consider the computational grids 
\beq
  & &G_{large} = \{x_j+q_\rho:\ j=1,2,\dots,J,\ \rho=1,2,\dots,P\} ,
  \\
  & &G_{small} = \{q_\rho:\ \rho=1,2,\dots,P\} .
\eeq
We also consider the operators $\hat J :\ {\mathcal P}(G_{large} \times \R^n) \to {\mathcal P}(G_{small}\times \R^{n})^J$,
\beq
\hat J:\sum_{\rho=1}^P \sum_{j=1}^J a_{j\rho}\delta_{(x_j+q_\rho,y_{j,\rho})}\to  
\bigg(\sum_{\rho=1}^P  a_{j\rho}\delta_{(x_j,y_{j,\rho})}\bigg)_{j=1}^J
\eeq
and its inverse operator
\beq
\hat J^{-1}:
\bigg(\sum_{\rho=1}^P  a_{j\rho}\delta_{(x_j,y_{j,\rho})}\bigg)_{j=1}^J\to \sum_{\rho=1}^P \sum_{j=1}^J a_{j\rho}\delta_{(x_j+q_\rho,y_{j,\rho})}
\eeq
Observe that the map $\hat J$ is given by the pushforward map, $\hat J = T_\#$, where $T :\ D \times \R^{nJ} \to D \times (\R^n)^J$, that is given by
\beq\label{T-map}
  T :\ (x,(y_{j,i})_{i\in [n],j\in [J]}) \to (x+x_j,((y_{j,\rho})_{i\in [n]})_{j\in [J]}) .
\eeq

\commented{Maarten's suggestion: {\color{red} This is typically done as follows: An operator transforms the input function into a piecewise constant function, which is constant within patches (subdivisions of the domain $D$), by taking weighted averages and then transforming these piecewise constant values into a $C$-dimensional latent space. That is, let $D = \cup_{\rho=1}^P D_{\rho}$ (patches) and $f$ be an input function, then
\begin{equation}
   \boldsymbol{v}(x) = \hat{\mathbf{E}}(f)(x)
   = \sum_{\rho=1}^N
   \mathbf{F} \left(\int_{D_{\rho}} W(x) f(x) \, dx\right) \mathbb{I}_{D_{\rho}}(x) ,
\end{equation}
where $\mathbf{F} \in \mathbb{R}^{C \times n}$ and
\[
   (W(x))_i = \sum_j W_{ij} \delta_{x_j}(x) 
\]
in which $i = (i_1,\cdots,i_n)$ and $j$ are multi-indices. The weight $W$ are shared amongst all the patches.} Does this require that $f \in L^1_{\mathrm{loc}}$ (see the salt bodies in the experiment)?}

\section{Auxiliary results on approximation of function graph transformers}
\label{App: genreal approximation of function graph transformers}

To prove universal approximation results in the main text, we consider approximation of measure-theoretic transformers in general.

\begin{proposition}\label{prop: approx by Gn}
Let $\varepsilon > 0$, $N \in \mathbb{N}$, and $h \in C(\overline\Omega;[-L,L]^n)$.
Suppose that the maps
$G_n, {G} : {{{\mathcal P}}}(\overline\Omega \times [-L,L]^n)
\to {{{\mathcal P}}}(\overline\Omega \times [-L,L]^n)$
satisfy
\beq\label{pointwise conv Gn}
\lim_{n \to \infty} {\color{black}W_1}\!\left(G_n(\mu),\, {G}(\mu)\right) = 0
\quad \text{for all } \mu \in {{{\mathcal P}}}(\overline\Omega \times [-L,L]^n).
\eeq
Then for each $h \in C(\overline\Omega;[-L,L]^n)$
\beq\label{goal A prop}
{\bf F}_\varepsilon \circ {G} \circ {\bf M}_N(h)
= \lim_{n \to \infty} {\bf F}_\varepsilon \circ G_n \circ {\bf M}_N(h) \in C(\overline\Omega;\R^n),
\eeq
where the convergence holds in $C(\overline\Omega;\mathbb{R}^n)$
$($and hence also in $H^{-s}(\Omega;\mathbb{R}^n)$ for any $s \ge 0$$)$.
Similar statement is valid if $G_n$ is replaced by $G_\eta$, $\eta>0$ and the limit
$n\to \infty$ is replaced by $\eta\to 0.$
\end{proposition}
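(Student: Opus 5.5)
The claim comes down to one observation: for fixed $\varepsilon>0$, the decoder ${\bf F}_\varepsilon$ is Lipschitz from $({\mathcal P}(\overline\Omega\times[-L,L]^n),W_1)$ into $C(\overline\Omega;\R^n)$. Granting this, we apply the hypothesis \eqref{pointwise conv Gn} at the single fixed measure $\mu:={\bf M}_N(h)$, which is a probability measure on $\overline\Omega\times[-L,L]^n$. This gives $W_1(G_n(\mu),G(\mu))\to 0$, and the Lipschitz bound then transfers this to uniform convergence of ${\bf F}_\varepsilon\circ G_n(\mu)$ to ${\bf F}_\varepsilon\circ G(\mu)$. Only pointwise convergence of $G_n$ is needed, because $N$ and $h$ are fixed.

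The key step is the Lipschitz estimate. Fix $x\in\overline\Omega$ and a component $i\in\{1,\dots,n\}$, and set
\[
\phi_{x,i}(x',y')=\frac{\mathrm{vol}(\Omega)}{\varepsilon^d}\,\rho\!\left(\frac{x'-x}{\varepsilon}\right)y'_i .
\]
By Definition~\ref{def: F eps map}, $({\bf F}_\varepsilon\nu)_i(x)=\int\phi_{x,i}\,d\nu$. On the compact set $\overline\Omega\times[-L,L]^n$ this function is Lipschitz, with
\[
\operatorname{Lip}(\phi_{x,i})\le C_\varepsilon:=\frac{\mathrm{vol}(\Omega)}{\varepsilon^d}\Big(L\,\varepsilon^{-1}\|\nabla\rho\|_\infty+\|\rho\|_\infty\Big),
\]
and $C_\varepsilon$ does not depend on $x$. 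Kantorovich--Rubinstein duality then gives
\[
|({\bf F}_\varepsilon\nu_1)_i(x)-({\bf F}_\varepsilon\nu_2)_i(x)|\le C_\varepsilon W_1(\nu_1,\nu_2).
\]
Taking the supremum over $x$ and summing over the components $i$ yields
\[
\|{\bf F}_\varepsilon\nu_1-{\bf F}_\varepsilon\nu_2\|_{C(\overline\Omega;\R^n)}\le \sqrt n\,C_\varepsilon W_1(\nu_1,\nu_2).
\]
The same formula shows that ${\bf F}_\varepsilon\nu$ is continuous, indeed smooth, in $x$ by differentiation under the integral. This gives membership in $C(\overline\Omega;\R^n)$.

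We conclude with $\nu_1=G_n({\bf M}_N(h))$ and $\nu_2=G({\bf M}_N(h))$, which establishes \eqref{goal A prop} in the sup norm. Convergence in $H^{-s}(\Omega;\R^n)$ for $s\ge0$ follows from the continuous embeddings $C(\overline\Omega)\hookrightarrow L^2(\Omega)\hookrightarrow H^{-s}(\Omega)$, the latter being dual to $H^s_0\hookrightarrow L^2$. The version with $G_\eta$ and $\eta\to0$ is the same argument with the sequence index replaced by the continuous parameter.

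The only point requiring any care is that the Lipschitz constant of the test function be uniform in $x$. This holds because the $y'$-variable is confined to $[-L,L]^n$ and $\rho$ has bounded gradient. If the $G_n$ produced measures supported outside $[-L,L]^n$, we would instead bound the $y'$ factor by the support bound of the image measures. Here they map into ${\mathcal P}(\overline\Omega\times[-L,L]^n)$ by assumption, so the estimate goes through as stated.
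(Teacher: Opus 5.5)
Your proof is correct, and its outer structure matches the paper's. Both reduce the claim to continuity of ${\bf F}_\varepsilon$ from $(\mathcal P(\overline\Omega\times[-L,L]^n),W_1)$ into $C(\overline\Omega;\R^n)$, and then apply the hypothesis at the single measure ${\bf M}_N(h)$. The difference is in how that continuity is proved.

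The paper argues qualitatively. Since $W_1$-convergence implies weak convergence, $({\bf F}_\varepsilon\gamma_m)_i(x)\to({\bf F}_\varepsilon\gamma)_i(x)$ for each fixed $x$, because $\varphi_{x,i}$ is bounded and continuous. It then shows that the functions $x\mapsto({\bf F}_\varepsilon\gamma_m)_i(x)$ are uniformly bounded and equicontinuous: they are Lipschitz in the parameter $x$ with constant $L\,\mathrm{vol}(\Omega)\varepsilon^{-d-1}\|\nabla\rho\|_\infty$ times the total mass. Arzel\`a--Ascoli then upgrades pointwise convergence to uniform convergence.

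You instead note that the test functions $\phi_{x,i}$ are Lipschitz in the integration variables $(x',y')$, uniformly in $x$, and apply Kantorovich--Rubinstein duality directly. This gives the global estimate $\|{\bf F}_\varepsilon\nu_1-{\bf F}_\varepsilon\nu_2\|_{C(\overline\Omega;\R^n)}\le\sqrt n\,C_\varepsilon W_1(\nu_1,\nu_2)$.

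Your route is shorter and avoids the compactness step. It is also quantitative: ${\bf F}_\varepsilon$ is Lipschitz with an explicit constant of order $\varepsilon^{-d-1}$. So any rate for $W_1(G_n({\bf M}_N(h)),G({\bf M}_N(h)))$ transfers directly to the decoded functions.

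The paper's route does not invoke the duality formula. It uses only weak convergence plus equicontinuity in the parameter, which is why it tracks total masses. It would therefore apply unchanged to finite positive measures or to other metrizations of weak convergence. On the present compact space of probability measures that extra generality is not needed.
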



\begin{proof}
The proof proceeds in two steps:
we first show that ${\bf F}_\varepsilon$ is continuous,
and then compose with the pointwise convergence of~$G_n$.

\medskip
\noindent\textbf{Step 1.} We show that for fixed $\varepsilon > 0$,
the map ${\bf F}_\varepsilon : {{{\mathcal P}}}(\overline\Omega \times [-L,L]^n) \to C(\overline\Omega;\R^n)$
is continuous, where ${{{\mathcal P}}}(\overline\Omega \times [-L,L]^n)$ is
equipped with the ${\color{black}W_1}$-topology.

Recall from~\eqref{f mollified 1B} that, that for $\gamma \in {{{\mathcal P}}}(\overline\Omega \times [-L,L]^n)$,
\[
({\bf F}_\varepsilon \gamma)_i(x)
= \frac{\text{Vol}(\Omega)}{\varepsilon^d}
\int_{\overline\Omega \times [-L,L]^n}
\rho\!\left(\frac{x' - x}{\varepsilon}\right) y_i \, d\gamma(x', y),
\quad i = 1, \ldots, n.
\]
For each $x \in \overline\Omega$ and $i \in \{1,\ldots,n\}$,
define $\varphi_{x,i} : \overline\Omega \times [-L,L]^n \to \R$ by
\[
\varphi_{x,i}(x', y) = \frac{{\text{Vol}(\Omega)}}{\varepsilon^d}\,
\rho\!\left(\frac{x' - x}{\varepsilon}\right) y_i.
\]
Since $\rho \in C_0^\infty(\R^d)$ and $|y_i| \le L$,
the function $\varphi_{x,i}$ is continuous and bounded on
$\overline\Omega \times [-L,L]^n$, uniformly in $x$:
\beq\label{phi bound}
\|\varphi_{x,i}\|_\infty
\le \frac{L{\text{Vol}(\Omega)}}{\varepsilon^d} \|\rho\|_\infty
=: C_\varepsilon
\quad \text{for all } x \in \overline\Omega.
\eeq

Now let $\gamma_m \to \gamma$ in ${\color{black}W_1}$ on ${{{\mathcal P}}}(\overline\Omega \times [-L,L]^n)$.
Since $\overline\Omega \times [-L,L]^n$ is compact, convergence in ${\color{black}W_1}$
implies weak convergence of measures together with convergence
of total mass (see, e.g., Villani, \cite{Villani}, Theorem~6.9).
In particular, for each fixed $x$ and~$i$,
\beq\label{pointwise Feps}
({\bf F}_\varepsilon \gamma_m)_i(x)
= \int_{\overline\Omega \times [-L,L]^n} \varphi_{x,i} \, d\gamma_m
\to \int_{\overline\Omega \times [-L,L]^n} \varphi_{x,i} \, d\gamma
= ({\bf F}_\varepsilon \gamma)_i(x).
\eeq

It remains to upgrade pointwise convergence~\eqref{pointwise Feps}
to uniform convergence in $x \in \overline\Omega$.
Since $\rho$ is smooth, the family of functions
$\{x \mapsto \varphi_{x,i}(x',y)\}$ satisfies, for all
$(x',y) \in \overline\Omega \times [-L,L]^n$,
\[
|\varphi_{x_1,i}(x',y) - \varphi_{x_2,i}(x',y)|
\le \frac{L{\text{Vol}(\Omega)}}{\varepsilon^{d+1}} \|\nabla \rho\|_\infty \, |x_1 - x_2|.
\]
Hence, for any positive measure $\nu$ on $\overline\Omega \times [-L,L]^n$,
\beq\label{equicont Feps}
\left|({\bf F}_\varepsilon \nu)_i(x_1) - ({\bf F}_\varepsilon \nu)_i(x_2)\right|
&\le& \int |\varphi_{x_1,i} - \varphi_{x_2,i}| \, d\nu \nonumber\\
&\le& \frac{L{\text{Vol}(\Omega)}}{\varepsilon^{d+1}} \|\nabla \rho\|_\infty \,
\nu(\overline\Omega \times [-L,L]^n) \, |x_1 - x_2|.
\eeq
Since $\gamma_m \to \gamma$ in ${\color{black}W_1}$ on a compact space,
the total masses $\gamma_m(\overline\Omega \times [-L,L]^n)$
converge to $\gamma(\overline\Omega \times [-L,L]^n)$ and are thus
uniformly bounded by some $C_1 > 0$.
By~\eqref{equicont Feps}, the functions
$\{x \mapsto ({\bf F}_\varepsilon \gamma_m)_i(x)\}_{m \ge 1}$
are equicontinuous and uniformly bounded (by $C_\varepsilon C_1$, using~\eqref{phi bound}).
Combined with the pointwise convergence~\eqref{pointwise Feps},
the Arzel\`a--Ascoli theorem yields
\[
\|{\bf F}_\varepsilon \gamma_m - {\bf F}_\varepsilon \gamma\|_{C(\overline\Omega;\R^n)} \to 0.
\]
This proves that ${\bf F}_\varepsilon$ is continuous.

\medskip
\noindent\textbf{Step 2.}
Let $\mu_0 := {\bf M}_N(h) \in {{{\mathcal P}}}(\overline\Omega \times [-L,L]^n)$,
which is fixed.
By the pointwise convergence assumption~\eqref{pointwise conv Gn},
\[
G_n(\mu_0) \to {G}(\mu_0)
\quad \text{in } {\color{black}W_1}.
\]
Applying Step~1,
\[
{\bf F}_\varepsilon(G_n(\mu_0)) \to {\bf F}_\varepsilon({G}(\mu_0))
\quad \text{in } C(\overline\Omega;\R^n),
\]
which is precisely~\eqref{goal A prop}.
\hfill $\square$
\end{proof}

\begin{remark}\label{rem: continuous parameter goes to zero}
The proof of {Proposition} \ref{prop: approx by Gn} when a limit $n\to \infty$ of the integer valued index $n$ of
$G_n$ is changed to the limit $\eta\to 0$ of a  continuous valued parameter $\eta$ (when one consider $G_\eta$ instead of $G_n$) is analogous. 
\end{remark}

\subsection{Proof of \eqref{discrete M2} and \eqref{discrete M3}}
\label{app: proof of M2 and M3}

\subsubsection{Recalling the setup}

Let $\Omega\subset\R^d$ be a {nonempty} bounded open set,
$D\eqdef\diam(\Omega)<\infty$, and ${\overline {\lambda}}_\Omega$ its normalized Lebesgue measure.
{Since $\Omega$ is nonempty and open, $\text{Vol}(\Omega)>0$.}
Define the normalised uniform probability measure
\begin{equation}\label{eq:barlambda}
\bar\lambda_\Omega\eqdef\frac{1}{\text{Vol}(\Omega)}\lambda_\Omega\in\mathcal P(\overline\Omega).
\end{equation}

\begin{definition}[Graph measure]\label{def:graph}
{
For a Borel map $h:\Omega\to\R^n$ we define
\begin{equation}\label{eq:gamma}
\gamma_h\in\mathcal P(\overline\Omega\times\R^n),
\qquad
\int_{\overline\Omega\times[-L,L]^n}\phi\,d\gamma_h
=
\frac{1}{\text{Vol}(\Omega)}
\int_\Omega \phi(x,h(x))\,dx
\quad
\forall \phi\in C_b(\overline\Omega\times \R^n).
\end{equation}
Equivalently, if $h$ is defined on $\overline\Omega$, then
$\gamma_h=(\id,h)_\#\bar\lambda_\Omega$.
If $h\in L^p(\Omega;\R^n)$, then $\gamma_h$ has finite $p$-th moment on
$\overline\Omega\times\R^n$.
The definition is independent of the representative of $h$ up to
$\bar\lambda_\Omega$-a.e. equality.
}
\end{definition}

The points $\{{\bar x}_j\}_{j\ge1}\subset\Omega$ are \emph{regularly distributed with density
$\kappa\equiv1$} if the empirical measure
$\sigma_N\eqdef\tfrac1N\sum_{j=1}^N\delta_{{\bar x}_j}$ satisfies
\begin{equation}\label{eq:regular}
\frac1N\sum_{j=1}^N\varphi({\bar x}_j)
\xrightarrow[N\to\infty]{}\int_{\overline\Omega}\varphi\,d\bar\lambda_\Omega
\qquad\forall\varphi\in C(\overline\Omega),
\end{equation}
i.e.\ $\sigma_N\rightharpoonup\bar\lambda_\Omega$ narrowly. For $h\in C(\overline\Omega;\R^n)$ the
empirical graph measure is
\begin{equation}\label{eq:MN}
\mathbf M_N(h)=(\id\times h)_\#\sigma_N=\frac1N\sum_{j=1}^N\delta_{({\bar x}_j,h({\bar x}_j))}
\in\mathcal P(\overline\Omega\times\R^n).
\end{equation}

{
 Let
$\rho_\tau\in C_0^\infty(\R^d)$ be a standard mollifier with
$\supp\rho_\tau\subset B(0,\tau)$ and $\int_{\R^d}\rho_\tau dx=1$. Define
\begin{equation}\label{eq:Rtau}
\Omega_\tau
\eqdef
\{x\in\Omega:\dist(x,\partial\Omega)>4\tau\},
\qquad
\psi_\tau
\eqdef
\rho_\tau*\mathbf 1_{\Omega_\tau},
\qquad
R_\tau h
\eqdef
\rho_\tau * \mathcal E_0(\psi_\tau h).
\end{equation}
Here $\mathcal E_0$ denotes the canonical extension by zero to $\R^d$ of the compactly
supported function or distribution $\psi_\tau h$ on $\Omega$. For vector-valued $h$ the
definition is componentwise. The empirical regularised graph measure is
\[
\mathbf M_{N,\tau}(h)\eqdef\mathbf M_N(R_\tau h).
\]
The enlarged margin $4\tau$ in the definition of $\Omega_\tau$ is used only to ensure
$\supp R_\tau h\Subset\Omega$.
}

The Wasserstein distance
$W_p$, $p\in[1,\infty)$, on $\mathcal P(\overline\Omega\times\R^n)$ uses the Euclidean cost;
see \cite[Def.~6.1]{Villani2009}.

\subsubsection{Proof of \eqref{discrete M2}}

\begin{proposition}[\eqref{discrete M2}]\label{thm:M2}
Under~\eqref{eq:regular}, for every $h\in C(\overline\Omega;\R^n)$ and every $p\in[1,\infty)$,
\[
\lim_{N\to\infty}W_p\!\left(\mathbf M_N(h),\gamma_h\right)=0.
\]
\end{proposition}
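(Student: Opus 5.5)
The plan is to write both measures as push-forwards under the same continuous map and then transfer narrow convergence. Since $h$ is continuous on $\overline\Omega$, the map $T:=(\id,h):\overline\Omega\to\overline\Omega\times\R^n$ is continuous. By \eqref{eq:MN} and Definition~\ref{def:graph} we have $\mathbf M_N(h)=T_\#\sigma_N$ and $\gamma_h=T_\#\bar\lambda_\Omega$. (Here we use that $\lambda_\Omega(\partial\Omega)=0$ is not even needed, because $\bar\lambda_\Omega$ is regarded as a measure on $\overline\Omega$ charging only $\Omega$.)

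\textbf{Step 1: narrow convergence.} Fix $\phi\in C_b(\overline\Omega\times\R^n)$. Then $\phi\circ T\in C(\overline\Omega)$, and by \eqref{eq:regular}
\[
\int\phi\,d\mathbf M_N(h)=\frac1N\sum_{j=1}^N(\phi\circ T)(\bar x_j)\to\int_{\overline\Omega}\phi\circ T\,d\bar\lambda_\Omega=\int\phi\,d\gamma_h .
\]
Hence $\mathbf M_N(h)\rightharpoonup\gamma_h$ narrowly.

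\textbf{Step 2: upgrade to $W_p$.} All the measures involved are supported in the fixed compact set $K:=T(\overline\Omega)\subset\overline\Omega\times[-\|h\|_\infty,\|h\|_\infty]^n$. On a compact metric space, narrow convergence is equivalent to $W_p$ convergence for every $p\in[1,\infty)$ (Villani, Theorem~6.9 / 7.12), because $p$-th moments are automatically uniformly integrable. Applying this on $K$, or equivalently on $\mathcal P_p(\overline\Omega\times\R^n)$ with uniformly bounded supports, gives $W_p(\mathbf M_N(h),\gamma_h)\to0$.

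The only mild subtlety is the upgrade in Step 2 when the ambient space $\overline\Omega\times\R^n$ is non-compact. I would handle it by observing that the supports lie in $K$ uniformly in $N$. If one wants to avoid the citation, an alternative is to check directly that $\int|z|^p\,d\mathbf M_N(h)\to\int|z|^p\,d\gamma_h$, which follows by Step 1 applied to a bounded continuous truncation of $|z|^p$ that agrees with $|z|^p$ on $K$. The convergence of $p$-th moments together with narrow convergence then gives $W_p$ convergence.
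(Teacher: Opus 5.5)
Your proof is correct and is essentially the paper's argument: narrow convergence follows from \eqref{eq:regular} applied to $x\mapsto\phi(x,h(x))\in C(\overline\Omega)$. It is then upgraded to $W_p$-convergence because all the measures are supported in one fixed compact set (the paper uses $\overline\Omega\times[-\|h\|_\infty,\|h\|_\infty]^n$, you use $T(\overline\Omega)$), on which narrow and $W_p$ convergence coincide by Villani's Theorem~6.9.
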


\begin{proof}
Set $L\eqdef\norm{h}_\infty<\infty$ and $K\eqdef\overline\Omega\times[-L,L]^n$, which is compact.
Both $\mathbf M_N(h)$ and $\gamma_h$ are supported in $K$. For $\phi\in C(K)$, the composition
$\Phi(x)\eqdef\phi(x,h(x))$ is in $C(\overline\Omega)$, so~\eqref{eq:regular} gives
\[
\int_K\phi\,d\mathbf M_N(h)=\tfrac1N\sum_{j=1}^N\Phi({\bar x}_j)
\to\tfrac{1}{\text{Vol}(\Omega)}\int_\Omega\Phi=\int_K\phi\,d\gamma_h,
\]
i.e.\ $\mathbf M_N(h)\rightharpoonup\gamma_h$ narrowly. On a compact Polish space, narrow
convergence is equivalent to $W_p$-convergence for every $p\in[1,\infty)$
(\cite[Thm.~6.9]{Villani2009}), so $W_p(\mathbf M_N(h),\gamma_h)\to0$.
\qed\end{proof}

\subsubsection{Stability of graph measures and proof of \eqref{discrete M2}B}

{
\begin{lemma}[Graph measure stability]\label{lem:stab}
Let $p\in[1,\infty)$ and let $f,g:\Omega\to\R^n$ be Borel maps with
$f,g\in L^p(\Omega;\R^n)$. Then
\begin{equation}\label{eq:stab}
W_p(\gamma_f,\gamma_g)
\le
\norm{f-g}_{L^p(\bar\lambda_\Omega)}
=
\text{Vol}(\Omega)^{-1/p}\norm{f-g}_{L^p(\Omega)}.
\end{equation}
In particular, if $f,g\in C(\overline\Omega;\R^n)$, then
\[
W_p(\gamma_f,\gamma_g)\le\norm{f-g}_{C(\overline\Omega)}.
\]
\end{lemma}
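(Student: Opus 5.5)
The statement to prove is Lemma~\ref{lem:stab}. The plan is to exhibit an explicit coupling between $\gamma_f$ and $\gamma_g$ and bound its transport cost. The natural choice is the ``vertical'' coupling that transports each point of the graph of $f$ to the point of the graph of $g$ with the same base point $x$:
\[
\pi \eqdef \bigl((\id,f),(\id,g)\bigr)_\#\bar\lambda_\Omega \in \mathcal P\bigl((\overline\Omega\times\R^n)\times(\overline\Omega\times\R^n)\bigr).
\]

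First I will check that $\pi$ is a coupling. Its first marginal is $(\id,f)_\#\bar\lambda_\Omega=\gamma_f$ and its second marginal is $(\id,g)_\#\bar\lambda_\Omega=\gamma_g$, both by Definition~\ref{def:graph}. This uses only that $(\id,f)$ and $(\id,g)$ are Borel maps, which holds since $f,g$ are Borel. The hypothesis $f,g\in L^p$ ensures that $\gamma_f,\gamma_g$ have finite $p$-th moments, so $W_p(\gamma_f,\gamma_g)$ is finite and defined on $\mathcal P_p$.

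Next I compute the cost of $\pi$ via the change of variables formula for pushforwards:
\[
\int |(x,y)-(x',y')|^p\,d\pi = \int_\Omega \bigl|(x,f(x))-(x,g(x))\bigr|^p\,d\bar\lambda_\Omega(x) = \int_\Omega |f(x)-g(x)|^p\,d\bar\lambda_\Omega(x).
\]
The $x$-components cancel exactly, which is the key point. Taking the infimum over couplings in the definition of $W_p$ then gives
\[
W_p(\gamma_f,\gamma_g)\le \|f-g\|_{L^p(\bar\lambda_\Omega)} = \mathrm{Vol}(\Omega)^{-1/p}\|f-g\|_{L^p(\Omega)},
\]
where the last identity is just the normalization $\bar\lambda_\Omega=\lambda_\Omega/\mathrm{Vol}(\Omega)$.

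For the continuous case I bound the integrand by $\|f-g\|_{C(\overline\Omega)}^p$ and use that $\bar\lambda_\Omega$ is a probability measure. The only mild subtlety is measurability and independence of representatives, since a.e.\ modifications of $f$ or $g$ do not change $\gamma_f$, $\gamma_g$, or the right-hand side. So there is no real obstacle, and the proof is essentially this single coupling computation.
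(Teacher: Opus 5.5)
Your proposal is correct and matches the paper's proof: the paper likewise pushes $\bar\lambda_\Omega$ forward under $x\mapsto((x,f(x)),(x,g(x)))$ to obtain a coupling of $\gamma_f$ and $\gamma_g$, and the cancellation of the $x$-components gives the bound. Your treatment of the continuous case, bounding the integrand by $\|f-g\|_{C(\overline\Omega)}^p$ and using that $\bar\lambda_\Omega$ is a probability measure, simply writes out a step the paper leaves implicit.
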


\begin{proof}
The Borel map
\[
\Psi:\Omega\to(\overline\Omega\times\R^n)^2,
\qquad
\Psi(x)=((x,f(x)),(x,g(x))),
\]
pushes $\bar\lambda_\Omega$ to a coupling of $\gamma_f$ and $\gamma_g$. Hence
\[
W_p(\gamma_f,\gamma_g)^p
\le
\int_\Omega
\norm{(x,f(x))-(x,g(x))}^p\,d\bar\lambda_\Omega(x)
=
\int_\Omega \abs{f(x)-g(x)}^p\,d\bar\lambda_\Omega(x).
\]
Taking $p$-th roots proves the claim.
\qed\end{proof}
}

{
\begin{lemma}[Smoothness and $L^p$ approximation by $R_\tau$]\label{lem:Rtau}
Fix $\tau>0$ and $s\in{\Z_+}$. For every $h\in H^{-s}(\Omega;\R^n)$,
\[
R_\tau h\in C_c^\infty(\Omega;\R^n)\subset C(\overline\Omega;\R^n).
\]
Moreover, for every $p\in[1,\infty)$ and every $h\in L^p(\Omega;\R^n)$,
\begin{equation}\label{eq:Rtau_Lp}
\norm{R_\tau h-h}_{L^p(\Omega)}
\xrightarrow[\tau\downarrow0]{}0.
\end{equation}
\end{lemma}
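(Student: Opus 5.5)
The lemma has two claims, and I would prove them separately.

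\textbf{Smoothness of $R_\tau h$.} Fix $\tau>0$ and $h\in H^{-s}(\Omega;\R^n)$. By Lemma~\ref{lem:chitau}(i),(ii), the cutoff $\psi_\tau=\chi_\tau$ lies in $C_c^\infty(\Omega)$, with support in $\{d\ge \tau\}$. So $\psi_\tau h$, defined by duality as in Appendix~\ref{app:prop:main convolutions A}, is a compactly supported distribution. Its support lies in $\supp\chi_\tau$, and its zero extension $\mathcal E_0(\psi_\tau h)$ belongs to $\mathcal E'(\R^d)$. The convolution of a compactly supported distribution with $\rho_\tau\in C_c^\infty$ is a $C^\infty$ function, with support in $\supp\chi_\tau+\overline{B(0,\tau)}$. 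By Lemma~\ref{lem:chitau}(ii), every point of this set is at distance at least $\tau-\tau\ge 0$ from $\partial\Omega$. To get strict interior containment, I would use the sharper bound from the proof of (ii): $\supp\chi_\tau\subset\{d\ge 3\tau\}$. This gives $\supp R_\tau h\subset\{d\ge 2\tau\}\Subset\Omega$, so $R_\tau h\in C_c^\infty(\Omega;\R^n)\subset C(\overline\Omega;\R^n)$. Vector-valued $h$ is handled componentwise.

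\textbf{$L^p$ convergence.} Let $h\in L^p(\Omega;\R^n)$ with $1\le p<\infty$, and write $\tilde h$ for its zero extension. Decompose
\[
R_\tau h-h=\rho_\tau*\bigl(\chi_\tau\tilde h-\tilde h\bigr)+\bigl(\rho_\tau*\tilde h-\tilde h\bigr).
\]
By Young's inequality, $\|\rho_\tau*g\|_{L^p}\le\|g\|_{L^p}$, since $\rho_\tau\ge 0$ and $\int\rho_\tau=1$. The first term is therefore bounded by $\|(1-\chi_\tau)h\|_{L^p(\Omega)}$. By Lemma~\ref{lem:chitau}(i),(iii), $0\le 1-\chi_\tau\le \mathbf 1_{\{d<5\tau\}}$, and the sets $\{x\in\Omega: d(x)<5\tau\}$ decrease to the empty set as $\tau\downarrow 0$. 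Dominated convergence, with dominant $|h|^p$, then sends the first term to $0$. The second term tends to $0$ by the standard $L^p$ continuity of mollification, which follows from density of $C_c(\R^d)$ in $L^p$ together with uniform continuity. Restricting from $\R^d$ to $\Omega$ only decreases norms.

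\textbf{Main obstacle.} The only delicate point is making the distributional definition of $\psi_\tau h$ and $\mathcal E_0$ rigorous for negative-order $h$, and checking that the smoothing claim does not depend on the sign of $s$. Both reduce to facts already established in Appendix~\ref{app:prop:main convolutions A}: the definition via $\langle h,\chi_\tau\psi\rangle$ and the smoothness of $\mathcal E'*C_c^\infty$. I would therefore cite those rather than redo them.
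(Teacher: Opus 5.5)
Your proposal is correct and follows essentially the same route as the paper's proof. For smoothness, both use the interior support of the cutoff $\chi_\tau$, the smoothness of $\rho_\tau$ convolved with a compactly supported distribution, and the support count. For the $L^p$ part, both use the splitting $\rho_\tau*(\chi_\tau\tilde h-\tilde h)+(\rho_\tau*\tilde h-\tilde h)$, handled by Young's inequality, dominated convergence and approximation of the identity. Your switch to the sharper bound $\supp\chi_\tau\subset\{d\ge 3\tau\}$ is justified, since the bound stated in Lemma~\ref{lem:chitau}(ii) alone would not give strict interior containment; the paper uses $\{d\ge 2\tau\}$ for the same purpose.
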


\begin{proof}
It is enough to argue componentwise. By the definition of $\Omega_\tau$ and the support
condition $\supp(\rho_\tau)\subset B(0,\tau)$,
\[
\supp(\psi_\tau)
=
\supp(\rho_\tau*\mathbf 1_{\Omega_\tau})
\subset
\{x\in\Omega:\dist(x,\partial\Omega)\ge 2\tau\}
\Subset\Omega.
\]
Hence $\psi_\tau h$ is a compactly supported distribution in $\Omega$, and its extension
$\mathcal E_0(\psi_\tau h)$ is a compactly supported distribution on $\R^d$.
Convolution with $\rho_\tau\in C_0^\infty(\R^d)$ gives
$R_\tau h\in C^\infty(\R^d)$. Its support satisfies
\[
\supp( R_\tau h)
\subset
\supp(\psi_\tau h)+\supp\rho_\tau
\subset
\{x\in\Omega:\dist(x,\partial\Omega)\ge\tau\}
\Subset\Omega,
\]
so $R_\tau h\in C_c^\infty(\Omega)$.

Now let $h\in L^p(\Omega;\R^n)$ and let $\widetilde h$ denote its extension by zero to $\R^d$.
Since $0\le \psi_\tau\le1$ and $\psi_\tau(x)\to1$ for a.e.\ $x\in\Omega$, dominated convergence
gives
\[
\norm{\mathcal E_0(\psi_\tau h)-\widetilde h}_{L^p(\R^d)}
=
\norm{(\psi_\tau-1)h}_{L^p(\Omega)}
\to0.
\]
Therefore,
\[
\begin{aligned}
\norm{R_\tau h-\widetilde h}_{L^p(\R^d)}
&\le
\norm{\rho_\tau*
\bigl(\mathcal E_0(\psi_\tau h)-\widetilde h\bigr)}_{L^p(\R^d)}
+
\norm{\rho_\tau*\widetilde h-\widetilde h}_{L^p(\R^d)}
\\
&\le
\norm{\mathcal E_0(\psi_\tau h)-\widetilde h}_{L^p(\R^d)}
+
\norm{\rho_\tau*\widetilde h-\widetilde h}_{L^p(\R^d)}
\to0,
\end{aligned}
\]
where the first term uses Young's inequality and the second term is the standard
approximation-of-the-identity theorem. Restricting to $\Omega$ proves~\eqref{eq:Rtau_Lp}.
\qed\end{proof}

\begin{remark}\label{rem:no_uniform_Rtau}
The stronger assertion
\[
\norm{R_\tau h-h}_{C(\overline\Omega)}\to0
\]
is false for the operator~\eqref{eq:Rtau} in general, because $R_\tau h$ is cut off near
$\partial\Omega$. For example, if $\Omega=(0,1)$ and $h\equiv1$, then $R_\tau h$ vanishes
near the boundary while $h$ does not. The correct convergence statement needed below is the
$L^p$ convergence~\eqref{eq:Rtau_Lp}.
\end{remark}
}

\begin{proposition}[\eqref{discrete M2}B]\label{thm:M2b}
Under~\eqref{eq:regular}, for every $h\in C(\overline\Omega;\R^n)$ and every $p\in[1,\infty)$,
\[
\lim_{\tau\downarrow0}\lim_{N\to\infty}\mathbf M_{N,\tau}(h)=\gamma_h\quad\text{in }W_p.
\]
\end{proposition}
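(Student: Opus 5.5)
Fix $h\in C(\overline\Omega;\R^n)$ and $p\in[1,\infty)$. The plan is to split the double limit with the triangle inequality for $W_p$,
\[
W_p(\mathbf M_{N,\tau}(h),\gamma_h)\le W_p(\mathbf M_N(R_\tau h),\gamma_{R_\tau h})+W_p(\gamma_{R_\tau h},\gamma_h),
\]
and handle the inner limit $N\to\infty$ and the outer limit $\tau\downarrow0$ separately.

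\textbf{Inner limit.} For fixed $\tau>0$, Lemma~\ref{lem:Rtau} gives $R_\tau h\in C_c^\infty(\Omega;\R^n)\subset C(\overline\Omega;\R^n)$. Proposition~\ref{thm:M2} applied to the continuous function $R_\tau h$ then yields $\mathbf M_N(R_\tau h)\to\gamma_{R_\tau h}$ in $W_p$ as $N\to\infty$. Hence the inner limit exists and equals $\gamma_{R_\tau h}$, proving \eqref{discrete M3} in this case.

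\textbf{Outer limit.} It remains to show $\gamma_{R_\tau h}\to\gamma_h$ in $W_p$ as $\tau\downarrow0$. By the stability estimate of Lemma~\ref{lem:stab},
\[
W_p(\gamma_{R_\tau h},\gamma_h)\le \mathrm{Vol}(\Omega)^{-1/p}\,\|R_\tau h-h\|_{L^p(\Omega)} .
\]
Since $h$ is continuous on the bounded set $\overline\Omega$, it lies in $L^p(\Omega;\R^n)$. The $L^p$ convergence \eqref{eq:Rtau_Lp} in Lemma~\ref{lem:Rtau} therefore makes the right-hand side tend to zero.

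\textbf{Main obstacle.} The only delicate point is that $R_\tau h$ does not converge to $h$ uniformly, because of the boundary cutoff noted in Remark~\ref{rem:no_uniform_Rtau}. One therefore cannot use the $C(\overline\Omega)$ version of Lemma~\ref{lem:stab}, and must use the $L^p$ version combined with \eqref{eq:Rtau_Lp}.

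A secondary check concerns the underlying space. Lemma~\ref{lem:stab} is stated on $\mathcal P(\overline\Omega\times\R^n)$, and all measures involved are supported in the compact set $\overline\Omega\times[-\|h\|_\infty,\|h\|_\infty]^n$ by \eqref{R tau L infty}. So $W_p$-convergence there is consistent with convergence in $\mathcal P(\overline\Omega\times[-L,L]^n)$, and the iterated limit equals $\gamma_h$.
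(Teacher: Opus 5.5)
Your proposal is correct and follows the paper's proof essentially verbatim. As in the paper, you use Lemma~\ref{lem:Rtau} and Proposition~\ref{thm:M2} to identify the inner limit as $\gamma_{R_\tau h}$, then Lemma~\ref{lem:stab} with the $L^p$ convergence \eqref{eq:Rtau_Lp} for the outer limit; your opening triangle inequality is harmless but unnecessary, since the inner limit is already identified exactly.
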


\begin{proof}
{
By Lemma~\ref{lem:Rtau}, $R_\tau h\in C_c^\infty(\Omega;\R^n)\subset C(\overline\Omega;\R^n)$
for every $\tau>0$. Applying Theorem~\ref{thm:M2} to $R_\tau h$ gives
\begin{equation}\label{eq:inner}
\lim_{N\to\infty}\mathbf M_{N,\tau}(h)
=
\lim_{N\to\infty}\mathbf M_N(R_\tau h)
=
\gamma_{R_\tau h}
\quad\text{in }W_p.
\end{equation}
By Lemmas~\ref{lem:stab} and~\ref{lem:Rtau},
\begin{equation}\label{eq:outer}
W_p(\gamma_{R_\tau h},\gamma_h)
\le
\norm{R_\tau h-h}_{L^p(\bar\lambda_\Omega)}
=
\text{Vol}(\Omega)^{-1/p}
\norm{R_\tau h-h}_{L^p(\Omega)}
\xrightarrow[\tau\downarrow0]{}0.
\end{equation}
Combining~\eqref{eq:inner} and~\eqref{eq:outer} proves the iterated limit.
}
\qed\end{proof}

\subsubsection{Proof of \eqref{discrete M3}}

\begin{proposition}[\eqref{discrete M3}]\label{thm:M3}
Under~\eqref{eq:regular}, for every $\tau>0$, $s\in\N$, $h\in H^{-s}(\Omega;{\R}^n)$, and
$p\in[1,\infty)$,
\[
\lim_{N\to\infty}W_p\!\left(\mathbf M_{N,\tau}(h),\gamma_{R_\tau h}\right)=0,
\]
where $\gamma_{R_\tau h}=(\id,R_\tau h)_\#\bar\lambda_\Omega$.
\end{proposition}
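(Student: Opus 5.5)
This is a direct corollary of Lemma~\ref{lem:Rtau} and Proposition~\ref{thm:M2}. The plan has three steps: first show that the regularized function $R_\tau h$ is continuous on $\overline\Omega$, then identify $\mathbf M_{N,\tau}(h)$ with an empirical graph measure of that function, and finally apply the convergence result for continuous functions.

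\emph{Step 1 (regularity).} Fix $\tau>0$, $s\in\N$ and $h\in H^{-s}(\Omega;\R^n)$, and argue componentwise. Since $\psi_\tau=\rho_\tau*\mathbf 1_{\Omega_\tau}\in C_c^\infty(\Omega)$, the product $\psi_\tau h$ is a well-defined distribution with compact support in $\Omega$. Its zero extension $\mathcal E_0(\psi_\tau h)$ therefore lies in $\mathcal E'(\R^d)$. Convolving with $\rho_\tau\in C_0^\infty(\R^d)$ gives a $C^\infty$ function, and its support is contained in $\operatorname{supp}(\psi_\tau h)+\overline{B(0,\tau)}\Subset\Omega$. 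This is exactly the first claim of Lemma~\ref{lem:Rtau}, which I will invoke: $g_\tau:=R_\tau h\in C_c^\infty(\Omega;\R^n)\subset C(\overline\Omega;\R^n)$. In particular $g_\tau$ is bounded, so $\|g_\tau\|_\infty<\infty$.

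\emph{Step 2 (identification).} By the definition \eqref{discrete M1},
\[
\mathbf M_{N,\tau}(h)=\mathbf M_N(g_\tau)=\frac1N\sum_{j=1}^N\delta_{(\bar x_j,g_\tau(\bar x_j))}.
\]
Since $g_\tau$ is continuous, this is a genuine empirical graph measure, and the point values $g_\tau(\bar x_j)$ are well defined even though $h$ itself need not be a function.

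\emph{Step 3 (convergence).} Apply Proposition~\ref{thm:M2} to $g_\tau$. With $K=\overline\Omega\times[-\|g_\tau\|_\infty,\|g_\tau\|_\infty]^n$, which is compact, and any $\phi\in C(K)$, the function $x\mapsto\phi(x,g_\tau(x))$ is continuous on $\overline\Omega$. Regular distribution \eqref{eq:regular} then gives narrow convergence $\mathbf M_N(g_\tau)\rightharpoonup\gamma_{g_\tau}$. Because both measures are supported in the compact set $K$, narrow convergence is equivalent to $W_p$ convergence for every $p\in[1,\infty)$. This yields $W_p(\mathbf M_{N,\tau}(h),\gamma_{R_\tau h})\to0$.

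The only point needing care is Step 1: one must check that $R_\tau h$ is a genuine continuous function on $\overline\Omega$ for distributional $h$, so that point evaluation and Proposition~\ref{thm:M2} apply. This is settled by the compact support of $\psi_\tau$ together with smoothing by convolution, as in Lemma~\ref{lem:Rtau}. The limit holds for each fixed $\tau$; no uniformity in $\tau$ is claimed or needed.
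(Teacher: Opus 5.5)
Your proposal is correct and matches the paper's proof: you use Lemma~\ref{lem:Rtau} to get $R_\tau h\in C(\overline\Omega;\R^n)$, and then apply Proposition~\ref{thm:M2} to $R_\tau h$, since $\mathbf M_{N,\tau}(h)=\mathbf M_N(R_\tau h)$. Your Step~3 simply re-derives the argument of Proposition~\ref{thm:M2} inline, which is harmless but not needed.
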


\begin{proof}
Lemma~\ref{lem:Rtau} gives $R_\tau h\in C(\overline\Omega;\R^n)$. Apply Theorem~\ref{thm:M2} to
$R_\tau h$: $\mathbf M_{N,\tau}(h)=\mathbf M_N(R_\tau h)\to\gamma_{R_\tau h}$ in $W_p$.
\qed
\end{proof}

\commented{
\subsubsection{Explicit inequality for H\"older data}
\label{sec:holder}

{
\begin{lemma}[Push-forward under H\"older maps]\label{lem:push}
Let $(X,d_X)$ and $(Y,d_Y)$ be Polish metric spaces, let $p\in[1,\infty)$,
$\alpha\in(0,1]$, and set
\[
q\eqdef\max\{1,p\alpha\}.
\]
Assume that $T:X\to Y$ satisfies
\[
d_Y(T(x),T(x'))\le C_T\,d_X(x,x')^\alpha
\qquad\forall x,x'\in X.
\]
Then, for every $\mu,\nu\in\mathcal P(X)$ with finite $q$-th moments,
\begin{equation}\label{eq:push}
W_p(T_\#\mu,T_\#\nu)
\le
C_T\,W_q(\mu,\nu)^\alpha.
\end{equation}
\end{lemma}

\begin{proof}
If $p\alpha\ge1$, then $q=p\alpha$. Let $\pi^*\in\Pi(\mu,\nu)$ be optimal for $W_q$.
Then $(T\times T)_\#\pi^*\in\Pi(T_\#\mu,T_\#\nu)$ and
\[
W_p^p(T_\#\mu,T_\#\nu)
\le
\int d_Y(T(x),T(x'))^p\,d\pi^*
\le
C_T^p\int d_X(x,x')^{p\alpha}\,d\pi^*
=
C_T^p W_q(\mu,\nu)^{p\alpha}.
\]
Taking $p$-th roots gives~\eqref{eq:push}.

If $p\alpha<1$, then $q=1$. Let $\pi^*\in\Pi(\mu,\nu)$ be optimal for $W_1$.
Since $r\mapsto r^{p\alpha}$ is concave on $[0,\infty)$,
\[
\int d_X(x,x')^{p\alpha}\,d\pi^*
\le
\left(\int d_X(x,x')\,d\pi^*\right)^{p\alpha}
=
W_1(\mu,\nu)^{p\alpha}.
\]
Consequently
\[
W_p^p(T_\#\mu,T_\#\nu)
\le
C_T^p W_1(\mu,\nu)^{p\alpha},
\]
and taking $p$-th roots again gives~\eqref{eq:push}.
\qed\end{proof}
}

\begin{lemma}[H\"older constant of the graph map]\label{lem:graphmap}
For $h\in C^{0,\alpha}(\overline\Omega;\R^n)$, the map $T=(\id,h)$ is $\alpha$-H\"older with
\[
[T]_{C^{0,\alpha}}
\le
\sqrt{D^{2(1-\alpha)}+[h]_{C^{0,\alpha}}^2}.
\]
\end{lemma}

\begin{proof}
For $r=\abs{x-x'}\in[0,D]$, $r\le D^{1-\alpha}r^\alpha$, so
\[
\norm{T(x)-T(x')}_2^2
=
\abs{x-x'}^2+\abs{h(x)-h(x')}^2
\le
D^{2(1-\alpha)}r^{2\alpha}
+
[h]_{C^{0,\alpha}}^2 r^{2\alpha}.
\]
\qed\end{proof}

{
\begin{proposition}[Quantitative bound]\label{thm:holder}
Let $\Omega$ be bounded with diameter $D$, let
$h\in C^{0,\alpha}(\overline\Omega;\R^n)$, $\alpha\in(0,1]$, and let
$p\in[1,\infty)$. Set
\[
q\eqdef\max\{1,p\alpha\}.
\]
Then, for every finite point set $({\bar x}_j)_{j=1}^N\subset\Omega$,
\begin{equation}\label{eq:holder}
W_p\!\left(\mathbf M_N(h),\gamma_h\right)
\le
\sqrt{D^{2(1-\alpha)}+[h]_{C^{0,\alpha}}^2}\;
W_q(\sigma_N,\bar\lambda_\Omega)^\alpha.
\end{equation}
\end{proposition}

\begin{proof}
Since $\mathbf M_N(h)=T_\#\sigma_N$ and $\gamma_h=T_\#\bar\lambda_\Omega$ with
$T=(\id,h)$, the claim follows from Lemmas~\ref{lem:push} and~\ref{lem:graphmap}.
\qed\end{proof}
}

\subsubsection{Concrete rates}
\label{sec:rates}

{
Throughout this section set
\[
q\eqdef\max\{1,p\alpha\},
\qquad
C_h\eqdef\sqrt{D^{2(1-\alpha)}+[h]_{C^{0,\alpha}}^2}.
\]
Then~\eqref{eq:holder} says
\[
W_p(\mathbf M_N(h),\gamma_h)
\le
C_h\,W_q(\sigma_N,\bar\lambda_\Omega)^\alpha.
\]
}

{
\paragraph{Monte Carlo.}
Assume that $x_1^{(0)},\ldots,x_N^{(0)}$ are i.i.d.\ with law $\bar\lambda_\Omega$.
Since $\bar\lambda_\Omega$ is compactly supported, the Fournier--Guillin estimate
\cite{FournierGuillin2015} gives
\[
\mathbb E\!\left[W_q(\sigma_N,\bar\lambda_\Omega)^q\right]
\le
C
\begin{cases}
N^{-q/d}, & d>2q,\\[2mm]
N^{-1/2}\log(1+N), & d=2q,\\[2mm]
N^{-1/2}, & d<2q,
\end{cases}
\]
where $C$ depends on $d,q,\Omega$.
Since $\alpha\le1\le q$, Jensen's inequality gives
\[
\mathbb E\!\left[W_q(\sigma_N,\bar\lambda_\Omega)^\alpha\right]
\le
\left(
\mathbb E\!\left[W_q(\sigma_N,\bar\lambda_\Omega)^q\right]
\right)^{\alpha/q}.
\]
Therefore
\begin{equation}\label{eq:MC}
\mathbb E\!\left[W_p(\mathbf M_N(h),\gamma_h)\right]
\le
C_h C
\begin{cases}
N^{-\alpha/d}, & d>2q,\\[2mm]
N^{-\alpha/(2q)}\bigl(\log(1+N)\bigr)^{\alpha/q}, & d=2q,\\[2mm]
N^{-\alpha/(2q)}, & d<2q.
\end{cases}
\end{equation}
The constant $C$ may change from line to line.
}

{
\paragraph{Deterministic point sets and quasi-Monte Carlo.}
%
Tensor grids when $N=m^d$, and more generally near-optimal quantisers, satisfy
\begin{equation}\label{eq:quant_upper}
W_q(\sigma_N,\overline{\lambda}^r_{[0,1]^d})
\le
C_{d,q}N^{-1/d}.
\end{equation}
By using~\eqref{eq:holder} we obtain the
bound
\begin{equation}\label{eq:det_rate}
W_p(\mathbf M_N(h),\gamma_h)
\le
C_h C_{d,q}^{\alpha} N^{-\alpha/d}
\end{equation}
for such point sets.  

Classical low-discrepancy point sets, such as Halton, Sobol', or Faure sequences, are designed
to control star discrepancy and integration error for test functions of bounded
Hardy--Krause variation; see \cite{Niederreiter1992,Dick2014}. This does not automatically
control $W_q$, whose dual description for $q=1$ involves all Lipschitz test functions.
Hence a QMC point set may be used in~\eqref{eq:holder} only after a separate Wasserstein
estimate
\[
W_q(\sigma_N,\bar\lambda_\Omega)\le a_N
\]
has been proved. In that case~\eqref{eq:holder} immediately yields
\[
W_p(\mathbf M_N(h),\gamma_h)\le C_h a_N^\alpha.
\]

If points are first constructed on $[0,1]^d$ and transported to $\Omega$, the transfer of a
Wasserstein estimate is valid under a quantitative transport assumption. For example, if
$F:[0,1]^d\to\overline\Omega$ is Lipschitz and
$F_\#\overline{\lambda}^r_{[0,1]^d}=\bar\lambda_\Omega$, then
\[
W_q(F_\#\sigma_N,\bar\lambda_\Omega)
\le
\text{Lip}(F)\,W_q(\sigma_N,\overline{\lambda}^r_{[0,1]^d}).
\]
Arbitrary inverse-CDF constructions in higher dimension, or acceptance--rejection sampling,
do not by themselves preserve low-discrepancy or Wasserstein rates.
}

\subsubsection{Joint limit $(\tau,N)\to(0,\infty)$}
\label{sec:joint}

{
For $h\in L^p(\Omega;\R^n)$, the graph measure $\gamma_h$ is well-defined by
Definition~\ref{def:graph}. The triangle inequality gives
\begin{equation}\label{eq:twoscale}
W_p(\mathbf M_{N,\tau}(h),\gamma_h)
\le
W_p(\mathbf M_{N,\tau}(h),\gamma_{R_\tau h})
+
W_p(\gamma_{R_\tau h},\gamma_h).
\end{equation}
The second term satisfies, by Lemmas~\ref{lem:stab} and~\ref{lem:Rtau},
\begin{equation}\label{eq:outer_joint}
W_p(\gamma_{R_\tau h},\gamma_h)
\le
\text{Vol}(\Omega)^{-1/p}\norm{R_\tau h-h}_{L^p(\Omega)}
\xrightarrow[\tau\downarrow0]{}0.
\end{equation}
For each fixed $\tau>0$, the first term satisfies
\[
W_p(\mathbf M_{N,\tau}(h),\gamma_{R_\tau h})
=
W_p(\mathbf M_N(R_\tau h),\gamma_{R_\tau h})
\xrightarrow[N\to\infty]{}0
\]
by Theorem~\ref{thm:M2}, because $R_\tau h\in C_c^\infty(\Omega;\R^n)$.

Consequently, a diagonal joint limit always exists under regular distribution of the points.
Indeed, choose $\tau_k\downarrow0$ such that
\[
W_p(\gamma_{R_{\tau_k}h},\gamma_h)\le k^{-1}.
\]
For each $k$, choose $N_k$ so large that
\[
W_p(\mathbf M_{N,\tau_k}(h),\gamma_{R_{\tau_k}h})\le k^{-1}
\qquad\text{for all }N\ge N_k.
\]
After increasing $N_k$ if necessary, assume $N_k<N_{k+1}$ and define
$\tau_N\eqdef\tau_k$ whenever $N_k\le N<N_{k+1}$. Then $\tau_N\downarrow0$ and
\[
W_p(\mathbf M_{N,\tau_N}(h),\gamma_h)\to0.
\]
This argument gives convergence but no explicit rate.

A rate for a prescribed sequence $\tau_N\downarrow0$ requires a quantitative estimate of
\[
W_p(\mathbf M_{N,\tau}(h),\gamma_{R_\tau h})
\]
in terms of both $N$ and $\tau$. For example, if one uses the Lipschitz graph estimate
\[
W_p(\mathbf M_N(R_\tau h),\gamma_{R_\tau h})
\le
\sqrt{1+\text{Lip}(R_\tau h)^2}\,
W_p(\sigma_N,\bar\lambda_\Omega),
\]
then one must also control the possible blow-up of $\text{Lip}(R_\tau h)$ as $\tau\downarrow0$.
Such a bound is not available for arbitrary $h\in L^p(\Omega;\R^n)$ without additional
regularity.

For $h\in H^{-s}(\Omega;{\R}^n)\setminus L^1_{\mathrm{loc}}(\Omega;\R^n)$, the unregularised
graph measure $\gamma_h$ is not a well-defined Borel probability measure. Thus
\eqref{discrete M3}{} is structurally a statement about the fixed-$\tau$ regularised graph measure
$\gamma_{R_\tau h}$.
}

}

\subsection{Proof of {Theorem}
\ref{thm:approximation-trans self-attention Takashi}}
\label{app:thm:universal-graph-trans}

Let $\e>0$. Later, we choose a sufficiently small parameter $\e_1>0$ that depends on $\e$.

Target map $f$ has the form ${\widehat{\Gamma}}(\mu,(x,y))=(x,p(\mu,(x,y)))$,
where $p(\mu,(x,y))=\widehat p(\mu,x)$ is independent of the $y$-varialbe (We note 
that $\mu$ is a measure in the $\Omega\times [-L,L]^n$, so $p(\mu,(x,y))$ depends
on a measure defined using the $(x,y)$ variables. To clarify the analysis we consider below
$p(\mu,(x,y))$ as function of all variables $(\mu,(x,y))$.)

We first apply \citep[Theorem 1]{furuya2025transformers} to the map $\widehat p: \mathcal P(\overline\Omega\times [-L,L]^n) \times (\overline\Omega\times [-L,L]^n) \to [-L,L]^{n}$, then there exist a deep transformer $\hat{F}_{\xi_L} \diamond \hat{\Gamma}_{\theta_L} \diamond \ldots \diamond \hat{F}_{\xi_1} \diamond \hat{\Gamma}_{\theta_1}$ such that 
\beq\label{e1 bound}
\sup_{(\mu,(x,y)) \in K \times (\overline\Omega\times [-L,L]^n)} | p(\mu,(x,y))-
\hat{F}_{\xi_L} \diamond \hat{\Gamma}_{\theta_L} \diamond \ldots \diamond \hat{F}_{\xi_1} \diamond \hat{\Gamma}_{\theta_1}(\mu,(x,y))|\leq \varepsilon_1,
\eeq
where $K= P(\overline\Omega\times [-L,L]^n)$  and $\hat{\Gamma}_{\theta_\ell} : \mathcal P(\mathbb{R}^{{\hat d}_\ell}) \times \mathbb{R}^{{\hat d}_\ell} \to \mathbb{R}^{{\hat d}_\ell}$ and $\hat{F}_{\xi_\ell} : \mathbb{R}^{{\hat d}_\ell} \to \mathbb{R}^{{\hat d}_{\ell+1}}$, where 
${\hat d}_1 =d+ n$, ${\hat d}_{L+1} = n$, and ${\hat d}_\ell= 4(d+n)$, for $\ell=2,3,\dots,L$ and
${\hat d}_{\ell}'= 4(d+n)$  for $\ell=1,2,\dots,L+1$.
Here, the bound $4(d+n)$ for ${\hat d}_\ell$ and ${\hat d}_{\ell}'$ is the sum of the 
dimension $(d+n)$
of the domain $\overline\Omega\times [-L,L]^n$ of the map $\hat{\Gamma}_{\theta_\ell}$ and 3 times its co-domain $(d+n)$.
We point out that the bound for the width, $4(d+n)$, is larger than in
the corresponding results in \citep[Theorem 1]{furuya2025transformers}.
Let us now define the maps
$\Gamma_{\theta_\ell} : \mathcal P(\mathbb{R}^d \times \mathbb{R}^{d_\ell}) \times \mathbb{R}^d \times \mathbb{R}^{d_\ell} \to \mathbb{R}^d \times \mathbb{R}^{d_\ell}$ and $F_{\xi_\ell} : \mathbb{R}^d \times \mathbb{R}^{d_\ell} \to \mathbb{R}^d \times \mathbb{R}^{d_{\ell+1}}$,
where 
${d}_1 =d + n$, ${d}_{L+1} = d + n$, and ${d}_\ell= 4(d+n)$, for $\ell=2,3,\dots,L$ and
${d}_{\ell}'= 4(d+n)$  for $\ell=1,2,\dots,L+1$,
by setting
\beq
& & {\Gamma}_{\theta_1}\bigg( \mu, (x,y)\bigg)=\bigg(x, \hat{\Gamma}_{\theta_1}( \mu, (x,y))\bigg),
\eeq
and
\beq
& & {\Gamma}_{\theta_\ell}\bigg(\mu, (x,x',y)\bigg)=\bigg(x, \hat{\Gamma}_{\theta_\ell}(\mu, (x',y))\bigg),
\eeq
for $\ell=2,3,\dots,L$ and
\beq
& & \hat{F}_{\xi_\ell} \bigg(x,x',y\bigg)=\bigg(x,\hat{F}_{\xi_\ell}(x',y)\bigg)
\eeq
for $\ell=1,2,\dots,L-1$ and finally, we define $F_{\xi_L}$ by scaling
$\hat F_{\xi_L}$ by a factor $\frac L{L+\e_1}$ and capping it with the smooth function
$\Psi$. This defines
the function 
\beq
& & F_{\xi_L}\bigg( \mu, (x,x',y)\bigg)=\bigg(x,\Psi(\frac L{L+\e_1}\cdot\,
\hat F_{\xi_L}(\mu, (x',y)))\bigg).
\eeq

Recall that the function $ \widehat p(\mu,(x,y))$ obtains
in values on the interval $[-L,L]^n$. Thus we see that  when the approximation \eqref{e1 bound} is valid, we have $\hat F_{\xi_\ell}( (\mu, (x',y)))\in [-(L+\e_1),(L+\e_1)]^n$.
 Then,
 $$
 \frac L{L+\e_1}\hat F_{\xi_\ell}( (\mu, (x',y)))\in [-L,L]^n,$$
 and thus 
 $$
 \Psi(\frac L{L+\e_1}\hat F_{\xi_\ell}( (\mu, (x',y)))
 =\frac L{L+\e_1}\hat F_{\xi_\ell}( (\mu, (x',y)))\in [-L,L]^n
 $$
When $\e_1$ is small enough,
 the above construction implies that the putative context function
\ba
{\Gamma}^{\mathrm{put}}(\mu,(x,y))
&=&\bigg(x\,,\, 
\Psi(\frac L{L+\e_1}\cdot
\hat F_{\xi_L} \diamond \hat \Gamma_{\theta_L} \diamond \ldots \diamond \hat F_{\xi_1} \diamond \hat \Gamma_{\theta_1}(\mu,(x,y)))\bigg)\\
&=& F_{\xi_L} \diamond \Gamma_{\theta_L} \diamond \ldots \diamond F_{\xi_1} \diamond \Gamma_{\theta_1}(\mu,(x,y))
\ea
satisfies 
\begin{equation}
    \label{pre Wasserstein limit}
\sup_{(\mu,(x,y)) \in \mathcal P (\overline\Omega\times [-L,L]^n)\times (\overline\Omega\times [-L,L]^n) } 
\bigg|{\Gamma}^{\mathrm{put}}(\mu, (x,y))-{\widehat{\Gamma}}(\mu,(x,y))\bigg| \leq \varepsilon.
\end{equation}

Recall that for $y_0\in [-L,L]^n$, $w_{y_0}:\Omega\times \R^n\to \Omega\times \R^n$
is the map $w_{y_0}(x,y)=(x,y_0)$ and $w_{y_0}^*:F\to F\circ w_{y_0}$  a warping operator.
We define now
\beq
{\Gamma}^{\mathrm{tran}}(\mu, (\cdot,\cdot))=
w_{y_0}^*{\Gamma}^{\mathrm{put}}(\mu,  (\cdot,\cdot))
=
{\Gamma}^{\mathrm{put}}(\mu,  w_{y_0}(\cdot,\cdot)),
\eeq
that is, 
\beq
{\Gamma}^{\mathrm{tran}}(\mu,(x,y))=
{\Gamma}^{\mathrm{put}}(\mu,w_{y_0}(x,y_0))
={\Gamma}^{\mathrm{put}}(\mu,(x,y_0))=
(x\, ,\,{\Gamma}^{\mathrm{put},(y)}(\mu,(x,y_0))),
\eeq
We recall that the target function ${\widehat{\Gamma}}(\mu,(x,y))=(x,\widehat p(\mu,x))$
is independent of the $y$-variable. Moreover, the $x$-coordinates
of ${\Gamma}^{\mathrm{put}}(\mu,(x,y))$ and ${\widehat{\Gamma}}(\mu,(x,y))$
are both equal to $x$.
Thus, formula \eqref{pre Wasserstein limit} implies that 
\beq
    \label{pre Wasserstein limitB}
    & &\hspace{-10mm}
 \sup_{(\mu,(x,y)) \in \mathcal P (\overline\Omega\times [-L,L]^n)\times (\overline\Omega\times [-L,L]^n) } 
\bigg|{\Gamma}^{\mathrm{put},(y)}(\mu, (x,y))-\widehat p(\mu,x)\bigg|
     \\
&=&  
    \sup_{(\mu,(x,y)) \in \mathcal P (\overline\Omega\times [-L,L]^n)\times (\overline\Omega\times [-L,L]^n) } 
\bigg|{\Gamma}^{\mathrm{put}}(\mu, (x,y))-{\widehat{\Gamma}}(\mu,(x,y_0))\bigg| 
 \leq \varepsilon
 \eeq
and as the supremum in \eqref{pre Wasserstein limitB} is taken on over all $y\in [-L,L^n]$
we trivially have for the individual value $y_0$ that
\beq
    \label{pre Wasserstein limitC}
\sup_{(\mu,x) \in \mathcal P (\overline\Omega\times [-L,L]^n)\times \overline\Omega) } 
\bigg|{\Gamma}^{\mathrm{put},(y)}(\mu, (x,y_0))-\widehat p(\mu,x)\bigg| \leq \varepsilon.
\eeq
By definition of ${\Gamma}^{\mathrm{tran}}(\mu, (x,y))$, this implies
\beq
    \label{pre Wasserstein limitD}
\sup_{(\mu,(x,y)) \in \mathcal P (\overline\Omega\times [-L,L]^n)\times (\overline\Omega\times [-L,L]^n) } 
\bigg|{\Gamma}^{\mathrm{tran}}(\mu, (x,y))-{\widehat{\Gamma}}(\mu,(x,y))\bigg| \leq \varepsilon.
\eeq
This and Lemma \ref{lem:uniform-map-to-W1} imply \eqref{sup G tran formula} and \eqref{first coordinate is x}.

\qed

\medskip

Theorem~\ref{thm:approximation-trans self-attention Takashi} means that we can approximate operators mapping graph measures to graph measures by traditional transformers in the product space $\overline{\Omega}\times [-L,L]^n$. Later, we see that by composing the traditional transformers by ${\bf M}_N$ and  ${\bf F}_\varepsilon$,
we can use those to approximate operators $A$ from functions to functions.

\commented{Let's move this below to the section where we consider approximation of nonlinear operatoar $A$
{\color{red}
Let
$$
A : C(\overline\Omega; [-L,L]^{n}) \to C(\overline\Omega;\mathbb{R}^{n})
$$
be contentious in the weak sense that $h_n \rightharpoonup h$ i.e.,
\[
\int_\Omega \varphi(x) h_n(x) dx \to \int_\Omega \varphi(x) h(x) dx, \ \forall \varphi \in C(\overline\Omega; \mathbb{R}^{n})
\]
implies that $A(h_n)\rightharpoonup A(h)$.
}
\Takashi{I think this is good point. In operator learning, the continuity in the strong sense is often assumed for the target operator. Catchphrase : "transformer can relax the continuity assumption in operator learning". We maybe good to provide the examples of $A$ that not strong continuous, but weak continuous.}

Note that $A$ has the form 
\[
A(g) =  {\bf F} \circ  {\bf M} \circ A \circ {\bf F} \circ  {\bf M}(g)  = {\bf F} \circ G \circ {\bf M}(g), \quad g \in C(\overline\Omega; \mathbb{R}^{h})
\]
where $G : \mathcal{M}(\Omega \times [-L,L]^{n}) \to \mathcal{M}(\mathbb{R}^d \times \mathbb{R}^{n})$ is defined by 
\[
G(\mu) := {\Gamma}(\mu, \cdot)_\sharp \mu, \quad \mu \in  \mathcal{M}(\Omega \times [-L,L]^{n})
\]
where ${\Gamma}: \mathcal{M}(\Omega \times [-L,L]^{n}) \times \Omega \times [-L,L]^{n} \to \mathbb{R}^{d} \times \mathbb{R}^{n}$ is defined by 
\[
{\Gamma}(\mu, (x,y)) := (x, A({\bf F}(\mu))(x)).
\]

\begin{theorem}[Main result, Informal]
There exists a sequence $\{\varepsilon_m\}$, $\{N_m\}$, $\{ G_{{\rm tran}, m} \}_{m \in \mathbb{N}}$ of transformers such that for each $g \in C(\Omega;[-L,L]^h)$
\[
\lim_{m \to \infty}
{\bf F}_{\varepsilon_m} \circ G_{{\rm tran}, m} \circ {\bf M}_{N_m}(h) = A(h)
\]
i.e., point-wise convergence in $C(\Omega;[-L,L]^h)$. \Takashi{This may be a limitation. This is due to ${\bf F}_\varepsilon \to {\bf F}$ and ${\bf M}_N \to {\bf M}$.}
\end{theorem}
\begin{proof}
Use \eqref{f mollified 2C} and \eqref{discrete M2}. Note that 
\[
{\Gamma}_\varepsilon(\mu, (x,y)) := (x, A({\bf F}_\varepsilon(\mu))(x)).
\]
is continuous w.r.t. $\mu$ in Wasserstein topology, then we can apply Proposition~\ref{prop:universal-graph-trans}.
\qed\end{proof}
}


\commented{
\section{Proofs in Section~\ref{sec:5}}
}

\section{Proofs in Section~\ref{sec: Expressivity}}

\subsection{Proof of {Lemma} \ref{lem. f and G are continuous}}
\label{app:lem. f and G are continuous}

(i) Let us consider the map 
\beq
G_\eta(\mu)=F_\eta(\mu,\cdot)_\#\mu,
\eeq
where 
\beq\label{map f}
F_\eta(\mu,(x,y))=(x,A({\tilde S}_{\eta}(\mu))(x)),\quad F_\eta(\mu,(\cdot ,\cdot)): \overline \Omega\times \R
\to \overline \Omega\times \R.
\eeq
We observe that $F_\eta(\mu,(x,y))$ is independent of the value of $y$.

By Lemma \ref{lem: S eta continuous}, the
 The map $\tilde S_{\eta}:{{{\mathcal P}}}(\Omega\times [-L,L]^n)\to  L^2(\overline\Omega;{[-L_1,L_1]}^n)$ 
is continuous. This and the assumption \eqref{A assumption3} on the map $A$ imply that $A\circ S_{\eta}:{{{\mathcal P}}}(\Omega\times [-L,L]^n)\to 
C^1(\overline \Omega;{[-L,L]}^n)$ is continuous. 
This implies that the map
${\Gamma}_\eta:{{{\mathcal P}}}(\overline\Omega\times [-L,L]^n)\times ({\overline\Omega}\times [-L,L]^n)\to {\overline\Omega}\times [-L,L]^n$ be the map, given in \eqref{f eta def}, is continuous.
This proves the claim (i).

(ii) 
Next, we consider the function $\mu \to A({\tilde S}_{\eta}(\mu))(\cdot)$ as a map ${{{\mathcal P}}}(\overline\Omega\times [-L,L]^n)\to C^1(\overline\Omega)\subset \hbox{Lip}(\overline\Omega).$

When $\mu_j\to \mu$ as $j\to \infty$ in ${{{\mathcal P}}}(\overline\Omega\times [-L,L]^n)$,
we see that $S_{\eta}(\mu_j)\to S_{\eta}(\mu)$ in $H^{2r}(\Omega)$.
Hence, $A({\tilde S}_{\eta}(\mu_j))\to A({\tilde S}_{\eta}(\mu))$  in $C^1(\overline \Omega)$, and hence
\beq
\lim_{j\to\infty}F_\eta(\mu_j,(\cdot ,\cdot))=F_\eta(\mu,(\cdot ,\cdot))\quad\hbox{in }  L^\infty(\overline \Omega\times [-L,L]^n)\cap \hbox{Lip}(\overline \Omega\times [-L,L]^n).
\eeq
This and {Lemma} \ref{lem: triangular Wasserstein1} with $F_j=F_\eta(\mu_j,\cdot)$ imply that
\beq
\lim_{j\to\infty}G_\eta(\mu_j)=\lim_{j\to\infty}F_\eta(\mu_j,\cdot)_\#\mu_j=G_\eta(\mu)\quad\hbox{in }  {{{\mathcal P}}}(\overline\Omega\times [-L,L]^n)
\eeq
Hence, the map
\beq
G_\eta:{{{\mathcal P}}}(\overline\Omega\times [-L,L]^n)\to {{{\mathcal P}}}(\overline\Omega\times [-L,L]^n)
\eeq
is continuous.
This proves the claim (ii).\hfill $\square$

\subsection{Proof of Lemma \ref{lem: formula G for gamma h}}
\label{app:lem: formula G for gamma h}

We show below that the definition of the map ${\Gamma}_\eta$ in  
\eqref{f eta def} and  the map $G_\eta$ in   \eqref{f eta def}
yield the
claim.

To prove the claim, let us assume that $h\in C(\overline \Omega)$. Recall that $A: C^k(\overline \Omega)
\to  C^1(\overline \Omega)$ is a continuous map.

Assume that $h\in {\Xspace}$ and $\gamma_h$ be 
the measure
\begin{align}\label{integration formula 2}
   \int_{{\overline\Omega} \times \mathbb{R}} \phi(x,y) \, \mathrm{d}\gamma_h(x,y)= \tfrac1 {\text{Vol}(\Omega)}\int_{{\overline\Omega}} \phi(x,h(x)) \, \mathrm{d}\lambda_{\Omega}(x).
\end{align}%

Observe that for  $\phi(x,y) \in C_b(\overline\Omega \times \mathbb{R})$, we have by \eqref{integration formula 2}
\beq\label{key1}
\bra G_\eta(\gamma_h), \phi\ket&=&\bra  {\Gamma}_\eta(\gamma_h,(\cdot,\cdot))_\#\gamma_h ), \phi\ket
\\
&=&\bra  \gamma_h , \phi({\Gamma}_\eta(\gamma_h,(\cdot,\cdot)))\ket
\\
&=& \int_{{\overline\Omega} \times \mathbb{R}} \phi({\Gamma}_\eta(\gamma_h,(x,y)))\, d\gamma_h (x,y)
\\
&=& \int_{{\overline\Omega} \times \mathbb{R}} \phi({\Gamma}_\eta(\gamma_h,(x,y)))\,{\bf 1}(y)\, d\gamma_h (x,y)
\\
&=& \int_{{\overline\Omega} \times \mathbb{R}} \phi(x,A({\tilde S}_{\eta}(\gamma_h))(x))\,{\bf 1}(y)\, d\gamma_h (x,y)
\\
&=& \tfrac1 {\text{Vol}(\Omega)}\int_{{\overline\Omega}} \phi(x,A({\tilde S}_{\eta}(\gamma_h))(x))\,  \mathrm{d}\lambda_{\Omega}(x)
\\
&=&\bra \gamma_{A({\tilde S}_{\eta}(\gamma_h))}, \phi\ket,
\label{key4}
\eeq
where
${\bf 1}(y)=1$ is the constant function having the value 1.
Hence, $ G_\eta(\gamma_h)=\gamma_{A({\tilde S}_{\eta}(\gamma_h))}.$

\commented{
As
$\eta\to 0$, we have by \eqref{minimum continuity3} and  \eqref{integration formula 2}
\beq
& &\lim_{\eta\to 0} \int_{{\overline\Omega}} \phi(x,A({\tilde S}_{\eta}(\gamma_h))(x))\,  \mathrm{d}\overline \lambda_{\Omega}(x)\\
&=&\lim_{\eta\to 0} \int_{{\overline\Omega}} \phi(x,A(h)(x))\,  \mathrm{d}\overline \lambda_{\Omega}(x)\\
&=&\lim_{\eta\to 0} \int_{{\overline\Omega}} \phi(x,y)\,   \mathrm{d}\gamma_{A(h)}(x,y)
\\
&=&\bra \gamma_{A(h)},\phi\ket.
\eeq

Using these, we see that 
\beq
\lim_{\eta\to 0+}G_\eta(\gamma_h)=\lim_{j\to\infty}F_\eta(\gamma_h,\cdot)_\#\gamma_h=\gamma_{A(h)}\quad\hbox{in }  {{{\mathcal P}}}(\overline\Omega\times [-L,L]^n)
\eeq

This gives a draft for the proof of the claim 5 above.}
\hfill $\square$

\subsection{Properties of $S_{\eta,K(\eta)}$} 
\label{app:S-regularizers}

We recall that we use two parameters $\eta>0$ and $K\in \mathbb N$. Also $k\in \mathbb Z_+$ and ${\color{black}{r}}>\frac d2+k$ be an even integer, so
that $H^{\color{black}{r}}(\Omega)\subset C^k(\overline \Omega)$ by Sobolev's embedding theorem \citep{AdamsFournier}. Let
 $A_D=(1-\Delta)$,  where $\Delta$ is the Laplacian with Dirichlet boundary condition,
considered as an unbounded selfadjoint operator in $L^2(\Omega)^n$ with domain $\mathcal D(A_D)=H^2(\Omega;\R^n)\cap H^1_0(\Omega;\R^n)$. Let ${\mathcal L}=(I-\Delta)^r$ be an unbounded self-adjoint operator in $L^2(\Omega;\R^n)$.
Let us denote $
    {\color{black}H_D^r(\Omega)}:={\color{black}{\mathcal D}(A_D^{r/2})}$.
Let  $\psi_1, \psi_2, \dots \in H^{r}_D(\Omega)$ be a complete orthonormal basis in $L^2(\Omega;\R^n)$ of eigenfunctions satisfying $(I-\Delta)\psi_j={{\lambda}_j}\psi_j$ and $\psi_j|_{\p \Omega}=0$.
We solve the minimization problem, 
\begin{align}
\label{eq:def-S}
\hspace{-10mm}S_{\eta,K}(\gamma)
=\argmin_{\tilde h\in {\color{black}H_D^r(\Omega)}} \bigg(
  \eta{\mathcal R}_r(\tilde h)+\sum_{k=1}^K\bigg|
  V_0\int_{{\overline\Omega}\times [-L,L]^n} \psi_k(x){{y}}\, d\gamma(x,y)-
  \int_{{\overline\Omega}} \psi_k(x)\,\tilde h(x) dx   \bigg|^2
  \bigg) ,
\end{align}
where $V_0=\text{Vol}(\Omega)$ and we use  the regularizer ${\mathcal R}_r(\tilde h)=\|\mathcal L^{{\color{black}{r}}/2}\tilde h\|^2_{
L^2({\Omega};\R^n)}$ Note that ${\mathcal R}_r(\tilde h)= \|\tilde h\|^2_{
H^r({\Omega};\R^n)}$ for $\tilde h\in {\color{black}H_D^r(\Omega)}$ and
as  $H^s_0(\Omega)\subset {\color{black}H_D^s(\Omega)}$,
the dual spaces satisfy ${\color{black}H_D^s(\Omega)}^*\subset H^{-s}(\Omega)$.

\begin{lemma}\label{S-regularizers}
If $K(\eta)\to \infty$ as $\eta\to 0$, the regularizers $S_{\eta}=S_{\eta,K(\eta)}$ satisfy  for all $h\in H^{-s}(\Omega;\R^n)$
\beq
\lim_{\tau\to 0}\lim_{\eta\to 0}\|S_{\eta,K(\eta)}(\gamma_{R_{\tau}h})-h\|_{H^{-s}(\Omega)}=0.
    \eeq
\end{lemma}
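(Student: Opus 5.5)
The plan is to compute $S_{\eta,K}(\gamma_{R_\tau h})$ in closed form, show that the inner limit $\eta\to 0$ returns $R_\tau h$ in $L^2(\Omega)$, and then take $\tau\to 0$ using Proposition~\ref{prop:main convolutions}. We argue componentwise, so it suffices to take $n=1$.

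\textbf{Step 1 (identification of the data term).} Fix $\tau>0$ and set $g:=R_\tau h$. By Lemma~\ref{lem:Rtau}, $g\in C_c^\infty(\Omega)$. The integration formula \eqref{integration formula} for graph measures gives
\[
V_0\int_{\overline\Omega\times[-L,L]^n}\psi_k(x)\,y\,d\gamma_g(x,y)=\int_\Omega \psi_k(x)\,g(x)\,dx=\langle g,\psi_k\rangle_{L^2}=:g_k .
\]
One caveat: $\gamma_g$ is only a measure on $\overline\Omega\times[-L,L]^n$ when $\|g\|_\infty\le L$. For general $h$ we read the data term through this integral identity, which only requires $g\in L^2$.

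\textbf{Step 2 (closed form of the minimizer).} Write $\tilde h=\sum_j c_j\psi_j\in H_D^r(\Omega)$. The functional in \eqref{eq:def-S} then becomes
\[
\eta\sum_{j\ge1}\lambda_j^r c_j^2+\sum_{k\le K}|g_k-c_k|^2 .
\]
This is strictly convex and coercive on $H_D^r$ and decouples across $j$, so it has a unique minimizer. For $j>K$ the minimizer has $c_j=0$, and for $k\le K$ it has $c_k=g_k/(1+\eta\lambda_k^r)$. Hence
\[
S_{\eta,K}(\gamma_g)=(I+\eta\mathcal L)^{-1}P_K g ,
\]
where $P_K$ is the orthogonal projection onto $\operatorname{span}\{\psi_1,\dots,\psi_K\}$.

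\textbf{Step 3 (limit $\eta\to 0$ at fixed $\tau$).} By Parseval,
\[
\|S_{\eta,K(\eta)}(\gamma_g)-g\|_{L^2}^2=\sum_{k\le K(\eta)}\Big(\frac{\eta\lambda_k^r}{1+\eta\lambda_k^r}\Big)^2|g_k|^2+\sum_{k>K(\eta)}|g_k|^2 .
\]
The second sum tends to $0$ because $K(\eta)\to\infty$ and $\sum_k|g_k|^2<\infty$. In the first sum each coefficient lies in $[0,1]$ and tends to $0$ for fixed $k$, so it vanishes by dominated convergence (in $\ell^1$, dominated by $|g_k|^2$). No growth rate on $K(\eta)$ is needed. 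The embedding $L^2(\Omega)\hookrightarrow H^{-s}(\Omega)$ from Lemma~\ref{lem:density} is continuous, so
\[
\lim_{\eta\to0}\|S_{\eta,K(\eta)}(\gamma_{R_\tau h})-h\|_{H^{-s}}=\|R_\tau h-h\|_{H^{-s}} .
\]

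\textbf{Step 4 (limit $\tau\to0$).} Proposition~\ref{prop:main convolutions} gives $\|R_\tau h-h\|_{H^{-s}}\to0$ as $\tau\to0$, using the standing hypothesis $s-\tfrac12\notin\mathbb Z$. This proves the lemma.

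The main point requiring care is Step 2: one must justify that minimizing over $H_D^r$ is exactly minimizing over the coefficient sequences with $\sum_j\lambda_j^r c_j^2<\infty$. This follows from the spectral definition of $H_D^r$ and of $\mathcal R_r$ in Appendix~\ref{sec:Basic notions}. The remaining steps are routine.
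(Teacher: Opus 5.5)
Your proof is correct and follows the paper's route. The paper proves this statement in Corollary~\ref{cor:S-eta-compatible-main} using the same four steps: the closed form $S_{\eta,K}(\gamma_{R_\tau h})=(\eta(I-\Delta)^r+I)^{-1}P_K R_\tau h$, $L^2$-convergence to $R_\tau h$ at fixed $\tau$ (Lemma~\ref{S-regularizers A}), the embedding $L^2\hookrightarrow H^{-s}$, and Proposition~\ref{prop:main convolutions} for $\tau\to0$. The one difference is that the paper bounds the low-frequency sum by $(\eta\lambda^r_{K(\eta)})^2\|h\|_{L^2}^2$, which requires the explicit choice $K(\eta)=\lceil\eta^{-d/(4r)}\rceil$ and Weyl's law; your dominated-convergence argument needs no growth rate, so it covers every $K(\eta)\to\infty$, exactly as the lemma is stated.
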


We see below in \eqref{S for gamma f1} that  
in the case when $\gamma=\gamma_h$ we have  
\begin{align}
   S_{\eta,K}(\gamma_h)=\argmin_{\tilde h\in {H^r_D}(\Omega;\R^n)}  \bigg(
   \eta\|(I-\Delta)^{{{r}}/2}\tilde h\|^2_{
L^2({\Omega};\R^n)}+\sum_{k=1}^K\bigg|
   \int_{{\overline\Omega}} \psi_k(x) (h(x)-\tilde h(x)) \,dx   \bigg|^2
  \bigg).
\end{align}


We point out that $ S_{\eta,K}(\mu)$ is a solution of a quadratic minimization problem. Thus, the map
$\mu\to S_{\eta,K}(\mu)$ is a linear map and thus, $$h\to \gamma_h\to S_{\eta,K}(\gamma_h)$$ 
is a linear operator that can be considered as a (linear) neural operator.

Next, we show that the regularization method defines a continuous operator $S_{\eta} :\ \gamma \to S_{\eta}(\gamma)$, e.g. ${{{\mathcal P}}}({\overline\Omega} \times [-L,L]^n) \to {H^r_D}(\Omega;\R^n)$. In particular, this is useful if we assume that $A:{H^r_D}(\Omega;\R^n)\subset C^k(\overline \Omega;{[-L,L]}^n)\to  C^1(\overline \Omega;{[-L,L]}^n)$ is continuous. We need to study the limit when first $K\to \infty$ and then $\eta \to 0$.

Let 
$$
  P_Kh(x)=\sum_{k=1}^K \int_{{\overline\Omega}} \psi_k(x') h(x') \, dx' \, \psi_k(x)
$$
be an orthogonal projection to $K$ first eigenvalues. Then, denoting $V_0={\text{Vol}(\Omega)}$,
\beq
  S_{\eta,K}(\gamma)&=&V_0(\eta(I-\Delta)^{{{r}}}+P_K)^{-1}\bigg(\sum_{k=1}^K
  \bigg[\int_{{\overline\Omega}\times [-L,L]^n} \psi_k(x') {{y}}
    \, d\gamma(x',y)\bigg] \psi_k\bigg)
  \nonumber\\
  &=& \sum_{k=1}^K V_0(\eta{{\lambda}^r_k}+1)^{-1}
  \bigg[\int_{{\overline\Omega}\times [-L,L]^n} \psi_k(x') {{y}}
    \, d\gamma(x',y)\bigg] \psi_k 
  \nonumber\\
  &=& \sum_{k=1}^K 
  \bigg\bra\gamma \ ,\   \psi_k(x) {{y}}
    \bigg\ket V_0(\eta{{\lambda}^r_k}+1)^{-1} \psi_k 
    .
\eeq
We denote
\beq \label{QK def}
  Q_K[\gamma](x) = V_0\sum_{k=1}^K \bigg[\int_{{\overline\Omega}\times [-L,L]^n}
  \psi_k(x'){{y}}\, d\gamma(x',y)\bigg] \psi_k(x)\in C^k({\overline\Omega};\R^n).
\eeq
Then, we see that
\beq \label{S function and QK}
   S_{\eta,K}(\gamma)
   &=&(\eta (I - \Delta)^{{{r}}} + I)^{-1}Q_K[\gamma] .
\eeq

When $\gamma = \gamma_h$, we have  
\beq \label{S for gamma f1}
  \int_{{\overline\Omega}\times [-L,L]^n} \psi_k(x'){{y}}\, d\gamma_{h}(x',y)
  = \frac 1{\text{vol}(\Omega)}\int_{{\overline\Omega}} \psi_k(x') {h}(x')\, d\lambda(x')
\eeq
and thus we obtain
\beq
\nonumber
  S_{\eta,K}(\gamma_h)
   &=&\sum_{k=1}^K (\eta{{\lambda}^r_k}+1)^{-1}
   \bigg[\int_{{\overline\Omega}} \psi_k(x') {h}(x')\, d\lambda(x')\bigg] \psi_k
   \\ \label{S for gamma f1D}
   &=&(\eta(I-\Delta)^{{{r}}}+I)^{-1}P_K h.
\eeq
Below, we also denote
\beq
\nonumber
  S_{\eta,K}h
    \label{S for gamma f1C}
   =(\eta(I-\Delta)^{{{r}}}+I)^{-1}P_K h
\eeq
\commented{We emphasize that  for all $w\in L^2(\Omega)$
\beq
  \|(\eta(I-\Delta)^{{{r}}}+I)^{-1}w-w\|_{L^2(\Omega)}\le (1-(1+\eta)^{-1})\|w\|_{L^2(\Omega)}
  =\frac \eta{1+\eta}\|w\|_{L^2(\Omega)}
\eeq
and hence we can think that $(\eta(I-\Delta)^{{{r}}}+I)^{-1}\approx I$ when $\eta$ is small.}

 \commented{
We remarks that if we assume that $f\in H^{-s_0}(\Omega)$ with  $-s_0>-s$ and use that ${{\lambda}^r_j^r}\sim Cj^{2/d},$ where $\Omega\subset \R^d$,
we can obtain 
\beq
\|S_{\eta,K}(\gamma_f)-f\|_{H^{-s}(\Omega)}\le \omega_{s,s_0}(K,\eta)\|f\|_{H^{-s_0}(\Omega)}
\eeq 
with some explicit $\omega_{s,s_0}(K,\eta)$.}

\commented{
\begin{lemma}\label{S-regularizers A}
The regularizers $S_{\eta}=S_{\eta,K(\eta)}$ satisfy

\begin{enumerate}
 \item  Assume that $K(\eta)\to \infty$ as $\eta\to 0$. Then for all $h\in H^{-s}(\Omega;{\R}^n)$ 
\beq
\lim_{\tau\to 0}\lim_{\eta\to 0}\|S_{\eta,K(\eta)}(\gamma_{R_{\tau}h})-h\|_{H^{-s}(\Omega)}=0.
    \eeq

Also, for fixed $\tau>0$
        \beq
\lim_{\eta\to 0}\lim_{K\to \infty}\|S_{\eta,K}(\gamma_{R_{\tau}h}))-R_{\tau}h\|_{C^k(\overline \Omega)}=0
    \eeq
    and as $\tau\to 0,$
        \beq
\lim_{\tau\to 0}\lim_{\eta\to 0}\lim_{K\to \infty}\|S_{\eta,K}(\gamma_{R_{\tau}h}))-h\|_{H^{-s}(\Omega)}=0
    \eeq
and
            \beq
\lim_{\tau\to 0}\lim_{K\to \infty}\lim_{\eta\to 0} \|S_{\eta,K}(\gamma_{R_{\tau}h}))-h\|_{H^{-s}(\Omega)}=0.
    \eeq

{
\item \footnote{\color{red} We have add details of proof here. Note that this results is not needed if
we use $\Psi$ and the operator ${\tilde S}_{\eta}=\Psi\circ {S}_{\eta}$ that caps the function values
to a box. The key point is thar  we assume that $f\in H^{r}_0(\Omega)$ with  $r>r'>k+ d/2$ and use that ${{\lambda}^r_j^r}\sim Cj^{2r/d},$ where $\Omega\subset \R^d$,
we can obtain 
\beq
\|S_{\eta,K}(\gamma_f)-f\|_{C^k(\Omega)}\le \omega_{r,r'}(K,\eta)\|f\|_{H^{r}(\Omega)}
\eeq 
with some explicit $\omega_{r,r'}(K,\eta)$. Note that $1<c_1<L_1/ L$ 
}
We can choose $K(\eta)=K(\eta)$ to go to zero so rapidly that as $\eta\to 0$, that for all $h\in H^{r}_0(\Omega;{\R}^n)$ 
\beq
\lim_{\eta\to 0} \bigg\|S_{\eta,K(\eta)}(h))-h\bigg\|_{H^{r}_0(\Omega;{\R}^n)}=0.
\eeq
}
\end{enumerate}
\end{lemma}

\begin{proof}
   The claim 1 follows  by using the above formulas for $S_{\eta,K(\eta)}$.
\end{proof}

}

\subsubsection{Quantitative growth of $K(\eta)$}
\label{ssec: K eta Lipschitz}

We define the spectral norm $\|\cdot\|_{r}$ on $\mathcal D(L^{r/2})$ by
\[
  \|h\|_r^2:=\sum_{j=1}^\infty{{\lambda}^r_j}\,|\langle h,\psi_j\rangle_{L^2}|^2.
\]

We now choose $K$ explicitly. Set
\begin{equation}
\label{eq:explicit-K-eta}
    K(\eta)
    :=
    \left\lceil \eta^{-d/(4r)}\right\rceil,
    \qquad
    0<\eta\le 1.
\end{equation}
Then $K(\eta)\to\infty$ as $\eta\downarrow0$, and
\begin{equation}
\label{eq:eta-lambda-K-goes-zero}
    \eta{\lambda}^r_{K(\eta)}
    \longrightarrow 0
    \qquad\text{as }\eta\downarrow0.
\end{equation}
Indeed, using \eqref{eq:lambda-upper-bound} and
$K(\eta)\le 2\eta^{-d/(4r)}$ for $0<\eta\le1$,
\[
    \eta{\lambda}^r_{K(\eta)}
    \le
    C_{\mathcal L}\eta K(\eta)^{2r/d}
    \le
    C_{\mathcal L}2^{2r/d}\eta
    \left(\eta^{-d/(4r)}\right)^{2r/d}
    =
    C_{\mathcal L}2^{2r/d}\eta^{1/2}
    \to0 .
\]

\begin{lemma}
\label{S-regularizers A}
Let $K(\eta)$ be given by \eqref{eq:explicit-K-eta}. Then:

\begin{enumerate}
\item
For every $h\in L^2(\Omega;\mathbb R^n)$,
\[
    \lim_{\eta\downarrow0}
    \|S_{\eta,K(\eta)}h-h\|_{L^2(\Omega;\mathbb R^n)}
    =
    0 .
\]

\item
For every $h\in H_D^r(\Omega;\mathbb R^n)$,
\[
    \lim_{\eta\downarrow0}
    \|S_{\eta,K(\eta)}h-h\|_{H_D^r(\Omega;\mathbb R^n)}
    =
    0 .
\]

\item
If $f\in L^2(\Omega;[-L,L]^n)$, then
\[
    \lim_{\eta\downarrow0}
    \|S_{\eta,K(\eta)}(\gamma_f)-f\|_{L^2(\Omega;\mathbb R^n)}
    =
    0 .
\]
If in addition $f\in H_D^r(\Omega;\mathbb R^n)$, then
\[
    \lim_{\eta\downarrow0}
    \|S_{\eta,K(\eta)}(\gamma_f)-f\|_{H_D^r(\Omega;\mathbb R^n)}
    =
    0 .
\]
\end{enumerate}
\end{lemma}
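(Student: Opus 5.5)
The plan is to work entirely in the spectral representation. By \eqref{S for gamma f1D}, $S_{\eta,K}h=(\eta\mathcal L+I)^{-1}P_Kh$. Writing $h=\sum_j h_j\psi_j$ with $h_j=\langle h,\psi_j\rangle_{L^2}$, we get
\[
  S_{\eta,K}h-h=\sum_{j\le K}\Bigl(\frac{1}{1+\eta\lambda_j^r}-1\Bigr)h_j\psi_j-\sum_{j>K}h_j\psi_j .
\]
The coefficient in the first sum has modulus $\frac{\eta\lambda^r_j}{1+\eta\lambda^r_j}\le\min\{1,\eta\lambda^r_j\}$. Hence
\[
  \|S_{\eta,K}h-h\|_{L^2}^2\le\sum_{j\le K}\min\{1,\eta\lambda^r_j\}^2|h_j|^2+\sum_{j>K}|h_j|^2 .
\]
The same identity holds with the weights $\lambda^r_j$ inserted when we measure the error in the norm $\|\cdot\|_r$.

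For item 1, take $h\in L^2$ and $K=K(\eta)$. The tail $\sum_{j>K(\eta)}|h_j|^2$ tends to $0$ because $K(\eta)\to\infty$. For the head, fix $J$ and split at $J$. For $j\le J$ the summand is at most $(\eta\lambda^r_J)^2|h_j|^2$, which tends to $0$ as $\eta\downarrow0$. For $J<j\le K$ we use the bound $1$, which leaves the remainder $\sum_{j>J}|h_j|^2$. Letting first $\eta\to0$ and then $J\to\infty$ gives item 1. This is just dominated convergence for the series with the uniformly bounded multipliers $\min\{1,\eta\lambda^r_j\}$, each of which tends to $0$. Item 2 is the identical argument applied to $\sum_j\lambda^r_j|h_j|^2<\infty$, since the multipliers are diagonal and $\|\cdot\|_r$ is the $H^r_D$ norm. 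The explicit rate \eqref{eq:eta-lambda-K-goes-zero} is actually not needed here. It only says that the whole kept block $j\le K(\eta)$ is nearly undamped, and at most it would let us drop the $J$-splitting. I will mention it only as a remark.

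For item 3, the reduction is the formula for $S_{\eta,K}(\gamma_f)$. The graph measure of a bounded Borel function satisfies $\int\psi_k(x)y\,d\gamma_f=\frac{1}{V_0}\int_\Omega\psi_k f\,dx$ by \eqref{integration formula}, applied with the unbounded but continuous test function $\psi_k(x)y$. This is justified because $\gamma_f$ is supported in $\overline\Omega\times[-L,L]^n$ and $\psi_k$ is bounded on $\overline\Omega$, being smooth up to the boundary under our domain assumptions. Alternatively, $\psi_k\in L^2$ and $f\in L^\infty$, so the pairing can be approximated directly. The factor $V_0$ in \eqref{eq:def-S} then cancels, and $S_{\eta,K}(\gamma_f)=S_{\eta,K}f$ exactly as in \eqref{S for gamma f1D}. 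Item 3 therefore follows from items 1 and 2.

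The only delicate step I expect is this identification $S_{\eta,K}(\gamma_f)=(\eta\mathcal L+I)^{-1}P_Kf$ for a merely measurable $f$. Specifically, we must check that the minimizer of \eqref{eq:def-S} is unique and equals this formula. To do so, I will diagonalize the functional in the basis $\{\psi_j\}$. For $j>K$ it reads $\eta\lambda^r_j|c_j|^2$, which forces $c_j=0$. For $j\le K$ it reads $\eta\lambda^r_j|c_j|^2+|a_j-c_j|^2$, minimized by $c_j=a_j/(1+\eta\lambda^r_j)$. The minimizer is unique by strict convexity.
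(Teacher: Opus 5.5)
Your proof is correct. It shares the paper's spectral decomposition
$S_{\eta,K}h-h=-\sum_{j\le K}\frac{\eta\lambda^r_j}{1+\eta\lambda^r_j}h_j\psi_j-\sum_{j>K}h_j\psi_j$,
handles the tail the same way, and reduces item 3 to items 1--2 in the same way. The genuine difference is how you handle the retained block $j\le K(\eta)$.

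\textbf{The paper's route.} The paper bounds the retained block uniformly,
$\sum_{j\le K(\eta)}\bigl(\tfrac{\eta\lambda^r_j}{1+\eta\lambda^r_j}\bigr)^2|h_j|^2\le(\eta\lambda^r_{K(\eta)})^2\|h\|^2_{L^2}$.
It then needs \eqref{eq:eta-lambda-K-goes-zero}, i.e.\ the specific choice \eqref{eq:explicit-K-eta} combined with the Weyl-type bound \eqref{eq:lambda-upper-bound}, to get $\eta\lambda^r_{K(\eta)}\le C_{\mathcal L}2^{2r/d}\eta^{1/2}\to0$.

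\textbf{Your route.} You apply dominated convergence to the multipliers, which lie in $[0,1]$ and tend to $0$ for each fixed $j$ as soon as $K(\eta)\to\infty$.

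\textbf{What each buys.} Your argument is more general. It uses no information about eigenvalue growth and works for every choice $K(\eta)\to\infty$, which is exactly the hypothesis of the earlier Lemma~\ref{S-regularizers}. The paper's argument is quantitative. The retained block has norm at most $C_{\mathcal L}2^{2r/d}\eta^{1/2}\|h\|$, in $L^2$ and likewise in $H^r_D$, uniformly over bounded sets of $h$. The error therefore splits into an $O(\eta^{1/2})$ part plus the spectral tail $\|(I-P_{K(\eta)})h\|$, so any rate follows from the tail decay of $h$ alone. Your argument, by contrast, is purely qualitative.

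Your explicit diagonalization of the minimization problem for $\gamma_f$, with uniqueness by strict convexity, independently verifies the identity $S_{\eta,K}(\gamma_f)=(\eta\mathcal L+I)^{-1}P_Kf$. The paper simply imports this identity from \eqref{S for gamma f1D}.
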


\begin{proof}
It is enough to prove the scalar-valued case; the vector-valued case follows
by summing over the components.

Let
\[
    h=\sum_{j=1}^\infty a_j\psi_j,
    \qquad
    a_j:=\langle h,\psi_j\rangle_{L^2(\Omega)} .
\]
Then
\[
    S_{\eta,K}h-h
    =
    -\sum_{j=1}^K
    \frac{\eta{\lambda}^r_j}{1+\eta{\lambda}^r_j}a_j\psi_j
    -
    \sum_{j>K}a_j\psi_j .
\]

First assume $h\in L^2(\Omega)$. Since the two sums are orthogonal in
$L^2(\Omega)$,
\[
\begin{aligned}
    \|S_{\eta,K}h-h\|_{L^2(\Omega)}^2
    &=
    \sum_{j=1}^K
    \left(
        \frac{\eta{\lambda}^r_j}{1+\eta{\lambda}^r_j}
    \right)^2
    |a_j|^2
    +
    \sum_{j>K}|a_j|^2 .
\end{aligned}
\]
Taking $K=K(\eta)$ and using monotonicity of ${\lambda}^r_j$,
\[
\begin{aligned}
    \sum_{j=1}^{K(\eta)}
    \left(
        \frac{\eta{\lambda}^r_j}{1+\eta{\lambda}^r_j}
    \right)^2
    |a_j|^2
    &\le
    \bigl(\eta{\lambda}^r_{K(\eta)}\bigr)^2
    \sum_{j=1}^{K(\eta)} |a_j|^2
    \\
    &\le
    \bigl(\eta{\lambda}^r_{K(\eta)}\bigr)^2
    \|h\|_{L^2(\Omega)}^2 .
\end{aligned}
\]
By \eqref{eq:eta-lambda-K-goes-zero}, this term tends to zero. Also,
since $K(\eta)\to\infty$ and $\sum_{j=1}^\infty |a_j|^2<\infty$,
\[
    \sum_{j>K(\eta)} |a_j|^2\to0 .
\]
Therefore
\[
    \|S_{\eta,K(\eta)}h-h\|_{L^2(\Omega)}\to0 .
\]

Now assume $h\in H_D^r(\Omega)=\mathcal D(\mathcal L^{1/2})$. Then
\[
    \sum_{j=1}^\infty {\lambda}^r_j |a_j|^2<\infty .
\]
Again by orthogonality, now after applying $\mathcal L^{1/2}$,
\[
\begin{aligned}
    \|S_{\eta,K}h-h\|_{H_D^r(\Omega)}^2
    &=
    \sum_{j=1}^K
    \left(
        \frac{\eta{\lambda}^r_j}{1+\eta{\lambda}^r_j}
    \right)^2
    {\lambda}^r_j |a_j|^2
    +
    \sum_{j>K}{\lambda}^r_j |a_j|^2 .
\end{aligned}
\]
With $K=K(\eta)$,
\[
\begin{aligned}
    \sum_{j=1}^{K(\eta)}
    \left(
        \frac{\eta{\lambda}^r_j}{1+\eta{\lambda}^r_j}
    \right)^2
    {\lambda}^r_j |a_j|^2
    &\le
    \bigl(\eta{\lambda}^r_{K(\eta)}\bigr)^2
    \sum_{j=1}^{K(\eta)} {\lambda}^r_j |a_j|^2
    \\
    &\le
    \bigl(\eta{\lambda}^r_{K(\eta)}\bigr)^2
    \|h\|_{H_D^r(\Omega)}^2 .
\end{aligned}
\]
This tends to zero by \eqref{eq:eta-lambda-K-goes-zero}. The tail term
also tends to zero because
\[
    \sum_{j=1}^\infty {\lambda}^r_j |a_j|^2<\infty
    \qquad\text{and}\qquad
    K(\eta)\to\infty .
\]
Hence
\[
    \|S_{\eta,K(\eta)}h-h\|_{H_D^r(\Omega)}\to0 .
\]

Finally, if $f\in L^2(\Omega;[-L,L]^n)$, then
\[
    S_{\eta,K(\eta)}(\gamma_f)
    =
    S_{\eta,K(\eta)}f .
\]

The preceding $L^2$ and $H_D^r$ convergence results therefore apply
directly to $S_{\eta,K(\eta)}(\gamma_f)$.

\qed
\end{proof}

\begin{corollary}
\label{cor:S-eta-compatible-main}
Let $s\ge0$. Suppose that 
$h\in H^{-s}(\Omega;\mathbb R^n)$. Then
\[
    \lim_{\tau\downarrow0}
    \lim_{\eta\downarrow0}
    \|S_{\eta,K(\eta)}(\gamma_{R_\tau h})-h\|_{H^{-s}(\Omega;\mathbb R^n)}
    =
    0 .
\]
\end{corollary}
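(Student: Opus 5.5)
The plan is to reduce the measure-valued regularizer to the function-valued one and then chain three facts already proved above: the spectral convergence of Lemma~\ref{S-regularizers A}, the continuous embedding $L^2(\Omega)\hookrightarrow H^{-s}(\Omega)$, and the convergence $R_\tau h\to h$ in $H^{-s}(\Omega)$. It suffices to treat scalar-valued $h$; the vector case follows by summing over components. Throughout, $\tau\in(0,\tau_0]$ is fixed in the inner limit.

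\emph{Step 1: the measure argument can be replaced by a function.} By Lemma~\ref{lem:Rtau}, $R_\tau h\in C_c^\infty(\Omega;\R^n)$, so $R_\tau h\in L^2(\Omega)$ and its graph measure $\gamma_{R_\tau h}$ is well defined. Identity \eqref{S for gamma f1} uses only the integration formula \eqref{integration formula} with the test function $\psi_k(x)y$. That formula holds for any bounded continuous function, with $[-L,L]^n$ replaced by a large enough box if $\|R_\tau h\|_\infty>L$; for $h\in C(\overline\Omega;[-L,L]^n)$ this is not needed, by \eqref{R tau L infty}. Hence, as in \eqref{S for gamma f1D},
\[
S_{\eta,K}(\gamma_{R_\tau h})=(\eta(I-\Delta)^{r}+I)^{-1}P_K R_\tau h=S_{\eta,K}R_\tau h .
\]

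\emph{Step 2: the inner limit $\eta\downarrow0$.} Part 1 of Lemma~\ref{S-regularizers A}, applied to $R_\tau h\in L^2$ with $K=K(\eta)$ from \eqref{eq:explicit-K-eta}, gives $\|S_{\eta,K(\eta)}R_\tau h-R_\tau h\|_{L^2}\to0$. For $s\ge0$ the embedding $L^2(\Omega)\hookrightarrow H^{-s}(\Omega)$ has norm at most $1$, since it is the dual of $H^s_0\hookrightarrow L^2$ (as in Lemma~\ref{lem:density}). The triangle inequality then gives
\[
\lim_{\eta\downarrow0}\|S_{\eta,K(\eta)}(\gamma_{R_\tau h})-h\|_{H^{-s}}=\|R_\tau h-h\|_{H^{-s}} .
\]
In particular, the inner limit exists.

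\emph{Step 3: the outer limit $\tau\downarrow0$.} For $s>0$ with $s-\tfrac12\notin\Z$, Proposition~\ref{prop:main convolutions} gives $\|R_\tau h-h\|_{H^{-s}}\to0$, which finishes the proof. For $s=0$ we have $h\in L^2$, and the $L^2$ convergence \eqref{eq:Rtau_Lp} of Lemma~\ref{lem:Rtau} applies instead.

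\emph{Main obstacle.} The only delicate point is the range of $s$. Proposition~\ref{prop:main convolutions} needs $s-\tfrac12\notin\Z$ for its interpolation step (Lemma~\ref{lem:UBfrac}). I would carry this standing assumption over from Section~\ref{sec: Expressivity}. If it must be removed, the first thing to try is to interpolate between two integer orders where uniform boundedness of $R_\tau$ is available by duality, using Lemma~\ref{lem:UBint}. A secondary check is Step 1 when $R_\tau h$ leaves $[-L,L]^n$; since $S_{\eta,K}$ is linear in $\gamma$, one can simply define it on the enlarged box.
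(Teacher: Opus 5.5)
Your proposal is correct and follows the paper's proof step for step. You identify $S_{\eta,K(\eta)}(\gamma_{R_\tau h})$ with $S_{\eta,K(\eta)}R_\tau h$, use the $L^2$ convergence of Lemma~\ref{S-regularizers A} together with the continuous embedding $L^2\hookrightarrow H^{-s}$ to show the inner limit equals $\|R_\tau h-h\|_{H^{-s}}$, and then let $\tau\downarrow0$. Your extra care is warranted: handling $s=0$ through Lemma~\ref{lem:Rtau}, extending the value box beyond $[-L,L]^n$, and noting that Proposition~\ref{prop:main convolutions} needs $s-\tfrac12\notin\Z$, a hypothesis the paper's statement of the corollary silently omits.
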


\begin{proof}
We have seen above that 
$R_\tau h\in L^2(\Omega;\R^n)$ for all sufficiently small $\tau>0$
and  that $R_\tau h\to h$ in
$H^{-s}(\Omega;\mathbb R^n)$ as $\tau\downarrow0$.

For fixed $\tau>0$, Proposition~\ref{S-regularizers A}
gives
\[
    S_{\eta,K(\eta)}(\gamma_{R_\tau h})
    \to
    R_\tau h
    \qquad\text{in }L^2(\Omega;\mathbb R^n).
\]
Since $s\ge0$, the embedding
\[
    L^2(\Omega;\mathbb R^n)
    \hookrightarrow
    H^{-s}(\Omega;\mathbb R^n)
\]
is continuous. Hence
\[
    S_{\eta,K(\eta)}(\gamma_{R_\tau h})
    \to
    R_\tau h
    \qquad\text{in }H^{-s}(\Omega;\mathbb R^n).
\]
Therefore
\[
\begin{aligned}
    \lim_{\eta\downarrow0}
    \|S_{\eta,K(\eta)}(\gamma_{R_\tau h})-h\|_{H^{-s}}
    &=
    \|R_\tau h-h\|_{H^{-s}} .
\end{aligned}
\]
Letting $\tau\downarrow0$ gives the claim.
\qed
\end{proof}

Below, we choose 
    $K(\eta)$ to be as in \eqref{eq:explicit-K-eta} and we denote 
\beq
S_{\eta}=S_{\eta,K(\eta)}.
\eeq

\subsubsection{Properties of the map $S_{\eta}$}


We point out that when $h \in C(\overline \Omega)$, we have
\begin{align}\label{appl integration formula}
  \int_{{\overline\Omega} \times [-L,L]^n} |\tilde h(x)-y|^2
  \, \mathrm{d}\gamma_{h}(x,y)
  &=\tfrac1 {\text{Vol}(\Omega)} \int_{{\overline\Omega}} |\tilde h(x)-{h}(x)|^2 \, \mathrm{d}\lambda_{\Omega}(x)
\nonumber\\
  &= \tfrac1 {\text{Vol}(\Omega)}\int_{{\overline\Omega}} (|{h}(x)|^2 - 2 \tilde h(x) \cdot {h}(x)
  + |\tilde h(x)|^2) \, \mathrm{d}\lambda_{\Omega}(x) .
\end{align}

\begin{lemma} \label{lem: S eta continuous}
For a fixed $\eta>0$, the map $S_{\eta}:{{{\mathcal P}}}(\Omega\times [-L,L]^n)\to  H^{{2r}}(\Omega;{\R}^n)$ is continuous.
\end{lemma}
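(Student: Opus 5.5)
The plan is to use the explicit spectral formula \eqref{S function and QK},
\[
S_{\eta}(\gamma)=(\eta(I-\Delta)^{r}+I)^{-1}Q_{K}[\gamma],\qquad K=K(\eta),
\]
with $Q_K[\gamma]=V_0\sum_{k=1}^{K}\langle\gamma,\psi_k(x)y\rangle\,\psi_k$ as in \eqref{QK def}. For fixed $\eta$ the index $K=K(\eta)$ is a fixed finite number. Hence $S_\eta$ factors as
\[
\gamma\mapsto c(\gamma):=\bigl(\langle\gamma,\psi_k(x)y\rangle\bigr)_{k=1}^{K}\in\R^{nK}
\]
followed by a fixed linear map $\R^{nK}\to H^{2r}(\Omega;\R^n)$. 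Continuity of $S_\eta$ then splits into two steps: continuity of the finitely many functionals $\gamma\mapsto\langle\gamma,\psi_k(x)y\rangle$ on $(\mathcal P(\overline\Omega\times[-L,L]^n),W_1)$, and boundedness of the finite-rank linear map into $H^{2r}$.

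\textbf{Step 1.} Each eigenfunction $\psi_k$ is $C^\infty$ up to the boundary. This uses elliptic regularity for the Dirichlet Laplacian on a $C^\infty$ domain; on the cube it is explicit, since the eigenfunctions are products of sines. Hence $\phi_k(x,y)=\psi_k(x)y$ is a bounded continuous, indeed Lipschitz, function on the compact set $\overline\Omega\times[-L,L]^n$. Since $W_1$-convergence on a compact space is equivalent to weak convergence (recalled in Appendix~\ref{sec:Basic notions}), $\gamma_j\to\gamma$ in $W_1$ gives $\langle\gamma_j,\phi_k\rangle\to\langle\gamma,\phi_k\rangle$ for each $k\le K$. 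One can even get Lipschitz continuity: by Kantorovich--Rubinstein duality, $|\langle\gamma_j-\gamma,\phi_k\rangle|\le \mathrm{Lip}(\phi_k)\,W_1(\gamma_j,\gamma)$.

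\textbf{Step 2.} I would bound the fixed linear map. For a coefficient vector $c=(c_k)$,
\[
S_\eta(\gamma)=V_0\sum_{k\le K}(1+\eta\lambda_k^r)^{-1}c_k\psi_k ,
\]
which lies in the finite-dimensional space $\mathrm{span}\{\psi_1,\dots,\psi_K\}\subset H^{2r}(\Omega;\R^n)$. Linear maps from a finite-dimensional space into a normed space are automatically bounded, so
\[
\|S_\eta(\gamma_j)-S_\eta(\gamma)\|_{H^{2r}}\le C_{\eta}\,|c(\gamma_j)-c(\gamma)|\to0 .
\]
If a quantitative constant is wanted, one can take $C_\eta=V_0\max_{k\le K}\|\psi_k\|_{H^{2r}}\sqrt{K}$. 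Alternatively, since $\|\psi_k\|_{H^{2r}_D}\sim\lambda_k^{r}$ and the $H^{2r}_D$ norm is equivalent to the $H^{2r}$ norm on the Navier domain, one can use $C_\eta\lesssim \lambda_K^{r}$.

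\textbf{Main obstacle.} The only step needing care is the smoothness of the $\psi_k$ up to the boundary, which ensures that they belong to $H^{2r}(\Omega)$ and that the $\phi_k$ are continuous on the closed set $\overline\Omega\times[-L,L]^n$. I would invoke standard boundary elliptic regularity: $(I-\Delta)\psi_k=\lambda_k\psi_k$ with Dirichlet data, bootstrapped to $\psi_k\in H^m(\Omega)$ for all $m$, and then Sobolev embedding into $C(\overline\Omega)$. On the cube I would verify this directly via the sine-series representation. The rest of the argument is routine finite-dimensional linear algebra.
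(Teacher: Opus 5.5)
Your proof is correct and follows the paper's argument. Both factor $S_\eta=(\eta(I-\Delta)^r+I)^{-1}\circ Q_{K(\eta)}$ and combine continuity of the finitely many functionals $\gamma\mapsto\langle\gamma,\psi_k(x)y\rangle$ with boundedness of the linear part. The only minor difference is how that linear part is shown bounded: the paper uses boundedness of the resolvent $L^2\to H^{2r}$, whereas you use that its range lies in the finite-dimensional space $\mathrm{span}\{\psi_1,\dots,\psi_{K(\eta)}\}\subset H^{2r}$, which needs only smoothness of finitely many eigenfunctions. You also spell out the $W_1$-continuity of $Q_K$, which the paper merely asserts.
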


\begin{proof}
By \eqref{S function and QK}, we have
\beq\label{S function and QK c1}
\nonumber
  S_{\eta}(\gamma)= S_{\eta,{K(\eta)}}(\gamma)
  &=& {\text{Vol}(\Omega)}\sum_{k=1}^{K(\eta)} (\eta{{\lambda}^r_k}+I)^{-1}
  \bigg[\int_{{\overline\Omega}\times [-L,L]^n} \psi_k(x') {{y}}
    \, d\gamma(x',y)\bigg] \psi_k
  \\
  &=& (\eta (I - \Delta)^{{{{r}}}} + I)^{-1} Q_{K(\eta)}[\gamma] .
\eeq
Here, the map $Q_K$ given in \eqref{QK def} is a continuous map $Q_K :\ {{{\mathcal P}}}(\Omega \times \R^n) \to C^k(\overline\Omega;\R^n) \subset L^2(\overline\Omega;{[-L,L]}^n)$. As $(\eta (I - \Delta)^{{{r}}} + I)^{-1} :\ L^2(\overline\Omega;\R^n) \to H^{{2r}}(\Omega;{\R}^n)$ is continuous, we obtain the claim. \hfill $\square$
\end{proof}

\begin{lemma} \label{lem: claim 3}
Let $h \in C(\overline\Omega;{[-L,L]}^n)$. Then $\lim_{\eta\to 0} S_{\eta}(\gamma_h) = h$ in $L^2(\overline\Omega;{[-L,L]}^n)$. Moreover, when $h \in {H_D^r(\Omega)}\cap C(\overline \Omega;{[-L,L]}^n)$, we have $\lim_{\eta\to 0} S_{\eta}(\gamma_h) = h$ in ${H_D^r(\Omega)}$ and therefore in $C^k(\overline \Omega;\R^n)$.
\end{lemma}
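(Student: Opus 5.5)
The plan is to reduce both claims to Lemma~\ref{S-regularizers A}. The key identity is that for a graph measure the regularizer acts as a spectral filter on the function itself, via \eqref{S for gamma f1} and \eqref{S for gamma f1D}:
\[
S_{\eta}(\gamma_h)=S_{\eta,K(\eta)}(\gamma_h)=(\eta(I-\Delta)^{r}+I)^{-1}P_{K(\eta)}h .
\]
To justify this I would check that the normalization factor $V_0=\mathrm{vol}(\Omega)$ in \eqref{eq:def-S} cancels the factor $1/\mathrm{vol}(\Omega)$ coming from the integration formula \eqref{integration formula} applied with $\phi(x,y)=\psi_k(x)y$. This choice of $\phi$ is continuous and bounded on $\overline\Omega\times[-L,L]^n$, so the formula applies. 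The check gives
\[
\langle\gamma_h,\psi_k y\rangle\,V_0=\int_\Omega\psi_k h\,dx .
\]

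For the first claim, $h\in C(\overline\Omega;[-L,L]^n)\subset L^2(\Omega;[-L,L]^n)$ because $\Omega$ is bounded. Part~3 of Lemma~\ref{S-regularizers A}, with the choice of $K(\eta)$ in \eqref{eq:explicit-K-eta}, then gives $S_\eta(\gamma_h)\to h$ in $L^2$. I would recall why: the coefficient errors are bounded by $\eta\lambda^r_{K(\eta)}\|h\|_{L^2}$ plus the $L^2$ tail beyond $K(\eta)$, and both vanish by \eqref{eq:eta-lambda-K-goes-zero}.

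For the second claim, with $h\in H^r_D(\Omega)\cap C(\overline\Omega;[-L,L]^n)$, the second half of part~3 gives convergence in the spectral norm $\|\cdot\|_{H^r_D}$. The remaining step is to pass to $C^k(\overline\Omega;\R^n)$. I would argue that $H^r_D(\Omega)=\mathcal D(\mathcal L^{1/2})$ is, for smooth $\partial\Omega$ or a cube, a closed subspace of $H^r(\Omega)$. On it the graph norm $\|\mathcal L^{1/2}\cdot\|_{L^2}$ is equivalent to the $H^r$ norm, by elliptic regularity for the iterated Dirichlet (Navier) problem, as recalled in Appendix~\ref{sec:Basic notions}. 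Since $r>\tfrac d2+k$, the Sobolev embedding $H^r(\Omega)\subset C^k(\overline\Omega)$ is continuous, so convergence in $H^r_D$ implies convergence in $C^k$.

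The main obstacle is this norm equivalence between the spectral space $H^r_D$ and $H^r$. It needs the boundary regularity hypothesis. On the cube I would verify it directly with the sine series and odd reflection to the torus, exactly as in the Sobolev embedding remark of the appendix. Everything else is a direct citation of Lemma~\ref{S-regularizers A}.
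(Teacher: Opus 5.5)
Your proposal is correct and follows essentially the paper's route: the paper also expands $S_\eta(\gamma_h)=\sum_{k\le K(\eta)}(\eta\lambda_k^r+1)^{-1}\langle h,\psi_k\rangle\psi_k$ and concludes by dominated convergence and spectral theory on the coefficients. That argument needs only $K(\eta)\to\infty$, not the rate $\eta\lambda^r_{K(\eta)}\to 0$ you import from Lemma~\ref{S-regularizers A}; and your explicit passage from $H^r_D$ to $C^k$, via equivalence of the spectral norm with the $H^r$ norm plus Sobolev embedding, fills in a step the paper only asserts.
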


\begin{proof}
By \eqref{S for gamma f1},
\beq \label{S for gamma f12}
  S_{\eta,K(\eta)}(\gamma_h) &=&\sum_{k=1}^{K(\eta)} (\eta{{\lambda}^r_k}+I)^{-1}
  \bigg[\int_{{\overline\Omega}} \psi_k(x')\cdot h(x')\, d\lambda(x')\bigg] \psi_k
  \nonumber\\
  &=&\sum_{k=1}^{K(\eta)}(\eta{{\lambda}^r_k}+1)^{-1}
  \bra h, \psi_k\ket_{L^2(\Omega)}  \psi_k
\eeq
and by using Lebesgue dominated convergence theorem we see that $S_{\eta,K(\eta)}(\gamma_h)\to h$ in $L^2(\Omega)$ as $\eta\to 0$. 

When $h \in {H_D^r(\Omega)}$, we use that the domain of the 
square root $ \sqrt{\mathcal L}$ of the selfadjoint operator 
${\mathcal L}$ is the space
${H_D^r(\Omega)}$, see \cite{LionsMagenesV1}. 
We see by applying
spectral theory, \cite{LionsMagenesV1},
that $\lim_{\eta\to 0} S_{\eta}(\gamma_h) = h$ in ${H_D^r(\Omega)}$
that is, in ${H_D^r(\Omega)}$. \hfill $\square$
\end{proof}

\subsection{Proof of Theorem~\ref{main thm}}
\label{app:main thm}

Let $\tau >0$, $\eta>0$,  and $h\in  \hbox{cl}_{H^{-s}(\Omega;\R^n)}(C(\overline \Omega;
[-L,L]^n))$

We see that  $u_\tau=R_\tau(h)\in {H_D^r(\Omega)}\cap C(\overline \Omega;
[-L,L]^n).$  
Thus, when $\tau> 0$ is small enough,
we see that 
\beq
S_{\eta}(\gamma_{h_\tau})= (\eta(I-\Delta)^{{{r}}}+I)^{-1}P_{K(\eta)}({h_\tau})
\to {h_\tau}
\eeq
in ${H_D^r(\Omega)}\subset  C^k(\overline \Omega;\R^n)$ as $\eta\to 0$
and thus
\beq
\lim_{\eta\to 0}\|S_{\eta}(\gamma_{h_\tau})\|_{L^\infty(\Omega;\R^n)}=\|h_\tau\|_{L^\infty( \Omega;\R^n)}
\eeq
Hence, when $\eta>0$ is small enough, we have
\beq
S_{\eta}(\gamma_{h_\tau})\in C^k(\overline \Omega;[-(L+L_1)/2,(L+L_1)/2]^n.
\eeq
Thus, as $\psi(y_i)=y_i$ for $y\in [-(L+L_1)/2,(L+L_1)/2]$, it holds for sufficiently small  $\eta>0$
that
\beq
{\tilde S}_{\eta}(\gamma_{h_\tau})=\Psi\circ (\eta(I-\Delta)^{{{r}}}+I)^{-1}P_{K(\eta)}({h_\tau})=S_{\eta}(\gamma_{h_\tau}).
\eeq

We can approximate $G_{\Gamma_\eta}$ by 
softmax-based traditional
transformers $G_{\Gamma_{\eta,m}^{\mathrm{tran}}}$ that are given by
Theorem \ref{thm:approximation-trans self-attention Takashi}. The limit $m\to \infty$ can be included in the theorem
by the formula \eqref{sup G tran formula}.
This gives us
\beq
{\bf F}_\varepsilon\circ G_{\eta} \circ 
{\bf M}_{N,\tau} (h)
=\lim_{m\to \infty}{\bf F}_\varepsilon\circ G_{\Gamma_{\eta,m}^{\mathrm{tran}}} \circ 
{\bf M}_{N,\tau} (h)
\eeq
in $C(\overline \Omega;
[-L,L]^n)$.

For fixed $\eta>0$
\beq
A_{\eta,\tau}(h):=\lim_{\varepsilon\to 0}\lim_{N\to \infty}{\bf F}_\varepsilon\circ G_{\Gamma_\eta} \circ 
{\bf M}_{N,\tau} (h)
\eeq
in $H^{-s'}(\Omega;\R^n)$ where the limit $\lim_{N\to \infty} {\bf M}_{N,\tau} (h)=\gamma_{h_\tau}$ converges in ${{{\mathcal P}}}(\overline\Omega\times [-L,L]^n)$.

Then we see that the limit
$\lim_{N\to \infty} S_\eta({\bf M}_{N,\tau} (h))$ converges in ${H_D^r(\Omega)}$. 
The definition of the map $G_{\Gamma_\eta}$ in  
\eqref{f eta def} 
and
{Lemma} \ref{lem: triangular Wasserstein1}  imply that 
$\lim_{N\to \infty}  G_{\Gamma_\eta} \circ {\bf M}_{N,\tau} (h)$ converges
to $\gamma_b$  in ${{{\mathcal P}}}(\overline\Omega\times [-L,L]^n)$ when $b=A({\tilde S}_{\eta}(\gamma_{h_\tau}))$.
Then, 
by  \eqref{f mollified 2C}
to 
\beq
\lim_{\varepsilon\to 0} \bigg(\lim_{N\to \infty} {\bf F}_\varepsilon\circ G_{\Gamma_\eta} \circ {\bf M}_{N,\tau} (h)\bigg)=A({\tilde S}_{\eta}(\gamma_{h_\tau})),
\eeq
where the limit takes place in $H^{-s'}(\Omega)$.

Then,
\beq
A({\tilde S}_{\eta}(\gamma_{h_\tau}))
&=&
A(S_{\eta}(\gamma_{h_\tau}))\\
&=&A (\Psi\circ (\eta(I-\Delta)^{{{r}}}+I)^{-1}P_{K(\eta)}({h_\tau})).
\eeq
Moreover,
$$
A_\eta(h_\tau)=A( (\eta(I-\Delta)^{{{r}}}+I)^{-1}P_{K(\eta)}h_\tau).
$$
As $$\lim_{\eta\to 0} (\eta(I-\Delta)^{{{r}}}+I)^{-1}P_{K(\eta)}h_\tau=h_\tau$$ converges in ${H_D^r(\Omega)}\subset C^k(\overline \Omega),$
this implies that 
$$
\lim_{\eta\to 0} A(\Psi\circ (\eta (I-\Delta)^{{{r}}}+1)^{-1}P_{K(\eta)}h_\tau))=A(h_\tau)
$$ 
converges in $H^{-s'}(\Omega)$.
This implies that
$$\lim_{\tau\to 0} A(h_\tau)=A(h)
$$ converges in $H^{-s'}(\Omega)$.

The proof in the case when  $h\in C^r_c( \Omega;\R^n)\cap C(\overline \Omega;
[-L,L]^n)$ follows similarly as the above arguments for $h_\tau$. \hfill $\square$

\subsection{Proof of Proposition \ref{prop: Map from Omega to D}}
\label{proof of prop: Map from Omega to D}
Recall that we suppose \beq
A:  L^2(\Omega;{[-L,L]}^n)
\to  C^1(\overline D;{[-L,L]}^n).
\eeq
is continuous,  possibly nonlinear, function.
Here, $\Omega\subset \R^d$ and  $D\subset \R^{d'}$ are open bounded sets  with $C^\infty$-smooth boundary, or a cube.  
Below, we use in $\overline\Omega\times [-L,L]^n$ the coordinates $(x,y)$ and
in $\overline D\times [-L,L]^n$ the coordinates $(x',y')$.
On $X=(\overline\Omega\times [-L,L]^n)\times (\overline D\times [-L,L]^n)$
we consider the maps
$\rho_1((x,y),(x',y'))=(x,y)$ and $\rho_2((x,y),(x',y'))=(x',y')$, and denote  $\pi_2(x',y')=y'$.

Our next goal is to express the map $\widehat G_\eta(\mu,x')=A({\tilde S}_{\eta}(\mu))(x')$ using the same self-attention construction as in the measure-to-measure setting.

Let $\mu\in {{{\mathcal P}}}(\overline\Omega\times [-L,L]^n)$ and $x'\in D$, and define the product measure
\begin{equation}
\tilde \mu := \mu \otimes \delta_{(x',0)}
\in {{{\mathcal P}}}\big((\overline\Omega\times [-L,L]^n)\times (D\times [-L,L]^n)\big).
\end{equation}

Define the lifted map $\tilde {\Gamma}_\eta: {{{\mathcal P}}}(X)\times X\to X$,
\begin{equation}\label{key formula 1}
\tilde {\Gamma}_\eta(\tilde \mu,((x,y),(x',y')))
:= \big((x,y),\, {\Gamma}_\eta((\rho_1)_\#\tilde \mu,(x',y'))\big),
\end{equation}
where ${\Gamma}_\eta:{{{\mathcal P}}}\big(\overline\Omega\times [-L,L]^n)\times (D\times [-L,L]^n)\to (D\times [-L,L]^n)$ is
the mapping
\begin{equation}\label{key formula 2}
{\Gamma}_\eta(\mu,(x',y')) = \big(x', A({\tilde S}_{\eta}(\mu))(x')\big).
\end{equation}
We emphasize that here the context function ${\Gamma}_\eta(\mu,(x',y'))$ is independent of the $y'$ variable
and $\tilde {\Gamma}_\eta(\tilde \mu,((x,y),(x',y')))$ is independent of the $y$ and $y'$ variables. However,
${\Gamma}_\eta(\mu,(x',y'))$ is not a function graph transformer, but an auxiliary map just used in this proof.

Let $\tilde G_\eta: {{{\mathcal P}}}(X)\to  {{{\mathcal P}}}(X)$ be the self-attention based transformer with the context function $\tilde {\Gamma}_\eta,$
that is,\footnote{We emphasize that here we use only self-attention with a context map $\tilde {\Gamma}_\eta$ in a product space, that is, we have raised the dimension of
sets where the measures are considered.}
\begin{equation}
\tilde G_\eta(\tilde \mu) := \tilde {\Gamma}_\eta(\tilde \mu,\cdot)_\# \tilde \mu\in {{{\mathcal P}}}\big(X).
\end{equation}
Then
\beq\label{pre-cross}
(\rho_2)_\# \tilde G_\eta(\tilde \mu)
&=&(\rho_2)_\# \bigg(\tilde {\Gamma}_\eta(\tilde \mu,((\cdot,\cdot),(\cdot,\cdot)))
_\#\tilde \mu\bigg)
\\\nonumber
&=&\bigg(\rho_2\circ \tilde {\Gamma}_\eta(\tilde \mu,((\cdot,\cdot),(\cdot,\cdot)))\bigg)
_\#\tilde \mu
\\\nonumber
&=&\bigg({\Gamma}_\eta((\rho_1)_\#\tilde \mu,\rho_2((\cdot,\cdot),(\cdot,\cdot)))\bigg)
_\#\tilde \mu
\\\nonumber
&=&\bigg({\Gamma}_\eta((\rho_1)_\#\tilde \mu,(\cdot,\cdot))\bigg)
_\#((\rho_2)_\#\tilde \mu)
\\ \nonumber
&=&{\Gamma}_\eta(\mu,\cdot)_\# \delta_{(x',0)}\in {{{\mathcal P}}}\big(\overline D\times [-L,L]^n).
\eeq
Since ${\Gamma}_\eta(\mu,(x',y'))$ is independent of $y'$, we obtain
\begin{equation}
{\Gamma}_\eta(\mu,\cdot)_\# \delta_{(x',0)}
= \delta_{(x',A({\tilde S}_{\eta}(\mu))(x'))},
\end{equation}
and hence
\begin{equation}
\widehat G_\eta(\mu,x')
= \langle {\Gamma}_\eta(\mu,\cdot)_\# \delta_{(x',0)}, \pi_2 \rangle
= A({\tilde S}_{\eta}(\mu))(x').
\end{equation}
\qed

\paragraph{Further observations.} The map $(\mu,x')\mapsto \widehat G_\eta(\mu,x')=A({\tilde S}_{\eta}(\mu))(x')$ admits the factorization
\begin{equation}
(\mu,x')
\;\mapsto\;
\mu\otimes \delta_{(x',0)}
\;\mapsto^{\tilde G_\eta}\;
\tilde G_\eta(\tilde \mu)
\;\mapsto^{(\rho_2)_\#}\;
(\rho_2)_\# \tilde G_\eta(\tilde \mu)
\;\mapsto^{
\langle \cdot,\pi_2\rangle} A({\tilde S}_{\eta}(\mu))(x'),
\end{equation}
i.e., as a composition of push-forward of projections and a self-attention map.
If we use $\mu=\gamma_h$, where $h\in C(\overline \Omega)$, and $x'\in \overline D$,
and use for $\eta=0$ the notation $S_0(\gamma_h)=\lim_{\eta\to 0}S_\eta(\gamma_h))=h$,
we have
\begin{align}
(h,x')\mapsto^{{\bf M}\times Id} & (\gamma_h,x')
\;\mapsto\;
\gamma_h \otimes \delta_{(x',0)}
\;\mapsto^{\tilde g_0}\;
\tilde g_0(\gamma_h \otimes \delta_{(x',0)})
\;\mapsto^{(\rho_2)_\#}\\ \nonumber
&\;
(\rho_2)_\# \tilde g_0(\gamma_h \otimes \delta_{(x',0)})
\;\mapsto^{
\langle \cdot,\pi_2\rangle} A(h)(x').
\end{align} 
That is, 
$$
 A(h)(x')=\bra (\rho_2)_\# \tilde g_0(\gamma_h \otimes \delta_{(x',0)}),\pi_2  \ket
$$
or equivalently,
$$
 A(h)(x')=\bra \tilde g_0(\gamma_h \otimes \delta_{(x',0)})\,,\,(\pi_2\circ \rho_2)  \ket.
$$
or, 
$$
 A(h)(x')=\bra \tilde g_0(\gamma_h \otimes \delta_{(x',0)})\,,\,y'  \ket.
$$


Let us also do a computation analogous  to \eqref{pre-cross} for the product
measure $\mu\otimes \nu$, where
$\mu\in {{{\mathcal P}}}(\overline\Omega\times [-L,L]^n)$
and
 $\nu\in {{{\mathcal P}}}(\overline D\times [-L,L]^n)$.
We see that 
\beq\label{cross-attention ideas}
(\rho_2)_\# \tilde G_\eta(\mu\otimes \nu)
&=&(\rho_2)_\# \bigg(\tilde {\Gamma}_\eta(\mu\otimes \nu,((\cdot,\cdot),(\cdot,\cdot)))
_\#(\mu\otimes \nu)\bigg)
\\\nonumber
&=&\bigg(\rho_2\circ \tilde {\Gamma}_\eta((\mu\otimes \nu),((\cdot,\cdot),(\cdot,\cdot)))\bigg)
_\#(\mu\otimes \nu)
\\\nonumber
&=&\bigg({\Gamma}_\eta((\rho_1)_\#(\mu\otimes \nu),\rho_2((\cdot,\cdot),(\cdot,\cdot)))\bigg)
_\#(\mu\otimes \nu)
\\\nonumber
&=&\bigg({\Gamma}_\eta((\rho_1)_\#(\mu\otimes \nu),(\cdot,\cdot))\bigg)
_\#((\rho_2)_\#(\mu\otimes \nu))
\\ \nonumber
&=&{\Gamma}_\eta(\mu,\cdot)_\# \nu\in {{{\mathcal P}}}\big(\overline D\times [-L,L]^n).
\eeq

Next, we prove that
\beq\label{positional encoding Psi2}
(\rho_2)_\#  \tilde {\Gamma}_\eta\big({\bf M}_N(h)\otimes \nu,\cdot \big)_\#\big({\bf M}_N(h)\otimes \nu\big)=
  \sum_{k=1}^K \frac 1K\delta_{(x'_k,y_k')},\quad y_k' = A(h_{\eta,N})(x'_k).
\eeq We
recall that   $\nu=\sum_{k=1}^K \frac 1K\delta_{(x'_k,0)}$ and ${\bf M}_N(h)=\sum_{j=1}^N \frac 1N\delta_{(x_j,h(x_j))}$. Then, for
any $\phi\in C(\overline D\times [-L,L]^n)$
\beq\label{positional encoding Psi2B}
& &\bra (\rho_2)_\#  \tilde {\Gamma}_\eta\big({\bf M}_N(h)\otimes \nu,\cdot \big)_\#\big({\bf M}_N(h)\otimes \nu\big),\phi\ket
\nonumber\\
& &=
\bra  \tilde {\Gamma}_\eta\big({\bf M}_N(h)\otimes \nu,\cdot \big)_\#\big({\bf M}_N(h)\otimes \nu\big),\phi\circ \rho_2\ket
\nonumber\\
& &=
\bra  \big({\bf M}_N(h)\otimes \nu\big),\phi\circ \rho_2\circ \tilde {\Gamma}_\eta\big({\bf M}_N(h)\otimes \nu,\cdot \big)\ket
\nonumber\\
& &=
\bra  \big({\bf M}_N(h)\otimes \nu\big),\phi
\big(x', A({\tilde S}_{\eta}({\bf M}_N(h)))(x')\big)\ket
\nonumber\\
& &=
  \sum_{k=1}^K \frac 1K\phi(x'_k, A({\tilde S}_{\eta}({\bf M}_N(h)))(x'_k)).
\eeq
This proves \eqref{positional encoding Psi2}. By taking limit $\eta\to 0$, we obtain 
\beq\label{positional encoding Psi}
  \lim_{\eta\to 0} (\rho_2)_\#  \tilde {\Gamma}_\eta\big(\gamma_h\otimes \nu ,\cdot\big)_\#\big(\gamma_h\otimes \nu\big)=
 \sum_{k=1}^K \frac 1K\delta_{(x'_k,y_k)},\quad y_k = A(h)(x'_k),\
\nu=\sum_{k=1}^K \frac 1K\delta_{(x'_k,0)}.
\eeq

\subsection{Cross-attention as block-constrained product-measure self-attention}
\label{app:product-measure-cross-attention}

The product-measure construction above gives a self-attention representation on the product space
\[
X=(\overline \Omega\times [-L,L]^n)\times (\overline D\times [-L,L]^n).
\]
It is useful to compare this product-space self-attention mechanism with the cross-attention modules
that are often used in practical query-decoder architectures. The point of the comparison is that
standard cross-attention is not a different kind of measure map: it can be written as a product-measure
self-attention layer with a special block structure in the query, key, value, and output matrices.
However, the converse is not true: a general product-measure self-attention layer is a larger class.

We write
\[
s=(x,y)\in \overline \Omega\times [-L,L]^n,\qquad
r=(x',y')\in \overline D\times [-L,L]^n,
\]
and denote a product token by
\[
\xi=(s,r)=((x,y),(x',y'))\in X.
\]
Let $\tilde \mu$ be a measure on $X$. The attention part of a single product-space self-attention
head, suppressing the graph-preserving projection used in \eqref{Gamma theta}, can be written as
\begin{equation}\label{eq:product-attention-head}
 {\rm Att}^{X}_{\tilde Q,\tilde K,\tilde V}(\tilde \mu,\xi)
 =
 \frac{
 \displaystyle \int_X
 \exp\bigg(\frac{1}{\sqrt{d_{\rm att}}}
 \big\langle \tilde Q\xi,\tilde K\xi'\big\rangle\bigg)
 \tilde V\xi'\, d\tilde \mu(\xi')
 }{
 \displaystyle \int_X
 \exp\bigg(\frac{1}{\sqrt{d_{\rm att}}}
 \big\langle \tilde Q\xi,\tilde K\xi''\big\rangle\bigg)
 d\tilde \mu(\xi'')
 }.
\end{equation}
With respect to the splitting $\xi=(s,r)$, a general product-space attention head has matrices of the
form
\begin{equation}\label{eq:general-product-blocks}
 \tilde Q=\begin{bmatrix}Q_s&Q_r\end{bmatrix},\qquad
 \tilde K=\begin{bmatrix}K_s&K_r\end{bmatrix},\qquad
 \tilde V=\begin{bmatrix}V_s&V_r\end{bmatrix}.
\end{equation}
Thus the attention score between $\xi=(s,r)$ and $\xi'=(s',r')$ may depend on all four variables
$s,r,s',r'$. In the discrete product measure
\[
\tilde \mu_{N,K}={\bf M}_N(h)\otimes \nu_K,
\qquad
\nu_K=\frac{1}{K}\sum_{k=1}^K \delta_{(z_k,0)},
\]
this corresponds to self-attention over the $NK$ tokens
\[
\xi_{ik}=\big((x_i,h(x_i)),(z_k,0)\big),
\qquad i=1,\dots,N,\quad k=1,\dots,K.
\]
A fully general product-space layer may therefore attend from $(i,k)$ to $(j,\ell)$ for all
$i,j,k,\ell$.

We now recover standard query-to-input cross-attention from \eqref{eq:product-attention-head}.  We
choose the block-constrained matrices
\begin{equation}\label{eq:cross-attention-blocks}
 \tilde Q_{\rm ca}=\begin{bmatrix}0&Q_r\end{bmatrix},\qquad
 \tilde K_{\rm ca}=\begin{bmatrix}K_s&0\end{bmatrix},\qquad
 \tilde V_{\rm ca}=\begin{bmatrix}V_s&0\end{bmatrix}.
\end{equation}
Then
\[
\tilde Q_{\rm ca}\xi=Q_r r,
\qquad
\tilde K_{\rm ca}\xi'=K_s s',
\qquad
\tilde V_{\rm ca}\xi'=V_s s'.
\]
For a product measure $\tilde \mu=\mu\otimes \nu$, where $\mu$ is a measure on
$\overline \Omega\times [-L,L]^n$ and $\nu$ is a probability measure on
$\overline D\times [-L,L]^n$, substitution into \eqref{eq:product-attention-head} gives
\begin{align}\label{eq:cross-attention-as-product-attention}
 {\rm Att}^{X}_{\rm ca}(\mu\otimes \nu,(s,r))
 &=
 \frac{
 \displaystyle \int \!\! \int
 \exp\bigg(\frac{1}{\sqrt{d_{\rm att}}}
 \big\langle Q_r r,K_s s'\big\rangle\bigg)
 V_s s'\, d\mu(s')d\nu(r')
 }{
 \displaystyle \int \!\! \int
 \exp\bigg(\frac{1}{\sqrt{d_{\rm att}}}
 \big\langle Q_r r,K_s s''\big\rangle\bigg)
 d\mu(s'')d\nu(r'')
 }
 \\ \label{eq:cross-attention-as-product-attentionII}
 &=
 \frac{
 \displaystyle \int
 \exp\bigg(\frac{1}{\sqrt{d_{\rm att}}}
 \big\langle Q_r r,K_s s'\big\rangle\bigg)
 V_s s'\, d\mu(s')
 }{
 \displaystyle \int
 \exp\bigg(\frac{1}{\sqrt{d_{\rm att}}}
 \big\langle Q_r r,K_s s''\big\rangle\bigg)
 d\mu(s'')
 }.
\end{align}
The integration over the attended query coordinate $r'$ cancels. The right hand side of
\eqref{eq:cross-attention-as-product-attention} is exactly the usual cross-attention map from the
query token $r$ to the input or memory measure $\mu$.
This identity is a layerwise statement for factorized product measures. The block-constrained
input, query, and cross-attention modules considered below preserve this factorization when their
output projections are placed in the corresponding input or query blocks. In contrast, an
unrestricted product-space self-attention layer need not preserve a product form.

\commented{{\color{red}The same cancellation can be seen at the level of finite tokens. Let
$s_j=(x_j,h(x_j))$ and $r_k=(z_k,0)$, and set
\[
C_{kj}=\frac{1}{\sqrt{d_{\rm att}}}\big\langle Q_r r_k,K_s s_j\big\rangle .
\]
The product-space attention logit from $\xi_{ik}=(s_i,r_k)$ to $\xi_{j\ell}=(s_j,r_\ell)$ is
\[
L_{(i,k),(j,\ell)}
=\frac{1}{\sqrt{d_{\rm att}}}
\big\langle \tilde Q_{\rm ca}\xi_{ik},\tilde K_{\rm ca}\xi_{j\ell}\big\rangle
=C_{kj}.
\]
Hence the product attention matrix is a duplicated block matrix: the logit depends on $k$ and $j$, but
not on $i$ or $\ell$. Consequently,
\begin{align}\label{eq:finite-cross-attention-cancellation}
 {\rm Att}^{X}_{\rm ca}(\tilde \mu_{N,K},\xi_{ik})
 &=
 \frac{\displaystyle \sum_{j=1}^N\sum_{\ell=1}^K
 \exp(C_{kj})V_s s_j}{\displaystyle \sum_{j=1}^N\sum_{\ell=1}^K
 \exp(C_{kj})}
 \\
 &=
 \sum_{j=1}^N
 \frac{\exp(C_{kj})}{\sum_{a=1}^N \exp(C_{ka})}V_s s_j .
\end{align}
This is the standard finite-dimensional cross-attention update. The common factor $1/(NK)$ from
$\tilde\mu_{N,K}$ has been suppressed in \eqref{eq:finite-cross-attention-cancellation}, since it
cancels between numerator and denominator. The update is also independent of the input-fiber
index $i$, so the same query update is replicated along the fiber
\[
\{(s_i,r_k):i=1,\dots,N\}.
\]
For the single-query fiber ${\bf M}_N(h)\otimes\delta_{(z_k,0)}$, pairing the product-space output
with $\tilde\pi_2$ recovers the output value associated with $z_k$, as in \eqref{cost function}. For
the full multi-query measure ${\bf M}_N(h)\otimes\nu_K$, the projected output measure has the form
\[
(\rho_2)_\#\tilde\mu^+=\frac1K\sum_{k=1}^K \delta_{(z_k,y_k)}.
\]
Pairing this entire measure with $\pi_2$, or equivalently pairing $\tilde\mu^+$ with $\tilde\pi_2$,
gives only the average $K^{-1}\sum_{k=1}^K y_k$. Thus individual values are recovered by evaluating
the single-query fibers, by reading the atoms at the corresponding query coordinates, or equivalently
by testing against localized functions of $z$.

To obtain the usual decoder update, one also constrains the output projection. Namely, take
\begin{equation}\label{eq:cross-attention-output-block}
 \tilde W_{\rm ca}=\begin{pmatrix}0\\ W_r\end{pmatrix},
 \qquad
 \tilde W_{L,\rm ca}=\begin{pmatrix}I_s&0\\0&L_r\end{pmatrix}.
\end{equation}
Then the product-token update has the form
\begin{equation}\label{eq:cross-attention-update}
 s^+=s,
 \qquad
 r^+=L_r r+W_r {\rm Att}^{X}_{\rm ca}(\mu\otimes\nu,(s,r)).
\end{equation}
Thus the input or memory block is unchanged, while the query block is updated by attention against
$\mu$. If $r=(z,y)$ and one wants to preserve the query coordinate $z$, one imposes the finer block
structure
\[
L_r=\begin{pmatrix}I_z&0\\0&L_y\end{pmatrix},
\qquad
W_r=\begin{pmatrix}0\\W_y\end{pmatrix},
\]
so that $z^+=z$ and only the $y$-component is changed. This condition alone does not force the
updated $y$-component to be independent of the incoming value $y$: indeed,
\[
y^+=L_y y+W_y {\rm Att}^{X}_{\rm ca}(\mu\otimes\nu,(s,(z,y))),
\]
and the attention weights may still depend on $y$ through $Q_r(z,y)$. If the context map is intended
to have the target structure in \eqref{key formula 1 copy}--\eqref{Gamma formula 1B} on all query
tokens, not only on the initialized slice $y=0$, impose in addition
\[
Q_r=\begin{bmatrix}Q_z&0\end{bmatrix},
\qquad
L_y=0,
\]
and impose the same ``ignore $y$'' or overwrite structure on any subsequent pointwise MLP or query
block. Under these stronger constraints, the updated $y$-component depends on the input marginal
$(\rho_1)_\#\tilde\mu$ and on the query coordinate $z$, but not on the incoming value $y$. If query
tokens are always initialized with $y=0$, the weaker block structure above realizes the desired update
on that input slice, but it does not define a $y$-independent context map on all of
$\overline D\times[-L,L]^n$.}}

The same block calculation also identifies other familiar attention modules as special product-space
self-attention layers. With the output projection placed in the corresponding block, one obtains
\begin{align*}
\hbox{input self-attention:}\quad
&(\tilde Q,\tilde K,\tilde V)=
(\begin{bmatrix}Q_s&0\end{bmatrix},\begin{bmatrix}K_s&0\end{bmatrix},\begin{bmatrix}V_s&0\end{bmatrix}),\\
\hbox{query self-attention:}\quad
&(\tilde Q,\tilde K,\tilde V)=
(\begin{bmatrix}0&Q_r\end{bmatrix},\begin{bmatrix}0&K_r\end{bmatrix},\begin{bmatrix}0&V_r\end{bmatrix}),\\
\hbox{query-to-input cross-attention:}\quad
&(\tilde Q,\tilde K,\tilde V)=
(\begin{bmatrix}0&Q_r\end{bmatrix},\begin{bmatrix}K_s&0\end{bmatrix},\begin{bmatrix}V_s&0\end{bmatrix}),\\
\hbox{input-to-query cross-attention:}\quad
&(\tilde Q,\tilde K,\tilde V)=
(\begin{bmatrix}Q_s&0\end{bmatrix},\begin{bmatrix}0&K_r\end{bmatrix},\begin{bmatrix}0&V_r\end{bmatrix}).
\end{align*}
The first two choices update the input block and query block, respectively, and the last two choices
give the two possible directions of cross-attention. 
\commented{{\color{red}For factorized inputs $\mu\otimes\nu$, the
corresponding block-constrained output projections preserve factorization: input self-attention sends
$\mu\otimes\nu$ to $\mu^+\otimes\nu$, query self-attention sends it to $\mu\otimes\nu^+$, and
query-to-input cross-attention sends it to $\mu\otimes\nu^+(\mu)$. This factorization-preserving
property is not shared by a general unrestricted product-space self-attention layer.}}

This yields the following practical hierarchy in terms of computation cost. Full product-measure self-attention on
${\bf M}_N(h)\otimes \nu_K$ uses $NK$ product tokens and allows all couplings
$(i,k)\leftrightarrow (j,\ell)$; its direct attention cost is $O(N^2K^2)$. The fiberwise product
self-attention described in Appendix \ref{app: training transformers architecture} instead evaluates the $K$ measures
\[
{\bf M}_N(h)\otimes \delta_{(z_k,0)},\qquad k=1,\dots,K,
\]
with the same product-space transformer and can treat $k$ as a batch index; this is still literal
product-space self-attention for each query and has attention cost $O(KN^2)$. The block-constrained
cross-attention form \eqref{eq:cross-attention-blocks}--\eqref{eq:cross-attention-as-product-attentionII}
collapses
the product measure by the calculation above. After any chosen input encoding of $\mu$, the
query-to-input cross-attention sublayer has cost $O(NK)$. This count does not include the input
encoder cost, for instance $O(N^2)$ per full input self-attention layer, nor any query self-attention
cost, for instance $O(K^2)$ per full query self-attention layer.

Thus standard cross-attention is a special case of product-measure self-attention, but only after
imposing block constraints on the weight matrices.
The unrestricted product-measure transformer used in the representation theorem is a larger class.
Therefore, the approximation results for unrestricted product-measure self-attention do not by
themselves prove an approximation theorem for every architecture built only from the restricted
cross-attention blocks. \commented{{\color{red}Rather, the calculation shows that the common encoder--decoder architecture
is a block-constrained and fiber-collapsed implementation of the marginal-query structure appearing in
\eqref{key formula 1 copy}--\eqref{Gamma formula 1B}. With this interpretation, the loss
\eqref{cost function} can be used for either the fiberwise product implementation or the corresponding
pure query-to-input cross-attention implementation, replacing the single-query product-space readout
by the equivalent quantity
\begin{equation}\label{eq:cross-attention-loss-form}
L_{\rm ca}(\theta)
=
\sum_{p=1}^P\sum_{k=1}^K
\big|\Gamma_{\theta,{\rm ca}}^{{\rm tran},(y)}({\bf M}_N(h_p);z_k)-A(h_p)(z_k)\big|^2.
\end{equation}
The notation in \eqref{eq:cross-attention-loss-form} emphasizes that, in this pure cross-attention
case, the output at $z_k$ depends on the input empirical graph measure ${\bf M}_N(h_p)$ and the
individual query coordinate $z_k$, but not on unrelated query points in the same batch. If query
self-attention layers are added, the decoder output may also depend on the full query measure
$\nu_K$; in that case the same pointwise loss is used, but the notation should retain this dependence,
for example $\Gamma_{\theta,{\rm ca}}^{{\rm tran},(y)}({\bf M}_N(h_p),\nu_K;z_k)$.}}

\section{Example of a computation of $G_\eta(\mu)$ for a discrete measure, $\mu$}
\label{App: example}

We assume, here, that $h \in C(\overline\Omega)$. The analogous considerations can be made also for $h \in H^{-s}(\Omega)$, but then we start by replacing $h$ by $h_\tau=R_\tau(h)$. Let $x_j\in {\Omega}$, $j=1,2,\dots,N$. Let $y_j=h(x_j)$ and 
\begin{align} 
&\mu={\bf M}_N (h)=\sum_{j=1}^N \frac 1N \delta_{(x_j,y_j)}\in \mathcal M({\overline\Omega}\times \R),\\
\label{S function and QK B} 
 &  S_{\eta}(\mu)(\cdot)
   =\sum_{k=1}^{K(\eta)} V_0(\eta{{\lambda}^r_k}+1)^{-1}
   \bigg[\int_{{\overline\Omega}\times \R} \psi_k(x'){{y}}'\, d\mu(x',y')\bigg] \psi_k(\cdot)\in
   C^k({\overline\Omega}),
\end{align}
see \eqref{S function and QK}.
Denote $h_\eta:=  S_{\eta}(\mu).$
Then,
$
{\Gamma}_\eta(\mu,(x,y))=\big(x,A(h_\eta)(x)\big)\in {\overline\Omega}\times \R
$
and
$$
G_\eta(\mu)={\Gamma}_\eta(\mu,\cdot)_\#\mu=\sum_{j=1}^N \frac 1N\delta_{(x_j,z_j)},\quad \hbox{where }z_j=A(h_\eta)(x_j).
$$

 \section{Training of transformers, architecture, and the loss function}\label{app: training transformers architecture}

In this appendix we consider the representations \eqref{positional encoding Psi} and \eqref{positional encoding Psi2} of nonlinear operator $A$. We recall that by \eqref{key formula 1} and  \eqref{key formula 2},
the map $\tilde {\Gamma}_\eta$ in \eqref{positional encoding Psi}
satisfies
	 \begin{equation}\label{key formula 1 copy}
	\tilde {\Gamma}_\eta(\gamma_h\otimes \nu,((x,y),(x',y')))
	:= \big((x,y),\, {\Gamma}_\eta(\gamma_h,(x',y'))\big), \quad \gamma_h=(\rho_1)_\#(\gamma_h\otimes \nu)
	\end{equation}
	where
	 \begin{equation}\label{Gamma formula 1B}
	{\Gamma}_\eta(\mu,(x',y')) = \big(x', A({\tilde S}_{\eta}(\mu))(x')\big)
	\end{equation}
	is a mapping that is independent of the measure $\nu$, and therefore of the points $x'_j$. This suggests
	that one can learn an approximation for the operator $A$ by training a neural network of the form
	\beq
	\mu={\bf M}_N(h)\to  \bra  \tilde {\Gamma}_\eta(\mu \otimes\delta_{(x',0)},\cdot))_\#(\mu\otimes \delta_{(x',0)}), \tilde \pi_2 \ket
	\eeq

	Observe that the target function ${\Gamma}_\eta(\mu,(x',y'))$ has
	the property that its $y'$-component, $${\Gamma}^{\mathrm{tran},(y')}_\theta(\mu \otimes\nu,((x,y),(x',y'))) =
	{\Gamma}^{\mathrm{tran},(y')}_\theta(\mu \otimes\nu;x')$$ depends only on the variable $x'$.
	Hence,
	 similarly to the formulas \eqref{key formula 1 copy} and \eqref{Gamma formula 1B}, we may assume that
	we design the architecture of the neural networks that we use in training,
	 $$(\mu \otimes\nu,((x,y),(x',y'))) \to\tilde {\Gamma}^{\rm tran}_\theta(\mu \otimes\nu,((x,y),(x',y')))$$
	so that  their $y'$-components, $${\Gamma}^{\mathrm{tran},(y')}_\theta(\mu \otimes\nu,((x,y),(x',y'))) =
	{\Gamma}^{\mathrm{tran},(y')}_\theta(\mu \otimes\nu;x')$$ depend only on the variable $x'$.
It  defines a map
\ba
{\Gamma}^{\rm tran}_\theta&:&{\mathcal P}((\overline \Omega\times [-L,L]^n)\times (\overline D\times [-L,L]^n))
\times ((\overline \Omega\times [-L,L]^n)\times (\overline D\times [-L,L]^n))
\\& &\to
(\overline \Omega\times [-L,L]^n)\times (\overline D\times [-L,L]^n)
\ea
is of the form \eqref{G tran formula} and  depend on parameters $\theta\in \Theta$ and minimize
the cost function
	\begin{equation}\label{cost function}
	  L(\theta)=\sum_{p=1}^P\sum_{k=1}^K\bigg| \bigg \bra
	  \bigg(\tilde {\Gamma}^{\rm tran}_\theta({\bf M}_N(h_p)\otimes\delta_{(x'_k,0)},\cdot)\bigg)_\#({\bf M}_N(h_p)\otimes\delta_{(x'_k,0)}),
	  \tilde \pi_2\bigg\ket-
	  A(h_p)(x'_k)\bigg|^2,
	\end{equation}
	when training data consists of function $h_p\in C(\overline \Omega;{[-L,L]}^n)$, $p=1,2,\dots,P$ and the values
	of the functions $A(h_p)$ at the points $x'_k\in \overline D$, $k=1,\dots,K$.
	In more explicit way, in \eqref{cost function} we have
	\beq
	& &  \bigg \bra
	  \bigg(\tilde {\Gamma}^{\rm tran}_\theta({\bf M}_N(h_p)\otimes\delta_{(x'_k,0)},\cdot)\bigg)_\#({\bf M}_N(h_p)\otimes\delta_{(x'_k,0)}),
	  \tilde \pi_2\bigg\ket
	\\
	&=&\bigg \bra
	  {\bf M}_N(h_p)\otimes\delta_{(x'_k,0)}\, ,\,
	  \tilde \pi_2\bigg(\tilde {\Gamma}^{\rm tran}_\theta({\bf M}_N(h_p)\otimes\delta_{(x'_k,0)},\cdot)\bigg)\bigg\ket
	\\
	&=&\bigg \bra
	  {\bf M}_N(h_p)\otimes\delta_{(x'_k,0)}\, ,\,
	  \pi_2\circ \rho_2\bigg(\tilde {\Gamma}^{\rm tran}_\theta({\bf M}_N(h_p)\otimes\delta_{(x'_k,0)},\cdot)\bigg)\bigg\ket
	 \eeq
	As we have assumed that  $(\mu \otimes\nu,((x,y),(x',y'))) \to\tilde {\Gamma}^{\rm tran}_\theta(\mu \otimes\nu,((x,y),(x',y')))$
	is a context function whose $y'$-component depends only on the variable $x'$,  the above implies that
	 \beq
	& &  \bigg \bra
	  \bigg(\tilde {\Gamma}^{\rm tran}_\theta({\bf M}_N(h_p)\otimes\delta_{(x'_k,0)},\cdot)\bigg)_\#({\bf M}_N(h_p)\otimes\delta_{(x'_k,0)}),
	  \tilde \pi_2\bigg\ket
	  \\
	&=&\bigg \bra
	  {\bf M}_N(h_p)\otimes\delta_{(x'_k,0)}\, ,\,
	  \tilde \pi_2\bigg(\tilde {\Gamma}^{\rm tran}_\theta({\bf M}_N(h_p)\otimes\delta_{(x'_k,0)},\cdot)\bigg)\bigg\ket
	\\
	&=&\bigg \bra
	  {\bf M}_N(h_p)\otimes\delta_{(x'_k,0)}\, ,\,
	  \pi_2\circ \rho_2\bigg(\tilde {\Gamma}^{\rm tran}_\theta({\bf M}_N(h_p)\otimes\delta_{(x'_k,0)},\cdot)\bigg)\bigg\ket
	  \\
	  &=&\bigg \bra
	  {\bf M}_N(h_p)\otimes\delta_{(x'_k,0)}\, ,\,
	  \tilde {\Gamma}^{\mathrm{tran},(y')}_\theta({\bf M}_N(h_p)\otimes\delta_{(x'_k,0)},\cdot)\bigg\ket
	  \\
	&=&\bigg \bra  \delta_{x'_k},\, {\Gamma}^{\mathrm{tran},(y')}_\theta({\bf M}_N(h_p) \otimes \delta_{(x'_k,0)};x')
	 \bigg\ket
	  \\
	&=& {\Gamma}^{\mathrm{tran},(y')}_\theta({\bf M}_N(h_p) \otimes \delta_{(x'_k,0)};x'_k)
	\eeq
	where $\Gamma^{\rm tran}_\theta:{\mathcal P}((\overline \Omega\times [-L,L]^n)\times (\overline D\times [-L,L]^n))
	\times ((\overline \Omega\times [-L,L]^n)\times (\overline D\times [-L,L]^n))\to
	(\overline \Omega\times [-L,L]^n)\times (\overline D\times [-L,L]^n)$ is a composition of MLPs and
	transformers of the form \eqref{G tran formula} and ${\Gamma}^{\mathrm{tran},(y')}_\theta(\mu \otimes\nu;x'_k)$
	is its $y'$-component evaluated at the point $x'_k$ (and at the arbitrary values of $(x,y)$ and $y'$).

\section{Numerical illustrations}
\label{App:experiments}

We now illustrate that a function graph transformer can recover a target operator from samples of its input-output behavior. The purpose of this appendix is to provide a numerical illustration of the theory, not to claim state-of-the-art performance on any particular operator learning task.

\subsection{Setup and learning task}

Our domain is the two-dimensional torus
\[
    \mathbb T^2=(\mathbb R/\mathbb Z)^2,
\]
identified with the unit square with periodic boundary conditions.  Inputs are scalar functions sampled at point clouds.  For an input function $h$ and support points $x_j\in\mathbb T^2$, the model receives the unordered graph tokens
\[
    \{(x_j,h(x_j))\}_{j=1}^N,
    \qquad \text{corresponding to} \qquad
    {\bf M}_N(h)=\frac1N\sum_{j=1}^N\delta_{(x_j,h(x_j))}.
\]
The target operator $A$ is a small, randomly initialized one-layer Fourier neural operator (FNO) \citep{li2021fourier}. The target FNO used Fourier modes $(6,6)$, hidden width $16$, projection width $128$, with a total of $8753$ fixed random parameters.

\subsection{Transformer configurations}

\paragraph{Sinusoidal positional encoding.} We encode coordinates with fixed sinusoidal features rather than raw coordinates to match our torus geometry and avoid artificial boundaries at the edge of the unit square \citep{vaswani2017attention,tancik2020fourier,mildenhall2020nerf}. 
Specifically, each sampled input is represented by tokens \((x_j,r_j^{(0)})\), where \(x_j\in\mathbb T^2\) is the support coordinate and \(r_j^{(0)}\) is an initial latent embedding of the observed value \(h(x_j)\), conditioned on fixed sinusoidal features \(\phi(x_j)\).
The coordinate feature map is $\phi:\mathbb R^2 \to \mathbb R^{24}$,
\[
\phi(x_1, x_2)
:=
\Big(
\sin(2\pi \omega_m x_\ell),
\cos(2\pi \omega_m x_\ell)
\Big)_{\ell=1,2;\,m=1,\dots,6} 
\]
with frequencies
\[
\omega_m = 16^{(m-1)/5},\qquad m=1,\dots,6.
\]
At every transformer layer, the coordinates \(x_j\) remain fixed. The attention and MLP layers update only the latent variables \(r_j^{(\ell)}\), while receiving \(\phi(x_j)\) as deterministic coordinate conditioning. Thus the layer map has the form
\[
    (x_j,r_j^{(\ell)})_{j=1}^N
    \longmapsto
    (x_j,r_j^{(\ell+1)})_{j=1}^N,
\]
which preserves the support points.

\paragraph{Same-domain and query mode.} Our transformer implementation supports two prediction modes.  
In \emph{same-domain} mode, the model predicts output values on the same coordinates as the input samples,
\[
    \{(x_j,h(x_j))\}_{j=1}^N
    \longmapsto
    \{(x_j,A(h)(x_j))\}_{j=1}^N.
\]
For same-domain prediction, the model applies a seven-layer transformer directly to the sampled input graph tokens \((x_j,h(x_j))\). The support coordinates are preserved throughout the network, and a pointwise readout maps the final latent at each input location \(x_j\) to the predicted value \(A(h)(x_j)\).

In \emph{query} mode, the model predicts at arbitrary query coordinates $y_k\in\mathbb T^2$,
\[
    \left(\{ x_j, h(x_j) \}_{j=1}^N, x'_k \right)
    \longmapsto
    A(h)(x'_k).
\]
For query prediction, the model first applies a six-layer input branch to the sampled graph tokens $\{ x_j, h(x_j) \}_{j=1}^N$ and a two-layer pointwise query branch to the query coordinates. For each query point \(y_k\), it then forms the fiber tokens \(((x_j,h(x_j)),(x'_k,0))\), applies a fiberwise product-space self-attention block followed by two residual token-wise MLP blocks, and reads out from the resulting fiber.

Both the same-domain and query mode transformers lift their input tokens to an embedding space with dimension of $144$. All attention blocks use $4$ heads.

\subsection{Training protocol}

Training data was generated online: at each optimization step, we sampled a fresh batch of random input functions \(h\) and fresh point clouds, rather than drawing from a fixed stored dataset. Input functions were sampled online from a truncated periodic Gaussian Fourier field: sine and cosine coefficients were drawn independently with frequency-decaying variance, rescaled to have approximate sup-norm \(0.9\). The target values \(A(h)\) were computed on the fly by discretizing \(h\) onto a $64 \times 64$ torus grid, applying the fixed one-layer FNO teacher, and interpolating the output to the requested locations \(x_j\) or \(x'_k\).

The transformers were trained over \(20{,}000\) training steps with batch size \(32\). After a linear warmup period lasting $1000$ steps, we used a cosine learning rate from \(10^{-3}\) decaying to \(10^{-4}\). Each step of query-mode training used \(K_{\rm train}=16\) query points. 

In both modes, input support points were sampled uniformly at random during training. In query mode, the query points were also sampled uniformly at random; in same-domain mode there are no separate query points, since the model is trained to predict \(A(h)(x_j)\) at the same sampled support locations \(x_j\).

For predictions $\widehat u$ and targets $u=A(h)$, we report
\[
    \mathrm{rel}\,L^2=\frac{\|\widehat u-u\|_{L^2}}{\|u\|_{L^2}},\qquad
    \mathrm{rel}\,L^\infty=\frac{\|\widehat u-u\|_{L^\infty}}{\|u\|_{L^\infty}}.
\]
The \(L^2\) norms are computed by empirical quadrature over the evaluation points for each held-out function, the \(L^\infty\) norms by the corresponding pointwise maximum, and the reported values are averages over a number of held-out functions \(h\) specified in each table caption. Same-domain mode use $N=625$ evaluation points, while query-mode norms use $K=4096$ evaluation points.

Where appropriate, the tables below report $95\%$ confidence interval half-widths computed over per-function error samples.

\subsection{Recovery results}

Table~\ref{tab:fno-teacher-main-new} gives the main recovery results over $128$ held-out functions. Figure \ref{fig:fno-teacher-2d-examples} shows representative samples on input functions and sampling regimes not seen during training.

\begin{figure*}[t]
\centering
\captionsetup[subfigure]{skip=1pt}

\begin{subfigure}{0.95\textwidth}
    \centering
    \includegraphics[width=\linewidth]{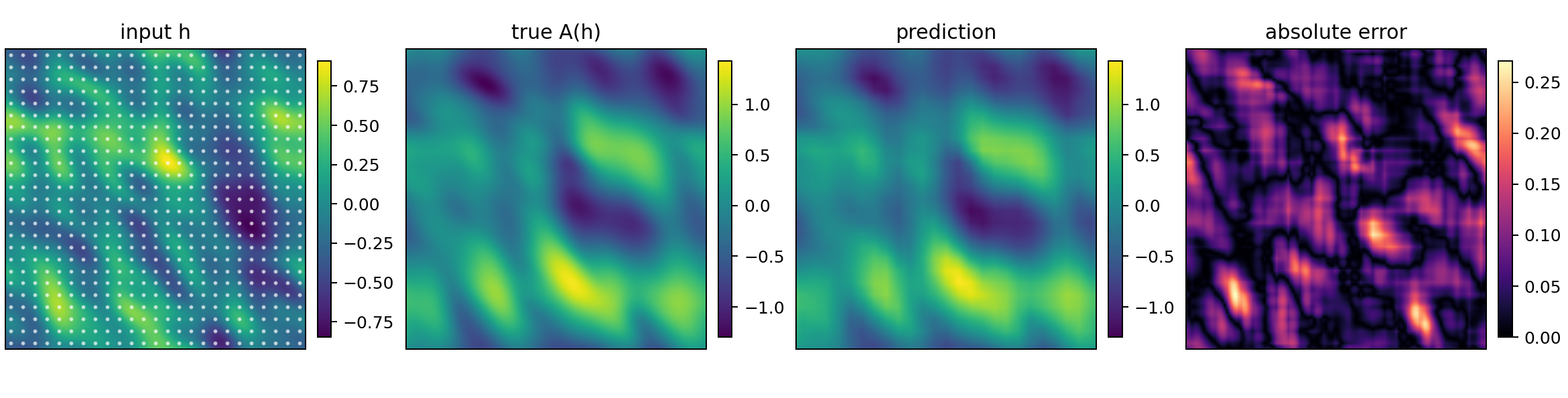}
    \caption{Same-domain mode, grid input sampling.}
\end{subfigure}

\vspace{0.75em}

\begin{subfigure}{0.95\textwidth}
    \centering
    \includegraphics[width=\linewidth]{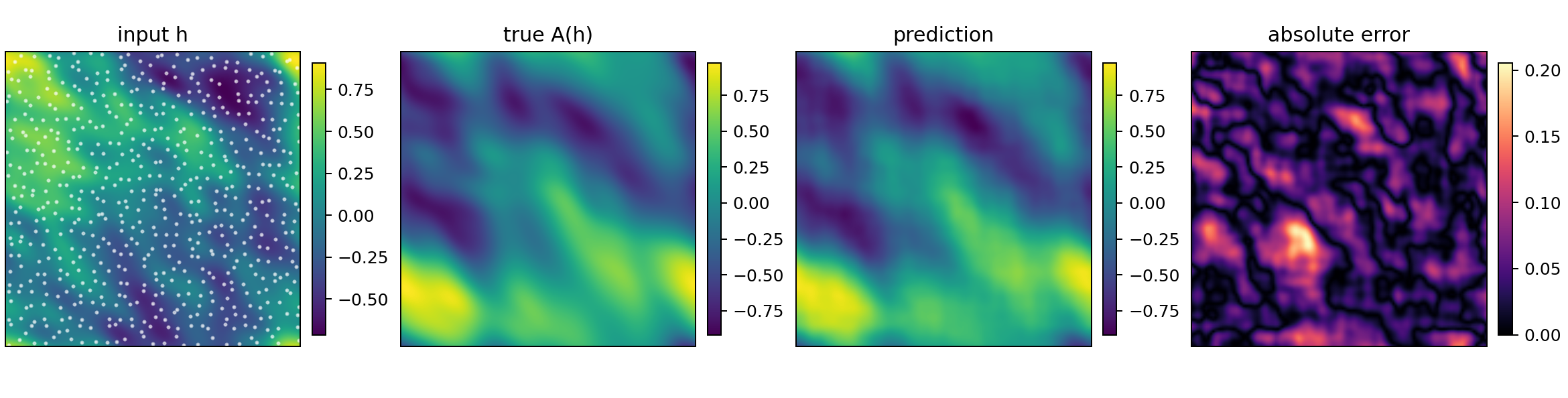}
    \caption{Query mode, jittered-grid input sampling.}
\end{subfigure}

\vspace{0.75em}
\begin{subfigure}{0.95\textwidth}
    \centering
    \includegraphics[width=\linewidth]{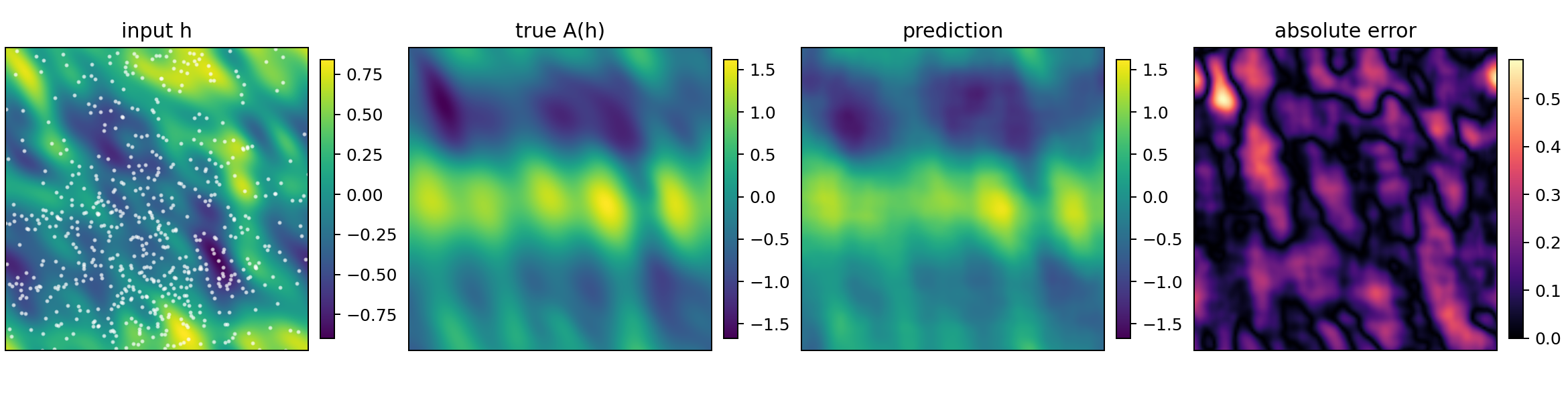}
    \caption{Query mode, Gaussian input sampling.}
\end{subfigure}

\caption{Representative two-dimensional FNO-teacher recovery examples for the trained same-domain and query-mode transformers. The white dots in the input panel mark the sampling locations $\{x_j\}_{j=1}^N$. For the same-domain example (a), the network predicts at the input grid locations, and values at the displayed pixels are obtained by periodic bilinear interpolation from these predicted grid values. For the query-mode examples (b) and (c), the displayed pixel locations are used directly as query points, so all pixel values are returned by the trained network.}
\label{fig:fno-teacher-2d-examples}
\end{figure*}
 
\begin{table}[t]
\centering
\setlength{\tabcolsep}{3pt}
\caption{Main recovery results for a fixed random one-layer FNO teacher. Errors are per-function means over $128$ held-out functions; intervals are $95\%$ confidence interval half-widths. Timings were recorded on a single NVIDIA RTX PRO 6000 Blackwell GPU with 96GB RAM.}
\label{tab:fno-teacher-main-new}
\begin{tabular}{lccccc}
\toprule
mode &
params &
rel. $L^2$ &
rel. $L^\infty$ &
train time \\
\midrule
same-domain &
$2{,}315{,}666$ &
$0.0609\pm0.0031$ &
$0.0689\pm0.0034$ &
$406.7$ s \\
query &
$3{,}381{,}410$ &
$0.1106\pm0.0048$ &
$0.1566\pm0.0065$ &
$1830.2$ s \\
\bottomrule
\end{tabular}
\end{table}

\subsection{Query-mode comparison}

Table~\ref{tab:fno-teacher-baselines-new} compares the product-space query model against the same-domain transformer followed by interpolation to the query coordinates.  Both methods are evaluated at $K=4096$ query locations over $128$ held-out functions.  Product-space self-attention substantially improves arbitrary-coordinate prediction over the interpolation baseline.

\begin{table}[t]
\centering
\setlength{\tabcolsep}{4pt}
\caption{Query-mode vs interpolation. The interpolation baseline first applies the same-domain transformer and then interpolates its prediction to query points. Product-space self-attention provides substantially better accuracy at arbitrary query points than the interpolation baseline.}
\label{tab:fno-teacher-baselines-new}
\begin{tabular}{llcc}
\toprule
& mode & rel. $L^2$ & rel. $L^\infty$ \\
\midrule
product-space self-attention & query &
$0.1117\pm0.0043$ &
$0.1588\pm0.0080$ \\
interpolation & same-domain &
$0.2260\pm0.0067$ &
$0.4764\pm0.0162$ \\
\bottomrule
\end{tabular}
\end{table}

\subsection{Irregular sampling and empirical measure convergence}

We next probe robustness away from the training input resolution and sampling distribution. Table~\ref{tab:fno-teacher-resolution-new} fixes the evaluation set to $K=4096$ query locations and varies the number of observed input samples $N$, using $16$ held-out functions for each resolution. Although the query-mode model was trained only with $N=625$ input samples, the relative $L^2$ error decreases steadily as more input samples are provided, from $0.3582\pm0.0436$ at $N=128$ to $0.0800\pm0.0058$ at $N=2048$. This resolution-transfer behavior is consistent with learning an approximation to the underlying continuous operator, rather than a predictor tied to a fixed input discretization.

\begin{table}[t]
\centering
\setlength{\tabcolsep}{4pt}
\caption{Resolution transfer for the trained query-mode product-space transformer. Each row uses $16$ held-out functions and $K=4096$ query locations. The transformer was trained with $N=625$ input samples, and was never provided inputs with the values of $N$ in this table.}
\label{tab:fno-teacher-resolution-new}
\begin{tabular}{rcc}
\toprule
$N$ & rel. $L^2$ & rel. $L^\infty$ \\
\midrule
$128$  & $0.3582\pm0.0436$ & $0.4404\pm0.0517$ \\
$256$  & $0.2004\pm0.0260$ & $0.2869\pm0.0321$ \\
$512$  & $0.1184\pm0.0115$ & $0.1680\pm0.0151$ \\
$1024$ & $0.0929\pm0.0127$ & $0.1306\pm0.0117$ \\
$2048$ & $0.0800\pm0.0058$ & $0.1205\pm0.0156$ \\
\bottomrule
\end{tabular}
\end{table}

Table~\ref{tab:fno-teacher-sampling-new} evaluates the trained query-mode model at $K=4096$ query locations while changing the input sampling distribution. Each row uses $16$ held-out functions. Although training used uniformly random input samples, the model attains its lowest errors on structured grid and jittered-grid inputs, and degrades gradually under uniform random and Gaussian sampling. We note that Gaussian sampling test is intended as a stress-test, and points $x_1, x_2, \dots$ sampled in this way are not regularly distributed in the sense of Definition \ref{def:regularly_distributed}.

\begin{table}[t]
\centering
\setlength{\tabcolsep}{5pt}
\caption{Irregular-sampling evaluation for the trained query-mode model. Each row uses $16$ held-out functions and $K=4096$ query locations. The model was trained using uniformly random input sampling, and was not trained on the other sampling methods in this table.}
\label{tab:fno-teacher-sampling-new}
\begin{tabular}{lcc}
\toprule
input sampling & rel. $L^2$ & rel. $L^\infty$ \\
\midrule
grid & $0.0714\pm0.0063$ & $0.1079\pm0.0155$ \\
jittered grid & $0.0832\pm0.0095$ & $0.1276\pm0.0200$ \\
uniform random & $0.1202\pm0.0165$ & $0.1821\pm0.0253$ \\
Gaussian & $0.1819\pm0.0247$ & $0.2436\pm0.0322$ \\
\bottomrule
\end{tabular}
\end{table}

\subsection{Limitations}

These numerical experiments are small-scale, and are only intended to illustrate certain aspects of the theory. Our numerical illustration uses an FNO as a target operator $A: C(\mathbb{T}^2, [-L,L]) \to C(\mathbb{T}^2, [-L,L])$, which fits the setting of Theorem \ref{thm:approximation-trans self-attention Takashi}. We do not illustrate a learned operator between negative-order Sobolev spaces, the setting of Theorem \ref{main thm}.



\commented{

\section{Finite sums of point measures and interpolation}
\label{sec:  point measures and interpolation}

\noindent
\paragraph{Interpolation.} We may consider also interpolation in a non-regular grid: When ${\bar x}_j \in \Omega \subset \R^d$, $j=1,2,\dots$ are not regularly distributed points, we denote the corresponding ordered sequence by ${\bf X} = ({\bar x}_j)_{j=1}^\infty$. Then, we define the interpolation measure
\ba
  {\bf M}^{\bf X}_{N,r}(h)
  = \sum_{j=1}^N \rho^{\bf X}(j,N,r) \delta_{({\bar x}_j, h({\bar x}_j))} ,
\ea 
where $r > 0$ is a parameter, ${\rho^{\bf X}(j,N,r)}$ is the relative density of the points
\ba
  \rho^{\bf X}(j,N,r) &=& \frac {\hbox{exp}(-|{\bar x}_j-x_k^{(0)}|^2/r^2)}
  {\sum_{i=1}^N \hbox{exp}(-|x_i^{(0)}-x_k^{(0)}|^2/r^2)} .
\ea 
We also denote
\ba
  {\bf M}^{\bf X}_{N,r,\tau}(h) = \sum_{j=1}^N \rho^{\bf X}(j,N,r)
  \delta_{({\bar x}_j, h_\tau({\bar x}_j))} .
\ea 
We say that ${\bar x}_j \in \Omega \subset \R^d$, $j=1,2,\dots$ are almost regularly distributed points with scales $r :\ \mathbb Z_+ \to \R_+$, if for all $h \in C(\overline\Omega)$
\ba
& &\lim_{N \to \infty} {\bf M}^{\bf X}_{N,\tau,r}(h)
  = \gamma_h\quad \hbox{in the ${\color{black}W_1}$-metric.}
\ea
Then, we could consider replacing $ {\bf M}_N(h)$ and ${\bf M}_{N,\tau}(h)$ by ${\bf M}^{\bf X}_{N,r(N)}(h)$ and ${\bf M}^{\bf X}_{N,r(N),\tau}(h)$, respectively, in our considerations below.

\medskip

\noindent
{\color{red} [Can we develop any relation with the Nadaraya-Watson interpolation used in CViT, eq.(7)?]}
}
\end{document}

%% file: figures/diagramB.tex
\begin{tikzpicture}[line cap=round, line join=round]
\tikzset{
  axis/.style={->, thick},
  base curve/.style={blue, very thick},
  lifted curve/.style={red!70!black, very thick},
  ribbon/.style={gray!45, opacity=0.28},
  sample bar/.style={black, thick},
  sample dot/.style={black},
  diagram node/.style={font=\normalsize, inner sep=3pt},
  diagram arrow/.style={->, very thick},
}

\def\xmin{-0.1}
\def\xmax{2.0}
\def\xaxismax{2.25}
\def\ymin{-0.1}
\def\yaxismax{2.55}
\def\yaxismaxthree{2.75}
\def\zaxismax{1.45}
\def\barxs{-0.04,0.12,0.27,0.44,0.58,0.73,0.91,1.08,1.22,1.39,1.57,1.76,1.94}

\def\eastshift{15.0cm}
\def\topshift{4.3cm}
\def\bottomshift{2.2cm}

\newcommand{\hwest}[1]{sin(#1 r)+cos(3*#1 r)}
\newcommand{\liftwest}[1]{0.5+0.2*sin(2*#1 r)}

\newcommand{\heast}[1]{-0.7-0.9*sin(3*#1 r)+2*cos(0.5*#1 r)}
\newcommand{\lifteast}[1]{0.5+0.2*sin(2*#1 r)}

\newcommand{\DrawThreeDPanel}[5]{%
  \begin{scope}[
    shift={#1},
    x={(1.55cm,0cm)},
    y={(0.46cm,0.27cm)},
    z={(0cm,0.58cm)},
  ]
    \draw[axis] (\xmin,0,0) -- (\xaxismax,0,0) node[below right] {$x$};
    \draw[axis] (0,\ymin,0) -- (0,\yaxismaxthree,0) node[above right] {$y$};
    \draw[axis] (0,0,0) -- (0,0,\zaxismax);

    \fill[
      ribbon,
      domain=\xmin:\xmax,
      samples=200,
      variable=\x,
    ]
      plot ({\x},{#2{\x}},{#3{\x}})
      --
      plot[domain=\xmax:\xmin] ({\x},{#2{\x}},0)
      -- cycle;

    \draw[
      lifted curve,
      domain=\xmin:\xmax,
      samples=200,
      variable=\x,
    ]
      plot ({\x},{#2{\x}},{#3{\x}});

    \foreach \x in \barxs {
      \draw[sample bar]
        ({\x},{#2{\x}},0) --
        ({\x},{#2{\x}},{#3{\x}});
      \fill[sample dot]
        ({\x},{#2{\x}},{#3{\x}}) circle (1.5pt);
    }

    \draw[
      base curve,
      domain=\xmin:\xmax,
      samples=200,
      variable=\x,
    ]
      plot ({\x},{#2{\x}},0);

    \pgfmathsetmacro{\labely}{#2{1.55}}
    \pgfmathsetmacro{\labelz}{#3{1.55}}
    \node[lifted curve, anchor=west, font=\small, yshift=#5] at (1.55,\labely,\labelz+0.3) {$#4$};
  \end{scope}
}

\newcommand{\DrawTwoDPanel}[5]{%
  \begin{scope}[shift={#1}, x=1.55cm, y=0.5cm]
    \draw[axis] (\xmin,0) -- (\xaxismax,0) node[right] {$x$};
    \draw[axis] (0,\ymin) -- (0,\yaxismax) node[above] {$y$};

    \draw[
      base curve,
      domain=\xmin:\xmax,
      samples=200,
      variable=\x,
    ]
      plot ({\x},{#2{\x}});

    \pgfmathsetmacro{\labely}{#2{1.55}}
    \node[blue, anchor=west, font=\small, xshift=#4, yshift=#5] at (1.55,\labely) {$#3$};
  \end{scope}
}

\newcommand{\DrawMiddleDiagram}{%
  \node[diagram node] (SW) at (6.10cm,2.40cm)
    {$C(\overline{\Omega},[-L,L]^n)$};
  \node[diagram node] (NW) at (6.10cm,4.90cm)
    {$\mathcal{P}(\overline{\Omega}\times[-L,L]^n)$};
  \node[diagram node] (NE) at (13.00cm,4.90cm)
    {$\mathcal{P}(\overline{\Omega}\times[-L,L]^n)$};
  \node[diagram node] (SE) at (13.00cm,2.40cm)
    {$C^{1}(\overline{\Omega},[-L,L]^n)$};

  \draw[diagram arrow] (SW.north) -- node[left=6pt] {${\bf M}$} (NW.south);
  \draw[diagram arrow] (NW.east) -- node[below=6pt, align=center] {function graph transformer\\[0.2em]$G_\Gamma(\mu) = \Gamma(\mu, \cdot)_\# \mu$} (NE.west);
  \draw[diagram arrow] (NE.south) -- node[right=6pt] {${\bf F}$} (SE.north);
  \draw[diagram arrow] (SW.east) -- node[above=6pt, align=center] {target operator $A$} (SE.west);

}

\DrawThreeDPanel{(0,\topshift)}{\hwest}{\liftwest}{\gamma_h}{4pt}
\DrawTwoDPanel{(0,\bottomshift)}{\hwest}{h}{4pt}{0pt}

\DrawThreeDPanel{(\eastshift,\topshift)}{\heast}{\lifteast}{G_\Gamma(\gamma_h)}{0pt}
\DrawTwoDPanel{(\eastshift,\bottomshift)}{\heast}{A(h)}{7pt}{3pt}

\DrawMiddleDiagram
\end{tikzpicture}